\pgfplotsset{compat=newest}
\pgfplotsset{height=8cm, width=15cm,compat=1.9}
 \DeclareMathOperator*{\argmin}{arg\,min}
 \DeclareMathOperator*{\argmax}{arg\,max}
\newcommand{\Ex}{\mathbb{E}}
\newcommand{\R}{\mathbb{R}}
\newcommand*\diff{\mathop{}\!\mathrm{d}}
\newcommand{\statespace}{\mathcal{S}}
\newcommand{\actionspace}{\mathcal{A}}
\newcommand{\vpi}{V^\pi}
\newcommand{\qpi}{Q^\pi}
\newcommand{\Qhat}{{Q}}
\newcommand{\KL}[2]{\mathrm{KL}^{#1}_{#2}}
\newcommand{\RKL}[2]{\mathrm{RKL}^{#1}_{#2}}
\newcommand{\FKL}[2]{\mathrm{FKL}^{#1}_{#2}}
\newcommand{\policyparams}{\theta}
\newcommand{\boltzmannQ}{\mathcal{B}_\tau Q}
\newcommand{\entropy}{\mathcal{H}}
\newcommand{\pinew}{{\pi_\mathrm{new}}}
\newcommand{\piold}{{\pi_\mathrm{old}}}
\newcommand{\done}{\texttt{done}}
\newcommand{\Diag}{\mathrm{Diag}}
\newcommand{\intd}{\mathrm{d}}
\newcommand{\Etraj}{\mathbb{E}_{\rho_0}^{\pi_{new}}}
\newcommand{\defeq}{:=}
\newcommand{\BQ}{\boltzmannQ}
\newcommand{\BQt}{\boltzmannQ_\tau}
\newcommand{\Qt}{Q_{\tau}}
\newcommand{\cS}{\statespace}
\newcommand{\cA}{\actionspace}
\newcommand{\Qpitau}[1]{Q^{#1}_\tau}
\newcommand{\Vpitau}[1]{V^{#1}_\tau}
\newcommand{\cSA}{\cS\times\cA}
\newcommand{\advold}{A_\tau^{\piold}}
\newcommand{\advnew}{A_\tau^{\pinew}}
\newcommand{\qold}{Q_\tau^{\piold}}
\newcommand{\qnew}{Q_\tau^{\pinew}}
\newcommand{\vold}{V_\tau^{\piold}}
\newcommand{\vnew}{V_\tau^{\pinew}}
\newcommand{\bpinew}{\boldsymbol{\pi}_{new}}
\newcommand{\bpiold}{\boldsymbol{\pi}_{old}}
\newcommand{\bpi}{\boldsymbol{\pi}}
\newcommand{\BQold}{\mathcal{B}_{\tau}Q_{\tau}^{\pi_\mathrm{old}}}
\newcommand{\bv}{\mathbf{q}} 
\newcommand{\Ed}[1]{\Ex_{d^{\pinew}}\left[#1\right]}
\newcommand{\EdBigg}[1]{\Ex_{d^{\pinew}}\Bigg[#1\Bigg]}
\newcommand{\EdBiggAct}[1]{\Ex_{d^{\pinew},\pinew}\Bigg[#1\Bigg]}
\newcommand{\EdBiggActOld}[1]{\Ex_{d^{\pinew},\piold}\Bigg[#1\Bigg]}
\newcommand{\eqcomment}[1]{\quad\quad\quad\triangleright \text{#1}}
\newcommand{\BlackBox}{\rule{1.5ex}{1.5ex}}  
\renewenvironment{proof}{\par\noindent{\bf Proof\ }}{\hfill\BlackBox\\[2mm]}
\newtheorem{theorem}{Theorem}
\newtheorem{lemma}[theorem]{Lemma}
\newtheorem{corollary}[theorem]{Corollary}
\newtheorem{definition}[theorem]{Definition}
\newtheorem{assumption}[theorem]{Assumption}
\begin{document}

\title{Greedification Operators for Policy Optimization: Investigating Forward and Reverse KL Divergences}

\author{\name Alan Chan \email achan4@ualberta.ca \\
        \name Hugo Silva \email hugoluis@ualberta.ca\\
        \name Sungsu Lim \email sungsu@ualberta.ca \\
        \name Tadashi Kozuno \email kozuno@ualberta.ca \\        
        \name A. Rupam Mahmood \email armahmood@ualberta.ca\\
        \name Martha White \email whitem@ualberta.ca \\
       \addr Department of Computing Science, Alberta Machine Intelligence Institute (Amii)\\
       University of Alberta\\
       Edmonton, Alberta, Canada
       }

\editor{Editors}

\maketitle

\begin{abstract}%
Approximate Policy Iteration (API) algorithms alternate between (approximate) policy evaluation and (approximate) greedification. Many different approaches have been explored for approximate policy evaluation, but less is understood about approximate greedification and what choices guarantee policy improvement. In this work, we investigate approximate greedification when reducing the KL divergence between the parameterized policy and the Boltzmann distribution over action values. In particular, we investigate the difference between the forward and reverse KL divergences, with varying degrees of entropy regularization; these are chosen because they underlie many existing policy optimization approaches, as we highlight in this work. We show that the reverse KL has stronger policy improvement guarantees, and that reducing the forward KL can result in a worse policy. We also demonstrate, however, that a large enough reduction of the forward KL can induce improvement under additional assumptions. Empirically, we show on simple continuous-action environments that the forward KL can induce more exploration, but at the cost of a more suboptimal policy. No significant differences were observed in the discrete-action setting or on a suite of benchmark problems. This work provides novel theoretical and empirical insights about the forward KL and reverse KL for greedification, and clear next steps for understanding and improving our policy optimization algorithms. 
\end{abstract}

\begin{keywords}
  reinforcement learning, policy gradient, policy iteration, kl divergence
\end{keywords}

\section{Introduction}

A canonical approach to learn policies in reinforcement learning (RL) is Policy Iteration (PI). PI interleaves policy evaluation---understanding how a policy is currently performing by computing a value function---and policy improvement---making the current policy better based on the value function. The policy improvement step is sometimes called the \textit{greedification} step, because typically the policy is set to a greedy policy. That is, the policy is set to take the action that maximizes the current action-value function, in each state. 
In the tabular setting, this procedure is guaranteed to result in iteratively better policies and converge to the optimal policy \citep{bertsekas2019reinforcement}. The greedification step can also be soft, in that some some probability is placed on all other actions. In certain such cases, like with entropy regularization, PI converges to the optimal soft policy \citep{geist2019theory}.

Practically, however, it is not always feasible to perform each step to completion. Approximate PI (API) \citep{bertsekas2011approximate,scherrer2014approximate} allows for each step to be done incompletely, and still maintain convergence guarantees. The agent can perform an approximate policy evaluation step, where it obtains an improved estimate of the values without achieving the true values. The agent can also only perform approximate greedification by updating the policy to be closer to the (soft) greedy policy under the current values. The first approximation underlies algorithms like Sarsa, where the action-value estimates are updated with one new sample, upon which the new policy is immediately set to the soft greedy policy (approximate evaluation, exact greedification). 

It is not as common to consider approximate greedification. One of the reasons is that obtaining the (soft) greedy policy is straightforward for discrete actions.\footnote{Even under discrete actions, there is a reasonable argument that approximate greedification may be preferable, even if exact greedification is possible. We typically only have estimates of the value function, and exact greedification on the estimates can potentially harm the agent's performance \citep{kakade2002approximately}. Further, having an explicit parameterized policy, even under discrete actions, can be beneficial to avoid an effect known as delusional bias \citep{lu2018non}, where directly computing the greedy value in action-value updates can result in inconsistent action choices.} For continuous actions, however, obtaining the greedy action for given action-values is non-trivial, requiring the computation of the maximum value (or supremum) over the continuous domain. Some methods have considered optimization approaches to compute it, to get continuous-action Q-learning methods \citep{amos2016icnn,gu2016naf,kalashnikov2018qt,ryu2019caql,gu2019sample}. It is more common, though, to instead turn to policy gradient methods and learn a parameterized policy. 

This switch to parameterized policies, however, does not evade the question of how to perform approximate greedification. Indeed, many policy gradient (PG) methods---those approximating a gradient of the policy objective---can actually be seen as instances of API.
The connection between PG and API arises because efficient implementation of PG methods requires the estimation of a value function. 
Actor-critic methods estimate value functions through temporal-difference methods \citep{sutton2018reinforcement}. We explicitly show in this work that the basic actor-critic method can be seen as API with a particular approximate greedification step. In general, numerous papers have already linked PG methods to policy iteration \citep{sutton2000policy,kakade2002approximately,perkins2002existence,perkins2003convergent,wagner2011reinterpretation,wagner2013optimistic,scherrer2014local,bhandari2019global,ghosh2020operator,vieillard2020leverage}, including recent work connecting maximum-entropy PG and value-based methods \citep{o2016combining,nachum2017bridging, schulman2017equivalence, nachum2019algaedice}.

Moreover, most so-called PG methods used in practice are better thought of as API methods, rather than as PG methods. Many PG methods use a biased estimate of the policy gradient. The correct state weighting is not used in either the on-policy setting \citep{thomas2014bias,nota2019policy} or the off-policy setting \citep{imani2018off}. Additionally, the use of function approximators to estimate action-values generally results in biased gradient estimates without any further guarantees, such as a compatibility condition \citep{sutton2000policy}. This bias can be reduced by using $n$-step return estimates for the policy update, but is not completely removed. Understanding approximate greedification within API, therefore, is one direction for better understanding the PG methods actually used in practice.  

The question is what approximate greedification approach should be used. One answer is to define a target policy, that would provide policy improvement if we could represent it, and learn a policy to approximate that target.
The classical policy improvement theorem \citep{sutton2018reinforcement} guarantees that if a new policy is greedy with respect to the action-value function of an old policy, then the new policy is at least as good as the old policy. For parameterized policies (e.g., neural-network policies), exact greedification in each state is rarely possible as not all policies will be representable by a given function class. 
Instead of the greedy policy, we can use the Boltzmann distribution over the action values as the target policy, which is known to provide policy improvement \citep{haarnoja2018soft}. Of particular importance for us and as we show in this work, stepping towards this target policy---approximate greedification---on average across the state space---rather than for each state---is also guaranteed to provide policy improvement. As such, it is a reasonable target policy to explore for approximate greedification under function approximation.  

We explore minimizing the Kullback-Leibler (KL) divergence to this target policy.\footnote{We use the KL divergence to project the target policy to the space of parameterized policies. This differs from using the KL (or Bregman) divergence to regularize policy updates to force a new policy be close to a previous one \citep{peters2010relative,schulman2015trust,abdolmaleki2018maximum,geist2019theory,vieillard2020leverage}. While such regularization changes the target policy and may confer benefits to algorithms \citep{schulman2015trust,vieillard2020leverage}, there still remains a question of how to project the target policy, which is the main focus of the present paper.} Other options are possible, such as total variation or Wasserstein distance. We focus on the KL because it underlies many existing methods---as has been previously shown \cite{vieillard2020leverage} and as we more comprehensively summarize in Section \ref{sec:approx-greed-kl}.
Further, the KL divergence is a convenient choice because stochastic estimation of this objective only requires the ability to sample from the distributions and evaluate them at single points. 

Even though the KL has been used, it is as yet unclear whether to use the reverse or the forward KL divergence, here also called RKL and FKL respectively. That is, should the first argument of the KL divergence be policy $\pi$, or should it be the Boltzmann distribution over the action values? \citet{neumann2011variational} argues in favour of the reverse KL divergence to obtain a most cost-average policy, whereas \citet{norouzi2016reward} uses the forward KL divergence to induce a more exploratory policy (i.e., more diverse state visitation distribution). 

The typical default is the reverse KL. The reverse KL without entropy regularization corresponds to a standard Actor-Critic update and is easy to compute, as we show in \Cref{sec:approx-greed-kl}. More recently, it was shown that the reverse KL guarantees policy improvement when the KL can be minimized separately for each state \citep[p.~4]{haarnoja2018soft}; this finding motivated the development of Soft-Actor Critic. Regret analyses involving Bregman divergences, like for mirror descent \citep{orabona2019modern,shani2019adaptive}, also tend to imply results for the reverse KL, but not for the forward KL.

Some work, though, has used the forward KL \citep{norouzi2016reward,nachum2016improving,agarwal2019learning,vieillard2019deep}, including implicitly some work in classification for RL \citep{lagoudakis2003reinforcement,lazaric2010analysis,farahmand2015classificationbased}. For contextual bandits, \citet{chen2019surrogate} showed improved performance when using a surrogate, forward KL objective for the smoothed risk. Others used the forward KL to prevent mode collapse, given that the forward KL is mode-covering \citep{agarwal2019learning,mei2019principled}. 

Though both have been used and advocated for, there is no comprehensive investigation into their differences for approximate greedification. The closest work is \citet{neumann2011variational}, but they do KL divergence reduction in the context of EM-based policy search using the variational inference framework, whereas we frame the problem as approximate policy iteration, which leads to different optimization processes and cost functions. Their reverse KL target, for example, is a reward weighted trajectory distribution, which is different from the Boltzmann distribution we use here and they minimize the KL divergence with respect to the variational distribution, while we minimize it directly with respect to the policy. Their work does not provide any theoretical results and their experimental settings are limited to single step decision making, whereas we experiment on sequential decision making. 

The goal of this work is to investigate the differences between using a forward or reverse KL divergence, under entropy regularization, for approximate greedification. We ask, given that we optimize a policy to reduce either the forward or the reverse KL divergence to a Boltzmann distribution over the action values, what is the quality of the resulting policy?
We provide some clarity on this question with the following contributions. 
\begin{enumerate}[leftmargin=12pt]
\setlength{\itemsep}{0pt}
    \item We highlight four choices for greedification: forward KL (FKL) or reverse KL (RKL) to a Boltzmann distribution on the action-values, with or without entropy regularization. We show that many existing methods can be categorized into one of these four quadrants, and particularly show that the standard Actor-Critic update corresponds to using the RKL.
    \item We extend the policy improvement result for the RKL \citep{haarnoja2018soft} in two ways. (a) Instead of reducing the RKL for all states, we only need to reduce it on average under a certain state-weighting. (b) We characterize improvement under approximate action-values, rather than only exact action-values. 
    \item We further extend our theoretical results under a condition where the policy does not change too much; these results provide an extension of the seminal improvement results for conservative policy iteration \citep[Theorem 4.1]{kakade2002approximately}, to parameterized policies using gradient descent on the RKL for greedification.   
    \item We show via a counterexample that merely reducing the FKL is not sufficient to guarantee improvement and discuss additional sufficient conditions to guarantee improvement.
    \item We investigate optimization differences in small MDPs, and find that, particularly under continuous actions, (a) the RKL can converge faster, but sometimes to suboptimal local minima solutions, but (b) the optimal solution of the FKL can be worse than the corresponding RKL, particularly under higher entropy regularization.
    \item In a maze environment, with neural network function approximation, we show that the FKL promotes more exploration under continuous actions, by maintaining a higher variance in the learned policy, but for discrete actions, exploration is very similar for both. 
\end{enumerate}

In addition to these carefully controlled experiments, we tested the approaches in benchmark environments. We found that performance between the two was similar.
We hypothesize that the reason for this outcome is that the
action-values and the corresponding Boltzmann policy are largely unimodal for the benchmark problems; bigger differences should arise for the multi-modal setting. 
We conclude the work with a discussion about open questions and key next steps, including how to leverage these insights about FKL and RKL to potentially obtain improved policy optimization algorithms and theory.

\section{Problem Formulation}

We formalize the reinforcement learning problem \citep{sutton2018reinforcement} as a {Markov Decision Process} (MDP): a tuple $(\statespace, \actionspace, \gamma, r, p)$ where $\statespace$ is the state space; $\actionspace$ is the action space; $\gamma \in [0,1]$ is the discount factor; $r : \statespace \times \actionspace \to \R$ is the reward function; and, for every $(s, a) \in \statespace \times \actionspace$, $p(\cdot \mid s, a)$ gives the conditional transition probabilities over $\statespace$. A \textit{policy} is a mapping $\pi : \statespace \to \Delta_\actionspace$, where $\Delta_\actionspace$ is the space of probability distributions over $\actionspace$. 
At every discrete time step $t$, an agent observes a state $S_t$, from which it draws an action from its policy: $A_t \sim \pi(\cdot \mid S_t)$. The agent sends the action $A_t$ to the environment, from which it receives the reward signal $r(S_t, A_t)$ and the next state $S_{t + 1}$. 

In this work we focus on the episodic problem setting, where the goal of the RL agent is to maximize the \textit{expected return}---the expectation of a discounted sum of rewards---from the set of start states. 
To formalize this goal, we define the \textit{value function} $\vpi$ for policy $\pi$ as
\begin{align*}
    \vpi(s) := \Ex_{\pi}\left[\sum_{k = 0}^\infty \gamma^k r(S_k, A_k) \mid S_0 = s \right].
\end{align*}
The expectation above is over the trajectory $(S_0, A_0, S_1, A_1, \cdots)$ induced by $\pi$ and the transition kernel $p$. For simplicity, we omit $p$ in the subscript, because the expectation is always according to $p$. We similarly define the \textit{action-value function}:
\begin{align*}
    \qpi(s, a) &:= \Ex_{\pi}\left[\sum_{k = 0}^\infty \gamma^k r(S_k, A_k) \mid S_0 = s, A_0 = a \right] 
    = r(s, a) + \gamma \,\Ex[V^\pi(S') | S = a, A = a] \label{eq:short-q-def}
    .
\end{align*}

A common objective in policy optimization is the value of
the policy $\pi_\policyparams$ averaged over the start state distribution $\rho_0$: 
\begin{equation}\label{eq:policy-obj}
    \eta(\pi_\policyparams) \defeq \int_{\statespace} \rho_0(s) \int_{\actionspace} \pi_\policyparams(a | s) Q^{\pi_\policyparams}(s, a) \, \diff a \, \diff s.
\end{equation}
The {policy gradient theorem} gives us the gradient of $J(\policyparams)$ \citep{sutton2000policy},
\begin{align}\label{eq:policy-gradient-thm}
    \nabla_\policyparams \eta(\pi_\policyparams) &= \frac{1}{1-\gamma}\int_{\statespace} d^{\pi_\policyparams}(s) \int_{\actionspace} Q^{\pi_\policyparams}(s, a) \nabla_\policyparams \pi_\policyparams(a \mid s)\, \diff a\, \diff s,
\end{align}
where $d^{\pi_\policyparams}(s) \defeq (1 - \gamma) \sum_{t = 0}^\infty \gamma^t p(S_t = s \mid S_0 \sim \rho_0)$ is the {normalized discounted state visitation distribution}. 

Because we do not have access to $Q^{\pi_\policyparams}$, we instead approximate. For example, in REINFORCE \citep{williams1992simple}, a sampled return from $(s,a)$ is used as an unbiased estimate of $Q^{\pi_\policyparams}(s,a)$. This method, however, assumes on-policy returns and tends to be sample inefficient.
Commonly, a biased but lower-variance choice is to use a learned estimate $\Qhat$ of $Q^{\pi_\policyparams}$, obtained through policy evaluation algorithms like SARSA \citep{sutton2018reinforcement}. In these Actor-Critic algorithms, the actor---the policy---updates with a (biased) estimate of the above gradient, given by this $\Qhat$---the critic. 

\begin{figure}[t]
    \centering
    \includegraphics[width=1.0\linewidth]{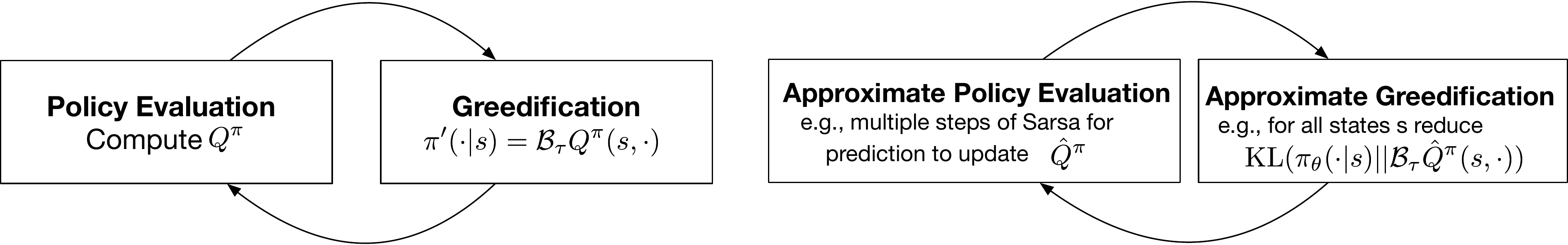}
    \vspace{-0.4cm}
    \caption{\textbf{Contrasting Policy Iteration (PI) and Approximation PI (API)}, left and right respectively. \textbf{[Left-hand Side] In PI}, the policy evaluation step and the greedification step are done exactly. In the policy evaluation step, the action values for the current policy $\pi$ are computed. In the greedification step---also called the policy improvement step---the policy $\pi'$ is set to the (soft) greedy policy. Then this $\pi'$ is handed to the policy evaluation step, as the new $\pi$, the new action values are computed, and this process repeats. In this work, the soft greedy policy that we consider is the Boltzmann policy, $\BQ^\pi$, defined in Equation \eqref{eq:boltzmann-q}. We discuss why this choice for greedification provides policy improvement in \Cref{sec:targetpolicy}. \\\textbf{[Right-hand Side] In API}, we use the same target policy for greedification---namely the Boltzmann policy---but we only approximate this target policy. We investigate minimizing---or at least reducing---a KL divergence between a parameterized $\pi_\theta$ to the target policy for this approximate greedification. 
    This learned $\pi_\theta$ corresponds to the new policy $\pi'$ that is handed back to the approximate policy evaluation step. The approximate greedification step can fully minimize the KL or only reduce it. There is a similar choice for the approximate policy evaluation step: we can either obtain the best approximate action-values with a batch algorithm like least-squares TD, or simply improve the estimate from the existing estimate using multiple stochastic updates to the action-values under the new policy.}
    \label{fig_api}
        \vspace{-0.4cm}
\end{figure}

This Actor-Critic procedure with learned $\Qhat$ can be interpreted as Approximate Policy Iteration (API). API methods alternate between approximate policy evaluation to obtain a new $\Qhat$ and approximate greedification to get a policy $\pi$ that is more greedy with respect to $\Qhat$. We depict this approach in Figure \ref{fig_api}, and contrast it to PI. As we show in the next section, the gradient in Equation \eqref{eq:policy-gradient-thm} can be recast as the gradient of a KL divergence to a policy peaked at maximal actions under $\Qhat$; reducing this KL divergence updates the policy to increase the probability of these maximal actions, and so become more greedy with respect to $\Qhat$. Under this API view, we obtain a clear separation between estimating $\Qhat$ and greedifying $\pi$. We can be agnostic to the strategy for updating $\Qhat$---we can even use soft action values \citep{ziebart2010modeling} or Q-learning \citep{watkins1992q}---and focus on answering: for a given $\Qhat$, how can we perform an approximate greedification step? 

This work focuses on understanding the differences between using the forward and reverse KL divergences towards the Boltzmann policy for approximate greedification. We explicitly define each of these approaches in the next section. Other choices for approximate greedification are possible---other target policies and other divergences or metrics---but we constrain our investigation to a feasible scope. In the next section, we further motivate why we investigate these approaches for approximate greedification, and later summarize how these variants underlie a variety of policy optimization methods.

\section{Approximate Greedification}\label{sec:greedification}


In this section, we formalize how to do approximate greedification. First, we discuss an appropriate choice for the (soft) greedy policy $\pi_{target}$. Given access to such a policy $\pi_{\text{target}}$, we can update our existing policy $\pi$ to be closer to $\pi_{\text{target}}$, using the KL divergence. The KL divergence, however, is not symmetric and has an entropy parameter $\tau$, resulting in four variants. We present these four variants that we use throughout the paper, and derive the updates under each choice. Finally, we discuss the importance of the state weighting in the final objective, which weights the divergence to the target policy in each state.

\subsection{Defining a Target Policy}\label{sec:targetpolicy}

A reasonable choice for the target policy is the Boltzmann distribution, as we motivate in this section. The Boltzmann distribution we use here is also common in pseudo-likelihood methods \citep{kober2008policy,neumann2011variational,levine2018reinforcement}, ensuring that one has a target distribution based on the action-values. 

Let $\Qhat$ be an action-value function estimate. For a given $\tau > 0$, the Boltzmann distribution $\boltzmannQ(s,\cdot)$ for a state $s$ is defined as
\begin{equation}\label{eq:boltzmann-q}
    \boltzmannQ(s, a) := \frac{\exp(\Qhat(s, a)\tau^{-1})}{Z(s)} \quad\quad \text{for } Z(s) := \int_\actionspace \exp(\Qhat(s, b)\tau^{-1}) \, \diff b
\end{equation}
where $Z(s)$ is known as the partition function.
The definition in \Cref{eq:boltzmann-q} does not depend upon a particular policy: we can input any $\Qhat$ that is a function of states and actions. For larger $\tau$, the Boltzmann distribution is more stochastic: it has higher entropy.

This distribution can be derived by solving for the entropy-regularized greedy policy on $\Qhat$. To see why, recall the definition of the {entropy} of a distribution, which captures how spread out the distribution is:
\begin{equation*}
    \entropy(\pi(\cdot \mid s)) \defeq - \int_\actionspace \pi(a \mid s) \log \pi(a \mid s)\, \intd a.
\end{equation*}
The higher the entropy, the less the probability mass of $\pi(\cdot \mid s)$ is concentrated in any particular area. 
Let $\mathcal{F}$ be the set of all nonnegative functions on $\actionspace$ that integrate to 1. At a given state, the entropy-regularized greedy policy is given by 
\begin{align}\label{eq:target policy definition}
    \pi_{\text{target}}(\cdot \mid s) &:= \argmax_{p \in \mathcal{F}} \int p(a) (Q(s, a) - \tau \log p(a))\, \diff a.
\end{align}
The integrand can be rewritten as follows:
\begin{align*}
    p(a) \left( Q(s, a) - \tau \log p(a) \right) &= \tau p(a) \log \frac{\exp (Q(s, a) / \tau) }{p(a)} \\
        &= \tau p(a) \log \frac{\boltzmannQ (s, a)}{p(a)} + \tau p(a) \log \int \exp \left( \frac{Q(s, a)}{\tau} \right)\, da\,.
\end{align*}
The right summand becomes a constant when integrated, so $\pi_{\text{target}}(\cdot \mid s)$ can be rewritten as
\begin{align*}
    \pi_{\text{target}}(\cdot \mid s)  &= \argmin_{p \in \mathcal{F}} \int p(a) \log \frac{p(a)}{\boltzmannQ (s, a)}\, \diff a 
        = \boltzmannQ (s, \cdot).
\end{align*}
where the first term is actually the KL divergence between $p$ and $\boltzmannQ (s, a)$, which is minimized by setting $p$ to $\boltzmannQ (s, a)$.  

The use of entropy-regularization avoids obtaining deterministic, greedy policies that can be problematic in policy gradient methods. Instead, this approach allows for soft greedification, giving the most greedy policy under the constraint that the entropy of the policy remains non-negligible. This policy can be shown to provide guaranteed policy improvement, but under a different criteria: according to soft value functions \citep{ziebart2010modeling}.

Soft value functions are value functions where an entropy term is added to the reward. 
\begin{equation*}
    V^{\pi}_\tau(s) \defeq  \Ex_\pi\left[ \sum_{k = 0}^\infty \gamma^k \left[ r(S_k, A_k) + \tau \entropy(\pi(\cdot|S_k))\right] \mid S_0 = s \right] 
\end{equation*}
We can define the soft action-value function in terms of the soft value function. 
\begin{align*}
    Q^{\pi}_\tau(s,a) \defeq  r(s, a) + \gamma \Ex[V^\pi_\tau(S') | S = s, A = a]
\end{align*}
We can also write the state-value function in terms of the action-value function.
\begin{align*}
    V^\pi_\tau(s) = \Ex_\pi[Q^\pi_\tau(s, A) - \tau \log \pi(A \mid s)].
\end{align*}
These soft value functions corresponds to a slightly different RL problem described as entropy-regularized MDPs \citep{geist2019theory}.

For these soft value functions, we can guarantee policy improvement under greedification with the Boltzmann distribution. 
If we set $\pi'(\cdot | s) = \boltzmannQ^{\pi}(s, \cdot)$ for all $s \in \statespace$, then $Q^{\pi'}_\tau(s,a) \ge Q^{\pi}_\tau(s,a)$ for all $(s,a)$ \citep[Theorem 4]{haarnoja2017reinforcement}. This parallels the classical policy improvement result in policy iteration \citep{sutton2018reinforcement}. This guaranteed policy improvement is a motivation for using $\boltzmannQ$ as a target policy for greedification. We extend this policy improvement result to hold under weaker conditions in Section \ref{sec:theory}.

\subsection{Approximate Greedification with the KL}\label{sec:approx-greed-kl}

In this section we discuss how to use a KL divergence to bring $\pi_\policyparams$ closer to $\boltzmannQ$. One might wonder why we do not just set $\pi(\cdot | s) = \boltzmannQ(s, \cdot)$. Indeed, for discrete action spaces, we can draw actions from $\boltzmannQ(s, \cdot)$ easily at each time step. However, for continuous actions, even calculating $\boltzmannQ(s, \cdot)$ requires approximating a generally intractable integral. Furthermore, even in the discrete-action regime, using $\boltzmannQ$ might not be desirable as $Q$ is usually just an action-value \textit{estimate}. Greedifying with respect to an action-value estimate does not guarantee greedification with respect to the action-value.  


In this work we focus on the KL divergence to measure the difference between $\pi$ and $\pi_{target}$. 
Given two probability distributions $p, q$ on $\actionspace$, the KL divergence between $p$ and $q$ is 
\begin{equation}\label{eq:kl-div}
  \KL{p}{q} \defeq \int_\actionspace p(a) \log\frac{p(a)}{q(a)}\, da  ,
\end{equation}
where $p$ is assumed to be {absolutely continuous} \citep{billingsley2008probability} with respect to $q$ (i.e. $p$ is never nonzero where $q$ is zero), to ensure that the KL divergence exists. The KL divergence is zero iff $p = q$ almost everywhere, and is always non-negative.
Stochastic estimation of the KL divergence has the advantage of requiring just the ability to sample from $p$ and to calculate $p$ and $q$. This feature is in contrast to the Wasserstein metric for example, which generally requires solving an optimization problem just to compute it. 

The KL divergence is not symmetric. For example, $\KL{p}{q}$ may be defined while $\KL{q}{p}$ may not even exist if $q$ is not absolutely continuous with respect to $p$. This asymmetry leads to the two possible choices for measuring differences between distributions: the reverse KL and the forward KL. Assume that $q$ is a true distribution that we would like to match with our learned distribution $p_\theta$, where  $p_\policyparams$ is smooth with respect to $\theta \in \R^k$. The \textit{forward} KL divergence is $\KL{q} {p_\theta}$ and the \textit{reverse} KL divergence is $\KL{p_\theta}{q}$. 

We define the \textbf{Reverse KL} (RKL) for greedification on $\Qhat$ at a given state $s$ as
\begin{align*}
 \RKL{\pi_\policyparams}{\boltzmannQ}(s) &\defeq\KL{\pi_\policyparams}{\boltzmannQ}(s),
\end{align*}
where we additionally define for any two policies $\pi_1$, $\pi_2$,
\begin{align*}
    \KL{\pi_1}{\pi_2}(s) := \int_\cA \pi_1(a \mid s) \log \frac{\pi_1(a \mid s)}{\pi_2(a \mid s)}\,da.
\end{align*}
%
This $\Qhat$ is any action-value on which we perform approximate greedification; it can be a soft action value or not. 
We can rewrite the RKL as follows:
\begin{align*}
\begin{split}
\RKL{\pi_\policyparams}{\boltzmannQ}(s) &= \int_\actionspace \pi_\policyparams( a \mid s) \log \frac{\pi_\policyparams( a \mid s)}{\boltzmannQ(s, a)} \, \diff a \\
  &= \int_\actionspace \pi_\policyparams( a \mid s) \left( \log \pi_\policyparams( a \mid s) -  \frac{Q(s, a)}{\tau} + \log Z(s) \right) \, \diff a \\
  &= -\entropy(\pi_\policyparams( \cdot \mid s)) - \int_\actionspace \pi_\policyparams( a \mid s)  \frac{Q(s, a)}{\tau} \, \diff a + \log Z(s),
  \end{split}
\end{align*}
with gradient
\begin{align*}
 \nabla_{\policyparams} \RKL{\pi_\policyparams}{\boltzmannQ}(s)  = -\nabla_{\policyparams}\entropy(\pi_\policyparams( \cdot \mid s)) - \int_\actionspace \nabla_{\policyparams}\pi_\policyparams( a \mid s)  \frac{Q(s, a)}{\tau} \, \diff a
 .
\end{align*}
If we scale by $\tau$ to get $\tau \text{RKL}(\policyparams; s, \Qhat)$, 
we can see that $\tau$ plays the role of an entropy regularization parameter:\footnote{For a fixed $\boltzmannQ$, the policy that minimizes the RKL is the same regardless of the scaling by a constant in front, so we use the more standard unscaled KL to define the RKL.} a larger $\tau$ results in more entropy regularization on $\pi_\policyparams(\cdot \mid s)$. 

For a finite action space, we can take the limit to get the \textbf{Hard Reverse KL}.\footnote{When investigating $\tau \to 0$, it is not straightforward to extend the calculations we do here to integrals, so we derive them for the discrete case and extrapolate to the continuous case.} 
\begin{align*}
\begin{split}
    \lim_{\tau \to 0} \tau \RKL{\pi_\policyparams}{\boltzmannQ}(s) &= \lim_{\tau \to 0} \tau \sum_a \pi_\policyparams(a \mid s) \log \pi_\policyparams(a \mid s) - \tau \sum_\actionspace \pi_\policyparams(a \mid s) \log \boltzmannQ(s, a)\\
    &= 0-\lim_{\tau \to 0} \sum_a \pi_\policyparams(a \mid s) \left( Q(s, a) - \tau \log \sum_b \exp(Q(s, b)\tau^{-1}) \right)\\
    &= \left( -\sum_a \pi_\theta(a \mid s) Q(s, a) \right) + \lim_{\tau \to 0} \tau \log \sum_b \exp( Q(s, b)\tau^{-1})\\
    &= \left( -\sum_a \pi_\theta(a \mid s) Q(s, a) \right) + (n_s) \max_a Q(s, a),
\end{split}    
\end{align*}
where $n_s$ is the number of maximizing actions in $s$. Since the last term of the RHS does not depend on $\pi_\theta$, we are motivated to define the Hard Reverse KL as follows, for both finite and infinite action spaces.
\begin{align*}
    \text{Hard }\RKL{\pi_\policyparams}{\boltzmannQ}(s) &\defeq -\int_\actionspace \pi_\theta(a \mid s) Q(s, a) \diff a,
\end{align*}
with gradient
\begin{align*}
 \nabla_\policyparams \text{Hard }\RKL{\pi_\policyparams}{\boltzmannQ}(s) &= -\int_\actionspace \nabla_\policyparams \pi_\theta(a \mid s) Q(s, a)\, \diff a
 .
\end{align*}
%
If $Q$ is equal to $Q^{\pi_\theta}$, then this gradient is exactly the negative of the inner term of the policy gradient in \Cref{eq:policy-gradient-thm}.\footnote{
We are unaware of a previous statement of this result in the literature, though similar results have been reported. For example, \citet{kober2008policy} derive the policy gradient update from a pseudo-likelihood method. \citet{belousov2019entropic} also derive it as a special case of f-divergence constrained relative entropy policy search \citep{peters2010relative}. 
Some references to a connection between value-based methods with entropy regularization and policy gradient can be found in \citep{nachum2017bridging}.
} 
This similarity in form means that the typical policy gradient update in actor-critic can be thought of as a greedification step with the Hard RKL. 

Similarly, we can define the \textbf{Forward KL} (FKL) for greedification: 
\begin{align*}
\FKL{\pi_\policyparams}{\boltzmannQ}(s) &\defeq 
    \KL{\boltzmannQ}{\pi_\policyparams}(s).
\end{align*}
We can rewrite the FKL as
\begin{align*}
\begin{split}
 \FKL{\pi_\policyparams}{\boltzmannQ}(s) &= \int_\actionspace \boltzmannQ(s, a) \log \frac{\boltzmannQ(s, a)}{\pi_\theta(a \mid s)} \, \diff a \\
  &= \int_\actionspace \boltzmannQ(s, a) \log \boltzmannQ(s, a) \, \diff a - \int_\actionspace \boltzmannQ(s, a) \log \pi_\theta(a \mid s) \, \diff a\\
  &= -\entropy(\boltzmannQ(s, \cdot)) - \int_\actionspace \boltzmannQ(s, a) \log \pi_\theta(a \mid s) \, \diff a
\end{split}
\end{align*}
with gradient
\begin{align*}
 \nabla_{\policyparams} \FKL{\pi_\policyparams}{\boltzmannQ}(s) = - \int_\actionspace \boltzmannQ(s, a) \nabla_\policyparams \log \pi_\theta(a \mid s) \, \diff a
 .
\end{align*}

\begin{table}[t]
\centering
\begin{center}
\begin{tabular}{ | m{2.5em} | m{10.5em}| m{14em} | m{8em} | } 
\hline
\bf{KL} & \bf{Formula} & \bf{Gradient} & \bf{Comment} \\ 
\hline
RKL & $\KL{\pi_\policyparams}{\boltzmannQ}(s)$ & $- \int_\actionspace {\nabla_{\policyparams}\pi_\policyparams( a \mid s)} \tau^{-1} Q(s, a) \, \diff a - \nabla_{\policyparams}{\entropy(\pi_\policyparams( \cdot \mid s))}$ & A likelihood-based Soft Actor-Critic.\footnote{Soft Actor-Critic actually uses the reparametrization trick for its gradients, which we describe in further detail in \Cref{sec:api-alg} and also use in our experiments.} \\ 
\hline
Hard RKL & $-\int_\actionspace {\pi_\theta(a \mid s)} Q(s, a)\, \diff a$ & $-\int_\actionspace \nabla_\policyparams \pi_\theta(a \mid s) Q(s, a)\, \diff a$   & Equivalent to vanilla actor-critic if action value is unregularized.\\ 
\hline
FKL & $\KL{\boltzmannQ}{\pi_\policyparams}(s)$ & $- \int_\actionspace \boltzmannQ(s, a) {\nabla_\policyparams \log \pi_\theta(a \mid s) \, \diff a}$ & Like classification with cross-entropy loss and $\boltzmannQ$ the distribution over the correct label.\\
\hline
Hard FKL & $-\frac{1}{A^*}  \sum_{i = 1}^{A^*} \log \pi_\theta(a^*_i \mid s)$ & $-\frac{1}{A^*}  \sum_{i = 1}^{A^*}  \nabla_\policyparams  \log \pi_\theta(a^*_i \mid s)$ & Like classification with cross-entropy loss and $a^*_i$ the correct labels.\\
\hline
\end{tabular}
\end{center}
	\vspace{-0.6cm}
\caption{This table summarizes the four KL variants for greedification, including the objective, gradient, and a descriptive comment. 
We fix a state $s$ and define $a^*_i \in \argmax_a Q(s, a)$ with $A^*$ maximal actions. To reduce clutter, we have assumed that there can only be finitely many maximizing actions for the Hard FKL.}
\label{tab:kl_summary}
\end{table}

We can again consider the limit $\tau \to 0$ in the case of a finite action space (in this case there is no need to multiply the KL divergence by $\tau$). Assume that there are $A^*$ maximizing actions of $Q(s, \cdot)$, indexed by $a^*_i$.
\begin{align*}
   \lim_{\tau \to 0} \FKL{\pi_\policyparams}{\boltzmannQ}(s) 
   &= -\lim_{\tau \to 0} \entropy(\boltzmannQ(s, \cdot))  -\lim_{\tau \to 0}\sum_a \frac{\exp(Q(s, a)\tau^{-1})}{\sum_b \exp(Q(s, b)\tau^{-1}) } \log \pi_\policyparams(a \mid s)  \nonumber\\
    &= -\lim_{\tau \to 0} \entropy(\boltzmannQ(s, \cdot)) -\sum_a \lim_{\tau \to 0} \frac{\exp(Q(s, a)\tau^{-1})}{\sum_b \exp(Q(s, b)\tau^{-1})} \log \pi_\policyparams(a \mid s) \\
    &= -\lim_{\tau \to 0} \entropy(\boltzmannQ(s, \cdot))-\frac{1}{A^*}  \sum_{i = 1}^{A^*} \log \pi_\theta(a^*_i \mid s)
    .
\end{align*}
As the first term does not depend upon the policy parameters, we ignore it to define the \textbf{Hard Forward KL} as 
\begin{align*}
  \text{Hard }\FKL{\pi_\policyparams}{\boltzmannQ}(s) &\defeq -\frac{1}{A^*}  \sum_{i = 1}^{A^*} \log \pi_\theta(a^*_i \mid s)
\end{align*}
For continuous actions, if we have a compact action space, then the $\argmax$ exists over $a$ and provides valid actions. We can use a similar definition to the discrete case, assuming there are a finite number of maximal actions. If there are flat regions, with intervals of maximal actions, then the sum is replaced with an integral; for simplicity we assume a finite set and provide the definition under this assumption.
The gradient for the Hard FKL is
\begin{equation*}
 \nabla_\policyparams \text{Hard }\FKL{\pi_\policyparams}{\boltzmannQ}(s) = -  \frac{1}{A^*}  \sum_{i = 1}^{A^*} \nabla_\policyparams \log \pi_\theta(a^*_i \mid s)
\end{equation*}
The Hard FKL expression looks quite similar to the cross-entropy loss in supervised classification, if one views the maximum action of $Q(s, \cdot)$ as the correct class of state $s$. 
We are unaware of any literature that analyzes the Hard FKL for approximate greedification. 

We summarize the main expressions, gradients and results in Table \ref{tab:kl_summary}. Interestingly, many existing policy gradient methods have policy updates that fit into one of these four quadrants. We also summarize this categorization in Table \ref{tab:categorization}, with more justification for the categorization in Appendix \ref{sec:api}. For example, TRPO uses the Hard RKL, but with an additional constraint that the policy should not change too much after an update. This is encoded as a KL divergence to the previous policy, which is a difference use of the KL than described above. The algorithms in this summary table have many properties beyond the underlying update to the Boltzmann target policy; we do not intend to imply they are solely defined by the use of the RKL or FKL, with or without entropy regularization. Nonetheless, it provides another lens on understanding similarities and differences between these algorithms, due to the choice of this underlying update.

\begin{table}[t]
\centering
\begin{center}
\begin{tabular}{ | m{3.5em} | m{21em}| m{12em} | } 
\hline
& \bf{Reverse KL} & \bf{Forward KL} \\ 
\hline
\textbf{Hard} \newline No \newline entropy & Actor-Critic \citep{sutton2018reinforcement}; Trust Region Policy Optimization (TRPO) \citep{schulman2015trust} and variants including MPO \citep{abdolmaleki2018maximum}, Mirror descent policy iteration \citep{geist2019theory}, REPS \citep{peters2010relative};  PPO \citep{schulman2017proximal}; {Deep Deterministic Policy Gradient (DDPG)} \citep{silver2014deterministic,lillicrap2015continuous}  & {Deep Conservative Policy Iteration (DCPI)} \citep{vieillard2019deep} ; {Policy Greedification as Classification} \citep{lagoudakis2003reinforcement,lazaric2010analysis,farahmand2015classificationbased}\\ 
\hline
\textbf{Soft} \newline With entropy & {Soft Q-learning (SQL)} \citep{haarnoja2017reinforcement}; {Soft Actor-Critic (SAC)} \citep{haarnoja2018soft}; {A3C} \citep{mnih2016asynchronous}; Conservative value iteration \citep{kozuno2019theoretical} & {UREX} \citep{nachum2016improving,agarwal2019learning}; {Exploratory Conservative Policy Optimization (ECPO)}
     \citep{mei2019principled}\\ 
\hline
\end{tabular}
\end{center}
\vspace{-0.5cm}
\caption{A categorization of existing policy optimization methods, in terms of which underlying approach is used to update towards the Boltzmann policy. This list is not exhaustive, because (a) we have likely missed a few methods, given so many variants are based on an Actor-Critic update and (b) some policy optimization methods cannot be seen as updating with a KL divergence to the Boltzmann policy. See Appendix \ref{sec:api} for justification for this categorization.}
\label{tab:categorization}
\end{table}

\subsection{Differences between the FKL and RKL} \label{sec:mean_vs_mode}

Although switching $\pi_\policyparams$ and $\boltzmannQ$ might seem like a small change, there are several differences in terms of the optimization and the solution itself. In terms of the optimization, it can be simpler to sample the gradient of the RKL, because actions do not have to be samplied according to $\mathcal{B}_\tau Q$. The FKL, on the other hand, requires actions to be sampled from the $\boltzmannQ$, which can be expensive. But, favourably for the FKL, if $\pi_\theta(a \mid s) \propto \exp(\theta_a)$, then the FKL is convex with respect to $\theta$ because $\log \sum_i \exp(\theta_i)$ is convex.\footnote{This fact can be seen by showing that the Hessian is positive semi-definite.} The RKL, on the other hand, is generally not convex with respect to $\theta$, even if $\pi_\theta$ is parameterized with a Boltzmann distribution. 

The other critical difference is in terms of the solution itself. If $\boltzmannQ$ is representable by $\pi_\policyparams$ for some $\policyparams$, then the FKL and RKL both have the same solution: $\boltzmannQ$. Otherwise, they make different trade-offs. Of particular note is the well-known fact that the forward KL causes mean-seeking behavior and the reverse KL causes mode-seeking behavior \citep{bishop_mlbook}. To understand the reason, we look at the expressions for each divergence. For a target distribution $q$, the forward KL is $\int_x q(x) \log\left(\frac{q(x)}{p(x)}\right) \diff x$. 
If there is an $x$ where $q(x) >> 0$ (i.e. $q(x)$ is significantly greater than zero) and $p(x)$ is very close to zero, then $q(x)\log\left(\frac{q(x)}{p(x)}\right)$ is large. In fact, as $p(x) \rightarrow 0$, we have that $q(x)\log\left(\frac{q(x)}{p(x)}\right) \to \infty$. Therefore, to keep the forward KL small, whenever $q(x) >> 0$ then we also need $p(x) >> 0$.

This can result in $p$ that is quite different from $q$, if the parameterized $p$ cannot represent $q$. Particularly, if $q$ is multimodal and $p$ is unimodal, then the $p^*$ that minimizes the forward KL will try to cover all of the modes simultaneously, even if the cost for that is placing high mass in regions where $q(x) \approx 0$. For us, this corresponds to regions where the action-values are low. This forward KL solution, which is also called the M-projection (the M stands for moment), is known to be moment-matching. In the case that the family of distributions parameterizing $p$ is exponential and has some element whose moments match those of $q$, then the moments of $p^*$ and $q$ will match \citep{koller2009probabilistic}.

The reverse KL, on the other hand, has the expression $\int_x p(x) \log\left(\frac{p(x)}{q(x)}\right) \diff x$.
Even if $q(x) >> 0$, we can choose $p(x) = 0$ without causing $p(x)\log\left(\frac{p(x)}{q(x)}\right)$ to get big. This means that $p$ can select one mode, if $q$ is multi-modal. This can be desirable in RL, because it can concentrate the action probability on the region with highest value, even if there is another region with somewhat high action values. This distinction provides another helpful mnemonic: the \textbf{forward} KL cares about the \textbf{full} support of the target distribution $q$ and the \textbf{reverse} KL can \textbf{restrict} the support.

\begin{figure}[tb!]
	\centering
	\begin{tabular}{c c c}
		\begin{subfigure}[b]{0.33\linewidth}
		\includegraphics[width=\columnwidth]{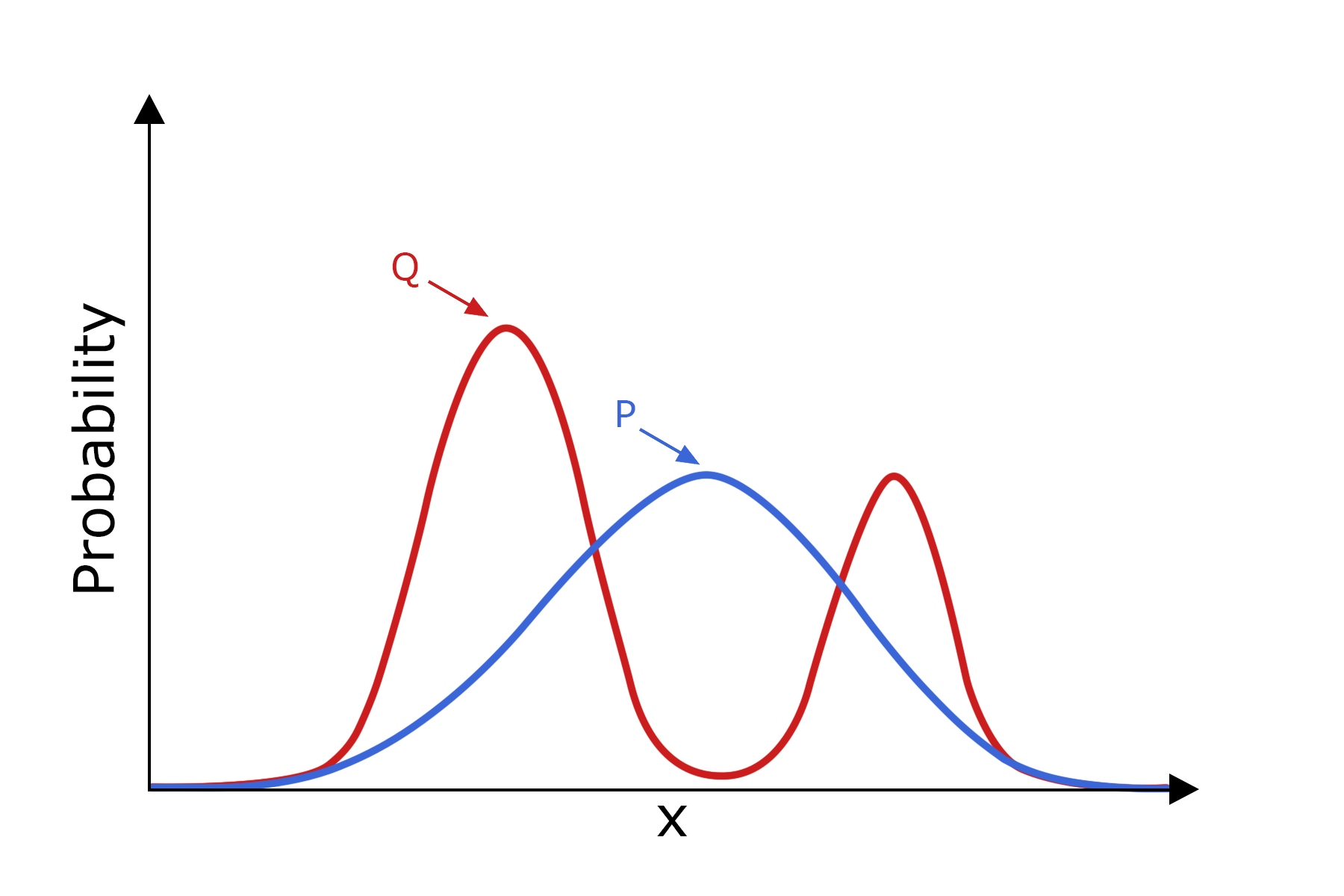} 
		\caption{FKL}
		\end{subfigure} &				
		\begin{subfigure}[b]{0.3\linewidth}
		\includegraphics[width=\columnwidth]{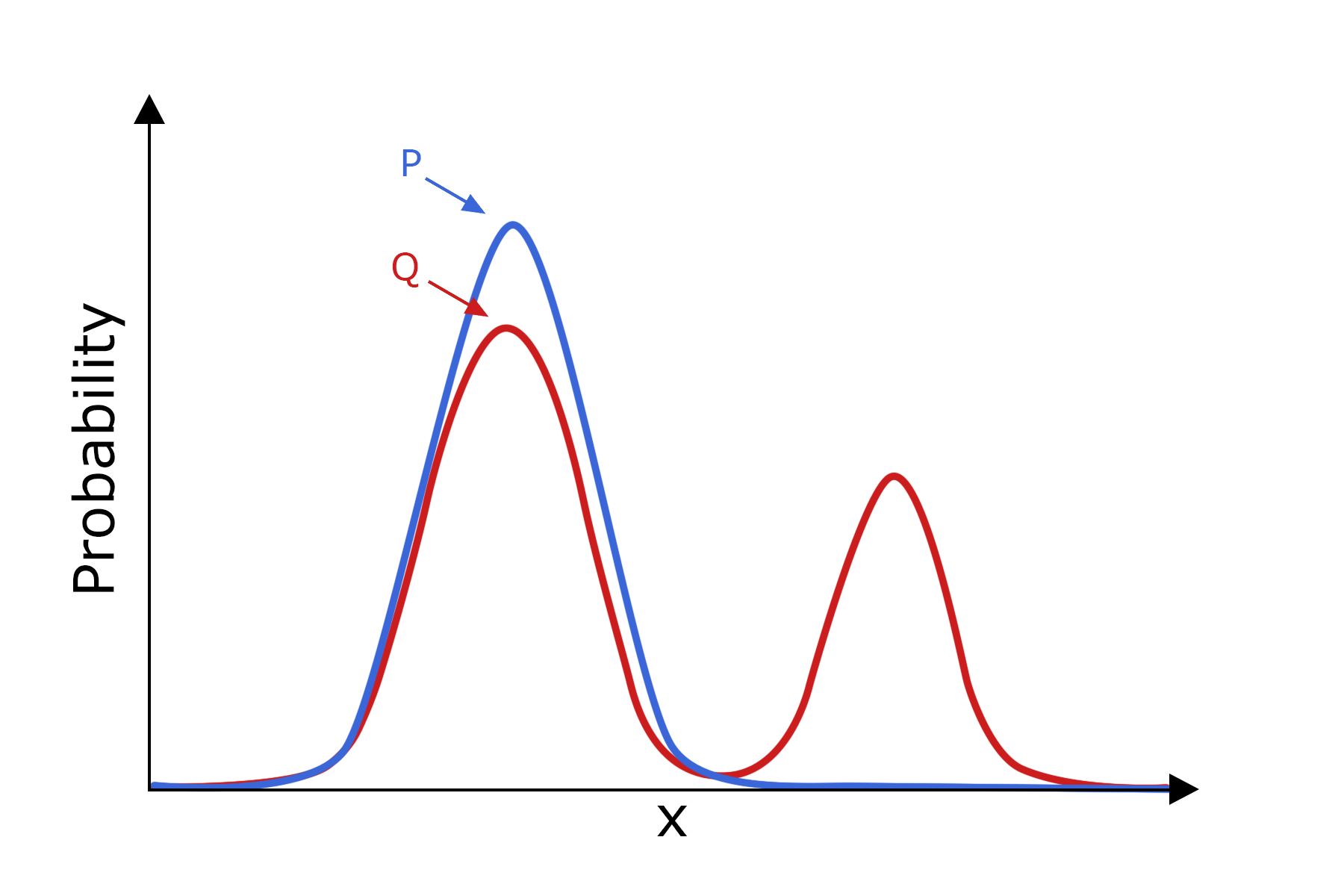} 
		\caption{RKL (global maximum)}
		\end{subfigure} &
		\begin{subfigure}[b]{0.3\linewidth}
		\includegraphics[width=\columnwidth]{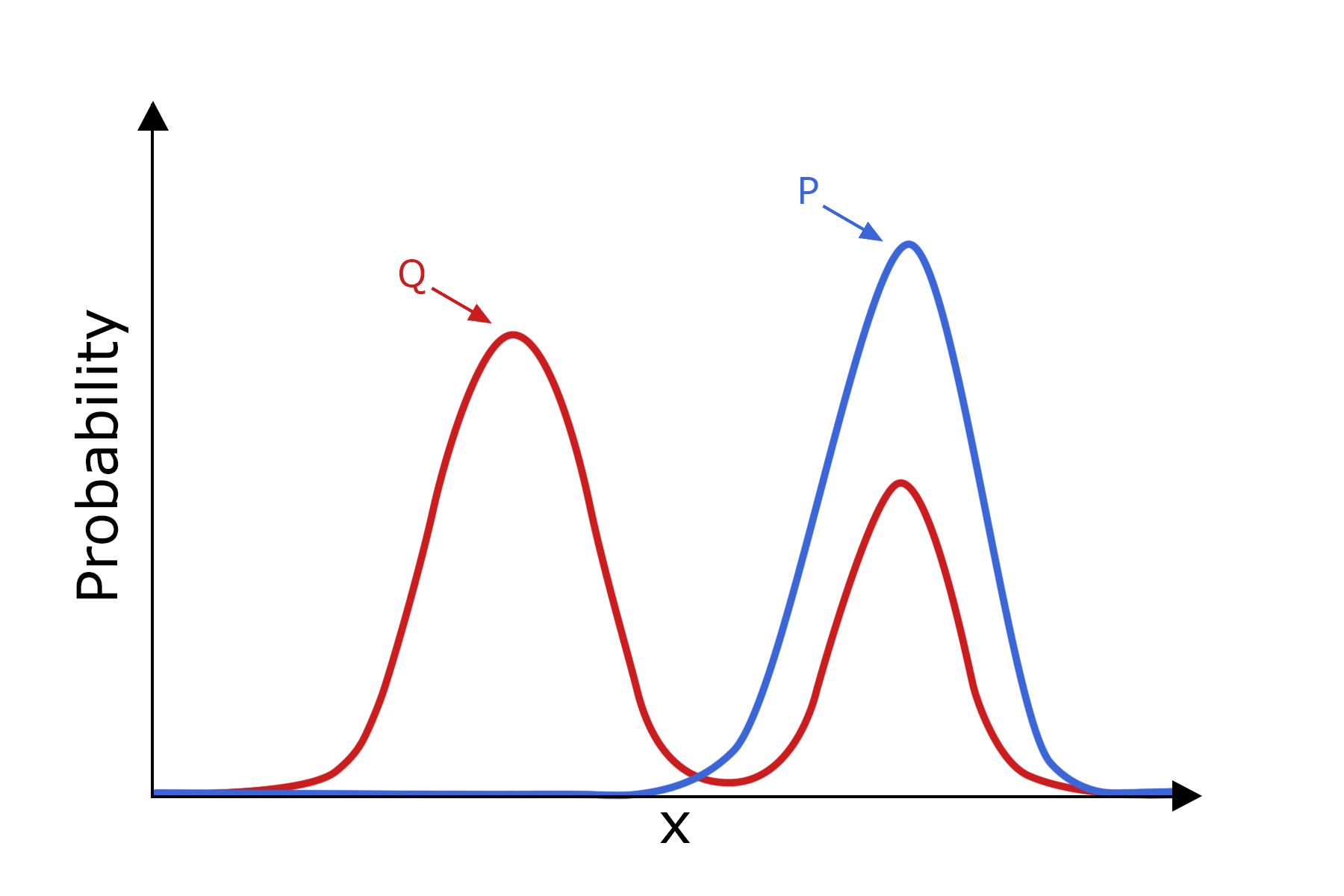} 
		\caption{RKL (subopt. local max)}
		\end{subfigure} 
	\end{tabular}  
	\vspace{-0.6cm}
	\caption{Mean-seeking (FKL) and mode-seeking (RKL) behavior.}\label{fig:mean_mode_seeking}
\end{figure}

However, the reverse KL can get stuck in sub-optimal solutions. 
If $q(x)$ is near zero for some $x$, and we pick $p(x) >> 0$, then $p(x)\log\left(\frac{p(x)}{q(x)}\right)$ can be large. As $q(x)$ gets closer to zero, this number goes to infinity. Therefore, reducing the reverse KL will lead to $p$ such that when $q(x) \approx 0$ then $p(x) \approx 0$. Because of this, the reverse KL is sometimes called zero-forcing or cost-averse. Similarly to the forward KL, this has certain consequences if $p$ is parameterized in a way that cannot represent $q$. In the multimodal target example we used for the forward KL, $p^*$ that minimizes the reverse KL, often called the I-projection (the I stands for information), will try to avoid placing mass in regions where $q(x) \approx 0$. This means that $p$ may end up in a sub-optimal mode when reducing the reverse KL via gradient descent. Both behaviors are illustrated in \Cref{fig:mean_mode_seeking}.

When approximating some distributions, reducing the FKL can cause overestimation of the tail of the target because of the mean-seeking behavior, whereas reducing the RKL underestimates it because of the mode-seeking behavior. In variational inference, the posterior distribution is approximated using a variational distribution. Importance sampling (IS) can be used to debias estimates obtained from some Bayesian inference procedures. The IS proposal distribution is commonly obtained by minimizing the RKL. However, because of this underestimation of the tail of the target, the quality of the IS estimates sometimes suffer, and a proposal distribution that overestimates the tail of the target is more desirable. Recent work \citep{jerfel2021variational} has shown that the FKL can be superior in this case. Better understanding the implications of reducing the FKL in the context of approximate policy improvement will facilitate incorporate such advantages into future RL algorithms.

\subsection{The Weighting over States}

The above greedification objectives, and corresponding gradients, are defined per state. To specify the full greedification objective across states, we need a weighting $d: \statespace \rightarrow \mathbb{R}^{+}$. Under function approximation, the agent requires this distribution to trade-off accuracy of greedification across states. The full objective for the RKL is
\begin{equation*}
    \int_\statespace d(s) \RKL{\pi_\policyparams}{\boltzmannQ}(s) \diff s.
\end{equation*}
The other objectives are specified similarly.  

The state weighting specifies how function approximation resources should be allocated for greedification. If there are no trade-offs, such as if the Boltzmann policy can be perfectly represented in each state, then the state weighting plays almost no role. It simply needs to be positive in a state to ensure the KL is minimized for that state. Otherwise, it may be that to make the policy closer to the Boltzmann policy in one state, it has to make it further in another state. 
The state weighting specifies which states to prioritize in this trade-off. 

Algorithms in practice use a replay buffer, where, without reweighting, $d$ implicitly corresponds to the state frequency in the replay buffer. 
We might expect early on that the implicit weighting is similar to the state visitation distribution under a random policy, and later more similar to the state visitation under a near-optimal policy---if learning is effective. The ramifications of allowing $d$ to be chosen implicitly by the replay buffer are as yet not well understood. In practice, algorithms seem to perform reasonably well, even without carefully controlling this weighting, possibly in part due to the fact that large neural networks are used to parameterize the policy that are capable of representing the target policy. 

There is, however, some evidence that the weighting can matter, particularly from theoretical work on policy gradient methods. 
This role of the weighting might seem quite different from the typical role in the policy gradient, but there are some clear connections. When averaging the gradient of the Hard RKL with weighting $d$, we have 
$- \int_\statespace d(s)\int_\actionspace Q(s,a) \nabla \pi_\theta(a \mid s) \, \diff a\, \diff s.$
If $d = d^{\pi_\policyparams}$ and $Q = Q^{\pi_\policyparams}$, then we have the true policy gradient; otherwise, for different weightings $d$, it may not correspond to the gradient of any function \citep{nota2019policy}. A similar issue has been highlighted for the off-policy policy gradient setting \citep{imani2018off}, where using the wrong weighting results in convergence to a poor stationary point. These counterexamples have implications for API, as they suggest that with accurate policy evaluation steps, the iteration between evaluation and greedification---the policy gradient step---may converge to poor solutions, without carefully selecting the state weighting. 

At the same time, this does not mean that the weighting must correspond to the policy state visitation distribution. Outside these counterexamples, many other choices could result in good policies. In fact, the work on CPI indicates that the weighting with $d^\pi$ can require a large number of samples to get accurate gradient estimates, and moving to a more uniform weighting over states can be significantly better \citep{kakade2002approximately}. The choice of weighting remains an important open question. For this work, we do not investigate this question, and simply opt for the typical choice in practice---using replay. 

\section{An API Algorithm with FKL or RKL}\label{sec:api-alg}

In this section, we provide a concrete algorithm that uses either the FKL or RKL for greedification within an API framework. The algorithm resembles Soft Actor-Critic (SAC) \citep{haarnoja2018soft}, which was originally described as a policy iteration algorithm. The key choices in the algorithm include (1) how to learn the (soft) action-values, (2) how to obtain an estimate of the RKL or FKL for a given state, and (3) how to sample states. 

To sample states, we use the standard strategy of maintaining a buffer of the most recent experience. We sample states uniformly from this buffer.
To obtain an estimate of the FKL or RKL for a given state, we need to estimate the gradient that has a sum or integral over actions. For the discrete action setting, we can simply sum over all actions. The \textbf{All-Actions} updates from a state $s$ correspond to
\begin{align}
    &\textbf{RKL}: \nabla_\theta \sum_{a \in \actionspace} \pi_\theta(a \mid s) \log \pi_\theta(a \mid s) - \sum_{a \in \actionspace}\nabla_\theta \pi_\theta(a \mid s) \log \boltzmannQ(a \mid s) \nonumber\\
    &\quad\quad\quad\quad= -\sum_{a \in \actionspace} \nabla_\theta\pi_\theta(a \mid s)\left[ \frac{Q(s,a)}{\tau} - \log \pi_\theta(a \mid s) \right] \label{eq:rkl-gradient-dis}\\
    &\textbf{FKL}: - \sum_{a \in \actionspace}  \boltzmannQ(a \mid s) \nabla_\theta \log \pi_\theta(a \mid s)  \label{eq:fkl-gradient-dis} \\
    &\textbf{Hard RKL}: - \sum_{a \in \actionspace}\nabla_\theta \pi_\theta(a \mid s) Q(s, a) \label{eq:hrkl-gradient-dis}\\
    &\textbf{Hard FKL}: -\nabla_\theta \log \pi_\theta \left(\argmax_b Q(s, b) \mid s\right) \label{eq:hfkl-gradient-dis}.
\end{align}
For the Hard FKL, when there is more than one maximal action, we assume that ties are broken randomly. For the continuous action setting, we can try to estimate the All-Actions update with numerical integration. 
More practically, we can simply sample actions.

Sampling actions is more straightforward for the RKL than the FKL. For the RKL and Hard RKL, we simply need to sample actions from the policy. In this case, we assume we sample $n$ actions $a_1, \ldots, a_n$ from $\pi_\theta( \cdot \mid s)$ and, using $\nabla_\theta \pi_\theta (a | s) =  \pi_\theta (a | s) \nabla_\theta \log \pi_\theta (a | s)$ and $\sum_{a \in \actionspace}\pi_\theta (a | s) \nabla_\theta \log \pi_\theta (a | s) \approx \sum_{i=1}^n \frac{ \nabla_\theta \log \pi_\theta (a_i | s) }{n}$, compute a \textbf{Sampled-Action} update, where we also change $Q$ to $Q-V$
\begin{align}
    &\textbf{RKL}: - \frac{1}{n} \sum_{i=1}^n  \nabla_\theta \log \pi_\theta(a_i \mid s) \left( \frac{Q(s,a_i) - V(s)}{\tau} - \log \pi_\theta(a_i \mid s) \right) \label{eq:rkl-gradient}\\
    &\textbf{Hard RKL}: - \frac{1}{n} \sum_{i=1}^n \nabla_\theta \log \pi_\theta(a_i \mid s) (Q(s, a_i) - V(s)) \label{eq:hrkl-gradient}
\end{align}
The inclusion of a baseline $V$ reduces variance due to sampling actions and does not introduce bias. 
Alternatively, for certain distributions, we can use the reparametrization trick and compute alternative sampled-action updates. For the case where the policy is parametrized as a multivariate normal, with $\pi_\theta(\cdot | s) \sim \mathcal{N}(\mu_\theta(s),\Diag(\sigma_\theta(s)))$, where $\Diag(\cdot)$ converts a vector to a diagonal matrix, an action sampled $a \sim \pi_\theta(\cdot | s)$ can be written as $a = a_\theta(s,a') = \mu_\theta(s) + \Diag(\sigma_\theta(s)) a'$ for $a' \sim \mathcal{N}(0,I)$. 
This reparameterization allows gradients to flow through sampled actions by using the chain rule. We can write:
\begin{align*}
\nabla_\theta \Ex_{\pi_\theta}[ f(\pi_\theta(\cdot|s)) ] &= \nabla_\theta \int_a \pi_\theta(a|s) f(\pi_\theta(a|s)) \diff{a} 
= \nabla_\theta \int_{a'} p(a') f(\pi_\theta(a_\theta(s,a')|s)) \diff{a'}\\
&= \int_{a'} p(a') \nabla_\theta f(\pi_\theta(a_\theta(s,a')|s)) \diff{a'} 
\approx \frac{1}{N}\sum_{i=1}^N  \nabla_\theta f(\pi_\theta(a_\theta(s,a_i')|s))
\end{align*}
Applying this to the formulas in \Cref{tab:kl_summary}, the updates are
\begin{align}
    &\textbf{RKL}: \frac{1}{n} \sum_{i=1}^n \nabla_\theta \log(\pi_\theta (a_\theta(s,a_i')|s)) - \Big( \frac{\nabla_\theta Q(s,a_\theta(s,a_i'))}{\tau} \Big) \label{eq:rkl-gradient-reparam}\\
    &\textbf{Hard RKL}: - \frac{1}{n} \sum_{i=1}^n \nabla_\theta Q(s,a_\theta(s,a_i')) \label{eq:hrkl-gradient-reparam} 
\end{align}

For the FKL, we need to sample according to $\boltzmannQ(\cdot \mid s)$, which can be expensive. Instead, we will use weighted importance sampling, similarly to a previous method that minimizes FKL \citep{nachum2016improving}. We can sample actions from $\pi_\theta$, and compute importance sampling rations ${\rho}_i \defeq \frac{\boltzmannQ(a_i \mid s)}{\pi_\theta( a_i \mid s)} \propto \frac{\exp(Q(s, a_i) \tau^{-1})}{\pi(a_i \mid s)}$. To reduce variance, we use weighted importance sampling with $ \tilde{\rho}_i \defeq \frac{{\rho}_i}{\sum_{j = 1}^n {\rho}_j}$, to get the update
\begin{align}
  &\textbf{FKL}: - \sum_{i = 1}^n \tilde{\rho}_i \nabla_\theta \log \pi_\theta(a_i \mid s) \label{eq:fkl-gradient}
\end{align}

The Hard FKL update is the same as in the discrete action setting, with the additional complication that computing the argmax action is more difficult for continuous actions. The simplest strategy is to do gradient ascent on $Q(s, \cdot)$, to find a maximal action. There are, however, smarter strategies that have been explored for continuous action Q-learning \citep{amos2016icnn,gu2016naf,kalashnikov2018qt,ryu2019caql,gu2019sample}. For example, input convex neural networks (ICNNs) ensure the learned neural network is convex in the input action, so that a gradient descent search to find the minimal action of the negative of the action-values is guaranteed to find a maximal action. We could use these approaches to find maximal actions, and then also learn an explicit policy with the hard FKL update that increases the likelihood of these maximizing actions.

Finally, we use a standard bootstrapping approach to learn the soft action-values. 
We perform bootstrapping as per the recommendations in \citet{pardo2017time}. For a non-terminal transition $(s, a, r, s')$, the action values $Q(s,a)$ are updated with the bootstrap target $r + \gamma V(s')$. For a terminal transition, the target is simply $r$. Note that an episode cut-off--- where the agent is teleported to a start state if it reaches a maximum number of steps in the episode---is not a terminal transition and is updated with the usual $r + \gamma V(s')$. 

To compute this bootstrap target, we learn a separate $V$. It is possible to instead simply use $Q(s',a') - \tau \log(\pi(a'|s'))$ for the bootstrap target, but this has higher variance. Instead, a lower variance approach is to use the idea behind Expected Sarsa, which is to compute the expected value for the given policy in the next state. $V(s')$ is a direct estimate of this expected value, rather than computing it from $Q(s', \cdot)$. 
To update $V$, we can use the same bootstrap target for $Q$, but need to incorporate an importance sampling ratio to correct the distribution over actions. To avoid using importance sampling, another option is to use the approach in SAC, where the target for $V(s)$ is $Q(s, a) - \tau \log(\pi(a \mid s))$. The complete algorithm, putting this all together, is in Algorithm \ref{alg:kl-agent}.



\begin{algorithm}
\caption{Approximate Policy Iteration (API) with KL Greedification}
\label{alg:kl-agent}
\begin{algorithmic}
    \State \textbf{Input}: choice of KL divergence; temperature $\tau \geq 0$; learning rates $\alpha_\theta$, $\alpha_v$, $\alpha_w$
    \State Initialize: policy $\pi_\theta$ (parameters $\theta$); action-value estimate $Q_\beta$ (parameters $\beta$); state-value estimate $V_w$ (parameters $w$); experience replay buffer $\mathcal{B}$  \\ 
    Get initial state $s_0$
    \For{$t = 0, \ldots$}
        \State Apply action $a_t \sim \pi_\theta(\cdot \mid s_t)$ and observe $r$, $s_{t+1}$,$\done$
        \State $\mathcal{B} = \mathcal{B} \cup (s_t, a_t, s_{t + 1}, r, \done)$
        \If{ $|\mathcal{B}| \geq $ batch size}
            \State Draw minibatch $\mathcal{D} \sim \mathcal{B}$ 
            
            \State Calculate $g_\theta \approx \Ex_{\mathcal{D}} [\nabla_\theta KL]$ using one of Equations \ref{eq:rkl-gradient-dis} - \ref{eq:fkl-gradient}
            \State Calculate $g_w, g_v$ using \Cref{alg:diff-value-update} 
   
            \State $\theta = \theta - \alpha_\theta g_\theta$
            \State $w = w - \alpha_w g_w$
            \State $v = v - \alpha_v g_v$
        \EndIf
    \EndFor
\end{algorithmic}
\end{algorithm}

\begin{algorithm}
\caption{GetValueUpdates}
\label{alg:diff-value-update}
\begin{algorithmic}
    \State Given: policy $\pi_\theta$; $Q_\beta$; $V_w$; $\tau \geq 0$; batch of data $\mathcal{D}$
    \State $g_w \gets 0, g_v \gets 0$
     \For {$(s, a, r, s')$ in $\mathcal{D}$}  
	\State Draw $\tilde{a} \sim \pi_\theta(\cdot \mid s)$.     
    \State $g_w \gets g_w - (Q_\beta(s, \tilde{a}) - \tau \log \pi_\theta(\tilde{a} \mid s) - V_w(s)) \nabla_w V_w(s)$
    \State $g_v \gets g_v - (r + \gamma \cdot (1 - \done)\cdot V_w(s') - Q_\beta(s, a)) \nabla_v Q_\beta(s, a)$
    \EndFor
    \State $g_w \gets \frac{g_w}{|\mathcal{D}|}, g_v \gets \frac{g_v}{|\mathcal{D}|}$ 
\end{algorithmic}
\end{algorithm}

\section{Theoretical Results on Policy Improvement Guarantees}\label{sec:theory}

We study the theoretical policy improvement guarantees under the RKL and FKL. We start with definitions and by motivating the choice of the entropy regularized setting. We then consider the guarantees, or lack thereof, for the RKL and FKL. 
We first provide an extension of Lemma 2 in \citep{haarnoja2018soft} to rely only upon RKL minimization \textit{on average} across states, and then further extend the result to approximate action values. We then show we can obtain a more practical result, under an additional condition that the policy update does not take too big of a step away from the current policy. 

\citet{geist2019theory} performed error propagation analysis of entropy-regularized approximate dynamic programming algorithms, but they did not provide a monotonic policy improvement guarantee similar to ours. In contrast, \citet{shani2019adaptive} analyzed TRPO and provide monotonic policy improvement guarantee (Lemma 15). However, their result relies on the tabular representation of the policy, and it does not necessarily apply to RKL reduction for general, non-tabular policies. Lemma 2 of \citet{lan2021policyMirror} is the same result as our \Cref{lemma:soft-performance-difference}. \citet{lingwei2020ensuring} provided monotonic policy improvement guarantee similar to \Cref{lemma:soft-performance-difference}. Compared to their result, we take a further step to show that RKL reduction on average across states suffices for monotonic policy improvement. Moreover, we provide additional results that do not depend on having the true action values or improbable state distributions. 

Then we investigate the FKL, for which there are currently no existing policy improvements results. We show that the FKL does not have as strong of policy improvement guarantees as the RKL. We provide a counterexample where optimizing the FKL \textit{does not} induce policy improvement. But, this counterexample does not imply that FKL reduction {cannot} provide policy improvement. We discuss further assumptions that can be made to ensure that FKL does induce policy improvement.
All proofs are contained in \Cref{app:proofs}. 

\subsection{Definitions and Assumptions}


We characterize performance of the policy in the entropy regularized setting. First, it will be useful to introduce some concepts for unregularized MDPs and then present their counterparts for entropy regularized MDPs. Throughout, we assume that the class of policies $\Pi$ consists of policies whose entropies are finite. This assumption is not restrictive for finite action-spaces, as entropy is always finite in that setting; for continuous action-spaces, for most distributions used in practice like Gaussians with finite variance, the entropy will also be finite. The assumption of finite entropies is necessary to ensure that the soft value functions are well-defined.

For some of the theoretical results for the FKL, we will restrict our attention further to finite action-spaces, to ensure we have non-negative entropies and to use the total variation distance for discrete sets. We use sums instead of integrals throughout our proofs to enhance clarity; unless we explicitly assume a finite action space, all of our results hold as well for general action spaces given standard measure-theoretic assumptions.

\begin{assumption}
Every $\pi \in \Pi$ has finite entropy: $\entropy(\pi(\cdot | s)) < \infty$ for all $s \in \statespace$.
\end{assumption}

\begin{definition}[Unregularized Performance Criterion]\label{def:perf-criterion}
	For a start state distribution $\rho_0$, the performance criterion is defined as
	\begin{align*}
		\eta(\pi) := \Ex_{\rho_0}[V^\pi(S)].
	\end{align*}
\end{definition}

\begin{definition}[Unregularized Advantage]\label{def:advantage}
	For any policy $\pi$, the \textit{advantage} is 
	\begin{align*}
		A^\pi(s, a) := Q^\pi(s, a) - V^\pi(s).
	\end{align*}
\end{definition}
The advantage asks: what is the average benefit if I take action $a$ in state $s$, as opposed to drawing an action from $\pi$? The soft extensions of these quantities are as follows.

\begin{definition}[Soft Performance Criterion]\label{def:soft-performance}
For a start state distribution $\rho_0$ and temperature $\tau > 0$,the soft performance criterion is defined as
	\begin{align*}
		\eta_\tau(\pi) := \Ex_{\rho_0}[V^\pi_\tau(S)].
	\end{align*}
\end{definition}
It will also be helpful to have a soft version of the advantage. An intuition for the advantage in the non-soft setting is that it should be zero when averaged over $\pi$. To enforce this requirement in the soft setting, we require a small modification.
\begin{definition}[Soft Advantage]\label{def:soft-advantage}
	For a policy $\pi$ and temperature $\tau > 0$, the soft advantage is
	\begin{align*}
		A^\pi_\tau(s, a) := Q^\pi_\tau(s, a) - \tau \log \pi(a \mid s) - V^\pi_\tau(s).
	\end{align*}
\end{definition}
\noindent If $\tau = 0$, we recover the usual definition of the advantage function. Like unregularized advantage functions, this definition also ensures $\Ex_\pi[A^\pi_\tau(s, A)]= 0$.

\subsection{Why Use the Entropy Regularized Framework?}

Since the actual goal of RL is to optimize the unregularized objective, it might sound unnatural to instead study guarantees in its regularized counterpart. We can view the entropy regularized setting as a surrogate for the unregularized setting, or simply of alternative interest. In the first case, it may be too difficult to optimize $\eta(\pi)$; entropy regularization can improve the optimization landscape and potentially promote exploration. Optimizing $\eta_\tau(\pi)$ is more feasible and can still get us close enough to a good solution of $\eta(\pi)$. In the second case, we may in fact want to reason about optimal stochastic policies, obtained through entropy regularization. In either setting, it is sensible to understand if we can obtain policy improvement guarantees under entropy regularization. 

There have been several recent papers highlighting that entropy regularization can improve the optimization behavior of policy gradient algorithms. \citet{mei_global2020} studied how entropy regularization affects convergence rates in the tabular case, considering policies parametrized by a softmax. By using a proof technique based on Łojasiewicz inequalities, they were able to show that policy gradients without entropy regularization converge to the optimal policy at a $O(1/t)$ rate. Furthermore, they also showed a $\Omega(1/t)$ bound for this same method, concluding that the bound is unimprovable for vanilla policy gradients. By adding entropy regularization, the convergence rate can be improved to $O(\mathrm{e}^{-t})$.

\citet{ahmed2018understanding} empirically studied how adding entropy regularization changes the optimization landscape for policy gradient methods. By sampling multiple directions in parameter space for some suboptimal policy and visualizing scatter plots of curvature and gradient values around that policy, combined with visualization techniques that linearly interpolate policies, they concluded that adding entropy regularization likely connects local optima. The optimization landscape can be made smoother, while also allowing the use of higher learning rates.

Finally, \citet{ghosh2020operator} provided theoretical justification that (nearly) deterministic policies can stall learning progress. They first provide an operator view of policy gradient methods, particularly showing that REINFORCE can be seen as a repeated application of an improvement operator and a projection operator \citep[Proposition 1]{ghosh2020operator}. They then showed \citep[Proposition 5]{ghosh2020operator} that the performance of the (non-projected) improved policy $\pi'$, $\eta(\pi')$, is equal to $\eta(\piold)$ times a term including the variance: $\eta(\pi') = \eta(\piold) (1+\frac{\text{Variance of Return under $\piold$}}{\text{Expected Return under $\piold$}}) \ge \eta(\piold)$. This means that if the variance under $\piold$ is near zero, then $\eta(\pi') \approx \eta(\piold)$. In that sense, having higher variance can help the algorithm make consistent progress. A common way of achieving higher variance is by adding entropy regularization.

Finally, there is some theoretical work relating the solutions under the unregularized and regularized objectives. From \citep[Proposition 3]{geist2019theory}, if the entropy is bounded for all policies with constants $L_\tau, U_\tau$ giving $L_\tau \leq -\tau \entropy(\pi)  \leq U_\tau$, then we know that 
		$V^\pi(s) - \frac{U_\tau}{1 - \gamma} \leq V^\pi_\tau(s) \leq V^\pi(s) - \frac{L_\tau}{1 - \gamma}.$
%
Using this result, 
we can take expectations across the state space with respect to the starting state distribution, to get
\begin{align*}
	\eta(\pi) - \frac{U_\tau}{1 - \gamma} \leq \eta_\tau(\pi) \leq \eta(\pi) - \frac{L_\tau}{1 - \gamma}.
\end{align*}
Hence, if the upper bound is tight, increasing $\eta_\tau(\pi)$ will increase $\eta(\pi)$. A similar result exists for single-step decision making with discrete actions \citep[Proposition 2]{chen2019surrogate}.

\subsection{Policy Improvement with the RKL}\label{sec_rkl_improvement}

First, we note a strengthening of the original result for policy improvement under RKL reduction \citep{haarnoja2018soft}. 
Particularly, they take $\pi_{new}$ to be the policy that \emph{minimizes} the RKL to $\boltzmannQ^\piold_\tau(s, \cdot)$ {at every state}. Examining their proof reveals that their new policy $\pi_{\mathrm{new}}$ does not have to be the minimizer; rather, it suffices that $\pi_{\mathrm{new}}$ is smaller in RKL than $\pi_{\mathrm{old}}$ at every state $s$. We therefore restate their lemma with this slight modification.
\begin{lemma}[Restatement of Lemma 2 \citep{haarnoja2018soft}]\label{lem:stronger-sac}
\!\!For $\piold, \!\pinew \in \!\Pi$, if for all $s$
\begin{align*}
    \RKL{\pinew}{\boltzmannQ^\piold_\tau}(s) \le \RKL{\piold}{\boltzmannQ^\piold_\tau}(s),
\end{align*}
then $Q^\pinew_\tau(s, a) \geq Q^\piold_\tau(s, a)$ for all $(s, a)$ and $\tau > 0$.
\end{lemma}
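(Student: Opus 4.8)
The plan is to follow the structure of the original soft policy improvement argument from \citep{haarnoja2018soft}, but to isolate the single inequality the argument actually consumes and check that our weaker hypothesis already supplies it. First I would expand the RKL using the definition of the Boltzmann target in \Cref{eq:boltzmann-q}, writing for $\pi \in \{\piold, \pinew\}$
\[
\RKL{\pi}{\boltzmannQ^\piold_\tau}(s) = -\entropy(\pi(\cdot\mid s)) - \tfrac{1}{\tau}\Ex_{a\sim\pi}[Q^\piold_\tau(s,a)] + \log Z^\piold_\tau(s),
\]
where $Z^\piold_\tau(s)$ is the partition function of $\boltzmannQ^\piold_\tau(s,\cdot)$. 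The crucial observation is that $\log Z^\piold_\tau(s)$ depends only on $\piold$ and $s$, never on which policy occupies the first argument, so it cancels when the two RKLs are compared. After this cancellation, rearranging the hypothesized inequality and multiplying through by $\tau > 0$ yields the pointwise bound
\[
\Ex_{a\sim\pinew}[Q^\piold_\tau(s,a) - \tau\log\pinew(a\mid s)] \;\ge\; \Ex_{a\sim\piold}[Q^\piold_\tau(s,a) - \tau\log\piold(a\mid s)] \;=\; V^\piold_\tau(s),
\]
where the final equality is exactly the soft state-value identity recorded above. This is the \emph{only} consequence of the RKL condition I will use, and it is precisely the place where the weakening from ``$\pinew$ minimizes the RKL'' to ``$\pinew$ has smaller RKL than $\piold$'' enters: minimality is never invoked, only that $\piold$ is beaten at each state.

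Next I would bootstrap this one-step inequality through the soft Bellman recursion. Substituting the bound for $V^\piold_\tau(s')$ into $Q^\piold_\tau(s,a) = r(s,a) + \gamma\Ex_{s'}[V^\piold_\tau(s')]$ gives
\[
Q^\piold_\tau(s,a) \;\le\; r(s,a) + \gamma\,\Ex_{s'}\Ex_{a'\sim\pinew}[Q^\piold_\tau(s',a') - \tau\log\pinew(a'\mid s')] \;=\; (T^\pinew Q^\piold_\tau)(s,a),
\]
where $T^\pinew$ is the soft Bellman evaluation operator for $\pinew$, whose fixed point is $Q^\pinew_\tau$. Because $T^\pinew$ is monotone and a $\gamma$-contraction, iterating from $Q^\piold_\tau$ preserves the inequality, so $Q^\piold_\tau \le T^\pinew Q^\piold_\tau \le (T^\pinew)^2 Q^\piold_\tau \le \cdots \to Q^\pinew_\tau$, which is the claimed $Q^\pinew_\tau(s,a) \ge Q^\piold_\tau(s,a)$ for all $(s,a)$.

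I expect the main obstacle to be bookkeeping rather than conceptual. I would lean on the finite-entropy assumption to guarantee that $V^\piold_\tau$, the partition function, and all the action expectations are well defined and finite, so that the cancellation step is legitimate. The convergence $(T^\pinew)^n Q^\piold_\tau \to Q^\pinew_\tau$ needs $\gamma < 1$ together with bounded rewards and entropies for the contraction to apply, with a separate remark for the episodic case; I would flag this but not dwell on it. The genuinely new content relative to \citep{haarnoja2018soft} is confined to the cancellation that shows the averaged-over-actions inequality survives under the weaker hypothesis, so I would present that carefully and treat the monotone bootstrap as standard.
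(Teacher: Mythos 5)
Your proposal is correct and is essentially the paper's own proof: the paper simply defers to the argument of \citet{haarnoja2018soft}, noting that minimality of $\pinew$ is never used, and your write-up reproduces exactly that argument---partition-function cancellation giving $\Ex_{\pinew}[Q^\piold_\tau(s,A) - \tau\log\pinew(A\mid s)] \ge V^\piold_\tau(s)$, followed by the monotone soft Bellman bootstrap to $Q^\pinew_\tau \ge Q^\piold_\tau$. The only nitpick is that the comparison of RKLs requires multiplying by $-\tau$ (flipping the inequality) rather than by $\tau$, but the inequality you state is the correct one, so this is purely a bookkeeping slip.
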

\begin{proof}
Same proof as in \citet{haarnoja2018soft}.
\end{proof}
We extend this result by considering an RKL reduction in average across states, rather than requiring RKL reduction in every state. 
To prove our result, it will be useful to prove a soft counterpart to the classical performance difference lemma \citep{kakade2002approximately}.
\begin{restatable}{lemma}{spd}[Soft Performance Difference]\label{lemma:soft-performance-difference}
For any policies $\piold, \pinew$, any $\tau \geq 0$, we have
\begin{equation*}
   \eta_\tau(\pinew) - \eta_\tau(\piold)= 
    \frac{1}{1 - \gamma}\Ex_{d^\pinew} \bigg[\Ex_{\pinew}[A_\tau^{\piold}(S, A)] -\tau\KL{\pinew}{\piold}(S)\bigg].
\end{equation*}
\end{restatable}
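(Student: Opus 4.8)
The plan is to adapt the standard telescoping proof of the classical performance difference lemma \citep{kakade2002approximately} to the entropy-regularized setting, carefully tracking where the entropy bonuses enter. First I would express $\eta_\tau(\pinew)$ as an expectation over trajectories generated by $\pinew$ of the entropy-augmented return, using $\entropy(\pinew(\cdot \mid S_t)) = \Ex_{\pinew}[-\log \pinew(A_t \mid S_t)]$ to write
\[
    \eta_\tau(\pinew) = \Etraj\left[\sum_{t=0}^\infty \gamma^t\big(r(S_t,A_t) - \tau \log \pinew(A_t \mid S_t)\big)\right].
\]

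Next I would insert a telescoping sum in $\vold$. Since $S_0 \sim \rho_0$ independently of the policy, $\eta_\tau(\piold) = \Ex_{\rho_0}[\vold(S_0)]$, and along $\pinew$-trajectories the identity
\[
    -\vold(S_0) = \Etraj\left[\sum_{t=0}^\infty \gamma^t\big(\gamma \vold(S_{t+1}) - \vold(S_t)\big)\right]
\]
holds by telescoping. Subtracting $\eta_\tau(\piold)$ from $\eta_\tau(\pinew)$ then collapses everything into a single discounted sum whose summand is $r(S_t,A_t) - \tau\log\pinew(A_t\mid S_t) + \gamma\vold(S_{t+1}) - \vold(S_t)$.

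Then I would take the conditional expectation given $(S_t,A_t)$, using $\Ex[r(S_t,A_t)+\gamma\vold(S_{t+1}) \mid S_t,A_t] = \qold(S_t,A_t)$, and rewrite $\qold - \vold$ via the soft advantage of \Cref{def:soft-advantage} as $\qold(S_t,A_t) - \vold(S_t) = \advold(S_t,A_t) + \tau\log\piold(A_t\mid S_t)$. The summand becomes $\advold(S_t,A_t) + \tau\log\piold(A_t\mid S_t) - \tau\log\pinew(A_t\mid S_t)$; taking the expectation over $A_t\sim\pinew(\cdot\mid S_t)$ merges the two log terms into exactly $-\tau\KL{\pinew}{\piold}(S_t)$, leaving the per-state quantity $\Ex_{\pinew}[\advold(S_t,A_t)] - \tau\KL{\pinew}{\piold}(S_t)$. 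Folding the discounted sum over states into $\tfrac{1}{1-\gamma}\Ex_{d^\pinew}[\cdot]$ by the definition of the normalized discounted state visitation distribution yields the claim; the $\tau=0$ case recovers the classical lemma, since then $\advold$ is the ordinary advantage and the KL term vanishes.

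The one delicate point is the entropy bookkeeping: the $-\tau\log\pinew$ coming from the entropy-augmented reward must be kept separate from the $+\tau\log\piold$ that surfaces when converting $\qold - \vold$ into the soft advantage, and it is precisely their combination under $\Ex_{\pinew}$ that produces a KL divergence rather than a bare cross-entropy term. Beyond that, the argument only needs the routine justifications for interchanging the infinite sum with the expectation and for the telescoping step, both of which are licensed by $\gamma<1$ together with Assumption~1, which guarantees finite entropies and hence absolutely convergent, well-defined soft value functions.
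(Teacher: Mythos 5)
Your proposal is correct and is essentially the paper's own proof read in the opposite direction: the paper starts from $\tfrac{1}{1-\gamma}\Ex_{d^{\pinew},\pinew}[A_\tau^{\piold}(S,A)]$, expands the soft advantage, telescopes $\vold$, and adds/subtracts $\tau\log\pinew$ to surface $\eta_\tau(\pinew)$ and the KL term, while you start from $\eta_\tau(\pinew)-\eta_\tau(\piold)$ and arrive at the same decomposition via the identical ingredients (Bellman expansion of $\qold$, telescoping in $\vold$, and merging the $\log\piold$ and $\log\pinew$ terms under $\Ex_{\pinew}$ into the KL). No gaps; the argument is sound.
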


If we set $\tau = 0$, we recover the classical performance difference lemma. Now, we can show that reducing the RKL on average is sufficient and necessary for policy improvement.

\begin{restatable}{proposition}{rklimprove}[Improvement Under Average RKL Reduction]\label{prop:avg-reverse-kl}
For $\piold, \pinew \in \Pi$, define 
\begin{align*}
\Delta \RKL{\piold}{\pinew}(S) 
\!\defeq \!\RKL{\piold}{\BQt^{\piold}}(S) - \RKL{\pinew}{\BQt^{\piold}}(S).
\end{align*}
\begin{equation}
	\text{For } \tau > 0: \quad \eta_\tau(\pinew) - \eta_\tau(\piold) = \frac{\tau}{1 - \gamma} \Ed{\Delta \RKL{\piold}{\pinew}(S)}. \label{eq:rkl_reduction}
\end{equation}
Furthermore, $\eta_\tau(\pinew) \geq \eta_\tau(\piold)$ if and only if $\Ex_{d^{\pinew}}[\Delta \RKL{\piold}{\pinew}(S)] \geq 0$.
\end{restatable}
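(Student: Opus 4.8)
The plan is to derive the claimed identity directly from the Soft Performance Difference Lemma (\Cref{lemma:soft-performance-difference}), which already gives
\[\eta_\tau(\pinew) - \eta_\tau(\piold) = \frac{1}{1-\gamma}\Ex_{d^\pinew}\big[\Ex_{\pinew}[\advold(S,A)] - \tau\KL{\pinew}{\piold}(S)\big].\]
It therefore suffices to prove the single per-state algebraic identity
\[\Ex_{\pinew}[\advold(s,A)] - \tau\KL{\pinew}{\piold}(s) = \tau\,\Delta\RKL{\piold}{\pinew}(s)\]
for every $s$; taking the $d^\pinew$-expectation and pulling the constant $\tau$ out front then yields \eqref{eq:rkl_reduction} verbatim. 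The ``if and only if'' claim follows immediately afterwards, since $\tau/(1-\gamma) > 0$ means the sign of $\eta_\tau(\pinew) - \eta_\tau(\piold)$ equals the sign of $\Ex_{d^\pinew}[\Delta\RKL{\piold}{\pinew}(S)]$.

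First I would expand the reverse KL to the Boltzmann target. Writing $\log\BQt^{\piold}(s,a) = \qold(s,a)/\tau - \log Z^{\piold}(s)$ from the definition in \eqref{eq:boltzmann-q}, each reverse KL becomes
\[\RKL{\pi}{\BQt^{\piold}}(s) = -\entropy(\pi(\cdot\mid s)) - \tfrac{1}{\tau}\Ex_{a\sim\pi}[\qold(s,a)] + \log Z^{\piold}(s).\]
The crucial observation is that forming the difference $\RKL{\piold}{\BQt^{\piold}}(s) - \RKL{\pinew}{\BQt^{\piold}}(s)$ cancels the partition-function term $\log Z^{\piold}(s)$ --- this is precisely why the statement is phrased as a difference of two RKLs sharing the same target, since a single RKL would retain an untreatable $\log Z^{\piold}$. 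What survives is
\[\tau\,\Delta\RKL{\piold}{\pinew}(s) = \Ex_{\pinew}[\qold(s,A)] - \Ex_{\piold}[\qold(s,A)] + \tau\entropy(\pinew(\cdot\mid s)) - \tau\entropy(\piold(\cdot\mid s)).\]

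Next I would expand the left-hand side of the per-state identity. Using the soft advantage definition (\Cref{def:soft-advantage}) together with the soft value relation $\vold(s) = \Ex_{\piold}[\qold(s,A)] + \tau\entropy(\piold(\cdot\mid s))$, the term $\Ex_{\pinew}[\advold(s,A)]$ expands to $\Ex_{\pinew}[\qold(s,A)] - \tau\Ex_{\pinew}[\log\piold(A\mid s)] - \vold(s)$, while $-\tau\KL{\pinew}{\piold}(s) = \tau\entropy(\pinew(\cdot\mid s)) + \tau\Ex_{\pinew}[\log\piold(A\mid s)]$. The two $\tau\Ex_{\pinew}[\log\piold(A\mid s)]$ contributions cancel, and substituting the expression for $\vold(s)$ reproduces exactly the right-hand side computed above, completing the identity.

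The main obstacle is bookkeeping rather than anything conceptual: one must keep straight the three distinct log-probability/entropy contributions --- the $-\tau\log\piold$ buried inside the soft advantage, the $-\tau\log\pinew$ inside $\KL{\pinew}{\piold}$, and the entropy hidden inside $\vold$ --- and verify both cancellations, namely that the $\log\piold$ cross-terms annihilate and that the $\log Z^{\piold}$ partition term drops out upon differencing. Once these cancellations are confirmed, the proposition is immediate: no inequalities or limiting arguments are needed beyond invoking \Cref{lemma:soft-performance-difference} and the positivity of $\tau/(1-\gamma)$ for the equivalence.
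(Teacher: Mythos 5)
Your proof is correct and follows essentially the same route as the paper's: both invoke the Soft Performance Difference Lemma and then establish, via the same algebraic cancellations (the $\tau\log\piold$ cross-terms annihilating, the soft value relation $\vold(s) = \Ex_{\piold}[\qold(s,A)] + \tau\entropy(\piold(\cdot\mid s))$, and the partition-function term $\log Z^{\piold}(s)$ dropping out of the RKL difference), that the right-hand side equals $\frac{\tau}{1-\gamma}\Ex_{d^{\pinew}}[\Delta\RKL{\piold}{\pinew}(S)]$, after which the equivalence follows from $\tau/(1-\gamma) > 0$. The only cosmetic difference is that you verify the identity per state before averaging, whereas the paper carries out the same manipulation directly under the $d^{\pinew}$-expectation.
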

This result shows that reducing the RKL on average, under weighting $d^\pinew$, guarantees improvement. Notice that the more stringent condition of RKL reduction in every state from \Cref{lem:stronger-sac} ensures reduction under the weighting $d^\pinew$; this new result is therefore more general. Ensuring reduction under $d^\pinew$, however, may be difficult in practice, as we do not have access to data under $\pinew$; rather, we have data from $\piold$. We extend this result in Section \ref{sec_extension_dpiold}, to a weighting under $\piold$, by adding a condition on how far $\pinew$ moves from $\piold$. This more practical result relies on the above result, and so we present the above result first as a standalone to highlight the key reason for the policy improvement.

This result provides some theoretical support for using stochastic gradient descent for the RKL, as is done in practice. It is unlikely that we will completely minimize the RKL on every step, nor reduce it in every state. 
With sufficient reduction of the average RKL on each step, this iterative procedure between approximate greedification and exact policy evaluation should converge to an optimal policy. In fact, by inspecting Equation \eqref{eq:rkl_reduction}, we can see that any optimal policy satisfies, for any fixed $\pi_0$,
\begin{align*}
	\pi^* \in \mathrm{arg} \max_{\pi \in \Pi} \eta_\tau(\pi)  = \mathrm{arg} \max_{\pi \in \Pi} \eta_\tau(\pi) - \eta_\tau(\pi_0) = \mathrm{arg} \max_{\pi \in \Pi} \frac{\tau}{1 - \gamma} \Ex_{d^\pi}[\Delta \RKL{\pi_0}{\pi}(S)]
\end{align*}
Note, however, that we cannot generally guarantee that this procedure will converge to the optimal policy. This is because the average RKL reduction may decrease to zero prematurely, in the sense that $\lim_{i \to \infty} \eta_\tau(\pi_i)$ may be less than $\sup_{\pi \in \Pi} \eta_\tau(\pi)$. 

\subsubsection{Extension to Action-value Estimates}\label{sec_rkl_imp_action}

In the previous section we focused on approximate greedification with exact action-values. The theoretical results allowed for improvements on average across the state space, better reflecting what is done in practice. However, algorithms in practice are also not likely to have exact action-values. In this section, we further extend the theoretical results to allow for both approximate greedification and approximation policy evaluation. 

First, we prove an analogue of the soft performance difference lemma for approximate action-values. 
\begin{restatable}{lemma}{approxspd}[Approximate Soft Performance Difference]\label{lemma:approximate-soft-performance-difference}
Let $\piold, \pinew$ be any policies and let $\tau \geq 0$. Let $\hat Q : \statespace \times \actionspace \to \R$ represent an action-value estimate and let $\epsilon(s, a) := \hat Q(s, a) - Q^\piold_\tau(s, a)$ be the per-state approximation error. As well, define
\begin{align*}
    \hat \Delta\RKL{\piold}{\pinew}(S) &:= \RKL{\piold}{\mathcal{B}_\tau \hat Q}(S) - \RKL{\pinew}{\mathcal{B}_\tau \hat Q}(S),\\
    \bar \epsilon &:= \Ex_{d^\pinew}[\Ex_{\pinew}[\epsilon(S, A)] - \Ex_{\piold}[\epsilon(S, A)]].
\end{align*}
%
\begin{equation}\label{eq:approx-spd}
   \text{We have: } \quad \eta_\tau(\pinew) - \eta_\tau(\piold) + \bar \epsilon = \frac{\tau}{1 - \gamma}\Ex_{d^\pinew} [ \hat \Delta\RKL{\piold}{\pinew}(S) ].
\end{equation}
\end{restatable}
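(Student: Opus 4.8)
The plan is to build directly on \Cref{prop:avg-reverse-kl}, which already supplies the exact identity $\eta_\tau(\pinew) - \eta_\tau(\piold) = \frac{\tau}{1-\gamma}\Ex_{d^\pinew}[\Delta\RKL{\piold}{\pinew}(S)]$, where the \emph{true} soft action-values enter through the target $\BQt^{\piold}$. The only difference between this and the claimed identity is that $\Delta\RKL$ (Boltzmann on $Q^{\piold}_\tau$) is replaced by $\hat\Delta\RKL$ (Boltzmann on the estimate $\hat{Q}$). So it suffices to compute the per-state discrepancy $\hat\Delta\RKL{\piold}{\pinew}(s) - \Delta\RKL{\piold}{\pinew}(s)$ and show that, once averaged under $d^\pinew$ and scaled by $\tau/(1-\gamma)$, it reproduces the error correction $\bar\epsilon$. (One could instead rederive everything from \Cref{lemma:soft-performance-difference} by substituting $\hat{Q} = Q^{\piold}_\tau + \epsilon$ into the soft advantage, but reusing the already-proven proposition is cleaner.)

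The substantive step is the per-policy, per-state comparison of the two reverse KLs. Writing $\log \mathcal{B}_\tau\hat{Q}(s,a) = \hat{Q}(s,a)/\tau - \log\hat{Z}(s)$ with $\hat{Z}(s) := \int_\cA \exp(\hat{Q}(s,a)/\tau)\,da$, and likewise $\log\BQt^{\piold}(s,a) = Q^{\piold}_\tau(s,a)/\tau - \log Z^{\piold}(s)$, both divergences carry the same $-\entropy(\pi(\cdot\mid s))$ term, so for any fixed policy $\pi$,
\begin{equation*}
\RKL{\pi}{\mathcal{B}_\tau\hat{Q}}(s) - \RKL{\pi}{\BQt^{\piold}}(s) = -\frac{1}{\tau}\Ex_{\pi}[\epsilon(s,A)] + \log\frac{\hat{Z}(s)}{Z^{\piold}(s)}.
\end{equation*}
The crucial structural observation is that this splits into a policy-\emph{dependent} piece $-\tfrac{1}{\tau}\Ex_{\pi}[\epsilon(s,A)]$ and a policy-\emph{independent} log-partition piece $\log(\hat{Z}(s)/Z^{\piold}(s))$.

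Forming $\hat\Delta\RKL - \Delta\RKL$ subtracts this expression evaluated at $\pinew$ from the one at $\piold$, so the log-partition term cancels exactly, leaving $\hat\Delta\RKL{\piold}{\pinew}(s) - \Delta\RKL{\piold}{\pinew}(s) = \tfrac{1}{\tau}\big(\Ex_{\pinew}[\epsilon(s,A)] - \Ex_{\piold}[\epsilon(s,A)]\big)$. Taking $\Ex_{d^\pinew}[\cdot]$, scaling by $\tau/(1-\gamma)$, and adding \Cref{prop:avg-reverse-kl} then assembles the error correction into $\bar\epsilon$ and yields the stated equality; since everything is an identity, no inequality or convexity argument is needed. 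The main (and essentially only) obstacle is the bookkeeping around the intractable partition functions: the argument hinges on verifying that $\log\hat{Z}(s)$ and $\log Z^{\piold}(s)$ contribute identically to the $\piold$ and $\pinew$ reverse KLs so that they drop out of the difference — this cancellation is exactly what makes $\hat\Delta\RKL$ a meaningful quantity even though $\hat{Z}$ itself is uncomputable, and it is the same mechanism (entropy and partition terms being policy-symmetric within the difference) that underlies the exact result.
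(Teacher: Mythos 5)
Your proposal is correct in its mechanics and takes a genuinely different route from the paper's. The paper proves the lemma from scratch at the per-state level: it expands $\tau\bigl(\KL{\piold}{\mathcal{B}_\tau \hat Q}(s) - \KL{\pinew}{\mathcal{B}_\tau \hat Q}(s)\bigr)$, substitutes $\hat Q = Q^{\piold}_\tau + \epsilon$, folds an added-and-subtracted $\tau \log \piold$ into $\KL{\pinew}{\piold}$, recovers the soft advantage $\advold$, and only then averages under $d^{\pinew}$ and invokes the soft performance difference lemma (\Cref{lemma:soft-performance-difference}). You instead treat \Cref{prop:avg-reverse-kl} as a black box and compute only the discrepancy $\hat \Delta\RKL{\piold}{\pinew}(s) - \Delta \RKL{\piold}{\pinew}(s)$, observing that for a fixed policy the entropy terms are identical and the log-partition terms are policy-independent, so both cancel in the $\piold$-versus-$\pinew$ difference, leaving exactly $\tfrac{1}{\tau}\bigl(\Ex_{\pinew}[\epsilon(s,A)] - \Ex_{\piold}[\epsilon(s,A)]\bigr)$. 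This is a clean refactoring: it isolates precisely where the approximation error enters and avoids redoing the advantage algebra. What the paper's longer route buys is the explicit per-state identity in terms of $\advold$ and $\KL{\pinew}{\piold}$, which it reuses verbatim in the proof of \Cref{prop:approx-dpiold-extension}; your per-state identity can serve the same purpose once combined with the per-state form of the exact result.

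One bookkeeping point deserves attention, and your route makes it visible. After taking $\Ex_{d^{\pinew}}$ and multiplying by $\tau/(1-\gamma)$, your error term is
\begin{equation*}
	\frac{\tau}{1-\gamma}\cdot\frac{1}{\tau}\,\Ex_{d^{\pinew}}\bigl[\Ex_{\pinew}[\epsilon(S,A)] - \Ex_{\piold}[\epsilon(S,A)]\bigr] = \frac{\bar \epsilon}{1-\gamma},
\end{equation*}
not $\bar \epsilon$, so the identity your argument actually establishes is $\eta_\tau(\pinew) - \eta_\tau(\piold) + \tfrac{\bar\epsilon}{1-\gamma} = \tfrac{\tau}{1-\gamma}\Ex_{d^{\pinew}}[\hat \Delta\RKL{\piold}{\pinew}(S)]$. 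Your final sentence asserting that the scaling ``assembles the error correction into $\bar\epsilon$'' glosses over this factor. Note, however, that the paper's own proof makes the identical slip: its last display divides the advantage and KL terms by $1-\gamma$ but leaves the $\epsilon$ terms undivided. So this is a discrepancy between the lemma as stated and any careful derivation, rather than a defect specific to your argument; with $\bar\epsilon$ redefined to absorb the $1/(1-\gamma)$ factor (or the statement adjusted to $\bar\epsilon/(1-\gamma)$), your proof goes through as written.
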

As a corollary, we have a corresponding policy improvement result.
\begin{restatable}{corollary}{approxrkl}[Approximate RKL Reduction]\label{cor:approximate-rkl-reduction}
    Under the assumptions of \Cref{lemma:approximate-soft-performance-difference}, $\eta_\tau(\pinew) \geq \eta_\tau(\piold)$ iff $ \frac{\tau}{1 - \gamma} \Ex_{d^\pinew} \Delta\RKL{\piold}{\pinew}(S) \geq \bar \epsilon.$
\end{restatable}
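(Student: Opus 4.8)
The plan is to derive the corollary as an immediate algebraic consequence of \Cref{lemma:approximate-soft-performance-difference}, which already does all the work of relating the performance gap to the estimated RKL reduction. That lemma gives the exact identity
\begin{equation*}
\eta_\tau(\pinew) - \eta_\tau(\piold) + \bar\epsilon = \frac{\tau}{1-\gamma}\Ex_{d^\pinew}\big[\hat \Delta\RKL{\piold}{\pinew}(S)\big],
\end{equation*}
so the only remaining step is to isolate the performance difference $\eta_\tau(\pinew) - \eta_\tau(\piold)$ and read off when it is nonnegative.

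Concretely, I would first move $\bar\epsilon$ to the right-hand side to obtain
\begin{equation*}
\eta_\tau(\pinew) - \eta_\tau(\piold) = \frac{\tau}{1-\gamma}\Ex_{d^\pinew}\big[\hat \Delta\RKL{\piold}{\pinew}(S)\big] - \bar\epsilon.
\end{equation*}
Both directions of the biconditional are then immediate: the left-hand side is $\geq 0$ precisely when the right-hand side is $\geq 0$, i.e.\ when $\frac{\tau}{1-\gamma}\Ex_{d^\pinew}[\hat \Delta\RKL{\piold}{\pinew}(S)] \geq \bar\epsilon$. Because this is an exact equality rather than a one-sided bound, the equivalence holds in both directions with no assumptions beyond those of \Cref{lemma:approximate-soft-performance-difference}.

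There is essentially no obstacle here; the entire mathematical content lives in \Cref{lemma:approximate-soft-performance-difference} and its exact-value predecessor \Cref{lemma:soft-performance-difference}. The only point worth flagging is notational: the quantity governing improvement is the \emph{estimated} reduction $\hat \Delta\RKL{\piold}{\pinew}$ built from $\mathcal{B}_\tau \hat Q$, not the exact $\Delta\RKL{\piold}{\pinew}$, since the error term $\bar\epsilon$ is exactly what accounts for the gap between greedifying against $\hat Q$ and against the true $Q^\piold_\tau$. When $\hat Q = Q^\piold_\tau$ we have $\bar\epsilon = 0$ and the corollary collapses back to the clean threshold of \Cref{prop:avg-reverse-kl}.
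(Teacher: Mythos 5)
Your proposal is correct and takes exactly the same route as the paper, whose entire proof is that the corollary follows by rearrangement of \Cref{lemma:approximate-soft-performance-difference}. Your notational flag is also apt: the quantity that governs improvement is the estimated reduction $\hat \Delta\RKL{\piold}{\pinew}(S)$ computed against $\mathcal{B}_\tau \hat Q$, which is what the lemma's identity supplies, and the corollary collapses to \Cref{prop:avg-reverse-kl} when $\hat Q = Q^\piold_\tau$ so that $\bar \epsilon = 0$.
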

If we only have access to an estimate $\hat Q$ of $Q^\piold_\tau$, reducing the RKL is not enough to guarantee policy improvement. When one reduces the RKL, one must also take care that $\bar \epsilon$ not be larger than the RKL amount reduced. The quantity $\bar \epsilon$ represents the difference in average approximation error over the action distributions of $\pinew$ and $\piold$. For $\bar \epsilon$ to be small, the approximation error averaged over the $\pinew$ distribution should not be much larger than the approximation error averaged over the $\piold$ distribution. For example, $\bar \epsilon$ could be small if $\pinew$ is similar to $\piold$, or if $\hat Q$ already approximates $Q^\piold_\tau$ well under the state-action distribution induced by $\pinew$.

\subsubsection{Extensions to Weighting Under $d^\piold$}\label{sec_extension_dpiold}
CPI \citep{kakade2002approximately} derives a lower bound of the performance difference under which one is able to guarantee policy improvement using $d^\piold$, assuming $\pinew$ is sufficiently close to $\piold$. Similar considerations allow us to derive a corresponding bound in our setting.

\begin{restatable}{proposition}{dpioldpdl}\label{prop:dpiold-extension}
If $\KL{\pinew}{\piold} (s) \leq \alpha$ for all $s \in \cS$,
\begin{align}
    \eta_\tau(\pinew) - \eta_\tau(\piold)
    &\geq
    \frac{1}{1 - \gamma} \Ex_{d^\piold} \bigg[
        \sum_a \pinew (a|S) \advold (S, a)
        - \tau \KL{\pinew}{\piold} (S)
        - 4 \sqrt{2} V_{\tau, max} \sqrt{\alpha}
    \bigg]\nonumber\\
    &= \frac{1}{1 - \gamma} \Ex_{d^\piold} \bigg[
        \Delta \RKL{\piold}{\pinew}
        - 4 \sqrt{2} V_{\tau, max} \sqrt{\alpha}
    \bigg]
\end{align}
\end{restatable}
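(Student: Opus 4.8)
The plan is to start from the exact soft performance difference identity and then trade the inconvenient $d^\pinew$ weighting for the $d^\piold$ weighting, paying an error that is controlled by the assumed closeness of $\pinew$ to $\piold$. First I would apply \Cref{lemma:soft-performance-difference} to write
\[
\eta_\tau(\pinew) - \eta_\tau(\piold) = \frac{1}{1-\gamma}\Ed{g(S)}, \qquad g(s) := \sum_a \pinew(a\mid s)\,\advold(s,a) - \tau\,\KL{\pinew}{\piold}(s).
\]
The same algebra that proves \Cref{prop:avg-reverse-kl} gives the pointwise identity $g(s) = \tau\,\Delta\RKL{\piold}{\pinew}(s)$ — both sides equal $\tau\,\entropy(\pinew(\cdot\mid s)) + \sum_a \pinew(a\mid s)\,\qold(s,a) - \vold(s)$ once one substitutes $\vold(s) = \sum_a\piold(a\mid s)(\qold(s,a) - \tau\log\piold(a\mid s))$. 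This identity is exactly what lets me pass between the advantage form (first displayed line of the statement) and the $\Delta\RKL$ form (second line).

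Next I would write $\Ed{g(S)} = \Ex_{d^\piold}[g(S)] + \big(\Ed{g(S)} - \Ex_{d^\piold}[g(S)]\big)$, so that the target term appears and the whole problem reduces to lower-bounding the discrepancy $\Ed{g(S)} - \Ex_{d^\piold}[g(S)] = \sum_s (d^\pinew(s) - d^\piold(s))\,g(s)$. Bounding its absolute value by a product of a distribution-shift term and a magnitude term, and then subtracting it, is what produces the advertised penalty $-\,4\sqrt2\,V_{\tau, max}\sqrt\alpha$.

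The distribution-shift term I would handle with the resolvent identity $d^\pinew - d^\piold = \gamma\, d^\pinew (P^\pinew - P^\piold)(I - \gamma P^\piold)^{-1}$, where $P^\pi$ is the state-to-state kernel induced by $\pi$. Using $\|P^\pinew(\cdot\mid s) - P^\piold(\cdot\mid s)\|_1 \le \|\pinew(\cdot\mid s) - \piold(\cdot\mid s)\|_1 = 2\,D_{\mathrm{TV}}(\pinew(\cdot\mid s),\piold(\cdot\mid s))$ and then converting the hypothesis with Pinsker's inequality, $D_{\mathrm{TV}}(\pinew(\cdot\mid s),\piold(\cdot\mid s)) \le \sqrt{\tfrac12\KL{\pinew}{\piold}(s)} \le \sqrt{\alpha/2}$, gives uniform control of $\max_s D_{\mathrm{TV}}$. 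The magnitude factor is the accumulated soft advantage $(I-\gamma P^\piold)^{-1}g$, which is value-function-like and bounded in terms of $V_{\tau, max}$; combining the two with the factors of $2$ from the $\ell_1$/total-variation conversions and the $1/\sqrt2$ from Pinsker is what collapses the constant to $4\sqrt2$.

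The main obstacle is the magnitude factor, not the distribution shift. The soft advantage $\advold(s,a)$ carries the term $-\tau\log\piold(a\mid s)$, which is unbounded in general, so a naive $\|g\|_\infty$ estimate fails; the real work is to show that the accumulated quantity $(I-\gamma P^\piold)^{-1}g$ — equivalently a soft value-function difference — is nonetheless uniformly bounded by a multiple of $V_{\tau, max}$ (this is where restricting to finite action spaces and the precise definition of $V_{\tau, max}$ enter), and to track the constants so that they reduce to exactly $4\sqrt2$. Once the discrepancy is bounded by $4\sqrt2\,V_{\tau, max}\sqrt\alpha$, discarding it as a lower bound and invoking the $g = \tau\,\Delta\RKL{\piold}{\pinew}$ identity yields both displayed inequalities.
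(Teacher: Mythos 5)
Your route is genuinely different from the paper's, and the difference matters. The paper never switches state weightings at all: it applies \Cref{lemma:soft-performance-difference} with the roles of $\pinew$ and $\piold$ \emph{exchanged}, so that $d^\piold$ appears from the outset, and then, at each fixed state, swaps $\Ex_{\piold}[\advnew(s,\cdot)]$ for $\Ex_{\pinew}[\advold(s,\cdot)]$ by bounding the difference of advantages with H\"older and Pinsker on the \emph{action} distributions; the only price is the per-state term $4 V_{\tau, max}\sqrt{2\KL{\pinew}{\piold}(s)}$, with no accumulation over time. You instead keep the $d^\pinew$-weighted identity and try to pay a \emph{state}-distribution-shift penalty, which is the CPI/TRPO-style argument.

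The gap sits exactly where you flagged it. With your factorization $d^\pinew - d^\piold = \gamma\, d^\pinew (P^{\pinew} - P^{\piold})(I - \gamma P^{\piold})^{-1}$, the magnitude factor is $h := (I - \gamma P^{\piold})^{-1} g$, and this is \emph{not} value-function-like: writing $g = r_\tau^{\pinew} + \gamma P^{\pinew}\vold - \vold$ (the soft Bellman residual of $\vold$ under $\pinew$), the resolvent that telescopes it is $(I-\gamma P^{\pinew})^{-1}$, not $(I-\gamma P^{\piold})^{-1}$. With the mismatched kernel there is no telescoping, and the only generic bound is $\norm{h}_\infty \le \norm{g}_\infty/(1-\gamma) \le 2V_{\tau, max}/(1-\gamma)$, which leaves a penalty of order $V_{\tau, max}\sqrt{\alpha}/(1-\gamma)$ --- an extra $1/(1-\gamma)$ relative to the stated $4\sqrt{2}\,V_{\tau, max}\sqrt{\alpha}$. (Trying to rescue this by using the hypothesis to argue $\norm{g}_\infty = O(\sqrt{\alpha}\,V_{\tau, max})$ yields a penalty of order $\alpha V_{\tau, max}/(1-\gamma)$, which still exceeds $4\sqrt{2}\,V_{\tau, max}\sqrt{\alpha}$ unless $\alpha \lesssim (1-\gamma)^2$, so the proposition is not recovered for all $\alpha$.) The fix is to apply the resolvent identity in the opposite order,
\begin{equation*}
	d^\pinew - d^\piold = \gamma\, d^\piold \left(P^{\pinew} - P^{\piold}\right)\left(I - \gamma P^{\pinew}\right)^{-1},
\end{equation*}
so that the magnitude factor becomes $(I - \gamma P^{\pinew})^{-1} g = \vnew - \vold$, an honest soft value difference with $\norm{\vnew - \vold}_\infty \le 2 V_{\tau, max}$. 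Combining this with $\norm{P^{\pinew}(\cdot \mid s) - P^{\piold}(\cdot \mid s)}_1 \le \norm{\pinew(\cdot\mid s) - \piold(\cdot\mid s)}_1 \le \sqrt{2\alpha}$ (Pinsker) bounds the discrepancy by $2\sqrt{2}\,\gamma V_{\tau, max}\sqrt{\alpha} \le 4\sqrt{2}\, V_{\tau, max}\sqrt{\alpha}$, and your argument then goes through, in fact with a better constant than the paper's. The remaining ingredients of your proposal are sound: in particular, the pointwise identity $g(s) = \tau\, \Delta\RKL{\piold}{\pinew}(s)$ is correct and does justify the second displayed equality in the statement.
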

With knowledge of $V_{\tau, max}$ and the exact value functions, one could optimize the lower bound in \Cref{prop:dpiold-extension} as a function of both $\alpha$ and $\pinew$, without knowledge of $d^\piold$. If $V_{\tau ,max}$ is large, then $\alpha$ must be rather small to ensure that the RHS of \Cref{prop:dpiold-extension} is non-negative. Intuitively, the larger the maximum return, the greater the possible error in using $d^\piold$ rather than $d^\pinew$.

We can also combine \Cref{prop:dpiold-extension} and \Cref{lemma:approximate-soft-performance-difference}.
\begin{restatable}{proposition}{approxdpiold}\label{prop:approx-dpiold-extension}
If $\KL{\pinew}{\piold} (s) \leq \alpha$ for all $s \in \cS$,
\begin{align}
    \eta_\tau(\pinew) - \eta_\tau(\piold)
    &\geq
    \frac{1}{1 - \gamma} \Ex_{d^\piold} \bigg[
        \tau  \hat \Delta\RKL{\piold}{\pinew}(S) \nonumber\\
        &\quad\quad\quad+ \sum_a \epsilon(s, a)(\piold(a) - \pinew(a))
        - 4 V_{\tau, max} \sqrt{2\alpha}\bigg].
\end{align}
\end{restatable}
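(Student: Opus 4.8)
The plan is to obtain \Cref{prop:approx-dpiold-extension} directly from \Cref{prop:dpiold-extension} by trading the \emph{exact} soft RKL reduction $\Delta\RKL{\piold}{\pinew}$, defined through the true Boltzmann target $\BQt^{\piold}$, for the \emph{approximate} one $\hat\Delta\RKL{\piold}{\pinew}$, defined through $\mathcal{B}_\tau\hat Q$, at the cost of an explicit approximation-error term. \Cref{prop:dpiold-extension} already supplies the bound under the $d^\piold$ weighting together with the distribution-mismatch penalty, in the form $\frac{1}{1-\gamma}\Ex_{d^\piold}[\tau\Delta\RKL{\piold}{\pinew}(S) - 4\sqrt{2}\,V_{\tau,max}\sqrt{\alpha}]$ (this is the second line of \Cref{prop:dpiold-extension}, recalling that its integrand $\sum_a\pinew(a\mid S)\advold(S,a)-\tau\KL{\pinew}{\piold}(S)$ equals $\tau\Delta\RKL{\piold}{\pinew}(S)$). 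It therefore suffices to rewrite $\tau\Delta\RKL{\piold}{\pinew}(s)$ as $\tau\hat\Delta\RKL{\piold}{\pinew}(s)$ plus the correction term in the statement.

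The key step is a pointwise identity relating the two reductions, which is essentially the computation already carried out in the proof of \Cref{lemma:approximate-soft-performance-difference}. Using the rewriting of the RKL from \Cref{sec:approx-greed-kl}, for any policy $\pi$ we have $\RKL{\pi}{\mathcal{B}_\tau\hat Q}(s) = -\entropy(\pi(\cdot\mid s)) - \tau^{-1}\sum_a\pi(a\mid s)\hat Q(s,a) + \log\hat Z(s)$, and the analogous expression for the exact target with $\qold$ and $Z^\piold$ in place of $\hat Q$ and $\hat Z$. When we form each reduction $\RKL{\piold}{\cdot}(s) - \RKL{\pinew}{\cdot}(s)$, the state-dependent log-partition term is common to both policies and cancels; when we then subtract the approximate reduction from the exact one, the entropy differences $\entropy(\pinew)-\entropy(\piold)$ cancel as well. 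With $\epsilon(s,a)=\hat Q(s,a)-\qold(s,a)$, what survives is
\[
  \tau\,\Delta\RKL{\piold}{\pinew}(s) - \tau\,\hat\Delta\RKL{\piold}{\pinew}(s) = \sum_a \epsilon(s,a)\,(\piold(a\mid s) - \pinew(a\mid s)).
\]

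It then remains to substitute this identity, which holds state-by-state and hence under the $d^\piold$ weighting, into \Cref{prop:dpiold-extension}, and to rewrite the constant via $4\sqrt{2}\sqrt{\alpha}=4\sqrt{2\alpha}$; this reproduces the claimed inequality exactly. I expect the only real care to lie in the $\tau$-bookkeeping---tracking the factor $\tau$ that relates the advantage/KL form of \Cref{prop:dpiold-extension} to $\tau\Delta\RKL{\piold}{\pinew}$---and in fixing the sign so that $\epsilon$ multiplies $(\piold-\pinew)$ rather than $(\pinew-\piold)$. There is no substantive obstacle beyond this algebra, since the hypothesis $\KL{\pinew}{\piold}(s)\le\alpha$ enters only through \Cref{prop:dpiold-extension}.
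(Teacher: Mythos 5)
Your proposal is correct and follows essentially the same route as the paper: the paper also proves this by substituting into \Cref{prop:dpiold-extension} the pointwise identity (extracted from the proof of \Cref{lemma:approximate-soft-performance-difference}) that equates $\tau\Delta\RKL{\piold}{\pinew}(s)$, in its advantage-minus-KL form, with $\tau\hat\Delta\RKL{\piold}{\pinew}(s) + \sum_a \epsilon(s,a)(\piold(a)-\pinew(a))$. Your direct derivation of that identity via cancellation of the log-partition and entropy terms is valid and matches the paper's computation in substance.
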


\subsection{Policy Improvement with the FKL}\label{sec_fkl_improvement}

In this section, we study the policy improvement properties of reducing the FKL. First, in \Cref{sec:fkl-counterexample} we provide a counterexample showing that reducing the FKL leads to a strictly worse policy. Second, in \Cref{sec:direct-fkl-reduction} we provide a sufficient condition on the FKL reduction to ensure policy improvement. The plots in that section show that this bound is non-trivial, but that unfortunately the required reduction is close to the maximum possible reduction. Third, in \Cref{sec:fkl-rkl-connection}, we discuss when reducing the FKL may be used as a surrogate for reducing the RKL, in particular by providing an upper bound for the RKL in terms of the FKL. It will turn out that reducing the FKL alone is insufficient for reducing the RKL because this bound involves not only the FKL, but another term that depends upon $\pinew$. We conclude with a discussion about the implications for the use of FKL for approximate greedification. 

\subsubsection{Counterexample for Policy Improvement under FKL Reduction}\label{sec:fkl-counterexample}
Unfortunately, the FKL does not enjoy the same policy improvement guarantees as the RKL. In the next proposition, we provide a counterexample where reducing the FKL makes the policy worse. 
The intuition behind this example is that $\piold$ almost always chooses the good action, but is made close to deterministic and thus arbitrarily large in FKL to $\BQold$, while $\pinew$, by being less deterministic, reduces the FKL to $\BQold$ but it almost always chooses the bad action, thus being worse in the soft-objective. 

Again we use the notation
	\begin{align*}
	\Delta \FKL{\piold}{\pinew} (s) \defeq \FKL{\piold}{\BQt^{\piold}} (s)  - \FKL{\pinew}{\BQt^{\piold} }(s)
	\end{align*}
where $\Delta \FKL{\piold}{\pinew }(s) > 0$ means we obtained FKL reduction: the new policy has lower FKL than the old policy.	
\begin{restatable}{proposition}{fklcounter}[FKL Counterexample]\label{lem:forward-kl-counterexample} There exists an MDP such that, for any $\tau \geq 0$, there exists a pair of policies $(\piold, \pinew)$ where $\Delta \FKL{\piold}{\pinew }(s)  > 0$ at every state but the new policy has lower value: $\Qpitau{\pinew} (s, a) < \Qpitau{\piold} (s, a)$ at every state-action pair $(s, a) \in \cSA$, $\Vpitau{\pinew} (s) < \Vpitau{\piold} (s)$ at every state $s \in \statespace$, and $\eta_\tau (\pinew) < \eta_\tau (\piold)$.
\end{restatable}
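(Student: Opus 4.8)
The plan is to exhibit an explicit, minimal MDP and, for each $\tau$, two policies for which every quantity is available in closed form, thereby collapsing the four required inequalities to two scalar conditions in a single variable. First I would take one recurrent state $\statespace = \{s_0\}$ with a self-loop under both actions $\actionspace = \{a_1, a_2\}$, discount $\gamma \in (0,1)$, start distribution $\rho_0 = \delta_{s_0}$, and rewards $r(s_0, a_1) = r_1 > r_2 = r(s_0, a_2)$. Because both actions return to $s_0$, the soft value solves $\Vpitau{\pi}(s_0) = \frac{1}{1-\gamma}\big(\Ex_\pi[r(s_0, A)] + \tau\, \entropy(\pi(\cdot \mid s_0))\big)$ and $\Qpitau{\pi}(s_0, a) = r(s_0, a) + \gamma \Vpitau{\pi}(s_0)$. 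The crucial simplification is that $\Qpitau{\piold}(s_0, a)$ differs from $r(s_0, a)$ only by a state-constant, so the target $\BQt^{\piold}(s_0, a) \propto \exp(r(s_0, a)/\tau)$ is independent of $\piold$; write $\beta := \BQt^{\piold}(s_0, a_1) = \big(1 + e^{-(r_1 - r_2)/\tau}\big)^{-1} \in (\tfrac12, 1)$.

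Writing $p := \piold(a_1 \mid s_0)$ and $q := \pinew(a_1 \mid s_0)$, I would reduce all four conclusions to statements about the two scalar functions
\[
\phi(x) := -\beta \log x - (1-\beta)\log(1-x), \qquad g(x) := x r_1 + (1-x) r_2 + \tau\, h(x),
\]
where $h(x) = -x\log x - (1-x)\log(1-x)$. Indeed $\FKL{\pi}{\BQt^{\piold}}(s_0) = \phi(\pi(a_1\mid s_0)) - \entropy(\BQt^{\piold})$, so $\Delta\FKL{\piold}{\pinew}(s_0) = \phi(p) - \phi(q)$, while $\Vpitau{\pi}(s_0) = g(\pi(a_1\mid s_0))/(1-\gamma)$. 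Since $\Qpitau{\pi}(s_0,\cdot)$ and $\eta_\tau(\pi) = \Vpitau{\pi}(s_0)$ are increasing in $\Vpitau{\pi}(s_0)$, it suffices to find $p, q$ with $\phi(q) < \phi(p)$ (FKL reduced at the only state) and $g(q) < g(p)$ (which then yields $\Vpitau{\pinew} < \Vpitau{\piold}$, $\Qpitau{\pinew} < \Qpitau{\piold}$ at every pair, and $\eta_\tau$ reduced).

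The observation driving the counterexample is that $\phi$ and $g$ are extremized at the \emph{same} point $x = \beta$ — $\phi$ is minimized there and $g$ (concave) is maximized there — yet their global shapes differ sharply: $\phi(x) \to +\infty$ as $x \to 1$, whereas $g$ stays bounded. I would therefore take $\piold$ \emph{over-peaked} on the good action, i.e. $p \to 1$. Then $\phi(p) \to +\infty$ while $g(p) \to r_1$ remains bounded. Consequently the sublevel set $\{x : \phi(x) < \phi(p)\} = (x_-, x_+)$ has left endpoint $x_- \to 0$, whereas the threshold $y_- < \beta$ solving $g(y_-) = g(p)$ stays bounded away from $0$ (one checks $g(\beta) > g(1) = r_1$, so the root $y_- \in (0,\beta)$ is positive). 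Hence for $p$ close enough to $1$ we get $x_- < y_-$, and any $q \in (x_-, y_-)$ satisfies $\phi(q) < \phi(p)$ and, since $g$ is increasing on $(0,\beta)$, $g(q) < g(y_-) = g(p)$. Such a $q < \beta$ places more mass on the bad action $a_2$, matching the intuition that $\pinew$ is genuinely worse yet lower in FKL.

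The main obstacle is the quantifier order: one fixed MDP must work for \emph{every} $\tau \ge 0$, which forces two checks. First, the window $(x_-, y_-)$ must remain nonempty as $\tau \to \infty$ (where $\beta \to \tfrac12$ and $y_- \to 0$); this still holds because $x_-$ can be pushed below any fixed $y_-$ by taking $p$ closer to $1$ for that particular $\tau$, which is legitimate since the policies may depend on $\tau$. Second, the genuinely degenerate case is $\tau = 0$ (Hard FKL): there $\beta \to 1$, the target concentrates on $a_1$, and both $\phi$ and $g$ become monotone in $\pi(a_1)$, so FKL reduction and value reduction align and the two-action example collapses. I would resolve this by using, as the single fixed MDP, the same recurrent state equipped with a third action $a_3$ of reward $r_3 \ll r_2$: the Hard FKL equals $-\log \pi(a_1 \mid s_0)$ and is blind to how residual mass is split, so choosing $\pinew$ with larger $\pi(a_1\mid s_0)$ but with its remaining mass shifted from $a_2$ onto the very-bad $a_3$ strictly lowers the Hard FKL while strictly lowering expected reward, hence value. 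For $\tau > 0$ the third action is kept at a common negligible probability for both policies and the two-action analysis applies by continuity, so the three-action recurrent state covers all $\tau \ge 0$ uniformly.
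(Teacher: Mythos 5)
Your proof is correct, and its core mechanism is the same as the paper's: a single self-looping state in which $\piold$ is made nearly deterministic on the high-reward action, so that $\FKL{\piold}{\BQt^{\piold}}$ diverges (the Boltzmann target keeps positive mass on the other action), while a $\pinew$ shifted toward the low-reward action has bounded FKL and strictly lower soft value; the paper's $\epsilon_1$ and $\epsilon_2$ correspond to your $1-p$ and $q$. What you do differently is twofold. First, the reduction to the two scalar functions $\phi$ and $g$, both extremized at the target probability $\beta$, together with the sublevel-set argument $x_- < y_-$, is a cleaner organization of the same computation and makes transparent why the example works: $\phi$ blows up at the boundary of the simplex while $g$ stays bounded. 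Second, and more substantively, you handle $\tau = 0$, which the paper's proof as written does not: its expressions for $\BQt^{\piold}(a_i)$ involve $e^{(\cdot)/\tau}$ and require the limiting target probability of the bad action to be strictly positive, which holds only for $\tau > 0$; at $\tau = 0$ the target collapses to a point mass on the good action, the old policy's hard FKL tends to $0$ rather than $\infty$, and the two-action construction fails---exactly the degeneracy you identify. Your three-action fix (the hard FKL $-\log\pi(a_1 \mid s_0)$ is blind to how the residual mass is split, so shifting that mass onto a very bad action lowers the hard FKL and the value simultaneously) repairs this, so that the claim for all $\tau \geq 0$ genuinely holds on one fixed MDP. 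The only step you should spell out is the continuity argument for $\tau > 0$ with three actions: the individual FKLs diverge as the common mass $\delta$ on $a_3$ shrinks (the target keeps fixed positive mass on $a_3$), but the $a_3$ terms cancel exactly in the difference $\Delta \FKL{\piold}{\pinew}(s_0)$ because $\delta$ is the same for both policies, so the strict inequalities inherited from the two-action case do survive for small $\delta$.
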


\subsubsection{Policy Improvement under Sufficient FKL Reduction}\label{sec:direct-fkl-reduction}

We know that completely reducing the FKL, under the assumption that we can represent all policies, will guarantee improvement, since then we would have $\pinew = \BQt^\piold$. 
One might hope that additional conditions, that ensure \emph{sufficient} reduction in the FKL, might imply policy improvement. Because policy improvement is obtained if and only if the RKL is reduced, as per \Cref{prop:avg-reverse-kl}, we can equivalently ask what conditions on the FKL ensure we obtain RKL reduction. In this section, we provide a lower bound on the FKL reduction, that guarantees the RKL is reduced and so the policy is improved. We numerically investigate the magnitude of required reduction under this condition, to see how much lower it is than completely reducing the FKL. 

As before, we could first prove this result under reduction per-state; this is a more restricted setting that implies reduction in average across the states. We therefore provide only the more general result, which averages across states, as the connection between per-state and across state has already been clearly shown above and is not useful to repeat. 
\begin{restatable}[Improvement Under Average Sufficient FKL Reduction]{proposition}{fklredavg}
	\label{prop:fklredavg} Assume the action set is finite. If
	\begin{equation}\label{eq:rkl-cross-ent-positive}
	    \Ed{\RKL{\piold}{\BQold}(S) +   \Ex_{\BQold}[\log \piold(\cdot|S)]} \geq 0,
	\end{equation}
 \begin{align}
 	&\quad \Ex_{d^{\pinew}}[\Delta \FKL{\piold}{\pinew}(S) ]	 \geq \Ex_{d^{\pinew}}[\FKL{\piold}{\BQold}(S)] - \frac{1}{2}\Bigg(\! \tfrac{\tau}{\norm{Q^{\piold}_\tau\!}_\infty} \Big( \Ed{\RKL{\piold}{\BQold}(S)} \nonumber \\
 	& \quad\quad\quad\quad\quad\quad\quad\quad\quad\quad\quad\quad\quad\quad\quad+  \Ex_{d^{\pinew}} \left[ \Ex_{\BQold}[\log(\piold(\cdot|S))] \right] \!\Big)\! \Bigg)^2, \label{eq_required_fkl}\\
 	&\quad\quad\quad\quad\quad\quad\quad\quad\quad\quad\text{and} \quad\forall s \in \cS, \,\Delta \FKL{\piold}{\pinew}(S) \geq 0,
 	\end{align}
	then $\eta_\tau(\pinew) \geq \eta_\tau(\piold)$.
\end{restatable}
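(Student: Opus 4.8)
The plan is to route everything through the reverse-KL improvement criterion of \Cref{prop:avg-reverse-kl} and then control the \emph{average} RKL reduction by the average FKL of $\pinew$. Since $\tau > 0$, \Cref{prop:avg-reverse-kl} tells us $\eta_\tau(\pinew) \geq \eta_\tau(\piold)$ holds exactly when $\Ed{\Delta \RKL{\piold}{\pinew}(S)} \geq 0$, i.e. when $\Ed{\RKL{\pinew}{\BQold}(S)} \leq \Ed{\RKL{\piold}{\BQold}(S)}$, so this is all I need to establish. First I would simplify the hypothesis \eqref{eq_required_fkl}: writing $\Delta \FKL{\piold}{\pinew}(S) = \FKL{\piold}{\BQold}(S) - \FKL{\pinew}{\BQold}(S)$, the two $\FKL{\piold}{\BQold}$ terms cancel and the condition collapses to the single bound
\begin{equation*}
\Ed{\FKL{\pinew}{\BQold}(S)} \;\leq\; \tfrac{1}{2}\Big(\tfrac{\tau}{\norm{\qold}_\infty}\,B\Big)^2, \qquad B \defeq \Ed{\RKL{\piold}{\BQold}(S)} + \Ed{\Ex_{\BQold}[\log \piold(\cdot\mid S)]}.
\end{equation*}
Assumption \eqref{eq:rkl-cross-ent-positive} is precisely the statement $B \geq 0$, which is what later legitimizes taking a square root with the right sign.

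The crux is a pointwise identity that rewrites the $\pinew$-dependent quantity governing the RKL into a bounded $Q$-gap plus nonpositive terms. Expanding $\log \BQold(s,a) = \qold(s,a)/\tau - \log Z(s)$ and using $\entropy(\BQold(s,\cdot)) = \log Z(s) - \tfrac{1}{\tau}\Ex_{\BQold}[\qold(s,A)]$, I would derive the per-state identity
\begin{equation*}
\RKL{\pinew}{\BQold}(s) + \Ex_{\BQold}[\log \piold(\cdot\mid s)] = -\entropy(\pinew(\cdot\mid s)) + \tfrac{1}{\tau}\big(\Ex_{\BQold}[\qold(s,A)] - \Ex_{\pinew}[\qold(s,A)]\big) - \FKL{\piold}{\BQold}(s).
\end{equation*}
Because the action set is finite, $\entropy(\pinew(\cdot\mid s)) \geq 0$, and $\FKL{\piold}{\BQold}(s) \geq 0$, so both trailing terms are nonpositive and may be dropped. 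Bounding the $Q$-gap by total variation, $\tfrac{1}{\tau}\big(\Ex_{\BQold}[\qold(s,A)] - \Ex_{\pinew}[\qold(s,A)]\big) \leq \tfrac{\norm{\qold}_\infty}{\tau}\norm{\pinew(\cdot\mid s) - \BQold(s,\cdot)}_1 = \tfrac{2\norm{\qold}_\infty}{\tau}\,d_{\mathrm{TV}}(s)$, and averaging under $d^{\pinew}$ gives $\Ed{\RKL{\pinew}{\BQold}(S)} + \Ed{\Ex_{\BQold}[\log \piold(\cdot\mid S)]} \leq \tfrac{2\norm{\qold}_\infty}{\tau}\,\Ed{d_{\mathrm{TV}}(S)}$.

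Finally I would close the loop with Pinsker and Jensen. Pinsker applied to $\FKL{\pinew}{\BQold}(s) = \KL{\BQold}{\pinew}(s)$ yields $d_{\mathrm{TV}}(s) \leq \sqrt{\FKL{\pinew}{\BQold}(s)/2}$ pointwise, and concavity of the square root (Jensen) gives $\Ed{d_{\mathrm{TV}}(S)} \leq \sqrt{\tfrac{1}{2}\Ed{\FKL{\pinew}{\BQold}(S)}}$. Chaining this with the simplified hypothesis and $B \geq 0$,
\begin{equation*}
\Ed{\RKL{\pinew}{\BQold}(S)} + \Ed{\Ex_{\BQold}[\log \piold(\cdot\mid S)]} \leq \tfrac{2\norm{\qold}_\infty}{\tau}\sqrt{\tfrac{1}{2}\Ed{\FKL{\pinew}{\BQold}(S)}} \leq B,
\end{equation*}
and subtracting the common cross-entropy term from both sides leaves exactly $\Ed{\RKL{\pinew}{\BQold}(S)} \leq \Ed{\RKL{\piold}{\BQold}(S)}$, i.e. the required average RKL reduction, so \Cref{prop:avg-reverse-kl} delivers $\eta_\tau(\pinew) \geq \eta_\tau(\piold)$.

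The main obstacle I anticipate is bookkeeping rather than ideas: assembling the pointwise identity correctly and tracking the constants (the $\sqrt{2}$ from Pinsker, the $\tfrac{1}{2}$ inside the squared bound, and the factor $2$ from $\norm{\cdot}_1 = 2 d_{\mathrm{TV}}$) so that they cancel exactly to recover $B$. Conceptually the argument is just ``Pinsker plus boundedness of $\qold$ turns an FKL bound into a total-variation bound and hence into RKL control.'' The finiteness of the action set enters only through $\entropy(\pinew) \geq 0$, and \eqref{eq:rkl-cross-ent-positive} only to justify the square-root step; the per-state hypothesis $\Delta\FKL{\piold}{\pinew}(S) \geq 0$ is not needed in an essential way for this route beyond guaranteeing the reduction is genuinely state-wise.
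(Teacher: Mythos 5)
Your proof is correct, and its overall architecture coincides with the paper's: both reduce the claim to average RKL reduction via \Cref{prop:avg-reverse-kl}, and both control the average RKL by the average FKL through the same chain of H\"older, Pinsker, and Jensen inequalities, with the squaring step legitimized by \Cref{eq:rkl-cross-ent-positive}. The genuine difference is in how the per-state inequality $\RKL{\pinew}{\BQold}(s) \leq \tfrac{\norm{\qold}_\infty}{\tau}\sqrt{2\FKL{\pinew}{\BQold}(s)} - \Ex_{\BQold}[\log\piold(\cdot|s)]$ is obtained. The paper derives it as \Cref{lemma:intermediate_ineq} by starting from $0 \leq \FKL{\pinew}{\BQold}(s)$, which leaves behind a cross-entropy term $-\Ex_{\BQold}[\log\pinew(\cdot|s)]$ involving the \emph{new} policy; replacing it by the corresponding $\piold$ term is precisely where the paper invokes the third hypothesis, per-state non-negative FKL reduction. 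You instead establish the exact identity
\begin{equation*}
\RKL{\pinew}{\BQold}(s) + \Ex_{\BQold}[\log \piold(\cdot\mid s)] = -\entropy(\pinew(\cdot\mid s)) + \tfrac{1}{\tau}\bigl(\Ex_{\BQold}[\qold(s,A)] - \Ex_{\pinew}[\qold(s,A)]\bigr) - \FKL{\piold}{\BQold}(s),
\end{equation*}
whose two residual terms are nonpositive (nonnegativity of entropy for finite action sets, and nonnegativity of the KL of $\piold$), so the same intermediate bound follows \emph{unconditionally}. Your closing remark is therefore right and worth emphasizing: on your route the hypothesis $\forall s,\ \Delta\FKL{\piold}{\pinew}(s) \geq 0$ is never used, so your argument proves a mildly stronger statement than the paper's proof, which genuinely needs that hypothesis inside \Cref{lemma:intermediate_ineq}. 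All the bookkeeping checks out exactly as you anticipate: the factor $2$ from $\norm{\cdot}_1 = 2\, d_{\mathrm{TV}}$, the $\tfrac{1}{2}$ inside Pinsker, and the $\tfrac{1}{2}$ in the hypothesis cancel to recover $B$, and $B \geq 0$ from \Cref{eq:rkl-cross-ent-positive} is used only to take the square root with the correct sign.
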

\Cref{eq:rkl-cross-ent-positive} essentially says that $\Ed{\RKL{\piold}{\BQold}(S)}$ is greater than or equal to $\Ed{\FKL{\piold}{\BQold}(S) + \entropy(\BQold(S, \cdot))}$. When might the RKL be larger than the FKL and the entropy? If $\BQold$ has low entropy across states, then $\BQold$ will have low probability mass placed on certain actions. If $\piold$ places probability mass on these actions, the RKL will likely be high because the RKL incentivizes mode-matching. Unfortunately, this result also assumes that FKL reduction is non-negative in all states. 

Just as with the RKL, we can extend these results to use an action-value estimate $\hat Q$. We provide these extensions and their proofs in Appendix \ref{app_direct-fkl-reduction}.

We know that fully reducing the FKL, so that $\pinew$ equals $\BQold$, guarantees improvement, which is a strong requirement; we can ask how much less strict the above condition is in comparison.  
We can check this numerically, in a simple bandit setting with $|\cA| = 5$.
We test different $\piold$ calculated as:
$$\piold = (1 - \lambda) \pi_{\text{rand}} + (\lambda) \BQ$$
for $\lambda \in \{ 0, 1/3, 2/3, 0.99 \}$, $ \pi_{\text{rand}} $ the random policy and each of these policies corresponding to probability vectors over the 5 actions. Varying $\lambda$ allows us to see the impact on the bound for policies far from the target $\BQ$ ($\lambda = 0$) to very close to the target $(\lambda = 0.99)$. Additionally, the temperature plays an important role in the bound. We therefore measure the bound for a variety of $\tau$, for each $\lambda$. 
We include the results for 30 seeds in Figure \ref{fig:num_bound-single}.

\begin{figure}[tb!]
		\begin{tabular}{c c c c}
			\includegraphics[width=0.22\columnwidth]{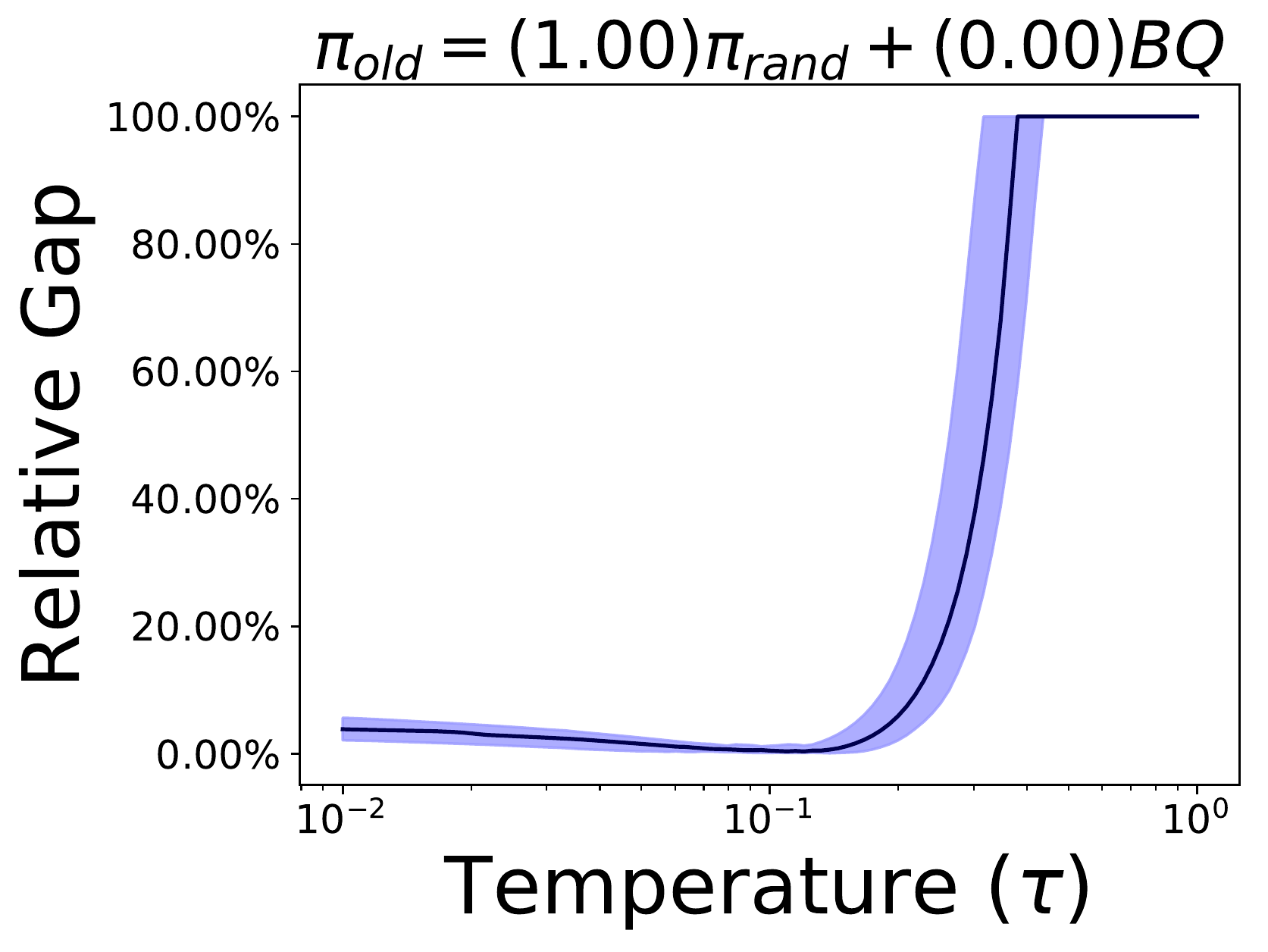} 
			&    
			\includegraphics[width=0.22\columnwidth]{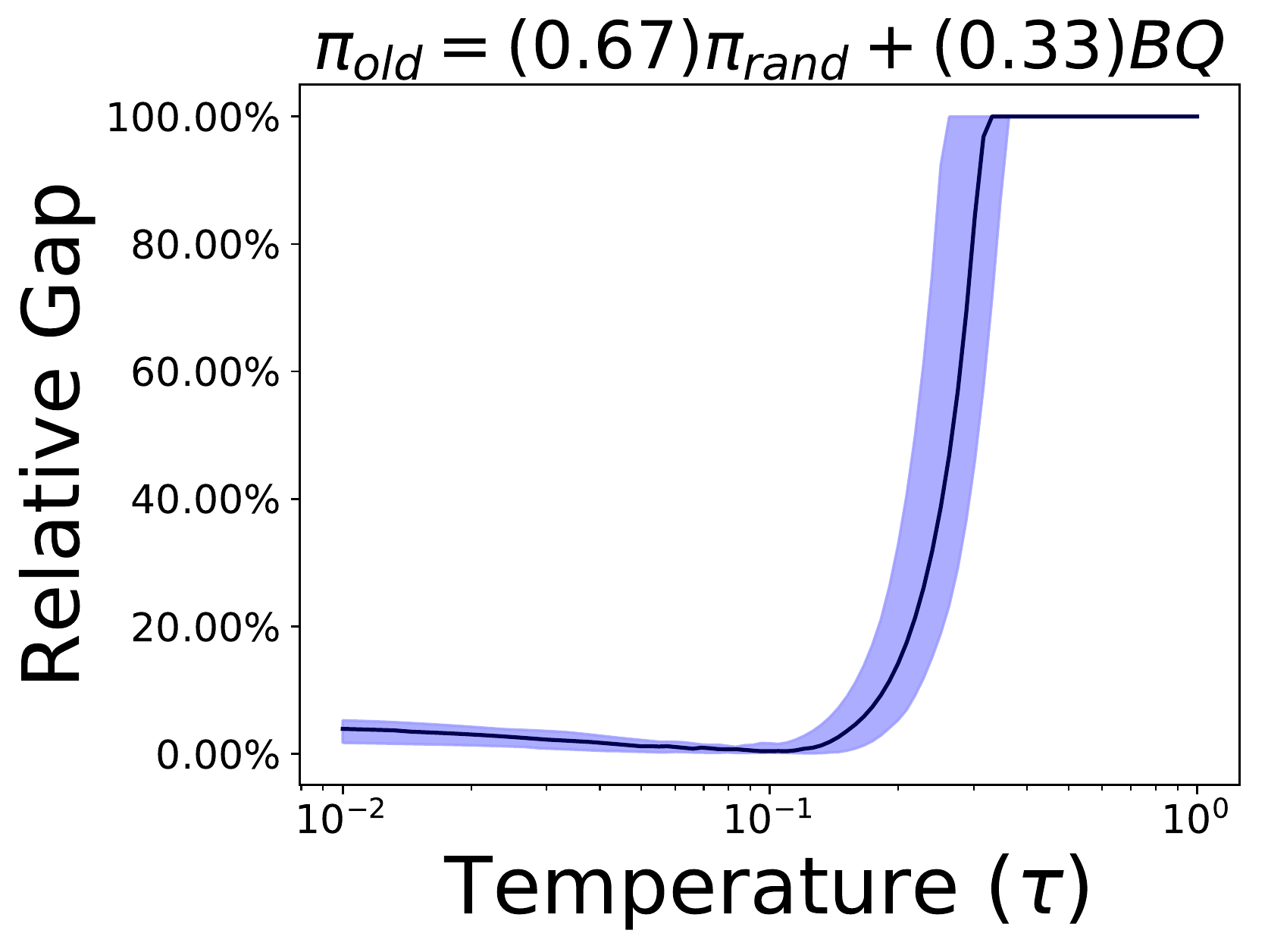}
			&
			\includegraphics[width=0.22\columnwidth]{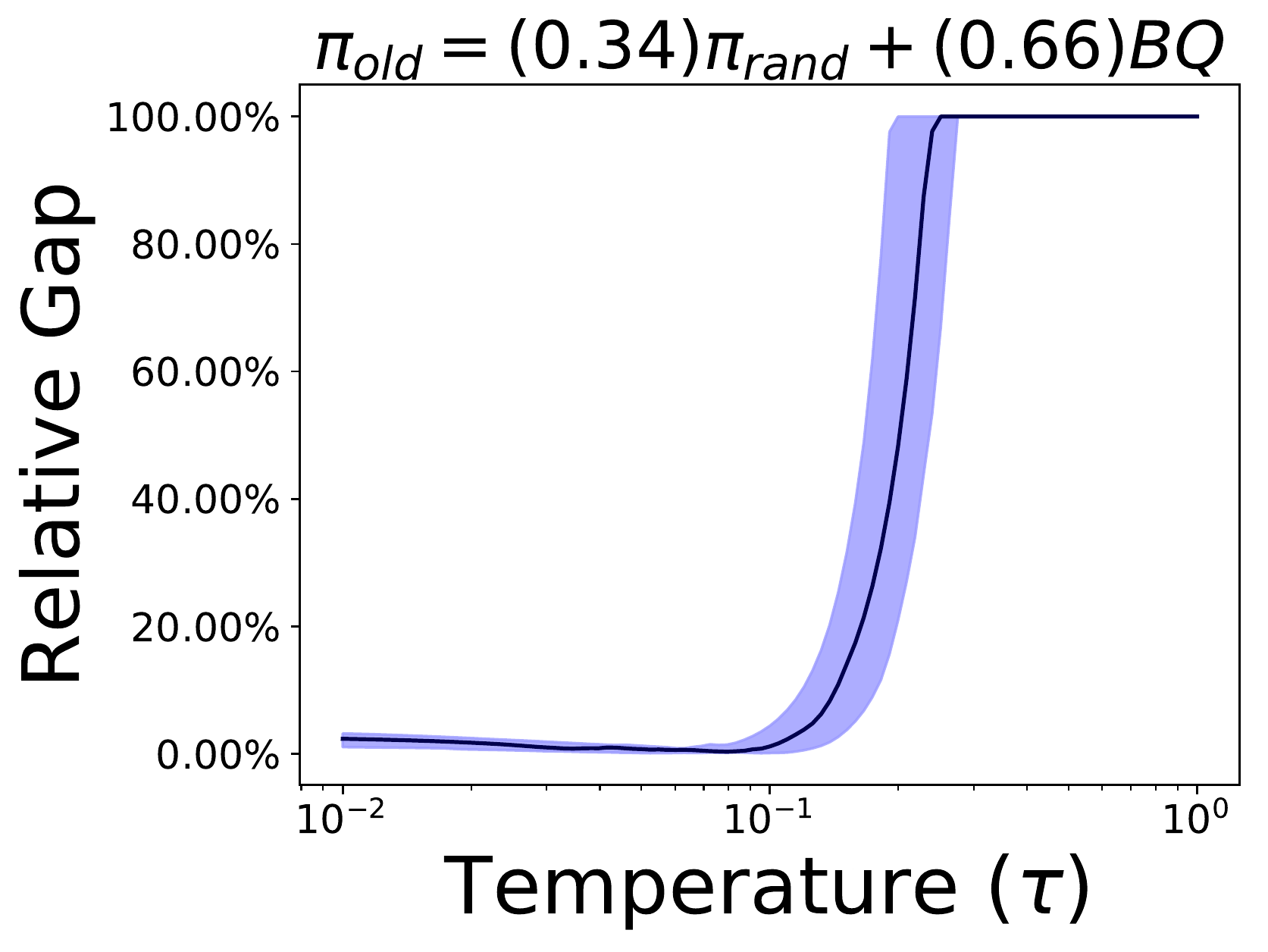}
			&
			\includegraphics[width=0.22\columnwidth]{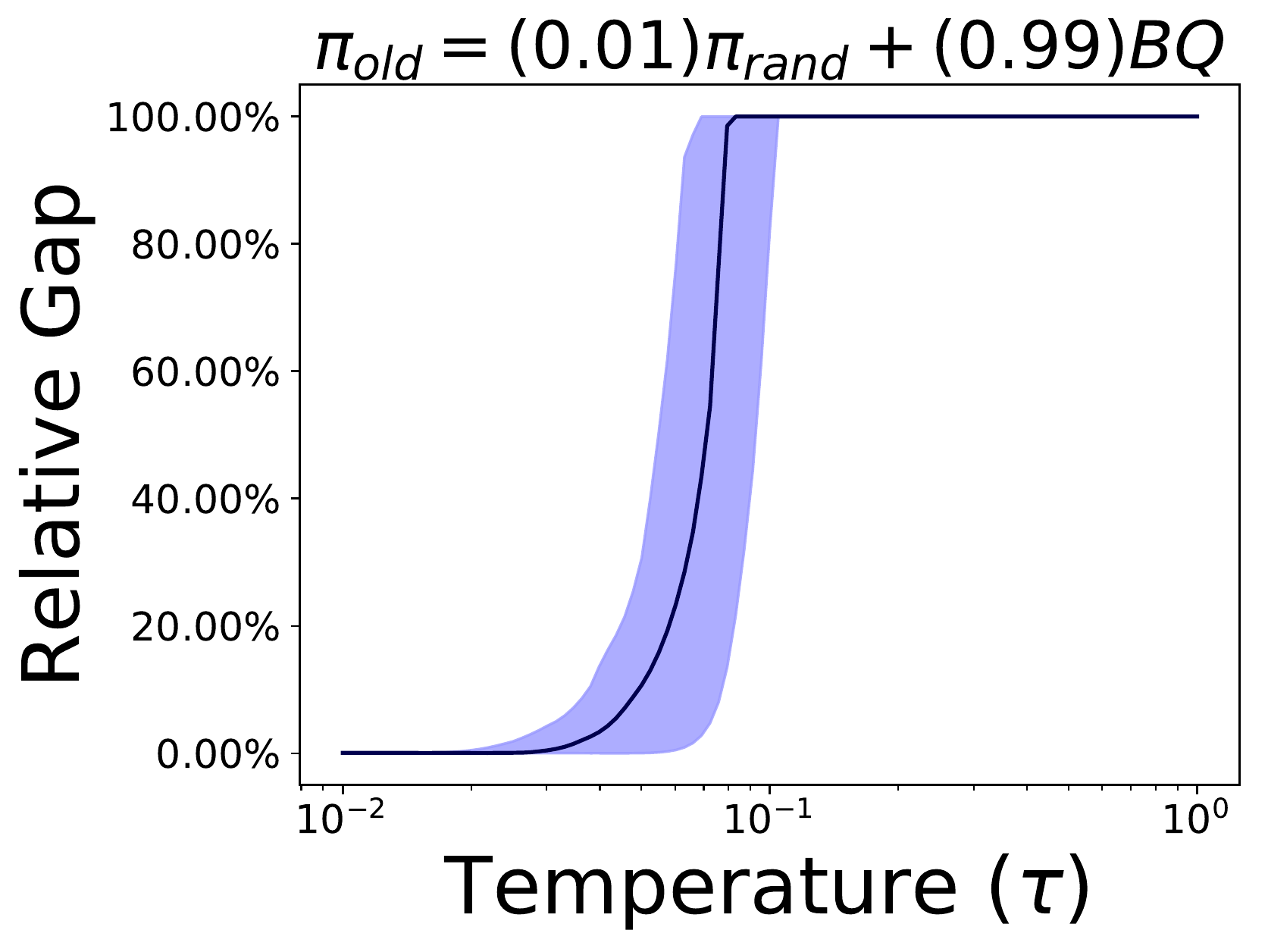}			
		\end{tabular} 
	\caption{The Relative Gap between the maximal $\Delta \FKL{}{}$ and our bound. We measure the ratio between our bound on the required level of reduction in Equation \eqref{eq_required_fkl} and the maximum possible FKL reduction (i.e. $\Delta \FKL{}{} = \FKL{\piold}{\BQ}$). The Relative Gap is 1-ratio, where values close to zero indicate near-maximal reduction is necessary for improvement in the soft objective. We show the result for policies far from the target (leftmost) to policies very close to the target (rightmost). The solid line is the median over 30 runs, with the shaded region showing the 25\% percentile and the 75\% percentile. } \label{fig:num_bound-single}
\end{figure}

We can see that the bound is very conservative and a near-maximum FKL reduction is necessary for many temperatures. These plots suggest that additional conditions are needed for FKL reduction to guarantee improvement, as we discuss further at the end of this section.

\subsubsection{Upper Bounding the RKL in Terms of the FKL}\label{sec:fkl-rkl-connection}

In this section we show that the FKL times a term that depends on the new and old policies gives an upper bound on the RKL. We discuss how this connection provides insight into why FKL reduction may not result in improvement. We omit dependence on the state in the following result, but it holds per state. This result is a straightforward application of a result from \citet{sason2016f}. The result uses the Rényi divergence of order $\infty$:
\begin{equation*}
	D_\infty (P \parallel Q) := \log \left( \max_i \frac{p_i}{q_i} \right)
\end{equation*}
where $P$ and $Q$ are two discrete probability distributions, with elements $p_i$ and $q_i$ respectively, that are absolutely continuous with respect to each other (i.e. one is never nonzero where the other one is zero).
\begin{restatable}[An Upper Bound on RKL in Terms of the FKL]{lemma}{fkl-rkl-bound}
	\label{prop:fkl-rkl-bound} Assume the action set is finite. For $\kappa(t) := \frac{t \log t + 1 - t}{t - 1 - \log t}$ where $\kappa$ is defined on $(0, 1) \cup (1, \infty)$ and all $s \in \cS$, 
	\begin{equation}
	\RKL{\pinew}{\BQold}(s) \leq \kappa\left(\exp(D_\infty(\pinew \parallel \BQold)) \right) \FKL{\pinew}{\BQold}(s) 
\end{equation}
\end{restatable}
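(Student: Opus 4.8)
The plan is to recognize that both sides are $f$-divergences of $\pinew$ with respect to $\BQold$ that share the same base measure and the same likelihood ratio, and then to compare their integrands pointwise. Fix a state $s$ and set $t(a) := \pinew(a\mid s)/\BQold(s,a)$, which is positive and finite wherever $\BQold(s,\cdot) > 0$; finiteness of $D_\infty(\pinew \parallel \BQold)$ guarantees the supports agree so that $t(\cdot)$ is well-defined on the support. Introduce the two normalized convex generators
\[
	\tilde f(t) := t\log t + 1 - t, \qquad \tilde g(t) := t - 1 - \log t,
\]
each of which is nonnegative on $(0,\infty)$ and vanishes only at $t = 1$. Because $\sum_a \BQold(s,a)\,(t(a) - 1) = \sum_a \big(\pinew(a\mid s) - \BQold(s,a)\big) = 0$, the linear-in-$t$ terms drop out under the expectation against $\BQold$, so
\[
	\RKL{\pinew}{\BQold}(s) = \Ex_{\BQold}\!\big[\tilde f(t(A))\big], \qquad \FKL{\pinew}{\BQold}(s) = \Ex_{\BQold}\!\big[\tilde g(t(A))\big].
\]

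Next I would observe that the function $\kappa$ in the statement is precisely the ratio of these generators: $\kappa(t) = \tilde f(t)/\tilde g(t)$ for $t \neq 1$, with continuous extension $\kappa(1) = 1$ (a second-order expansion gives $\tilde f(t), \tilde g(t) = \tfrac12(t-1)^2 + o((t-1)^2)$ near $t = 1$). Hence $\tilde f(t) = \kappa(t)\,\tilde g(t)$ identically, and since $\tilde g \geq 0$, any uniform upper bound on $\kappa$ over the range of $t(\cdot)$ converts the RKL expectation into that multiple of the FKL expectation. Writing $t^* := \max_a t(a) = \exp\big(D_\infty(\pinew \parallel \BQold)\big)$, if $\kappa$ is nondecreasing then $\kappa(t(a)) \le \kappa(t^*)$ for every $a$, so $\tilde f(t(a)) \le \kappa(t^*)\,\tilde g(t(a))$ pointwise; taking $\Ex_{\BQold}$ of both sides yields exactly the claimed bound.

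The substantive step, and the one I expect to be the main obstacle, is establishing that $\kappa$ is nondecreasing on $(0,1)\cup(1,\infty)$ (and through $t=1$ via its continuous extension). This monotonicity is the property I would borrow from \citet{sason2016f}; it can alternatively be checked directly by differentiating $\kappa = \tilde f/\tilde g$ and showing the numerator $\tilde f'(t)\tilde g(t) - \tilde f(t)\tilde g'(t)$ of $\kappa'$ is nonnegative, which after substituting $\tilde f'(t) = \log t$ and $\tilde g'(t) = 1 - 1/t$ reduces to an elementary one-variable inequality in $t$. Once monotonicity is in place, the rest is bookkeeping: the hypothesis that the action set is finite ensures the maximum defining $t^*$ is attained, finiteness of $D_\infty$ makes $t^*$ finite, and the absolute-continuity condition built into $D_\infty$ keeps all sums well-defined. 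Assembling these gives $\RKL{\pinew}{\BQold}(s) \le \kappa(t^*)\,\FKL{\pinew}{\BQold}(s)$, which is the desired statement.
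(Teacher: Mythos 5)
Your proposal is correct, but it follows a genuinely different route from the paper's. The paper treats the inequality as a black box: it invokes Equation (161) of \citet{sason2016f}, which sandwiches the ratio $\KL{P}{Q}/\KL{Q}{P}$ between $\kappa(\beta_2)$ and $\kappa(\beta_1^{-1})$ for $\beta_1 = \exp(-D_\infty(P \parallel Q))$, $\beta_2 = \exp(-D_\infty(Q \parallel P))$, and simply instantiates $P = \pinew$, $Q = \BQold$. You instead rederive the upper-bound half of that result from first principles: representing both divergences as expectations under $\BQold$ of the normalized generators $\tilde f(t) = t\log t + 1 - t$ and $\tilde g(t) = t - 1 - \log t$ evaluated at the likelihood ratio $t(a) = \pinew(a \mid s)/\BQold(s,a)$ (your cancellation of the linear terms is exactly right), observing $\kappa = \tilde f / \tilde g$, and bounding $\kappa(t(a)) \leq \kappa(t^*)$ pointwise with $t^* = \max_a t(a) = \exp(D_\infty(\pinew \parallel \BQold))$, which is attained because the action set is finite. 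The one substantive ingredient you defer---monotonicity of $\kappa$---is the same fact the paper itself borrows from \citet{sason2016f} in the discussion following the lemma, and your proposed elementary verification goes through: the numerator of $\kappa'$, multiplied by $t$, equals $(t-1)^2 - t(\log t)^2$, and the inequality $(t-1)^2 \geq t(\log t)^2$ holds for all $t > 0$ (substituting $u = \sqrt{t}$ reduces it to $u - 1/u \geq 2\log u$ for $u \geq 1$, with the symmetric case for $u < 1$). What the paper's citation buys is brevity; what your argument buys is a self-contained proof that shows exactly where the $D_\infty$ term enters (as the attained maximum of the likelihood ratio) and that would generalize to any pair of f-divergences whose generator ratio is monotone. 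One minor imprecision: finiteness of $D_\infty(\pinew \parallel \BQold)$ gives only $\pinew \ll \BQold$, not agreement of supports; but since $\BQold$ has full support and $\FKL{\pinew}{\BQold}(s) = +\infty$ whenever $\pinew$ does not, the bound is trivial in that case and your argument is unaffected.
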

\begin{proof}
To obtain this result, we bound the difference between the two choices of KL divergence. 
Define 
\begin{align*}
	\beta_1 &:= \exp (-D_\infty( P \parallel Q)) && \text{ and } \quad
	\beta_2 := \exp(-D_\infty(Q \parallel P)),
\end{align*}
Then, from Equation (161) in \citet{sason2016f}, as long as $P \neq Q$, we have
\begin{align*}
	\kappa(\beta_2) \leq \frac{\KL{P}{Q}}{\KL{Q}{P}} \leq \kappa(\beta_1^{-1}).
\end{align*}
Setting $P = \pinew$ and $Q = \BQold$ at a particular state $s$, where we omit the dependence on $s$, we have
\begin{align}
	\RKL{\pinew}{\BQold}(s) &\leq \kappa(\beta_1^{-1}) \FKL{\pinew}{\BQold}(s) \label{eq:kappa_ineq}\\
	&= \kappa\left(\exp(D_\infty(\pinew \parallel \BQold)) \right) \FKL{\pinew}{\BQold}(s) \nonumber
\end{align}
 \par\vspace{-0.5cm}
\end{proof}
To reduce the RKL as a function of $\pinew$, it thus suffices to reduce the right-hand side of the inequality. There are, however, problems with this approach. First, the bound itself may not be tight; even if we could reduce FKL and the multiplicand $\kappa(\beta_1^{-1})$, we still may not obtain a reduction in RKL. Second, we have only developed a mechanism to reduce the FKL, rather than the FKL and the multiplicand. A simple proxy could be to just focus on reducing the FKL.

The bound given above includes $\kappa(\beta_1^{-1})$, which also depends on $\pinew$. It is possible that in reducing the FKL, we actually also increase $\kappa(\beta_1^{-1})$, possibly offsetting our reduction of the FKL. For example, because of limited function approximation capacity, reducing the FKL might result in $\pinew$ covering a low-probability region of $\BQold$ in order to place some mass at multiple high-probability regions of $\BQold$. While such a $\pinew$ might have a moderate value of FKL, the resulting $D_\infty(\pinew \parallel \BQold)$ would be large, making $\beta_1^{-1}$ large. Correspondingly, because $\kappa$ is a monotone increasing function \citep{sason2016f}, $\kappa(\beta_1^{-1})$ would also be large. Consequently, $\kappa(\beta_1^{-1}) \FKL{\pinew}{\BQold}$ may not be be small enough to enforce a reduction in RKL.

On a more positive note, however, we know that the $\kappa$ term in \Cref{eq:kappa_ineq},
$$\kappa(\beta_1^{-1}) = \kappa\left(\exp(D_\infty(\pinew \parallel \BQold)) \right) = \kappa \left( \max_{i} \frac{(\pinew)_i}{(\BQold)_i}\right),$$
only grows logarithmically with $\max_i \frac{(\pinew)_i}{(\BQold)_i}$. Particularly,
\begin{equation*}
	\lim_{x \to \infty} \frac{\kappa\left(x\right)}{\log(x)} = 1 \quad \text{and so } \quad
	\kappa\left(x \right) = \Theta( \log(x) ).
\end{equation*}
Therefore, $\beta_1^{-1}$ has to increase by orders of magnitude to significantly increase $\kappa(\beta_1^{-1})$. 

A modification to the FKL reduction strategy could be to use $\kappa(\beta_1^{-1}) \FKL{\pinew}{\BQold}$ as an objective. The main difficulty with this approach is that $\beta_1$ is not differentiable because of the $\max$ operation in the calculation of $D_\infty(\pinew \parallel \BQ)$. It might be possible to approximate this maximum with smooth operations like $\mathrm{LogSumExp}$, but we leave exploration of this avenue for future work. 

\subsection{Summary and Discussion}

There are two key takeaways from the above results. First, the RKL has a stronger policy improvement result than the FKL as it requires only that the RKL of $\pinew$ be no greater than the RKL of $\piold$. In fact, RKL reduction under a certain state-distribution is a necessary and sufficient condition for improvement to occur.
Second, the FKL can fail to induce policy improvement, but sufficient reduction guarantees such improvement. The current bounds, although sufficient, are not a necessary condition for improvement to occur.

The theoretical results suggest that the FKL is inferior to the RKL for improving the policy, and that the FKL requires additional conditions. 
We hypothesize that the nature of these conditions has to do with the mean-seeking and mode-seeking behavior. 
Approximating a target distribution via RKL reduction is very sensitive to placing non-negligible probabilities in regions where the target distribution is close to zero. The FKL, on the other hand, focuses on placing high probabilities in the regions where the target probability is high. It is not hard to see that reducing FKL can increase RKL, if for example we approximate a bimodal distribution with a unimodal one, or if we use a Gaussian parameterization and the target distribution is highly skewed. As we discussed, to obtain improvement, we need a sufficient reduction in the FKL to ensure RKL reduction. Our bound provided one such condition, but as we found with numerical experiments, this bound was relatively loose. It remains an open question to understand the conditions that guarantee improvement, and when FKL reduction does not give RKL reduction. 

The settings where the RKL and FKL are significantly different---meaning that FKL reduction can actually cause the RKL to increase---may not be as prevalent in practice. For example, if the target distributions are unimodal and symmetric, we may find that RKL and FKL have similar empirical performance. We will see in our experiments that the FKL is often able to induce policy improvement in practice, suggesting a gap between the theory developed and the practical performance. 

An important next step is to leverage these policy improvement results to prove convergence to an optimal policy under approximate greedification. When completely reducing the RKL per state, it is known that the iterative procedure between policy evaluation and greedification with RKL minimization converges to the optimal policy in the policy set \citep[Theorem 1]{haarnoja2018soft}. This result should similarly hold, under only RKL reduction, as long as that reduction is sufficient on each step. A next step is to understand the conditions on how much reduction is needed per step, for both the RKL and FKL, to obtain this result.

\section{Empirical Results Comparing the FKL and RKL}

In this section, we complement the theoretical results with an investigation of the other practical properties of the FKL and RKL. The theory focused on their differences in terms of inducing policy improvement. We also care about (1) the optimization behavior of these greedification operators, when using (stochastic) gradient descent and (2) the nature of the policies induced during learning, in particular whether stochasticity collapses quickly and differences in encourage exploratory behavior. We may also want to understand generally how these two approaches perform in the wild, on a suite of problems. We investigate these three questions empirically in this section. 

\subsection{Optimization Behavior in Microworlds} 
The goal in this section is to understand differences between FKL and RKL in terms of (1) the loss surface and (2) the behavior of iterates optimized under the losses. By behavior, we mean whether the iterates reach multiple local optima, how stable iterates under that loss are, and how often iterates reach the global optimum (or optima). Given the fine-grained nature of our questions, we focus upon small-scale environments, which we call \textit{microworlds}. Doing so allows us to avoid any possible confounding factors associated with larger, more complicated environments, and furthermore allows us to more fully separate any issues to do with stochasticity. 

We use two continuous action low-dimensional microworlds to allow us to visualize and thoroughly investigate behavior.
Our first microworld is a \textbf{Bimodal Bandit} in \Cref{fig:bimodal-bandit}. For continuous actions, we designed a continuous bandit with action space $[-1, 1]$ and reward function $Q(a) := \exp( -\tfrac{1}{2} (\tfrac{2 a + 1}{0.2})^2 ) + \tfrac{3}{2} \exp(-\tfrac{1}{2} (\tfrac{2 a - 1}{0.2})^2)$. The two unequal modes at -0.5 and 0.5 enable us to test the mean-seeking and mode-seeking behavior as well as simulate a realistic scenario where the agent's policy parameterization (here, unimodal) cannot represent the true distribution (bimodal). 

Our second microworld is the \textbf{Switch-Stay} domain in \Cref{fig:Switch-Stay}. From $s_0$, action $0$ (stay) gives a reward of 1 and transitions to state $0$. From $s_1$, action 0 gives a reward of 2 and transitions to $s_1$. From $s_0$, action 1 (switch) gives a reward of -1 and transitions to $s_1$, while action 1 from $s_1$ gives a reward of 0 and transitions to $s_0$. 
To adapt this environment to the continuous action setting, we treat actions $> 0$ as switch and actions $\leq 0$ as stay.\footnote{Note that we also compared the RKL and FKL for the discrete action variant of Switch-Stay. Under a softmax parameterization, we found no significant differences between the RKL and FKL.}  We set $\gamma = 0.9$ to ensure that the optimal action from $s_0$ is to switch, which ensures the existence of a short-term/long-term trade-off inherent to realistic RL environments. 

\begin{figure}[!htb]
  \centering
   \begin{subfigure}[b]{0.38\linewidth}
     \centering
    \includegraphics[width=\linewidth]{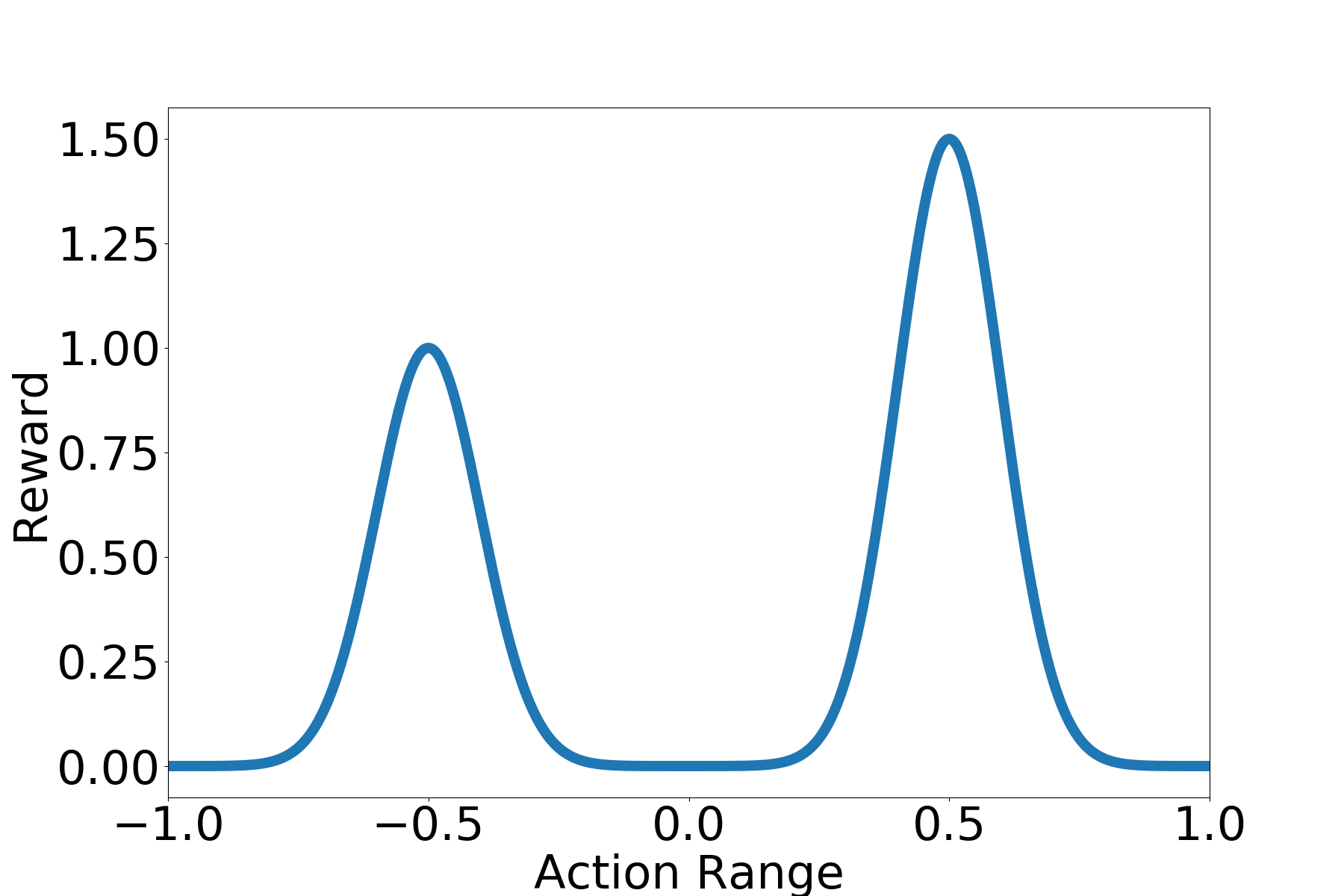}       
     \caption{Continuous-action Bimodal Bandit}
    \label{fig:bimodal-bandit}
   \end{subfigure}
\begin{subfigure}[b]{0.35\linewidth}
 \resizebox{\columnwidth}{!}{%
\begin{tikzpicture}[auto,node distance=10mm,>=latex,font=\small]
    \tikzstyle{round}=[thick,draw=black,circle]

    \node[round] (s0) {$s_0$};
    \node[round,right=20mm of s0] (s1) {$s_1$};

    \draw[->] (s0) to [out=45, in=135] node {-1} (s1);
    \draw [->] (s0) to [out=135,in=225, loop] node {1} (s0);
    \draw [->] (s1) to [out=210,in=330] node{0} (s0);
    \draw [->] (s1) to [out=45,in=-45, loop] node {2} (s1);
\end{tikzpicture}}
    \caption{Switch-Stay environment}
    \label{fig:Switch-Stay}
\end{subfigure}%
\caption{The Microworld environments used to investigate and visualize the optimization behavior of FKL and RKL.}
\end{figure}

\subsubsection{Implementation Details}

All policies are tabular in the state. To calculate the FKL and RKL under continuous actions, we use the Clenshaw-Curtis \citep{clenshaw1960method} numerical integration scheme with 1024 points from the package quadpy,\footnote{\url{https://pypi.org/project/quadpy/}} excluding the first and the last points at -1 and 1 because of numerical stability. We use the true action-values when calculating the KL losses. In the Bimodal Bandit, the action-value is given by the reward function, while in Switch-Stay it is calculated (i.e., not learned). To calculate the Hard FKL, we use the true maximum action as determined by the environment. For Switch-Stay, we calculate and optimize the mean KL across the two states.


For policy parameterizations, in continuous action settings we use a Gaussian policy with mean and standard deviation learned as $(\hat{\mu}, \log(1+\exp(\hat{\sigma}))$ 
The action sampled from the learned Gaussian is passed through $\tanh$ to ensure that the action is in the feasible range $[-1, 1]$ and to avoid the bias induced in the policy gradient when action ranges are not enforced \citep{chou2017improving}. 

Finally, we use the RMSprop optimizer \citep{tieleman2012lecture}. Overall trends for Adam \citep{kingma2014adam} were similar to those for RMSprop, while results for SGD resulted in slower learning for both FKL and RKL and a wider range of limit points, most likely due to oscillation from the constant step-size. We focus on RMSprop here to avoid any confounding factors associated with momentum.

\begin{figure}[htb]
    \centering
    \includegraphics[width=0.99\columnwidth]{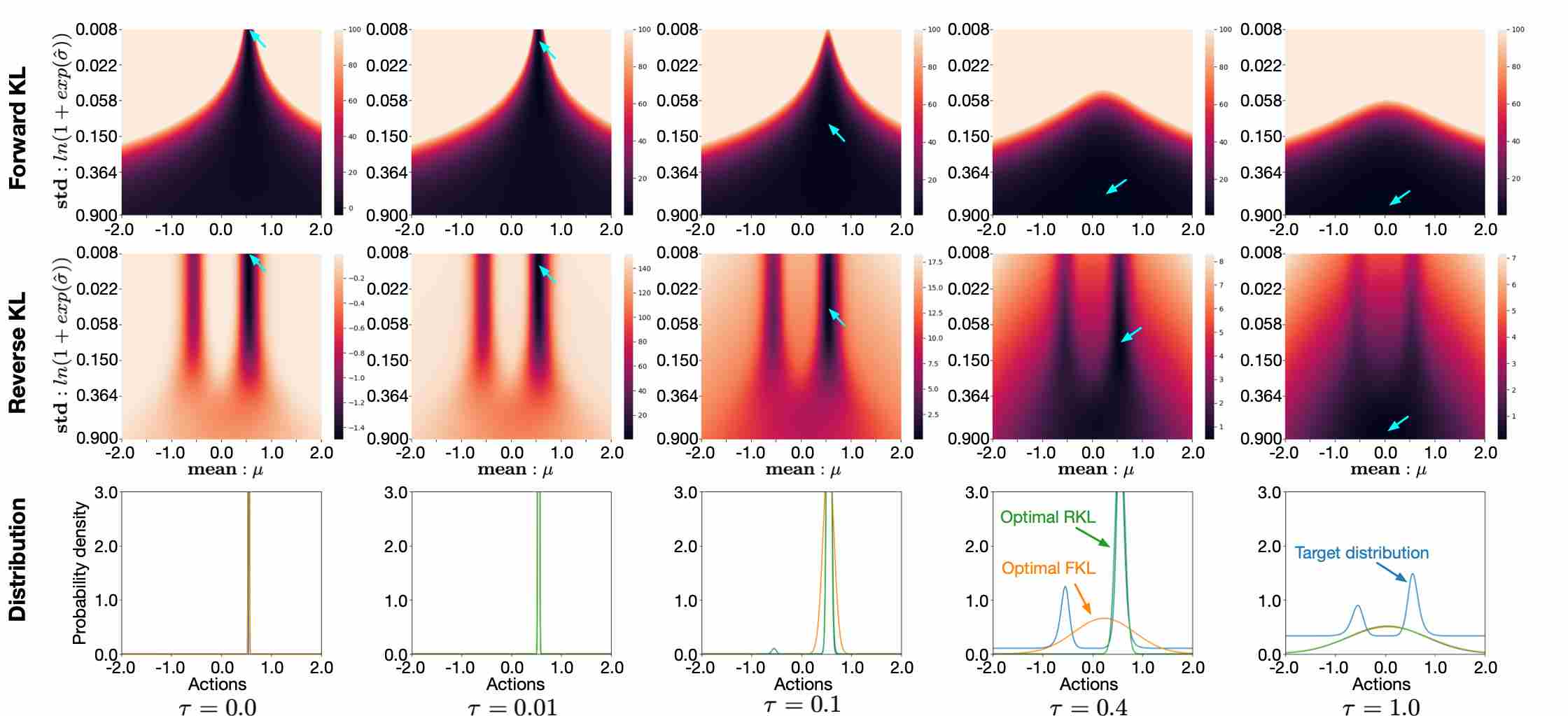}
    \caption{KL loss over mean and standard deviation across temperature. The heatmaps depict the loss for each mean and standard deviation pair. The last row depicts the target distribution over which the KL loss is optimized. Note that the actual action taken applies $\tanh$ to the samples of the resulting distribution (i.e., the optimal mean is at $\tanh^{-1}(0.5) \approx 0.55$). FKL loss has been upper-bounded for better visualization of minima. Arrows indicate the global minimum.}
    \label{fig:bandit-heatmap}
\end{figure}

\subsubsection{Loss Surface in the Bimodal Bandit}
We might expect the FKL to have a smoother loss surface. Given that policies often are part of an exponential family (e.g., softmax policy), having the policy $\pi$ be the second argument of $\KL{p}{q}$ removes the exponential of $\pi$, resulting in an objective that is an affine function of the features. For example, if $\pi(a \mid s) \propto \exp(\phi(s, a))$ for features $(s, a)$, the resulting FKL becomes a sum of a term than is linear in $\phi(s, a)$ and a term involving $\mathrm{LogSumExp}(\phi)$, which is convex.

We visualize the KL loss surfaces in \Cref{fig:bandit-heatmap} with five different temperatures. The surfaces suggest the following. 

\textbf{1)} The FKL surface has a single valley, while the RKL surface has two valleys that are separated from one another. In this sense, the FKL surface seems much smoother than the RKL surface, suggesting that iterates under the FKL will more likely reach the global optimum than iterates under the RKL, which seem likely to fall into either of the valleys. 

\textbf{2)} The smoothness of the RKL landscape increases with temperature as the gap between the peaks becomes less steep. A higher temperature also causes the valley in the FKL map to become less sharply peaked, and for the optimal $\mu$ to move closer to 0. 

\textbf{3)} The optimal $\mu$ for the FKL seems to move more quickly to zero, as $\tau$ increases, than the optimal $\mu$ for the RKL, although both eventually reach 0. It is possible that the FKL may become suboptimal sooner than the RKL as $\tau$ increases, likely because it is mean-seeking. Interestingly, even the RKL appears to be mean-seeking for high $\tau$, because selecting one mode would have lower entropy.  

As a note, it may seem strange that two valleys exist for the RKL at $\tau = 0$ given that the target distribution is unimodal. When $\tau = 0$, however, the loss function is no longer a distributional loss; that is, we are no longer minimizing any pseudo-distance between the policy and a distribution.

\subsubsection{Solution Quality in Switch-Stay}\label{sec:ss_behavior}

In this section, we investigate the properties of the solutions under the FKL and RKL for an environment with more than one state. 
Given our previous results, we might expect the FKL to result in better solutions, because FKL iterates can reach the global optimum more easily. But this depends on the quality of this solution. The global minimum of the FKL objective may not correspond well with the optimal solution of the original, unregularized objective, as we investigate below. 

Th Switch-Stay environment is appropriate to investigate the quality of the stationary points of the RKL and FKL for two reasons. First, it is a simple instantiation of the full RL problem, we are interested in understanding any possible differences between FKL and RKL in the presence of short-term/long-term trade-offs. On Switch-Stay, the agent incur a short-term penalty by switching from state 0 to state 1, but longer term this maximizes return.
Second, the Switch-Stay environment facilitates visualization. Since the MDP has only two states, we can plot any value function as a point on a 2-dimensional plane. In particular, one can view the entire space of value functions, shown recently to be a polytope in the discrete-action setting \citep{dadashi2019value}.

We can similarly visualize the value function polytope for continuous actions in Switch-Stay. Recall that we treat any action $ \leq 0$ as stay, and any action $ > 0$ as switch. To calculate the value function corresponding to a continuous policy $\pi$, we convert $\pi$ to an equivalent discrete policy $\pi_{\mathrm{discrete}}$ of the underlying discrete MDP. The conversion requires the calculation of the probability that $\pi$ outputs an action $\leq 0$ in each state, which we do with numerical integration of the policy PDF. We then calculate the value function of $\pi$ as $(I - \gamma P_{\pi_{\mathrm{discrete}}})^{-1}r_{\pi_{\mathrm{discrete}}}$, where $P_\pi$ and $r_\pi$ are respectively the transition matrix and the reward function induced by $\pi$.

For the hard FKL, we require access to the greedy action of the action-value function. In the continuous-action setting, this greedy action is usually infeasible to obtain. For the purposes of this experiment, if the greedy action is stay, we represent it in $[-1, 1]$ by drawing a uniform random number from $[-1, 0]$. If the greedy action is switch, we represent it as a uniform random number in $[0, 1]$. This design choice is meant to simulate noisy access to the greedy action in practice. 

For all of these experiments, we initialized means in the range $(-0.95, 0.95)$. All experiments are run for 500 gradient steps and each experiment has 1000 iterates. We plot the value function of the final policy for each iterate and experiment in \Cref{fig:cont-ss-poly-0.005} 
by visualizing the value function polytope \citep{dadashi2019value}. That is, for finite state and action spaces, the set of all value functions is a polytope (a union of of convex polytopes). By plotting the value functions of our policies on the value function polytope, we are able to concisely gauge the performance of an algorithm relative to other algorithms. 

\begin{figure}[t]
  \centering
    \includegraphics[width=\columnwidth]{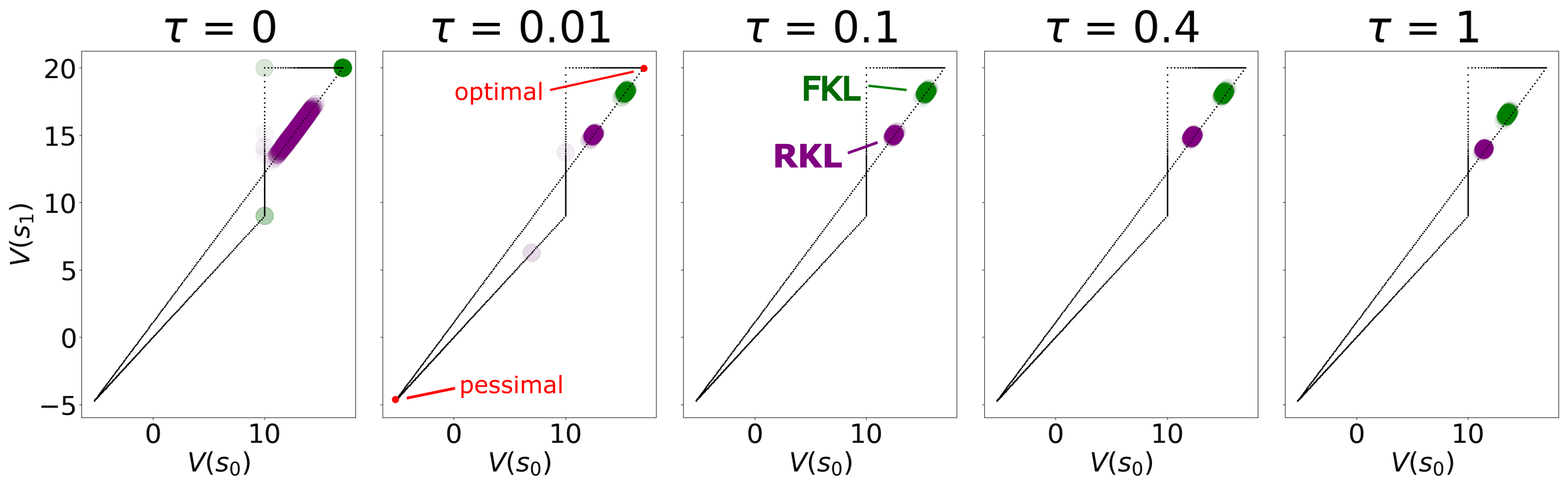}
  \caption{We plot the final value functions on the continuous-action version of Switch-Stay after 500 gradient steps for 1000 iterates, for a variety of temperatures. For clarity, we plot in black the boundary of the value function polytope. All points on the boundary, and all points in the interior of this shape, correspond to value functions of some policy. Each iterate is represented by a translucent dot. RMSprop with an initial learning rate of 0.01 was used. }
  \label{fig:cont-ss-poly-0.005}
\end{figure}

\textbf{1)} FKL with $\tau = 0$ converged noticeably slower than the other temperatures, which seems to be an artifact of our encoding of continuous actions to the underlying discrete dynamics of Switch-Stay, and the fact that we used random tie-breaking when computing the $\argmax$ for hard FKL. 

\textbf{2)} RKL iterates converge slightly faster than FKL iterates across all temperature settings. RKL iterates with $\tau = 0$ sometimes converged to non-optimal value functions on the corners. 

\textbf{3)} The limiting value functions of the FKL iterates seem more suboptimal than the limiting value functions of the RKL iterates. The latter are closer to the optimal value function of the original MDP. This result is consistent with our observations in the continuous bandit. Although the FKL optimum may be more easily reached, that optimal point may be suboptimal with respect to the unregularized objective.

\subsubsection{The Impact of Stochasticity in the Update}\label{sec:stochastic-microworld}
Although with discrete actions it is practical to sum across all actions when calculating the KL losses, difficulty emerges with high-dimensional continuous action spaces. Quadrature methods scale exponentially with the dimension of the action-space, leaving methods like Clenshaw-Curtis impractical. Monte-Carlo integration---in this case sampling actions from the current policy to estimate the update---seems the only feasible answer in this setting. An important distinction between FKL and RKL, therefore, is how they perform when using a noisier estimate of their updates.

We repeated the experiment in Switch-Stay, now
using Monte-Carlo integration instead of Clenshaw-Curtis quadrature to estimate the update for a state, averaged across the sampled actions. As discussed in Section \ref{sec:api-alg}, we can estimate the gradients of the RKL and FKL using sampled actions rather than full integration. 
The hard and soft RKL gradient updates are estimated using sampled actions from the current policy $\pi$, and the soft FKL gradient update is estimated using {weighted importance sampling}. 
Note that since Hard FKL only depends upon the maximum action, we do not modify the algorithm in this experiment. 


\begin{figure}[!htb]
  \centering
  \begin{subfigure}[b]{0.85\linewidth}
    \centering
    \includegraphics[width=\columnwidth]{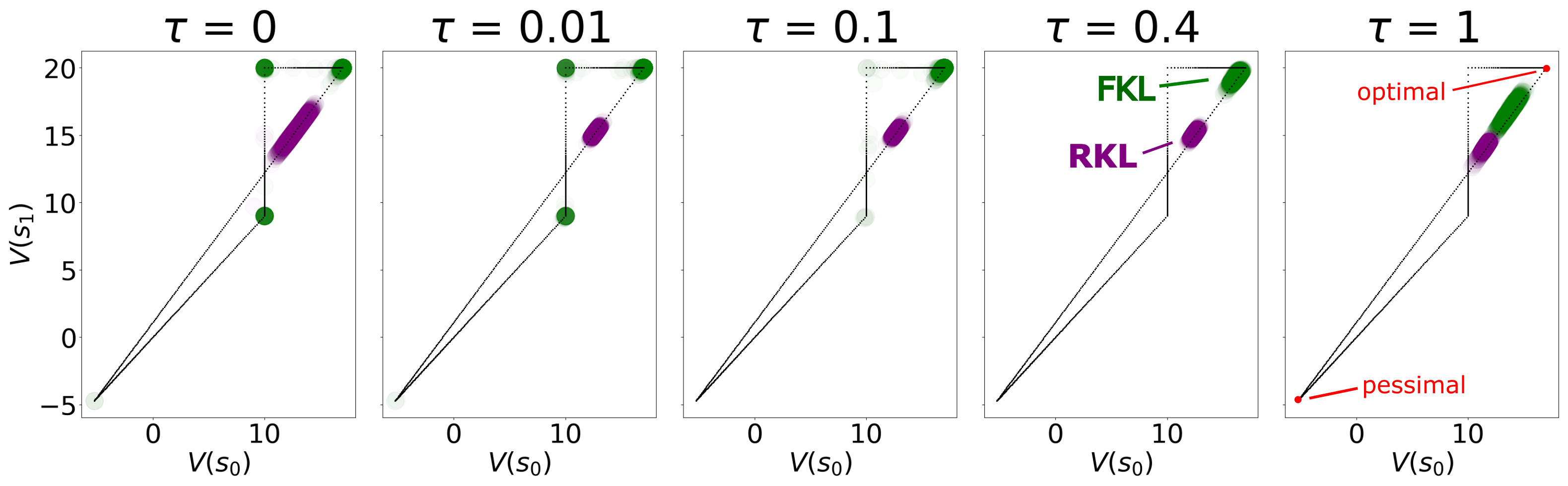}
    \caption{10 sample points.}
    \label{fig:10-sample-Switch-Stay}
  \end{subfigure}
  
  \begin{subfigure}[b]{0.85\linewidth}
        \centering
        \includegraphics[width=\columnwidth]{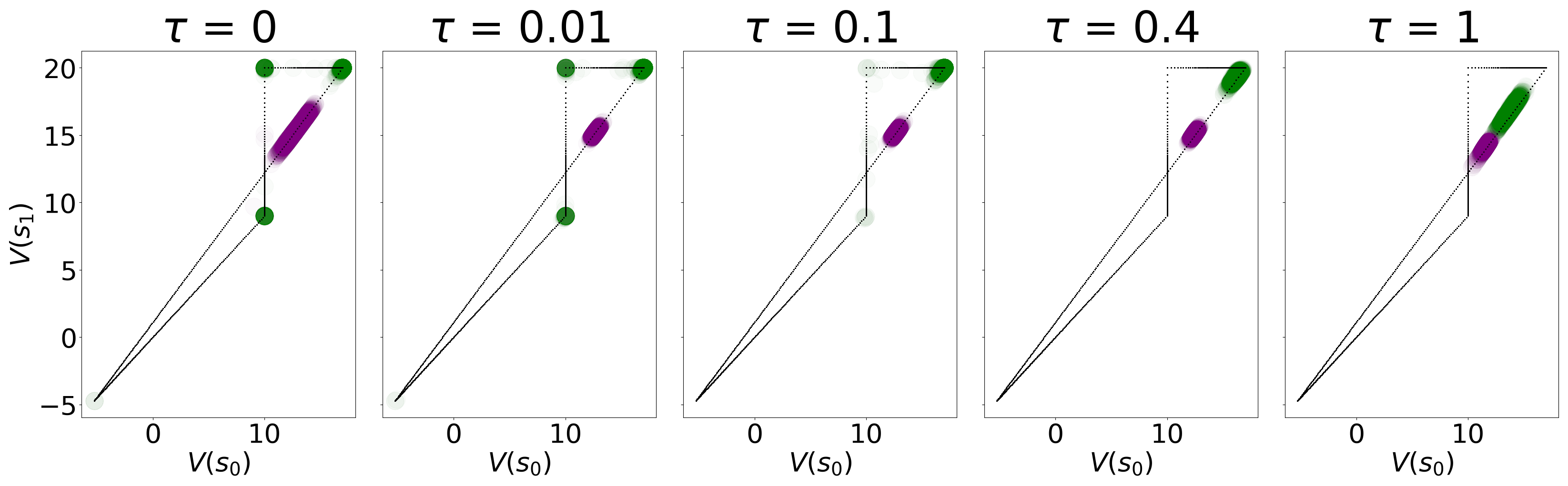}    
        \caption{500 sample points.}
        \label{fig:500-sample-Switch-Stay}
  \end{subfigure}
  \caption{The final value functions in Switch-Stay with stochastic estimation of the gradient using sampled actions, and otherwise the same settings as in Figure \ref{fig:cont-ss-poly-0.005}.}\label{fig_samples_ss}
\end{figure}

We can see in Figure \ref{fig_samples_ss} that the RKL is much more variable than the FKL with a smaller number of sampled actions (10 vs 500). 
RKL iterates converged to minima to which they did not converge in the Clenshaw-Curtis regime, even for 500 sampled actions. In \Cref{fig:500-sample-Switch-Stay}, there is an interesting trend across temperatures. Temperatures below $0.4$ induced many suboptima far from the optimal value function, while temperatures 0.4 and 1 seemed better at clustering RKL iterates near the optimal value function. On the other hand, FKL seemed relatively insensitive both to the temperature and the number of sample points. This relative insensitivity could be due to having a smoother loss landscape to begin with, which tends to direct iterates to a single global optimum. As noted before, though, this global optimum is quite suboptimal with respect to the unregularized MDP; nonetheless, the FKL can reach its optimum more robustly under noise.

\subsection{Exploration Differences between the FKL and RKL} 

The focus of this section is to study whether there are any significant differences in exploration when using the FKL and RKL. To obtain sufficient exploration, the approach should induce a state visitation distribution whose support is larger, namely that covers more of the state space. Accumulating more transitions from more diverse parts of the state space presumably allows for more accurate estimates of the action value function, and hence more reliable policy improvement. Entropy-regularized RL, as it is currently formulated, only benefits exploration by proxy, through penalizing the negative entropy of the policy. In the context of reward maximization, entropy is only a means to an end; at times, the means may conflict with the end. A policy with higher entropy may have a more diverse state visitation distribution, but it may be prevented from exploiting that information to the fullest capacity because of the penalty to negative entropy.

There has been some work discussing the potential differences between the FKL and RKL for exploration. 
\citet{neumann2011variational} argues in favour of the reverse KL divergence as such a resulting policy would be cost-averse, but also mentions that the forward KL averages over all modes of the target distribution, which may cause it to include regions of low reward in its policy. While in principle it may seem like a bad idea to include those, we note that the value function estimates can be highly inaccurate \citep{ilyas2018deep}, causing this inclusion to possibly be beneficial for exploration. Indeed, \citet{norouzi2016reward} use the forward KL divergence to induce a policy that is more exploratory.

We hypothesize that the FKL benefits exploration by causing the agent's policy to commit more slowly to actions that apparently have high value under the current value function estimate. This could benefit exploration both because it causes the agent to explore more and avoids incorrectly committing too quickly to value function estimates that are inaccurate. This non-committal behavior may help the policy avoid converging quickly to a suboptimal policy. Conversely, we hypothesize that the RKL will more quickly reduce the probability of actions that seem low-valued under our current (potentially inaccurate) value estimates. We investigate the differences first in continuous-action Switch-Stay, and then in a Maze environment with a misleading, suboptimal goal. Once again, we observed few differences for the discrete action experiments, even in the Maze environment; we include these results in \Cref{app:exploration}.

\subsubsection{Exploration under Continuous Actions in Switch-Stay}

We first revisit the Switch-Stay environment, and examine if the FKL and RKL exhibited differences in the variance of their policies. Recall that we examined the value functions for the final policies under the FKL and RKL, in \Cref{fig:cont-ss-poly-0.005}. We noted that the FKL converged to more suboptimal policies than the RKL with the same $\tau$, when evaluated under the unregularized objective. A natural hypothesis is that the FKL policy is a more stochastic policy, which is further from the optimal deterministic policy. 

To see if this is the case, we plot the final standard deviations of the learned policies for the learning rate of 0.01. In \Cref{fig:final-ss-probs-0}, we see that the final FKL iterates have higher standard deviation for each $\tau$, meaning that the final policies are further from the optimal deterministic policy of the unregularized MDP. Put informally, the FKL tends to commit less than the RKL.  This means that even when using a target Boltzmann policy with the same level of entropy-regularization, that level of entropy induces a more stochastic policy under the FKL.

\begin{figure}[!htb]
  \centering

    \begin{subfigure}[b]{0.4\linewidth}
    \centering
    \includegraphics[width=\columnwidth]{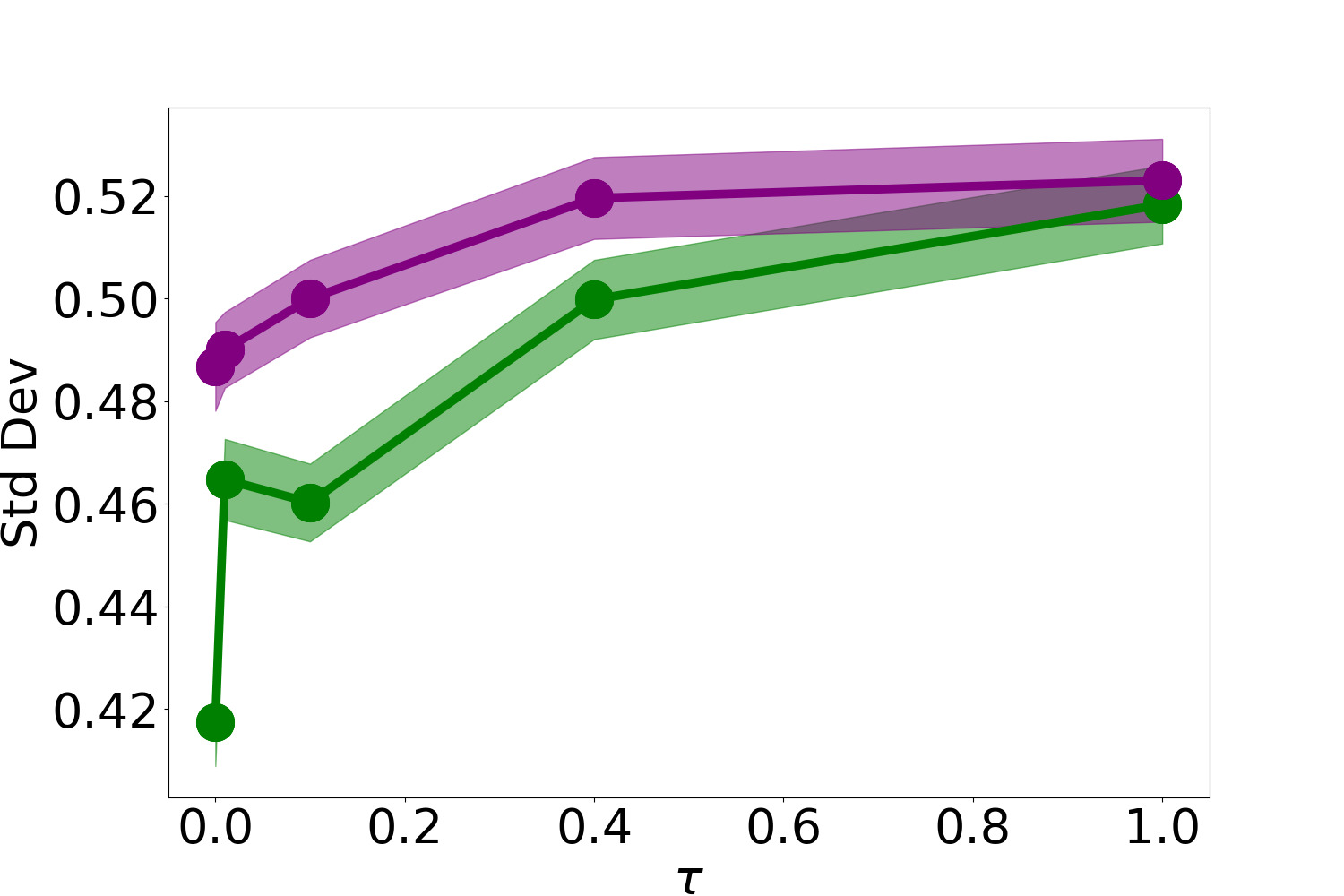} 
    \caption{After 10 updates.}
  \end{subfigure}%
  \begin{subfigure}[b]{0.4\linewidth}
    \centering
    \includegraphics[width=\columnwidth]{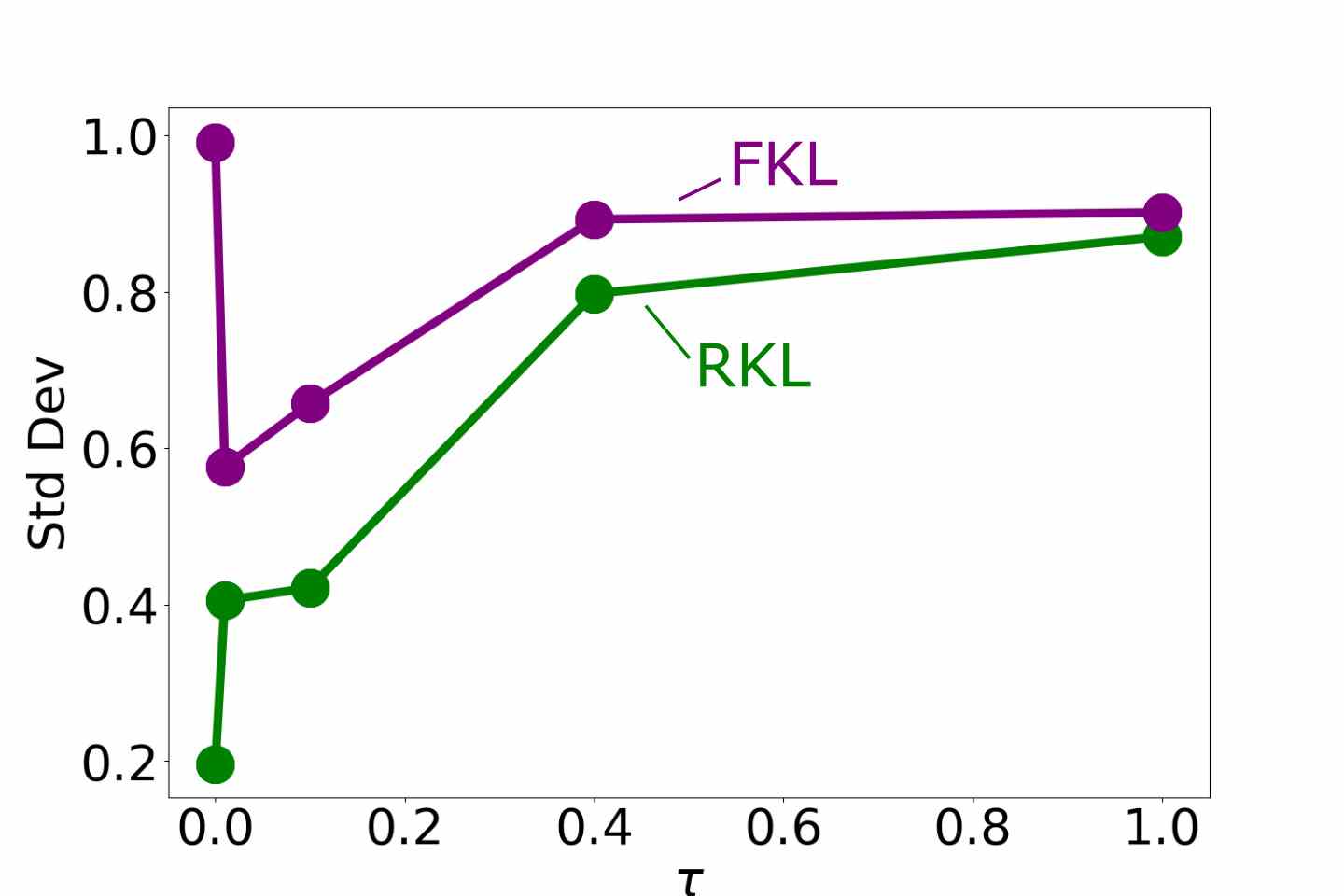} 
    \caption{After 500 updates.}
  \end{subfigure} 
  
  \caption{We plot the standard deviation (y-axis) on the continuous-action version of Switch-Stay, with the temperature varied on the $x$-axis. RMSprop is used with learning rate $0.01$. Each dot is the mean of 1000 iterates, with shaded areas corresponding to the standard error. This plot is for state 0, and recall that every action $\leq 0$ is treated as ``stay'' and every action $> 0$ is treated as ``switch''.
  The large final standard deviation for $\tau = 0$ with the FKL is an artifact of our encoding of the maximum action as a uniform random point in either [0, 1] or [-1, 0], depending on if the maximum action is respectively stay or switch.}
  \label{fig:final-ss-probs-0}

\end{figure}

\subsubsection{Exploration under Continuous Actions in Misleading Maze} \label{sec:expl_maze_cont}

We next investigate exploration behavior in a more difficult exploration problem: a maze with a misleading goal. We also include an experiment in a maze with discrete actions, but find behavior between the FKL and RKL is very similar (see \Cref{app:exploration}).

\Cref{fig:cont_maze_example} illustrates the maze we use in this section. The agent starts in the center of the green block and has to get close to the center of the blue block. The red blocks are the obstacles and the yellow block corresponds to the misleading exit, which terminates the episode but gives a reward much lower than the real exit. Since the misleading exit is closer to the starting point than the actual exit, the agent can only find out about the higher reward after exploring the maze. The coordinates the agent sees are normalized to the range $[-1,1]$, but its actions are given as a tuple $(\diff x,\diff y)$, with $\diff x, \diff y \in [-1,1]$ corresponding to the direction it will try to move (for the actions, $1$ corresponds to the length of one block, as opposed to one unit in state-space). The reward is $-1$ if the agent lands in a normal block, $-10$ if it hits an obstacle or a wall, $1000$ if it lands close to the center of the misleading exit and $100,000$ if it lands close to the center of the actual exit. In case the agent hits a wall or obstacle, any attempt to move in a direction that does not point to the opposite direction will result in no movement and a reward of $-10$. Additionally, there is a timeout of $10,000$ timesteps, after which the agent has its position reset to the starting position without episode termination. To implement this Misleading Maze, we adapt code from GridMap.\footnote{https://github.com/huyaoyu/GridMap}

The FKL and RKL agents are the same as those used in the following benchmark problems in \Cref{sec:main_benchmark}, with pseudocode in \Cref{sec:api-alg}. The FKL is used with weighted importance sampling and RKL is used with the reparametrization trick. The actor and critic are both parametrized as two-layer neural networks of size 128 with ReLU activations, with the actor corresponding to an unimodal Gaussian. The critic learning rate is set to $1e-4$ and the actor learning rate is set to $1e-5$, the optimizer is RMSProp, the batch size is $32$ and we sample $128$ actions to estimate the gradients. Experiments were done using $30$ seeds.

In \Cref{fig:cont_maze_all_temp} we show the mean and standard error of the cumulative number of times the correct exit is reached throughout training for multiple temperatures, where FKL and RKL are plotted separately. \Cref{app:exp_cont} gives more detailed plots of these experiments, where we also show the cumulative number of times the misleading exit is reached and plots for 2M timesteps, instead of the 500k steps we show here. 

We conclude that (1) the FKL seems to be more exploratory than the RKL in this environment and (2) the performance when using FKL seems to be a little more robust to the temperature hyperparameter. We also note that this is the only experiment in the paper where the higher temperatures such as $10$ and $100$ performed better than the lower ones. \Cref{fig:cont_maze_example} shows a sampled trajectory from one of the resulting FKL policies with temperature $100$, trained for 2M timesteps. Although these policies often reach the goal, they are still highly stochastic. A natural next step is to make them more deterministic over time, by using temperature annealing. 

\begin{figure}[tb!]
	\centering
		\begin{tabular}{c c}
			\hspace{0.37\columnwidth} & \includegraphics[width=0.35\columnwidth]{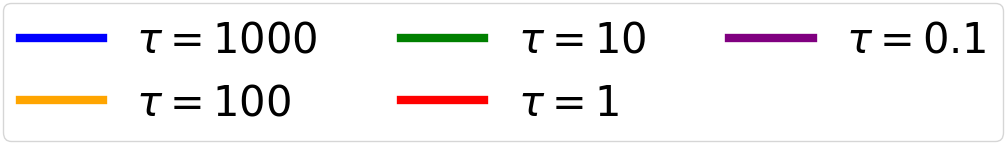}
		\end{tabular}  
		\centering
		\begin{tabular}{c c c}
			\begin{subfigure}{0.3\columnwidth}
			\includegraphics[width=1.0\columnwidth]{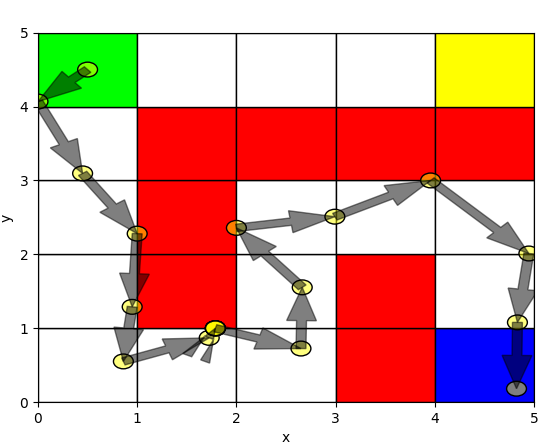}   
			\caption{Misleading Maze}\label{fig:cont_maze_example}
			\end{subfigure}		
			&	
			\begin{subfigure}{0.3\columnwidth}
			\includegraphics[width=1.0\columnwidth]{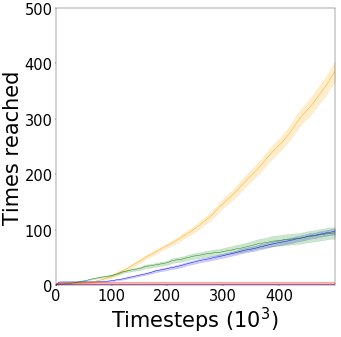}   
			\caption{RKL}
			\end{subfigure}
			&		
			\begin{subfigure}{0.3\columnwidth}
				\centering
				\includegraphics[width=1.0\columnwidth]{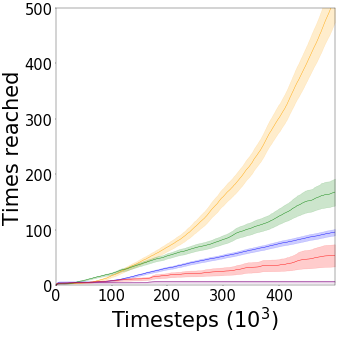}
				\caption{FKL}
			\end{subfigure}  
		\end{tabular}
	\caption{(a) The optimal policy is to go to the blue goal. The yellow goal is misleading, in that it has a positive reward and is easier to get to. Plots in (b) and (c) show the cumulative number of times the correct exit is reached throughout training for multiple temperatures, averaged over 30 runs with the standard errors reported as the shaded regions.}\label{fig:cont_maze_all_temp}
\end{figure}

\subsection{Performance in Benchmark Environments}\label{sec:main_benchmark}

Finally, we compared the FKL and RKL on benchmark continuous and discrete-action environments, using non-linear function approximation. For continuous actions, we experimented on Pendulum, Reacher, Swimmer and HalfCheetah \citep{todorov2012mujoco}. For discrete actions, we experimented on three OpenAI Gym environments \citep{brockman2016openai} and five MinAtar environments \citep{young2019minatar}. We selected policy parameterizations (unimodal Gaussians) and other hyperparameters as is typically done for these problems, as a first investigation into differences under typical conditions.

We found almost no difference between the FKL and RKL for discrete actions and only minimal differences for continuous actions. For completeness and posterity, we still include these systematic results in Appendix \ref{app:main_benchmark}. This contrasts the several differences we found above, in both theory and in more controlled experiments. It is possible that in these environments, the action-values are largely unimodal and so the resulting Boltzmann policy representable by our parameterized policies. The differences between the FKL and RKL are less likely to be pronounced in such settings. A natural next step is to investigate the differences between the two in benchmark problems where it is common to use or need multimodal policies.

\section{Discussion and Conclusion} 

In this work, we investigated the forward KL (FKL) and reverse KL (RKL) divergences to the Boltzmann policy for approximate greedification. To the best of our knowledge, this is the first systematic study into these two natural alternatives, in terms of understanding 1) their theoretical policy improvement properties, 2) their differences in optimization behavior and 3) differences in terms of promoting exploration. Our goal was to highlight that this choice for approximate greedification can have important theoretical and empirical ramifications; we did not advocate for either the FKL or RKL, and found each had useful properties that suit different situations. 

We motivated the importance of understanding this question by explaining that many policy optimization algorithms can actually be well-thought of as approximate policy iteration algorithms, with alternating approximate policy evaluation and greedification steps. We categorized many existing algorithms based on whether they (implicitly) use the FKL and RKL, providing another lens to understand these algorithms. Given this approximate policy iteration perspective, understanding improvements to either step can help us improve our policy optimization algorithms. This work focused on understanding the greedification step, which has been much less investigated than policy evaluation.

Based on our theoretical and empirical results, we can summarize our findings as follows. 

\textbf{Theoretically}, we found reduction in the RKL is both sufficient and necessary to obtain policy improvement (Proposition \ref{prop:avg-reverse-kl}). The FKL, on the other hand, is not guaranteed to induce policy improvement as reliably as the RKL, with a counterexample is a simple one-state MDP (Proposition \ref{sec:fkl-counterexample}). Policy improvement can still occur if a sufficiently high reduction in FKL occurs (Proposition \ref{prop:fklredavg}). We further found that we could upper bound the RKL with the FKL times another term based on the Renyi divergence (Lemma \ref{prop:fkl-rkl-bound}), both suggesting that reducing the FKL on its own may be insufficient without additional conditions that control this term as well. 

\textbf{Empirically, on the microworld and maze experiments}, there were more differences between FKL and RKL in the continuous-action setting, whereas no significant differences were observed in the discrete-action setting. In the continuous-action setting, the FKL tended to have a smoother loss landscape that directed iterates to a global optimum of the entropy-regularized objective. This global optimum, however, was sometimes less optimal with respect to the unregularized objective, especially with higher temperatures, than the optima of the RKL. Moreover, the greater suboptimality of the FKL limit points was correlated with the final FKL policies having higher action variance than the corresponding final policies for the RKL. Further experiments supported the claim that the FKL induces more stochastic policies than the RKL, which is consistent with previously described mean-seeking behaviour of the FKL. The fact that significant differences were only observed in the continuous-action regime suggests the important role of policy parameterization. 

\textbf{Finally, on our benchmark experiments}, there did not seem to be much difference between choosing one divergence or the other for most of the environments, with performance being more heavily dictated by the choice of temperature and learning rate. In fact, plots of both performance and sensitivity to learning rates were near mirror images in most cases. The algorithms had some sensitivity to hyperparameters and the best learning rate-temperature combination was also highly environment dependent.

This work highlights a variety of potential next steps. 
One conclusion from this work is that, though it has been rarely used, the FKL is a promising direction for policy greedification and warrants further investigation. For $\tau \neq 0$, weighted importance sampling allows us to estimate the forward KL objective stochastically, making it practically feasible. Although the FKL performed similarly to the RKL in the benchmark problems, it has properties which can be useful, such as committing less quickly to actions, being more robust to stochastic samples of the gradient of the FKL for a state---which is pertinent as mini-batch estimates will be stochastic---and having a smoother loss surface. 

A natural question from this study is why the differences were the largest for continuous actions in our microworld experiments. One potential reason is the policy parameterization: the Gaussian policy is likely more restrictive than the softmax as it cannot capture multimodal structure. Learning the standard deviation of a Gaussian policy may be another source of instability. In contrast, a softmax policy can represent multiple modes, and does not separate the parameterization of the measure of central tendency (e.g., mean) and the measure of variation (e.g., standard deviation). 

A promising next step is to compare the FKL and RKL with different policy parameterizations for continuous actions.  
Recent work into alternative policy paramaterizations has explored the Beta distribution \citep{chou2017improving}, quantile regression \citep{richter2019learning}, and normalizing flows \citep{ward2019improving}. While the latter two works in particular have focused on the motivation of multimodality for domains that have multiple goals, we believe that the relevance of multimodality for optimization is as important. 


The choice of target distribution in the greedification objective is another interesting question. The Boltzmann distribution over action values is a natural choice for entropy-regularized RL, but one might not want to be tied this framework, especially given sensitivity to the temperature parameter and exploration that is undirected. As yet, there are not too many alternatives. The escort transform has been defined for discrete actions \citep{mei2020escaping}, to avoid the overcommital behavior of the softmax (Boltzmann) policy. For settings where action-values are guaranteed to be positive, the normalized action-values have been used to create a target distribution \citep{ghosh2020operator}. An interesting question is what other alternatives there are to the Boltzmann distribution, especially for continuous actions. 

In addition to the choice of target distribution, there are many other possible choices for a greedification objective. Besides the KL divergences, one may consider the Wasserstein distance, Cramér distance, the JS divergence, and many more. 
Some of our analysis for the KL could actually be used to obtain some of these extensions.
Our derivations for the FKL were based on inequalities connecting f-divergences. 
\citet{sason2016f} compiled a list of inequalities connecting f-divergences, so it is possible to follow similar steps to the ones we followed here to derive bounds for other divergences as well. Since each divergence has situations where it works best, having theoretical guarantees for all of them will make it easier to design algorithms that work in each case.

Though we focused on episodic problems in this work, some of the approaches could be used for the continuing setting with average reward. Recent work has analyzed the regret in continuing RL problems for a policy iteration algorithm, called Politex \citep{abbasi2019politex}. The policy is a Boltzmann distribution over the 
sum of all previous action-value function estimates. For continuous actions, a natural alternative is to consider approximating this distribution with a parameterized policy. Work using differential action-values for the average reward setting could benefit from explicit policy parameterizations that use approximate greedification with respect to those differential action-values. 

Finally, the connection between policy gradients approaches and API could provide new directions to theoretically analyze policy gradient algorithms. A recent global convergence result for policy gradient methods relies on a connection to policy iteration and showing guaranteed policy improvement \citep{bhandari2019global}. These results are similar to earlier results in \citet{scherrer2014local}, and in particular, both works rely on convex policy classes and a closure of the policy class under greedification. 

Many other works have exploited linear action-value parameterizations. \citet{perkins2003convergent} proved that API converges with linear function approximation for the action-values and soft policy improvement. This work, though, requires the best linear action-value approximation in the function class. Other works have assumed instead exact greedification, with approximate action-values, such as Politex \citep{abbasi2019politex} which bounded regret for an API algorithm, with linear action-values learned online. \citet{scherrer2014approximate} provides error propagation analyses of many variants of API; these analyses could be extended to finite-sample guarantees by following \citep{scherrer2015approximateMPI}. Finite-sample analyses for LSPI \citep{lazaric2012finite} and classification-based PI \citep{lazaric2016analysis} also exist, but again where designed for approximate policy evaluation, and exact greedification. 

It is clear there is a rich theoretical literature for API from which to draw, to fill in gaps understanding existing policy optimization algorithms. An important next step is to investigate if this theory can be extended to approximate greedification, and so make steps towards better characterizing many existing policy optimization algorithms. 

\subsection*{Acknowledgements}
We gratefully acknowledge funding from 
NSERC,
the Canada CIFAR AI Chair program, and the Alberta Machine Intelligence Institute (Amii). Special thanks as well to Nicolas Le Roux for comments on an earlier version of this work. 

\bibliography{refs}


\renewcommand{\theHsection}{A\arabic{section}}
\appendix

\section{Categorizing Existing Algorithms by their Greedification Operators}\label{sec:api}

There are many existing policy optimization algorithms. As we motivated in the introduction, many of these can actually be seen as doing API, though they are typically described as policy gradient methods. 
We categorize these methods based on which of these four KLs underlie their policy update.  

It is important to note that these methods are not only characterized by which KL variant they use for updating towards the Boltzmann policy. In fact, in many cases other properties are the critical novelties of the algorithms. For example, as noted below, the Hard RKL underlies TRPO. However, a defining characteristic of TRPO is preventing the policy from changing too much using a KL divergence to the previous policy. Therefore, in addition to using a Hard RKL towards the Boltzmann policy, it also uses a KL divergence to the old policy; these two KL divergences play different roles. 

\subsection{RKL without Entropy Regularization}

Many actor-critic approaches that do not have entropy regularization are implicitly optimizing a hard RKL when performing a policy update.  

        \textbf{Vanilla Actor-Critic} \citep{sutton2018reinforcement} uses the gradient of the Hard RKL for its policy update, in a given state. For the episodic objective $\eta(\pi_\policyparams)$ given in Equation \eqref{eq:policy-obj}, the policy gradient theorem shows that gradient is 
        \begin{align*}
            \nabla_\policyparams \eta(\pi_\policyparams) = \int_{\statespace} d^{\pi_\policyparams}(s) \int_{\actionspace} Q^{\pi_\policyparams}(s, a) \nabla_\policyparams \pi_\policyparams(a \mid s)\, \diff a\, \diff s.    
        \end{align*}
         For each state, the inner update is exactly the Hard RKL. By selecting $d = d^{\pi_\policyparams}$, the Hard RKL averaged across all states exactly equals the policy gradient underlying actor-critic. This weighting is obtained by simply acting on-policy and weighting the update by the discount raised to the power of the step in the episode \citep{thomas2014bias}. In practice, this weighting by the discount is often omitted and the Hard RKL update performed in each state visited under the policy. 
    
        
        \textbf{Trust Region Policy Optimization (TRPO)} \citep{schulman2015trust} has the same hard RKL objective as vanilla actor-critic, but with an \emph{additional constraint} in the projection step that the policy should not change too much after an update. This strategy builds on the earlier Conservative Policy Iteration (CPI) algorithm \citep{kakade2002approximately}, which motivates that the old policy and old action-values can be used in the objective. This objective is sometimes called the linearized objective; with the addition of the constraint, the objective corresponds to 
        \begin{align*}
        \begin{split}
            J(\theta) &= \mathbb{E}_{S \sim d^{\pi_{\theta_{old}}}, A \sim \pi_{\theta}} \left[\Qhat^{\pi_{\theta_{old}}}(S,A)\right] = \mathbb{E}_{S \sim d^{\pi_{\theta_{old}}}, A \sim \pi_{\theta_{old}}}\left[\frac{\pi_\theta(A|S)}{\pi_{\theta_{old}}(A|S)}\Qhat^{\pi_{\theta_{old}}}(S,A)\right] \\
            &\quad\quad\text{subject to } \delta \geq \mathbb{E}_{S \sim d^{\pi_{\theta_{old}}}} [\KL{\pi_{\theta_{old}}}{\pi_\theta}(S)] 
        \end{split}            
        \end{align*}        
        For a given state sampled from $d^{\pi_{\theta_{old}}}$, the inner optimization is precisely a hard RKL to the action-values for the old policy. The objective can be written with actions sampled either according to $\pi_\policyparams$ or according to $\pi_{\theta_{old}}$ with an importance sampling ratio.
        
        Note, though, that the actual TRPO algorithm uses an approximation that results in a natural policy gradient update, rather than a gradient of the above objective. So, though it is motivated by optimizing the above (Equation (14) in their work), it actually solves \citep[Equation 17]{schulman2015trust}. This has further been re-interpreted as a mirror descent update and used to provide convergence of TRPO \citep{neu2017unified,liu2019neural,shani2019adaptive}.
            
            
        \textbf{Proximal Policy Optimization (PPO)} \citep{schulman2017proximal} the baseline they compare against uses the same objective as TRPO, but uses a KL penalty instead of a constraint. The policy objective with the KL penalty is
        \begin{align*}
            J(\theta) = \mathbb{E}_{S \sim d^{\pi_{\theta_{old}}}, A \sim \pi_{\theta_{old}}}\left[\frac{\pi_\theta(A|S)}{\pi_{\theta_{old}}(A|S)}\Qhat^{\pi_{old}}(S,A)\right] - \beta \mathbb{E}_{s \sim d^{\pi_{\theta_{old}}}} \left[ \KL{\pi_{\theta_{old}}}{\pi_\theta}(S) \right]
        \end{align*}
        The greedification component of this objective---the first term---can again been seen as using $d = d^{\pi_{old}}$ with a Hard RKL to action-values $\Qhat^{\pi_{old}}$. The second term simply modifies how the Hard RKL is optimized, because the convergence point is unchanged: when $\theta_{old}$ is optimal, setting $\theta = \theta^*$ maximizes the first term while keeping the second term $0$. 
        
        The actual objective used in PPO clips hyperparameter $\epsilon \geq 0$ and is written as
        \begin{align*}
            J(\theta) = \mathbb{E}_{\pi_{\theta_{old}}}\left[\min\left(\frac{\pi_\theta(a|s)}{\pi_{\theta_{old}}(a|s)}\hat{A}^\pi(s,a), \mathrm{clip}\left(\frac{\pi_\theta(a|s)}{\pi_{\theta_{old}}(a|s)}, 1 - \epsilon, 1 + \epsilon\right)\hat{A}^\pi(s,a)\right)\right]
        \end{align*}
        The clipped objective forces the probability ratio $\frac{\pi_\theta(a|s)}{\pi_{\theta_{old}}(a|s)}$ to stay within $[1-\epsilon, 1+\epsilon]$.

		\textbf{Other trust region policy optimization methods} can be also understood as minimizing the Hard RKL. Maximum a posteriori policy optimization \citep{abdolmaleki2018maximum} resembles TRPO, but uses a target policy obtained by replacing the entropy term in Equation~\eqref{eq:target policy definition} with an FKL between $p$ and an old policy. Mirror descent policy iteration, which is mirror descent modified policy iteration \citep{geist2019theory} with $m=\infty$, similarly uses a target policy obtained by replacing the entropy term with a Bregman divergence. Another policy optimization algorithm based on the trust region method is relative entropy policy search \citep{peters2010relative}. 

        \textbf{Deep Deterministic Policy Gradient (DDPG)} \citep{silver2014deterministic,lillicrap2015continuous} can be viewed as a ``degenerate'' Hard RKL with a deterministic policy, taking care to differentiate through the action. 
        
        
            
    \subsection{RKL with Entropy Regularization}
        The RKL has been used for an actor-critic algorithm and a continuous action Q-learning algorithm.
        
        \textbf{Soft Actor-Critic (SAC)} \citep{haarnoja2018soft} minimizes the RKL, but is written slightly differently because it  subsumes the temperature inside the reward and scales the reward to control the temperature. 
        The policy objective is
        \begin{align*}
        \begin{split}
        J(\theta) &= \mathbb{E}_{S \sim \mathcal{D}} \left[ \mathrm{KL}\left(\pi_\theta(\cdot|S) \parallel \frac{\exp(Q^{\pi_{old}}(S, \cdot))}{Z(S)}\right)\right]
        \end{split}
        \end{align*}
        where the states $s$ are sampled from a dataset given by $\mathcal{D}$.
        We note that {A3C} \citep{mnih2016asynchronous} updates the policy with the same objective, but does not learn the soft value functions.     
        
        
        
        
        
        \textbf{Soft Q-learning (SQL)} \citep{haarnoja2017reinforcement} introduces a soft Bellman optimality update, that iterates towards a soft optimal action-value $Q^*$ using a soft maximization. To avoid sampling the Boltzmann policy, which is expensive, they proposed using Stein variational gradient descent. This approach requires introducing an approximate sampling network, which can be alternatively seen as a policy. The resulting objective corresponds exactly to an RKL, with the Boltzmann policy defined on $\Qhat^*$
        \begin{align*}
            J(\theta) = \mathbb{E}_{S \sim D}\left[ \mathrm{KL}\left( \pi_\theta(\cdot|S) || \frac{\exp(Q^*(S, \cdot)\tau^{-1})}{Z(S)} \right) \right]
        \end{align*}

        Two other approaches are similar to SQL, but for discrete actions. 
    	\citet{asadi2017analternative} propose generalized value iteration with the mellomax operator, where the maximum-entropy mellowmax policy is a Boltzmann policy with state-dependent temperature. Conservative value iteration \citep{kozuno2019theoretical} can be seen as a variant of SQL in which a trust region method is used, formally proven in \citep{vieillard2020leverage}. 

        

\subsection{FKL without Entropy Regularization}
The FKL without entropy regularization and Boltzmann target policy has not previously been explored. However, the FKL to other target distributions has been considered. 

    \textbf{Deep Conservative Policy Iteration (DCPI)} \citep{vieillard2019deep}
     learns both an action value function $Q$ and a policy $\pi_\policyparams$, with target networks $Q^-$ and $\pi^-$ for each. To update the policy, DCPI minimizes an FKL loss between $\pi_\policyparams$ and a regularized greedification of $Q$. In other words, DCPI uses a forward KL, but to different target distribution than the Boltzmann distribution. 
    \begin{align*}
        J(\policyparams) := \Ex_{S \sim \mathcal{D}}[\KL{((1 - \alpha) \pi^- + \alpha \mathcal{G}(Q)}{\pi_\policyparams}(S)]
    \end{align*}
    for $\alpha \in [0, 1]$ and $\mathcal{G}(Q(S,\cdot))$ the greedy policy w.r.t. $Q$ for a given state $S$. 

\textbf{Policy Greedification as Classification} uses multi-class classification, where actions that maximize $Q(s,a)$ at each state are labeled positive, and the policy is updated to predict that greedy action for every state \citep{lagoudakis2003reinforcement,lazaric2010analysis,farahmand2015classificationbased}. This approach is related to using the Hard FKL, because the FKL corresponds to the cross-entropy loss which can be used for classification. Other classification methods, however, like SVMs, are not directly related.  

\subsection{FKL with Entropy Regularization}

    \textbf{Under-appreciated Reward EXploration (UREX)} \citep{nachum2016improving} 
    optimizes a mixture of forward and reverse KLs. Their reverse KL is the usual vanilla actor-critic objective, while the forward KL is given by $\KL{\pi^*}{\pi_\theta}$, where they approximate $\pi^* \propto \exp(\tau^{-1} G - \log \pi_\policyparams)$ with $G$ being the return received at the end of the episode.
    Subsequent work in \citet{agarwal2019learning} also employs both a forward and reverse KL, where the forward KL is initially used to collect diverse trajectories, and the reverse KL is used to learn a robust policy, which performs well in sparse and under-specified reward settings.

    \textbf{Exploratory Conservative Policy Optimization (ECPO)}
     \citep{mei2019principled} splits policy optimization into a \textit{project} and a \textit{lift} step. The project step minimizes the forward KL divergence to a target policy $\overline{\pi}_{\tau, \tau'}$ that is the optimal policy under the entropy-regularized objective, with a KL penalty to the old policy.  
    \begin{align*}
    \begin{split}
        &\mathrm{Project}: \argmin_{\pi_\theta} \KL{\overline{\pi}_{\tau, \tau'} }{\pi_\theta}\\
        &\mathrm{Lift}: \overline{\pi}_{\tau, \tau'} = \argmax_{\pi} \Ex_{S_0}[V^{\pi_{old}}(S_0)] - \tau \KL{\pi}{\pi_{old}}  + \tau' \entropy(\pi) 
    \end{split}        
    \end{align*}
    
    There have also been relevant works in supervised learning using entropy regularization and forward KL. In \textbf{Reward Augmented Maximum Likelihood (RAML)} \citep{norouzi2016reward}, instead of optimizing over the traditional maximum likelihood framework (Hard FKL), a target reward distribution is defined, and the model distribution minimizes the forward KL to that target distribution. 
 
 It is interesting to note that the FKL has been used in other area of reinforcement learning.    
In Inverse Reinforcement Learning and Imitation Learning, \cite{ghasemipour2020divergence} have shown that, under the cost regularized framework proposed by \cite{ho2016generative}, most methods can be categorized as reducing some divergence. Notably among them is behavior cloning, which corresponds to a FKL reduction between policies, whereas AIRL \citep{fu2018learning}, one of the state-of-the-art methods, reduces the RKL between occupancy measures. The paper then proposes a variant of AIRL based on reducing the FKL, obtaining competitive results.    

\section{Proofs}\label{app:proofs}

In this section we provide all the proofs. We give results organized according to the same subsections as in the main body. 

\subsection{Proofs for Section \ref{sec_rkl_improvement}: Policy Improvement with RKL}

\spd*
\begin{proof}
When we write $\Etraj$, we mean the expectation over the trajectory distribution induced by $\rho_0$ and $\pinew$.
\begin{align*}
    \frac{1}{1 - \gamma}&\Ex_{d^{\pinew},\pinew}[A^{\piold}_\tau(S, A)] = \Etraj\left[\sum_{t = 0}^\infty \gamma^t A_{\tau}^{\piold}(S_t, A_t)\right]\\
    &= \Etraj\left[\sum_{t = 0}^\infty \gamma^t (Q^\piold_\tau(S_t, A_t) - \tau \log \piold(\cdot \mid S_t) - V^{\piold}_\tau(S_t))\right]
\end{align*}
where the first equality follows from the definition of the visitation distribution and the second from the definition of the soft advantage. 
We can simplify the term inside the expectation as follows.
\begin{align*}
\sum_{t = 0}^\infty \gamma^t &(Q^\piold_\tau(S_t, A_t) - \tau \log \piold(\cdot \mid S_t) - V^{\piold}_\tau(S_t))\\
        &= \sum_{t = 0}^\infty \gamma^t (r(S_t, A_t) + \gamma V_\tau^{\piold}(S_{t + 1}) - \tau \log \piold(\cdot \mid S_t) - V^{\piold}_\tau(S_t))\\
        &= \Big( \sum_{t = 0}^\infty \gamma^t (r(S_t, A_t) - \tau \log \piold(\cdot \mid S_t)) \Big) - V_\tau^{\piold}(S_0),
\end{align*}
where the second line follows from expanding $Q^\piold_\tau$ and the second from the telescoping series $\gamma V^\piold_\tau(S_{t + 1}) - V^\piold_\tau(S_t)$.  
Plugging this back into the expectation, and using $\Etraj\left[V_\tau^{\piold}(S_0)\right] = \eta_\tau(\piold)$ we get
\begin{align*}
\Etraj&\left[\sum_{t = 0}^\infty \gamma^t A_{\tau}^{\piold}(S_t, A_t)\right] = -\eta_\tau(\piold) + \Etraj\left[\sum_{t = 0}^\infty \gamma^t (r(S_t, A_t) - \tau \log  \piold(\cdot \mid S_t)) \right]\\
        &= -\eta_\tau(\piold) + \eta_\tau(\pinew) + \Etraj\left[\sum_{t = 0}^\infty \gamma^t \tau (\log \pinew(\cdot \mid S_t) -  \log  \piold( \cdot \mid S_t)) \right] \\
        &= -\eta_\tau(\piold) + \eta_\tau(\pinew)  + \frac{\tau}{1 - \gamma} \Ex_{d^{\pinew}}[\KL{\pinew}{\piold}(S)],
\end{align*}
where the second equality is obtained by adding and subtracting $\tau \log \pinew(\cdot \mid S_t)$.
\end{proof}

\rklimprove*
\begin{proof}
We start by writing the RHS of \Cref{lemma:soft-performance-difference}.
\begin{gather*}
\frac{1}{1 - \gamma}\Ex_{d^{\pinew}, \pinew}[A_\tau^{\piold}(S, A)] - \frac{\tau}{1 - \gamma} \Ex_{d^\pinew}[\KL{\pinew}{\piold}(S)] =\\
\begin{split}
\frac{\tau}{1 - \gamma}\EdBiggAct{\frac{Q_\tau^{\piold}(S, A)}{\tau} &-\cancel{\log(\piold(A|S))} - \frac{V_\tau^{\piold}(S)}{\tau}} \\ &- \frac{\tau}{1 - \gamma} \Ex_{d^\pinew,\pinew}[\log(\pinew(A|S)) - \cancel{\log(\piold(A|S))}] =\end{split} \\
\begin{split}
\frac{\tau}{1 - \gamma}\Bigg( \EdBiggAct{ \log(\mathrm{e}^{\frac{Q_\tau^{\piold}(S, A)}{\tau}})} & - \EdBiggActOld{\frac{Q_\tau^{\piold}(S,A)}{\tau} - \log(\piold(A|S))} \Bigg) \\ &- \frac{\tau}{1 - \gamma} \Ex_{d^\pinew,\pinew}[\log(\pinew(A|S)) ] =\\
\end{split} \\
\begin{split}
	\frac{\tau}{1 - \gamma} \EdBiggAct{ \log(\mathrm{e}^{\frac{Q_\tau^{\piold}(S, A)}{\tau}}) &- \log(\pinew(A|S))}  \\&- \frac{\tau}{1 - \gamma} \EdBiggActOld{\log(\mathrm{e}^{\frac{Q_\tau^{\piold}(S, A)}{\tau}}) - \log(\piold(A|S))} =\\
\end{split} \\
\frac{\tau}{1 - \gamma} \Ed{\Delta \RKL{\piold}{\pinew}(S)}.
\end{gather*}
The last equality follows by adding and subtracting $\Ex_{d^{\pinew}} (\log(Z(S)))$ and rearranging. Plugging that in the equation from \Cref{lemma:soft-performance-difference} we get:
\begin{align*}
	\eta_\tau(\pinew) - \eta_\tau(\piold) = \frac{\tau}{1 - \gamma} \Ed{\Delta \RKL{\piold}{\pinew}(S)},
\end{align*}
which can only be nonnegative if $\Ed{\Delta \RKL{\piold}{\pinew}(S)} \geq 0$.
\end{proof}

\subsubsection{Proofs for Section \ref{sec_rkl_imp_action}: Extensions to Approximate Action-Values}

\approxspd*
\begin{proof}
We focus on a particular state $s$ and omit function arguments when they are clear from context.
\begin{align*}
    &\tau(\KL{\piold}{\mathcal{B}_\tau \hat Q} - \KL{\pinew}{\mathcal{B}_\tau \hat Q} )\\
    &\quad= \tau[-H(\piold) + H(\pinew) - \sum_a \piold(a)( \hat Q(s, a) / \tau - \log Z(s)) + \sum_a \pinew(a)( \hat Q(s, a) / \tau - \log Z(s))]\\
    &\quad = -\tau H(\piold) + \tau H(\pinew) - \sum_a \piold(a)\hat Q(s, a) + \sum_a \pinew(a)\hat Q(s, a) \\
    &\eqcomment{policies sum to one and $Z(s)$ does not depend on $a$}\\
    &\quad = -\tau H(\piold) + \tau H(\pinew) - \sum_a \piold(a)(Q^\piold_\tau(s, a) + \epsilon(s, a) ) \\
    &\quad\quad\quad+ \sum_a \pinew(a)(Q^\piold_\tau(s, a) + \epsilon(s, a) + \tau \log \piold(a) - \tau \log \piold(a)) \\
    &\eqcomment{expanding $\hat Q$}\\
    &\quad= -\tau H(\piold) -\tau \KL{\pinew}{\piold} - \sum_a \piold(a)(Q^\piold_\tau(s, a) + \epsilon(s, a) ) \\
    &\quad\quad+ \sum_a \pinew(a)(Q^\piold_\tau(s, a) + \epsilon(s, a) - \tau \log \piold(a)) \\
    &\eqcomment{folding term into KL}\\
    &\quad= -\tau H(\piold) - \tau \KL{\pinew}{\piold} - V_\tau(s) - \sum_a \piold(a)(\tau \log \piold(a) + \epsilon(s, a) ) \\
    &\quad\quad+ \sum_a \pinew(a)(Q^\piold_\tau(s, a) + \epsilon(s, a) - \tau \log \piold(a)) \\
    &\eqcomment{using average of action-value}\\
    &\quad= -\tau \KL{\pinew}{\piold}  - \sum_a \piold(a)\epsilon(s, a)  + \sum_a \pinew(a)(A^\piold_\tau(s, a) + \epsilon(s, a)) \\
    &\eqcomment{cancelling entropy and using definition of soft advantage}\\
\end{align*}
Upon averaging both sides over $d^\pinew$, dividing by $1 - \gamma$, and using the performance difference lemma, we obtain the result.
\begin{align*}
    &\frac{\tau}{1 - \gamma}\Ex_{\pinew}[\KL{\piold}{\mathcal{B}_\tau \hat Q} - \KL{\pinew}{\mathcal{B}_\tau \hat Q} ]\\
    &\quad = \eta(\pinew) - \eta(\piold) + \Ex_{d^\pinew}[\Ex_{\pinew} \epsilon(s, a) - \Ex_{\piold} \epsilon(s, a)].
\end{align*}
\end{proof}

\approxrkl*
\begin{proof}
The proof is simply rearrangement of \Cref{lemma:approximate-soft-performance-difference}. 
\end{proof}

\subsubsection{Proofs for Section \ref{sec_extension_dpiold}: Extensions to Weighting under $d^\piold$}

\dpioldpdl*
\begin{proof}

Reversing the role of $\pinew$ and $\piold$ in \Cref{lemma:soft-performance-difference},
\begin{align*}
    \eta_\tau(\pinew) - \eta_\tau(\piold)
    =
    \frac{1}{1 - \gamma} \Ex_{d^\piold, \piold} \bigg[
        \tau \KL{\piold}{\pinew} (S) - \advnew (S, A)
    \bigg].
\end{align*}
Let $\Delta$ be $\advnew - \advold$. Since $\sum_{a \in \cA} \pi (a|s) A_\tau^\pi (s, a) = 0$, we have that
\begin{align*}
    \sum_{a \in \cA} \piold (a|s) \advnew (s, a)
    &=
    \sum_{a \in \cA} \piold (a|s) \Delta (s, a)
    \\
    &=
    \sum_{a \in \cA} \left( \piold (a|s) - \pinew (a|s) \right) \Delta (s, a)
    + \sum_{a \in \cA} \pinew (a|s) \Delta (s, a)
    \\
    &=
    \sum_{a \in \cA} \left( \piold (a|s) - \pinew (a|s) \right) \Delta (s, a)
    - \sum_{a \in \cA} \pinew (a|s) \advold (s, a).
\end{align*}
Furthermore, note that
\begin{align*}
    \Delta (s, a)
    &=
    \underbrace{
        \qnew (s, a) - \vnew (s) - \qold (s, a) + \vold (s)
    }_{
        := \Delta' (s, a)
    } + \tau \log \frac{\piold (a|s)}{\pinew (a|s)}.
\end{align*}
Therefore, letting $V_{\tau, max} := \frac{r_{max} + \tau \log A}{1-\gamma}$
\begin{align*}
    &\sum_{a \in \cA} \piold (a|s) \advnew (s, a)
    \\
    &=
    \sum_{a \in \cA} \left( \piold (a|s) - \pinew (a|s) \right) \Delta' (s, a)
    + \tau [ \KL{\piold}{\pinew} (s) + \KL{\pinew}{\piold} (s) ]
    - \sum_{a \in \cA} \pinew (a|s) \advold (s, a)
    \\
    &\leq
    4 V_{\tau, max} \sum_{a \in \cA} \left| \piold (a|s) - \pinew (a|s) \right|
    + \tau [ \KL{\piold}{\pinew} (s) + \KL{\pinew}{\piold} (s) ]
    - \sum_{a \in \cA} \pinew (a|s) \advold (s, a)
    \\
    &\leq
    4 V_{\tau, max} \sqrt{2 \KL{\pinew}{\piold} (s)}
    + \tau [ \KL{\piold}{\pinew} (s) + \KL{\pinew}{\piold} (s) ]
    - \sum_{a \in \cA} \pinew (a|s) \advold (s, a).
\end{align*}

Using this result,
\begin{align*}
    &\eta_\tau(\pinew) - \eta_\tau(\piold)\\
    &\quad\quad\geq 
    \frac{1}{1 - \gamma} \Ex_{d^\piold} \bigg[
        \sum_{a \in \cA} \pinew (a|S) \advold (S, a)
        - \tau \KL{\pinew}{\piold} (S)
        - 4 V_{\tau, max} \sqrt{2 \KL{\pinew}{\piold} (S)}
    \bigg]
    \\
    &\quad\quad\geq 
    \frac{1}{1 - \gamma} \Ex_{d^\piold} \bigg[
        \sum_{a \in \cA} \pinew (a|S) \advold (S, a)
        - \tau \KL{\pinew}{\piold} (S)
        - 4 V_{\tau, max} \sqrt{2\alpha}
    \bigg].
\end{align*}
The final result follows by noting that $\Delta \RKL{\piold}{\pinew} = \sum_{a \in \cA} \pinew (a|S) \advold (S, a) - \tau \KL{\pinew}{\piold} (S)$, as in the proof of \Cref{prop:avg-reverse-kl}.
\end{proof}

\approxdpiold*
\begin{proof}
From the proof of \Cref{lemma:approximate-soft-performance-difference}, we have
\begin{align*}
     \sum_{a \in \cA} \pinew (a|S) \advold (S, a)
        - \tau \KL{\pinew}{\piold} (S) = \tau(\KL{\piold}{\mathcal{B}_\tau \hat Q} - \KL{\pinew}{\mathcal{B}_\tau \hat Q} ) + \sum_a \epsilon(s, a)(\piold(a) - \pinew(a)).
\end{align*}
Using this result in \Cref{prop:dpiold-extension}, 
\begin{align*}
    &\eta_\tau(\pinew) - \eta_\tau(\piold)\\
      &\quad\quad\geq 
    \frac{1}{1 - \gamma} \Ex_{d^\piold} \bigg[
        \sum_{a \in \cA} \pinew (a|S) \advold (S, a)
        - \tau \KL{\pinew}{\piold} (S)
        - 4 V_{\tau, max} \sqrt{2\alpha}\bigg]\\
        &\quad\quad = \frac{1}{1 - \gamma} \Ex_{d^\piold} \bigg[
        \tau(\KL{\piold}{\mathcal{B}_\tau \hat Q} - \KL{\pinew}{\mathcal{B}_\tau \hat Q} ) + \sum_a \epsilon(s, a)(\piold(a) - \pinew(a))
        - 4 V_{\tau, max} \sqrt{2\alpha}\bigg]\\
        &\quad\quad = \frac{1}{1 - \gamma} \Ex_{d^\piold} \bigg[
        \tau  \hat \Delta\RKL{\piold}{\pinew}(S) + \sum_a \epsilon(s, a)(\piold(a) - \pinew(a))
        - 4 V_{\tau, max} \sqrt{2\alpha}\bigg].
\end{align*}
\end{proof}

\subsection{Proofs for Section \ref{sec_fkl_improvement}: Policy Improvement with FKL}

\fklcounter*
\begin{proof}
	Consider the environment with a single state $s$ and two actions: $a_1$ and $a_2$. Regardless of the action chosen, the agent always transition to $s$. We will omit dependency on the state in the following notation. The rewards are defined as
	\begin{align*}
		r(a_1) = -1 \quad, \quad r(a_2) = 1.
	\end{align*}
	Take $\piold$ and $\pinew$ as follows.
	\begin{alignat*}{3}
	&\piold(a_1) = \epsilon_1 \quad &&, \quad &&\pinew(a_1) = 1 - \epsilon_2, \\
	&\piold(a_2) = 1-\epsilon_1 \quad &&, \quad &&\pinew(a_2) = \epsilon_2	.
	\end{alignat*}
	We will prove the result by making $\epsilon_1$ arbitrarily small, which we will show forces $\FKL{\piold}{\BQt^{\piold}}$ to $\infty$ while keeping $\epsilon_2$ fixed, causing $\FKL{\pinew}{\BQt^{\piold}}$ to be finite. Note that
	\begin{align*}
		\FKL{\piold}{\BQt^{\piold} } &= \sum_{i=1}^2 \BQt^{\piold}(a_i) \log \left( \frac{\BQt^{\piold}(a_i)}{\piold(a_i)} \right) \\
		 &= -\entropy(\BQt^{\piold}) - \sum_{i=1}^2 \BQt^{\piold}(a_i) \log( \piold(a_i) ), \\
		 \lim_{\epsilon_1 \to 0} \FKL{\piold}{\BQt^{\piold} } & = \underbrace{-\lim_{\epsilon_1 \to 0} \entropy(\BQt^{\piold})}_{\geq -\log 2 \text{ and } \leq 0} - \lim_{\epsilon_1 \to 0} \BQt^{\piold}(a_1) \log(\epsilon_1) + 0.
	\end{align*}
	To calculate the limit of the middle summand, we note that if $ \lim_{\epsilon_1 \to 0} \BQt^{\piold}(a_1) > 0$, the middle summand will go to infinity, since $ \lim_{\epsilon_1 \to 0} -\log(\epsilon_1) = \infty$. We can verify that this is indeed the case.
	\begin{alignat}{3}
		& \lim_{\epsilon_1 \to 0} \Qt^\piold(a_1) = -1 + \frac{\gamma}{1 - \gamma} \qquad &&; \qquad  && \lim_{\epsilon_1 \to 0} \Qt^\piold(a_2) = \frac{1}{1 - \gamma} \nonumber \\		
		& \lim_{\epsilon_1 \to 0} \BQt^\piold(a_1) = \frac{\mathrm{e}^{\left(-1 + \frac{\gamma}{1 - \gamma} \right)\frac{1}{\tau}}}{ Z } \qquad &&; \qquad  && \lim_{\epsilon_1 \to 0} \BQt^\piold(a_2) = \frac{\mathrm{e}^{\left( \frac{1}{1 - \gamma} \right)\frac{1}{\tau}}}{ Z }, \label{eq:counter_ex_bq}			
	\end{alignat}
	where $Z := \mathrm{e}^{\left(-1 + \frac{\gamma}{1 - \gamma} \right)\frac{1}{\tau}} + \mathrm{e}^{\left(\frac{1}{1 - \gamma} \right)\frac{1}{\tau}}$. Since, for fixed $\gamma$ and $\tau$, the quantities in \Cref{eq:counter_ex_bq} are fixed, we have that $ \lim_{\epsilon_1 \to 0} -\BQt^{\piold}(a_1) \log(\epsilon_1) = \infty$, causing $\lim_{\epsilon_1 \to 0} \FKL{\piold}{\BQt^{\piold} } = \infty$. Moreover, $\FKL{\pinew}{\BQt^{\piold}}$ has a similar form of the above FKL, but, since $\epsilon_2$ is assumed to be fixed, this quantity will be finite. The point is that for any $\epsilon_2$, we can find $\epsilon_1$ such that $\FKL{\piold}{\BQt^{\piold}} > \FKL{\pinew}{\BQt^{\piold} }$. 
	
	It remains to be seen that we can have $\epsilon_2$ that guarantees $\Vpitau{\pinew} < \Vpitau{\piold}$. We write
	\begin{align*}
		\Vpitau{\pinew} &= \Ex\left[ \sum_{t=0}^\infty \gamma^t (R + \tau \entropy(\pinew)) \right] 
		= \Ex\left[ \sum_{t=0}^\infty \gamma^t R \right] + \frac{\tau\entropy(\pinew)}{1-\gamma} 
		\\
		&= \left(\sum_{t=0}^\infty \gamma^t \Ex[R]\right) + \frac{\tau\entropy(\pinew)}{1-\gamma} \\
		&= \frac{2 \epsilon_2 - 1}{1-\gamma} + \frac{\tau\entropy(\pinew)}{1-\gamma}. 		
	\end{align*}
	Additionally, we know that $\lim_{\epsilon_1 \to 0} \Vpitau{\piold} = \frac{1}{1-\gamma}$. If we can find $\epsilon_2$ such that $\Vpitau{\pinew} < \frac{1}{1 - \gamma}$, then we can find $\epsilon_1$ such that simultaneously $\Vpitau{\pinew} < \Vpitau{\piold}$ and $\FKL{\piold}{\BQt^{\piold} ) > \FKL (\pinew, \BQt^{\piold} }$. 
	$\Vpitau{\pinew} < \frac{1}{1 - \gamma}$ will hold if
	\begin{gather}
		\frac{1}{1-\gamma} \left(2 \epsilon_2 - 1 + \tau\entropy(\pinew)\right) < \frac{1}{1-\gamma} \nonumber \\
		2 \epsilon_2 - 1 + \tau( -(1-\epsilon_2)\log(1-\epsilon_2) - \epsilon_2 \log(\epsilon_2) ) < 1. \label{eq:fkl_counter_final_ineq}
		\end{gather}
We use $f(\epsilon_2)$ to denote the LHS of \cref{eq:fkl_counter_final_ineq}.
We have
\begin{equation*}
\lim_{\epsilon_2 \to 0} f(\epsilon_2) = -1,
\end{equation*}
which is less than the RHS of \Cref{eq:fkl_counter_final_ineq}. Formally, for any $\epsilon > 0$, we can find $\delta > 0$ such that if $|\epsilon_2| < \delta$, $|f(\epsilon_2) - 1| < \epsilon$. Any $\epsilon < 2$ will suffice: we can conclude that, for any fixed $\tau \geq 0$, there is some $\epsilon_2 \in (0, 1)$ satisfying \cref{eq:fkl_counter_final_ineq}.

It is possible to extend this to an MDP with multiple states. Consider a case where $\cS = \{s_1, s_2, s_3, \ldots, s_N \}$, and at each $s_n \in \cS$ except $s_N$, a state transition to $s_{n+1}$ occurs. At state $s_N$, the transition is back to itself. Setting the reward, $\piold$, and $\pinew$ similarly to the above environment, 
we can show the same result with multiple states.
\end{proof}

\subsubsection{Proofs for Section \ref{sec:direct-fkl-reduction}: Policy Improvement under Sufficient FKL Reduction}\label{app_direct-fkl-reduction}

\begin{lemma}\label{lemma:intermediate_ineq}
Let the action space $\actionspace$ be finite. For any state $s$ and any vector $\hat Q : \statespace \times \actionspace \to \R$, if $\FKL{\bpinew}{\mathcal{B}_\tau \hat Q}(s) \leq \FKL{\bpiold}{\mathcal{B}_\tau \hat Q}(s) $ then 
\begin{align*}
    \RKL{\bpinew}{\mathcal{B}_\tau \hat Q}(s) \leq \frac{\norm{\hat Q}_\infty}{\tau} \sqrt{2 \FKL{\bpinew}{\mathcal{B}_\tau \hat Q}(s)} - \mathcal{B}_\tau \hat Q(s, \cdot)^\top \log(\bpiold(\cdot \mid s))
\end{align*}
\end{lemma}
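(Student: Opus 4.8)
The plan is to fix the state $s$ and pass to the finite probability vectors $p := \bpinew(\cdot\mid s)$, $q := \mathcal{B}_\tau\hat Q(s,\cdot)$, and $o := \bpiold(\cdot\mid s)$. Since $q$ is a Boltzmann distribution over the finite collection $\hat Q(s,\cdot)$, every $q_a > 0$, so $\RKL{\bpinew}{\mathcal{B}_\tau\hat Q}(s)$ is always finite and well defined. If $\FKL{\bpinew}{\mathcal{B}_\tau\hat Q}(s) = \infty$ the claimed bound is trivial, so I may assume it is finite; this is exactly what the hypothesis $\FKL{\bpinew}{\mathcal{B}_\tau\hat Q}(s)\le\FKL{\bpiold}{\mathcal{B}_\tau\hat Q}(s)$ provides when the right side is finite, and it also forces $p_a > 0$ wherever $q_a > 0$.

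The central step I would establish is the exact decomposition
\[
  \RKL{\bpinew}{\mathcal{B}_\tau\hat Q}(s) = -\entropy(p) + \entropy(q) - \tfrac{1}{\tau}\sum_a (p_a - q_a)\hat Q(s,a).
\]
To get it, write $\RKL{\bpinew}{\mathcal{B}_\tau\hat Q}(s) = -\entropy(p) - \sum_a p_a\log q_a$, split $\sum_a p_a\log q_a = \sum_a q_a\log q_a + \sum_a(p_a - q_a)\log q_a = -\entropy(q) + \sum_a (p_a-q_a)\log q_a$, and substitute $\log q_a = \hat Q(s,a)/\tau - \log Z(s)$. The partition-function term cancels because $\sum_a(p_a - q_a) = 0$.

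Next I would bound the linear term. By Hölder's inequality, $-\tfrac1\tau\sum_a(p_a-q_a)\hat Q(s,a) \le \tfrac{\norm{\hat Q}_\infty}{\tau}\sum_a|p_a - q_a|$, and $\sum_a|p_a - q_a| = 2\,\mathrm{TV}(p,q)$. Pinsker's inequality gives $\mathrm{TV}(p,q)\le\sqrt{\tfrac12\KL{q}{p}}$, and since $\KL{q}{p} = \FKL{\bpinew}{\mathcal{B}_\tau\hat Q}(s)$ this yields $\sum_a|p_a-q_a| \le \sqrt{2\,\FKL{\bpinew}{\mathcal{B}_\tau\hat Q}(s)}$. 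Combining with the decomposition,
\[
  \RKL{\bpinew}{\mathcal{B}_\tau\hat Q}(s) \le -\entropy(p) + \entropy(q) + \tfrac{\norm{\hat Q}_\infty}{\tau}\sqrt{2\,\FKL{\bpinew}{\mathcal{B}_\tau\hat Q}(s)}.
\]

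Finally, to convert $-\entropy(p)+\entropy(q)$ into the cross-entropy $-\mathcal{B}_\tau\hat Q(s,\cdot)^\top\log(\bpiold(\cdot\mid s)) = -\sum_a q_a\log o_a$, I would use the identity $-\sum_a q_a\log o_a = \entropy(q) + \FKL{\bpiold}{\mathcal{B}_\tau\hat Q}(s)$ together with the fact that entropy is nonnegative on a finite action set, so $-\entropy(p)\le 0 \le \FKL{\bpiold}{\mathcal{B}_\tau\hat Q}(s)$. Hence $-\entropy(p)+\entropy(q) \le \entropy(q) + \FKL{\bpiold}{\mathcal{B}_\tau\hat Q}(s) = -\sum_a q_a\log o_a$, giving the stated bound. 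The one genuinely load-bearing idea is the Hölder-plus-Pinsker estimate of $\sum_a(p_a-q_a)\hat Q(s,a)$, which is what produces the $\sqrt{2\,\FKL{\bpinew}{\mathcal{B}_\tau\hat Q}(s)}$ factor; the remainder is bookkeeping, and the mildly subtle point is recognizing that entropy nonnegativity (hence the finiteness of the action set) is precisely what lets the $-\entropy(p)$ term be absorbed into the $\bpiold$ cross-entropy.
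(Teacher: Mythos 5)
Your proof is correct, and its skeleton matches the paper's: the same decomposition isolating the linear term in $\hat Q$, the same H\"older-plus-Pinsker estimate producing the $\sqrt{2\,\FKL{\bpinew}{\mathcal{B}_\tau \hat Q}(s)}$ factor, and the same use of nonnegativity of discrete entropy. The one genuine difference is in the bookkeeping, and it turns out to matter: the paper begins from $0 \leq \FKL{\bpinew}{\mathcal{B}_\tau \hat Q}(s)$ and rewrites that quantity, so the cross-entropy $-(\mathcal{B}_\tau \hat Q)^\top \log \bpinew$ survives into its intermediate bound, and the paper must then invoke the hypothesis $\FKL{\bpinew}{\mathcal{B}_\tau \hat Q}(s) \leq \FKL{\bpiold}{\mathcal{B}_\tau \hat Q}(s)$ (equivalently, monotonicity of the cross-entropy) to replace $\log \bpinew$ by $\log \bpiold$. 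You instead write the \emph{exact} identity $\RKL{\bpinew}{\mathcal{B}_\tau \hat Q}(s) = -\entropy(\bpinew) + \entropy(\mathcal{B}_\tau \hat Q) - \tfrac{1}{\tau}\,(\bpinew - \mathcal{B}_\tau \hat Q)^\top \hat Q$, in which the $\FKL{\bpinew}{\mathcal{B}_\tau \hat Q}$ contribution has cancelled, so the only slack you need at the end is $\FKL{\bpiold}{\mathcal{B}_\tau \hat Q}(s) \geq 0$, i.e. $\entropy(\mathcal{B}_\tau \hat Q) \leq -(\mathcal{B}_\tau \hat Q)^\top \log \bpiold$. Consequently your argument never uses the lemma's hypothesis (except for dispensable finiteness remarks), which shows the stated inequality actually holds unconditionally for any pair $(\bpinew, \bpiold)$ --- a strictly stronger statement. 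This changes nothing downstream, since the propositions that apply the lemma assume FKL reduction anyway, but your route is tidier and reveals that the hypothesis is superfluous for this particular bound.
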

\begin{proof}
	We will suppress the argument $s$. Start with the following.
	\begin{align}
		\label{eq:intermediate_ineq}
		0 & \leq \FKL{\bpi_{new}}{\BQ} \nonumber \\
		& \leq \BQ^\top \log \left( \frac{\BQ}{\bpinew} \right) \nonumber \\
		&= \BQ^\top \left( \frac{\bv}{\tau} - \log(\bpinew) \right) - \log(Z) \nonumber \\
		&\quad \triangleright \text{Expanding the inner $\BQ$} \nonumber \\
		0 &\leq \BQ^\top \left( \bv - \tau\log(\bpinew) \right) - \tau \log(Z) \nonumber \\
		&\quad \triangleright \text{Multiplying both sides by $\tau$} \nonumber \\
		& = \bpinew^\top (\bv - \tau \log \bpinew) + \bv^\top(\BQ - \bpinew) \, + \nonumber \\ 
		&\quad \tau \log \bpinew^\top (\bpinew - \BQ) - \tau \log(Z) \nonumber \\
		&\quad \triangleright \text{Adding and subtracting $\bpinew^\top\bv$ and $\tau \bpinew^\top \log \bpinew$} \nonumber \\
		&= -\tau \RKL{\bpinew}{\BQ} + \bv^\top(\BQ-\bpinew) \, + \\
		&\quad \tau (\bpinew^\top \log(\bpinew) - \BQ^\top \log (\bpinew)) \nonumber \\
		&\quad \triangleright \text{Absorbing the partition function.} \nonumber
	\end{align}
	We analyze some of the summands from \Cref{eq:intermediate_ineq} in turn. By Hölder's inequality,
	\begin{align*}
		\bv^\top(\BQ-\bpinew) &\leq \norm{\bv}_\infty \sum_i |(\BQ_i - \pinew_i)|.
	\end{align*}
	By Pinsker's inequality \citep{pinsker1964information},
	$$\sum_i |(\BQ_i - \pinew_i)| \leq \sqrt{2 \FKL{\bpinew}{\BQ}}.$$
	Therefore,
	\begin{align*}
		\bv^\top(\BQ-\bpinew) \leq \norm{\bv}_\infty \sqrt{2 \FKL{\bpinew}{\BQ}}.
	\end{align*}
	The other summand from \Cref{eq:intermediate_ineq} can be written as
	\begin{align*}
		\tau (\bpinew^\top \log(\bpinew) - \BQ^\top \log (\bpinew)) \leq \tau (0 - \BQ^\top \log (\bpiold)),
	\end{align*}
	where we used the fact that the negative entropy of $\bpinew$ is less than or equal to zero and that, since the underlying assumption is that we have non-negative FKL reduction, we also have
	$-\BQ^\top \log (\bpinew) \leq -\BQ^\top \log (\bpiold)$ (by the FKL definition). 
	
	We substitute these upper bounds into \Cref{eq:intermediate_ineq}.
	\begin{align*}
		0 &\leq -\tau \RKL{\bpinew}{\BQ} + \bv^\top(\BQ-\bpinew) \, + \\
		&\quad \tau (\bpinew^\top \log(\bpinew) - \BQ^\top \log (\bpinew)) \\
		&\leq -\tau \RKL{\bpinew}{\BQ} + \norm{\bv}_\infty \sqrt{2 \FKL{\bpinew}{\BQ}} \, + \\
		&\quad \tau (- \BQ^\top \log (\bpiold)),  \\	
	\end{align*}
	which then implies
	\begin{align}
		\label{eq:rkl_ub}
		\RKL{\bpinew}{\BQ} \leq \frac{\norm{\bv}_\infty}{\tau} \sqrt{2 \FKL{\bpinew}{\BQ}} - BQ^\top \log(\bpiold).
	\end{align}
\end{proof}

We start by showing this result in bandits, to make it simpler to show across states. 
\begin{definition}[Entropy-Regularized Bandits]
	\label{def:bandits_setting}
    We denote $\bpi$ as a vector in $\R^{|\cA|}$ that satisfies $\bpi \geq \mathbf{0}$ and $\bpi^{\top} \mathbf{1} = 1$, with $\mathbf{1}$ and $\mathbf{0}$ being vectors containing respectively only entries equal to $1$ and $0$. Further, we consider a single state and denote the corresponding action-values as a vector $\mathbf{q} \in \R^{|\cA|}$. The objective is then defined as $\eta_\tau(\bpi) = \bpi^\top (\mathbf{q} - \tau \log(\bpi))$. Moreover, 
        $\BQ = \mathcal{B}_\tau \mathbf{q} =\frac{\exp\left(\frac{\bv}{\tau}\right)}{Z}$
    with $Z = \exp\left(  \frac{\bv}{\tau}\right)^{\top} \mathbf{1}$ being the normalizing constant. We also have $\FKL{\bpi}{\BQ} = \BQ^\top \log\left( \frac{\BQ}{\bpi} \right)$ and $\RKL{\bpi}{\BQ} = \bpi^\top \log\left( \frac{\bpi}{\BQ} \right)$.
\end{definition}

The maximal possible FKL reduction is obtained by moving $\pinew$ all the way to $\BQold$, to give $\Delta \FKL{}{} = \FKL{\piold}{\BQold}$; for this $\pinew$, we can guarantee RKL reduction. The question is if we can still obtain RKL reduction, even without stepping all the way to this maximal possible FKL reduction. We provide a condition on how much FKL reduction is sufficient to ensure that we obtain policy improvement, first in the bandit setting and then generalized to the MDP setting.
\begin{restatable}[Sufficient FKL Reduction in Bandits]{proposition}{fklredbset}
	\label{prop:fklredbset}
	For two policies $\bpiold, \bpinew \in \R^{|\cA|}$ in the bandit setting, if 
\begin{align}
	&\Delta \FKL{\bpiold}{\bpinew} \nonumber\\
	&\quad\quad\geq  \max\{0, \FKL{\bpiold}{\BQ} - \frac{1}{2} \left( \frac{\tau}{\norm{\mathbf{q}}_\infty} \left( \RKL{\bpiold}{\BQ} + \BQ^\top \log(\bpiold) \right) \right)^2\}\label{eq:bound_fkl_general}
\end{align}
and
\begin{align*}
    \RKL{\bpiold}{\BQ} + \BQ^\top \log(\bpiold) \geq 0,
\end{align*}
	then $\Delta \RKL{\bpiold}{\bpinew} \geq 0$ and $\eta_\tau(\bpinew) \geq \eta_\tau(\bpiold)$.
\end{restatable}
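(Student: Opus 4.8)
The plan is to reduce the proposition almost entirely to \Cref{lemma:intermediate_ineq}, together with a direct identity relating $\eta_\tau$ and the RKL in the bandit setting. The shape of condition \eqref{eq:bound_fkl_general} is exactly what one obtains by rearranging the conclusion of that lemma, so the real content is checking that the two hypotheses of the proposition are precisely what is needed to (a) apply the lemma and (b) make the rearrangement reversible.

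First I would observe that the hypothesis $\Delta \FKL{\bpiold}{\bpinew} \geq \max\{0, \cdots\} \geq 0$ forces $\FKL{\bpinew}{\BQ} \leq \FKL{\bpiold}{\BQ}$, which is exactly the precondition of \Cref{lemma:intermediate_ineq} (specialized to the single-state bandit, i.e.\ $\hat Q = \mathbf{q}$). Invoking the lemma gives
\[
	\RKL{\bpinew}{\BQ} \leq \frac{\norm{\mathbf{q}}_\infty}{\tau}\sqrt{2\,\FKL{\bpinew}{\BQ}} - \BQ^\top \log(\bpiold).
\]
It then suffices to show the right-hand side is at most $\RKL{\bpiold}{\BQ}$, since this yields $\RKL{\bpinew}{\BQ} \leq \RKL{\bpiold}{\BQ}$, i.e.\ $\Delta \RKL{\bpiold}{\bpinew} \geq 0$. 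Writing $C := \RKL{\bpiold}{\BQ} + \BQ^\top \log(\bpiold)$, the target inequality rearranges to $\sqrt{2\,\FKL{\bpinew}{\BQ}} \leq \frac{\tau}{\norm{\mathbf{q}}_\infty} C$. Here is where the second hypothesis $C \geq 0$ is essential: it lets me square both sides without reversing the inequality, giving $\FKL{\bpinew}{\BQ} \leq \tfrac{1}{2}\big(\tfrac{\tau}{\norm{\mathbf{q}}_\infty} C\big)^2$. Substituting $\FKL{\bpinew}{\BQ} = \FKL{\bpiold}{\BQ} - \Delta \FKL{\bpiold}{\bpinew}$ and rearranging recovers precisely \eqref{eq:bound_fkl_general}, so the assumed FKL reduction delivers exactly the bound required.

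For the second conclusion, $\eta_\tau(\bpinew) \geq \eta_\tau(\bpiold)$, I would use the bandit specialization of \Cref{prop:avg-reverse-kl}: expanding definitions gives $\bpi^\top \log \BQ = \tfrac{1}{\tau}\bpi^\top \mathbf{q} - \log Z$, hence $\eta_\tau(\bpi) = \tau \log Z - \tau\,\RKL{\bpi}{\BQ}$, and since $Z$ does not depend on $\bpi$, $\eta_\tau(\bpinew) - \eta_\tau(\bpiold) = \tau\,\Delta \RKL{\bpiold}{\bpinew} \geq 0$. The main obstacle is bookkeeping rather than insight: I must be careful that the chain from \eqref{eq:bound_fkl_general} to the squared FKL bound is genuinely reversible, which rests on the sign condition $C \geq 0$ (to square) and on $\Delta \FKL{\bpiold}{\bpinew} \geq 0$ (so the lemma applies). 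These are exactly the two stated assumptions, so once the squaring step is justified the argument closes with no further estimates.
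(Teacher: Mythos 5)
Your proposal is correct and follows essentially the same route as the paper's own proof: both hinge on \Cref{lemma:intermediate_ineq}, use the sign condition $\RKL{\bpiold}{\BQ} + \BQ^\top\log(\bpiold) \geq 0$ to justify the squaring step, and rearrange to show the hypothesized FKL reduction forces $\RKL{\bpinew}{\BQ} \leq \RKL{\bpiold}{\BQ}$. The only difference is presentational (you argue forward from the hypothesis while the paper phrases the steps as equivalences), and you additionally spell out the bandit identity $\eta_\tau(\bpi) = \tau\log Z - \tau\,\RKL{\bpi}{\BQ}$ that the paper leaves implicit in "which in turn implies improvement."
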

\noindent
It is straightforward to extend this result to MDPs when we have reduction in every state.
\begin{corollary}[All-state Sufficient FKL reduction]
	\label{prop:fklredall} Assume the action set is finite. 
	If the assumptions in \Cref{prop:fklredbset} are satisfied for all $s \in \cS$, then $Q^{\pinew}_\tau(s,a) \geq Q^{\piold}_\tau(s,a)$ for all states and actions and $\eta_\tau(\pinew) \geq \eta_\tau(\piold)$.
\end{corollary}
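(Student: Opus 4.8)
The plan is to reduce this corollary to two results already in hand: the bandit proposition \Cref{prop:fklredbset} and the per-state soft policy improvement lemma \Cref{lem:stronger-sac}. First I would fix an arbitrary state $s \in \cS$ and instantiate the bandit setting of \Cref{def:bandits_setting} with the action-value vector $\mathbf{q} = Q^\piold_\tau(s, \cdot)$ and the policy vectors $\bpiold = \piold(\cdot \mid s)$, $\bpinew = \pinew(\cdot \mid s)$. Under this identification the bandit Boltzmann distribution $\mathcal{B}_\tau \mathbf{q}$ coincides exactly with $\BQt^{\piold}(s, \cdot)$, and the bandit divergences $\FKL{\bpi}{\mathcal{B}_\tau \mathbf{q}}$, $\RKL{\bpi}{\mathcal{B}_\tau \mathbf{q}}$ collapse to the per-state divergences $\FKL{\pi}{\BQt^\piold}(s)$, $\RKL{\pi}{\BQt^\piold}(s)$. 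Since the hypotheses of \Cref{prop:fklredbset} are assumed to hold at every $s$, its conclusion $\Delta \RKL{\bpiold}{\bpinew} \geq 0$ holds at every $s$ as well, i.e. $\RKL{\pinew}{\BQt^\piold}(s) \leq \RKL{\piold}{\BQt^\piold}(s)$ for all $s \in \cS$.

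This per-state RKL reduction is precisely the hypothesis of \Cref{lem:stronger-sac}. Invoking that lemma directly yields $Q^\pinew_\tau(s, a) \geq Q^\piold_\tau(s, a)$ for all $(s, a)$, which is the first claimed conclusion.

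For the second conclusion I would observe that reduction in every state trivially implies reduction on average under any state weighting, in particular under $d^\pinew$, so that $\Ex_{d^\pinew}[\Delta \RKL{\piold}{\pinew}(S)] \geq 0$; \Cref{prop:avg-reverse-kl} then gives $\eta_\tau(\pinew) \geq \eta_\tau(\piold)$ immediately. If a self-contained derivation is preferred, one can instead establish soft value improvement by hand: cancelling the policy-independent log-partition term in the per-state RKL inequality and multiplying through by $-\tau$ gives $\Ex_\pinew[Q^\piold_\tau(s, A) - \tau \log \pinew(A \mid s)] \geq V^\piold_\tau(s)$, and replacing $Q^\piold_\tau$ by $Q^\pinew_\tau$ inside the expectation (valid by the $Q$ improvement just proved) recovers $V^\pinew_\tau(s) \geq V^\piold_\tau(s)$, whence $\eta_\tau(\pinew) \geq \eta_\tau(\piold)$ after taking the expectation over $\rho_0$.

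There is no genuine analytic obstacle here; the corollary is a chaining of two previously proved statements. The only point that requires care is the bookkeeping in the first step, namely confirming that the bandit Boltzmann distribution built from the soft action-value vector $Q^\piold_\tau(s, \cdot)$ is literally $\BQt^\piold(s, \cdot)$, so that the per-state translation of the bandit conclusion is the exact hypothesis needed by \Cref{lem:stronger-sac}. Because the bandit assumption is already phrased per state, with the per-state norm $\norm{\mathbf{q}}_\infty = \norm{Q^\piold_\tau(s, \cdot)}_\infty$, no uniform-over-states quantity has to be controlled and the lift from the bandit to the MDP is immediate.
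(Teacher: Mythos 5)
Your proof is correct and follows essentially the same route as the paper's: apply \Cref{prop:fklredbset} state-by-state to convert the assumed FKL reduction into per-state RKL reduction, then invoke \Cref{lem:stronger-sac} for the action-value improvement and the RKL improvement result (\Cref{prop:avg-reverse-kl}) for $\eta_\tau(\pinew) \geq \eta_\tau(\piold)$. The only difference is presentational: you spell out the bandit-to-MDP instantiation and the final value-improvement bookkeeping explicitly, whereas the paper compresses these into a one-line argument.
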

\begin{proof}
We know that reducing the RKL in all states (i.e., the RKL of $\pinew$ to $\BQold$ is smaller than the RKL of $\piold$ to $\BQold$) will lead to $Q^{\pinew}_\tau(s,a) \geq Q^{\piold}_\tau(s,a)$ for all $(s,a)$. From our RKL results, we also know that it implies $\eta_\tau(\pinew) \geq \eta_\tau(\piold)$. Therefore, FKL reduction following \Cref{eq:bound_fkl_general} in all states will also lead to these improvements, since, by the same argument as \Cref{prop:fklredbset}, it leads to RKL reduction in all states.
\end{proof}

\fklredbset*

\begin{proof}
	We start with the result of \Cref{lemma:intermediate_ineq}. If the RHS of \Cref{eq:rkl_ub} is less than or equal to $\RKL{\bpiold}{\BQ}$, we will have $\RKL{\bpinew}{\BQ} \leq \RKL{\bpiold}{\BQ}$, which in turn implies improvement. The assumption that the RHS of \Cref{eq:rkl_ub} is $\leq \RKL{\bpiold}{\BQ}$ can be written as
	\begin{gather*}
		\frac{\norm{\bv}_\infty}{\tau} \sqrt{2 \FKL{\bpinew}{\BQ}} - \BQ^\top \log(\bpiold) \leq \RKL{\bpiold}{\BQ}.
	\end{gather*}
	With some algebraic manipulation, we get that this assumption is equivalent to
	\begin{align*}
	     \frac{\tau}{\norm{\bv}_\infty} (\RKL{\bpiold}{\BQ} + \BQ^\top \log(\bpiold)) \geq   \sqrt{2 \FKL{\bpinew}{\BQ}}.
	\end{align*}
	Assuming that $\RKL{\bpiold}{\BQ} + \BQ^\top \log(\bpiold) \geq 0$, the above is equivalent to
	\begin{align*}
	    \left(\frac{\tau}{\norm{\bv}_\infty} (\RKL{\bpiold}{\BQ} + \BQ^\top \log(\bpiold))\right)^2 \geq   2 \FKL{\bpinew}{\BQ}.
	\end{align*}
	Dividing by 2, adding $\FKL{\bpiold}{\BQ}$, and rearranging yields that the assumption is equivalent to the following. 
	\begin{gather}
		\label{eq:bound1}
		\Delta \FKL{\bpiold}{\bpinew} \geq \FKL{\bpiold}{\BQ} - \frac{1}{2} \left( \frac{\tau}{\norm{\bv}_\infty} \left( \RKL{\bpiold}{\BQ} + \BQ^\top \log(\bpiold) \right) \right)^2.
	\end{gather}	
	The claim follows.
\end{proof}

\fklredavg*
\begin{proof}
The strategy is the same as the one in \Cref{prop:fklredbset}. We know from \Cref{prop:avg-reverse-kl} that if we have
\begin{gather*}
	\Ex_{d^{\pinew}}[\Delta \RKL{\piold}{\pinew}(S) ] \geq 0,
\end{gather*}
then $\eta_\tau(\pinew) \geq \eta_\tau(\piold)$. Applying expectations to both sides of the result of \Cref{lemma:intermediate_ineq}, we obtain 
\begin{align*}
	&\EdBigg{\frac{\norm{Q^{\piold}_\tau(S,\cdot)}_\infty}{\tau}\sqrt{2\FKL{\pinew}{\BQold}(S)} - \Ex_{\BQold}[\log(\piold(\cdot|S))]} \\  &\quad\quad\geq \Ed{\RKL{\pinew}{\BQold}(S)}.
\end{align*}
If we can show that the LHS is smaller than 
$$\Ed{\RKL{\piold}{\BQold}(S)},$$ 
then, by \Cref{prop:avg-reverse-kl}, we will be guaranteed improvement. The condition can be written as
\begin{align}
	\label{eq:av_red_start}
	&\Ed{\RKL{\piold}{\BQold}(S)}  \geq\\ &\EdBigg{\frac{\norm{Q^{\piold}_\tau(S,\cdot)}_\infty}{\tau}\sqrt{2\FKL{\pinew}{\BQold}(S)} \, - \nonumber  \Ex_{\BQold}[\log(\piold(\cdot|S))]}.
\end{align}
Our goal is to derive the assumption on $\FKL{\pinew}{\BQold}(S)$ that must be made for \Cref{eq:av_red_start} to hold. 

From rearranging to have the FKL term on one side, the following condition is equivalent to \Cref{eq:av_red_start}. 
\begin{align*}	
	&\frac{\tau}{\norm{Q^{\piold}_\tau}_\infty} \Bigg( \Ed{\RKL{\piold}{\BQold}(S) +  \Ex_{\BQold}[\log(\piold(\cdot|S))] } \Bigg) \\ 
	&\quad\quad\geq \Ed{\sqrt{2\FKL{\pinew}{\BQold}(S)}}.
\end{align*}
We now square both sides. The following is also equivalent to \Cref{eq:av_red_start} by assumption in \Cref{eq:rkl-cross-ent-positive}.
\begin{align*}	
	&\Bigg( \frac{\tau}{\norm{Q^{\piold}_\tau}_\infty} \Big( \Ed{\RKL{\piold}{\BQold}(S) + \Ex_{\BQold}[\log(\piold(\cdot|S))] } \Big) \Bigg)^2 \\ 
	&\quad\quad \geq \left(\Ed{\sqrt{2\FKL{\pinew}{\BQold}(S)}}\right)^2.
\end{align*}
Jensen's inequality applied to the RHS shows that the following \textit{implies} \Cref{eq:av_red_start}.
\begin{align*}	
	& \Bigg( \frac{\tau}{\norm{Q^{\piold}_\tau}_\infty} \Big( \Ed{\RKL{\piold}{\BQold}(S) +  \Ex_{\BQold}[\log(\piold(\cdot|S))] } \Big) \Bigg)^2 \\ 
    	&\quad\quad\geq \Ed{2\FKL{\pinew}{\BQold}(S)}.
\end{align*}
Some straightforward rearrangement yields the assumption in the statement of the Proposition. 
\begin{align*}	
	&  - \frac{1}{2}\Bigg( \frac{\tau}{\norm{Q^{\piold}_\tau}_\infty} \Big( \Ed{\RKL{\piold}{\BQold}(S) +   \Ex_{\BQold}[\log(\piold(\cdot|S))] } \Big) \Bigg)^2 \\
	&\quad\quad\quad+ \Ed{\FKL{\piold}{\BQold}(S)}\\
	&\quad\quad\leq \Ex_{d^{\pinew}}[\Delta \FKL{\piold}{\pinew}(S) ].
\end{align*}		
\end{proof}

\subsubsection{Proofs for FKL with Approximate Action-values}

We can also extend the above results to approximate action-values. We first state all the results upfront, and then provide their proofs. 

We first reformulate \Cref{prop:fklredbset} with action-value estimates.
\begin{restatable}[Sufficient Approximate FKL Reduction in Bandits]{proposition}{approxfklredbset}\label{prop:approx-fklredbset}
Let $\hat Q$ be an action-value estimate of $Q^\piold_\tau$, let $\epsilon := \hat Q - Q$ be the approximation error, and let $\bar \epsilon := \Ex_{\pinew}[\epsilon(A)] - \Ex_{\piold}[\epsilon(A)]$. As well, define
\begin{align*}
    \hat \Delta \FKL{\bpiold}{\bpinew} := \FKL{\piold}{\mathcal{B}_\tau \hat Q }  - \FKL{\pinew}{ \mathcal{B}_\tau \hat Q}.
\end{align*}

For two policies $\bpiold, \bpinew \in \R^{|\cA|}$ in the bandit setting, if 
\begin{align*}
	&\hat \Delta \FKL{\bpiold}{\bpinew} \\
	&\quad\quad\geq \max\left\{0, \FKL{\bpiold}{\mathcal{B}_\tau \hat Q} - \frac{1}{2} \left( \frac{\tau}{\norm{\mathbf{\hat Q}}_\infty} \left( \RKL{\bpiold}{\mathcal{B}_\tau \hat Q} + \mathcal{B}_\tau \hat Q^\top \log(\bpiold) \right) \right)^2\right\},
\end{align*}
\begin{align*}
    \RKL{\bpiold}{\mathcal{B}_\tau \hat Q} - \bar \epsilon + \mathcal{B}_\tau \hat Q^\top \log(\bpiold) \geq 0,
\end{align*}
and $\bar \epsilon \leq \hat \Delta \RKL{\piold}{\pinew}(s)$, then $\Delta \RKL{\bpiold}{\bpinew} \geq 0$ and $\eta_\tau(\bpinew) \geq \eta_\tau(\bpiold)$.
\end{restatable}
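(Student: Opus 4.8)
The plan is to mirror the proof of \Cref{prop:fklredbset}, substituting the estimated Boltzmann target $\mathcal{B}_\tau \hat Q$ for the exact one throughout and then accounting for the approximation error through the term $\bar\epsilon$. The key enabling fact is that \Cref{lemma:intermediate_ineq} is stated for an \emph{arbitrary} action-value vector, so it applies verbatim to $\hat Q$: as soon as the $\max\{0,\cdot\}$ in the hypothesis forces $\hat \Delta\FKL{\piold}{\pinew} \geq 0$ (equivalently $\FKL{\pinew}{\mathcal{B}_\tau\hat Q}\le\FKL{\piold}{\mathcal{B}_\tau\hat Q}$), the lemma supplies $\RKL{\pinew}{\mathcal{B}_\tau\hat Q}\le \tfrac{\norm{\hat Q}_\infty}{\tau}\sqrt{2\,\FKL{\pinew}{\mathcal{B}_\tau\hat Q}} - \mathcal{B}_\tau\hat Q^\top\log(\bpiold)$. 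This is exactly inequality \eqref{eq:rkl_ub} driving the exact-case argument, now phrased in terms of the estimate rather than the true action-values.

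What genuinely changes is the bridge from RKL reduction to improvement. In the exact case \Cref{prop:avg-reverse-kl} makes RKL reduction equivalent to improvement; with an inexact target this is no longer true. Instead I would first record the single-state specialization of \Cref{lemma:approximate-soft-performance-difference} (or compute it directly from $\eta_\tau(\bpi)=\bpi^\top(\mathbf{q}-\tau\log\bpi)$), namely $\eta_\tau(\pinew)-\eta_\tau(\piold) = \tau\,\hat \Delta\RKL{\piold}{\pinew} - \bar\epsilon$, with $\bar\epsilon = \bpinew^\top\epsilon - \bpiold^\top\epsilon$. Consequently $\eta_\tau(\pinew)\ge\eta_\tau(\piold)$ is equivalent to $\tau\,\hat \Delta\RKL{\piold}{\pinew}\ge\bar\epsilon$, a strictly stronger demand than bare estimated-RKL reduction $\hat \Delta\RKL{\piold}{\pinew}\ge 0$. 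So the goal of the algebra is no longer ``push $\RKL{\pinew}{\mathcal{B}_\tau\hat Q}$ below $\RKL{\piold}{\mathcal{B}_\tau\hat Q}$'' but ``push it below by the margin $\bar\epsilon/\tau$'', which is where the extra hypotheses must earn their keep.

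With this target fixed, the remaining steps are the same manipulation as in \Cref{prop:fklredbset}: substitute the RKL upper bound into the improvement criterion, isolate $\sqrt{2\,\FKL{\pinew}{\mathcal{B}_\tau\hat Q}}$, invoke the nonnegativity hypothesis $\RKL{\piold}{\mathcal{B}_\tau\hat Q}-\bar\epsilon+\mathcal{B}_\tau\hat Q^\top\log(\bpiold)\ge 0$ to legitimately square both sides, and rearrange into the stated sufficient lower bound on $\hat \Delta\FKL{\piold}{\pinew}$; the condition $\bar\epsilon\le\hat \Delta\RKL{\piold}{\pinew}$ then guarantees the estimated reduction outweighs the error. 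I expect the main obstacle to be precisely this bookkeeping of $\bar\epsilon$: the nonnegativity required for the squaring step must hold for the \emph{error-adjusted} quantity (which is exactly why the hypothesis carries $-\bar\epsilon$ rather than matching the exact-case condition), and one must take care that reducing the RKL toward the \emph{estimated} Boltzmann policy $\mathcal{B}_\tau\hat Q$ is converted into an increase of $\eta_\tau$ only through the error-corrected performance-difference identity, never directly.
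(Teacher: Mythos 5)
Your proposal follows essentially the same route as the paper's proof: apply \Cref{lemma:intermediate_ineq} (which holds for an arbitrary action-value vector) to $\hat Q$, convert the improvement goal into an error-adjusted estimated-RKL reduction via \Cref{cor:approximate-rkl-reduction} (the single-state form of \Cref{lemma:approximate-soft-performance-difference}), then square using the error-adjusted nonnegativity hypothesis and rearrange into the stated lower bound on $\hat\Delta\FKL{\bpiold}{\bpinew}$. If anything, your bookkeeping of the factor $\tau$ in the improvement criterion ($\tau\,\hat\Delta\RKL{\piold}{\pinew} \geq \bar\epsilon$) is more careful than the paper's, which writes the requirement as $\RKL{\pinew}{\mathcal{B}_\tau \hat Q} \leq \RKL{\piold}{\mathcal{B}_\tau \hat Q} - \bar\epsilon$ without that factor.
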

An analogous result to \Cref{prop:fklredall} applies, with the same proof. 
\begin{corollary}[All-state Sufficient Approximate FKL reduction]
	\label{prop:approx-fklredall} Assume the action set is finite. 
	If the assumptions in \Cref{prop:approx-fklredbset} are satisfied for all $s \in \cS$, then $Q^{\pinew}_\tau(s,a) \geq Q^{\piold}_\tau(s,a)$ for all states and actions and $\eta_\tau(\pinew) \geq \eta_\tau(\piold)$.
\end{corollary}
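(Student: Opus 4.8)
The plan is to transcribe the proof of \Cref{prop:fklredall} almost verbatim, swapping each exact-value ingredient for its approximate analogue. The workhorse is \Cref{prop:approx-fklredbset}, which is a single-state (bandit) claim; I would first note that its proof is purely local --- it rests on the per-state inequality of \Cref{lemma:intermediate_ineq} followed by algebra --- and so applies at every fixed $s \in \cS$ once $\bpiold, \bpinew, \bv, \hat Q, \bar\epsilon$ are read as $\piold(\cdot\mid s), \pinew(\cdot\mid s)$, the action-values at $s$, $\hat Q(s,\cdot)$, and $\bar\epsilon(s) := \Ex_{\pinew}[\epsilon(s,A)] - \Ex_{\piold}[\epsilon(s,A)]$. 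Hence, with the stated hypotheses holding in every state, I obtain at each $s$ both $\hat \Delta\RKL{\piold}{\pinew}(s) \ge 0$ (approximate RKL reduction toward $\mathcal{B}_\tau\hat Q$) and the error-control inequality $\bar\epsilon(s) \le \hat \Delta\RKL{\piold}{\pinew}(s)$.

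For the aggregate conclusion $\eta_\tau(\pinew) \ge \eta_\tau(\piold)$, I would average the per-state inequality $\bar\epsilon(s) \le \hat \Delta\RKL{\piold}{\pinew}(s)$ under $d^\pinew$. Because $\bar\epsilon = \Ex_{d^\pinew}[\bar\epsilon(S)]$ by the definition in \Cref{lemma:approximate-soft-performance-difference}, this produces $\Ex_{d^\pinew}[\hat \Delta\RKL{\piold}{\pinew}(S)] \ge \bar\epsilon$, and feeding this averaged reduction and the averaged error into the exact identity of \Cref{lemma:approximate-soft-performance-difference} (equivalently, into \Cref{cor:approximate-rkl-reduction}) yields $\eta_\tau(\pinew) \ge \eta_\tau(\piold)$, after the $\tfrac{\tau}{1-\gamma}$ bookkeeping.

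The pointwise conclusion $Q^\pinew_\tau(s,a) \ge Q^\piold_\tau(s,a)$ is where I would follow the soft-Bellman/telescoping argument of \Cref{lem:stronger-sac}, and this is the step I expect to be the main obstacle. The reduction secured above is relative to $\mathcal{B}_\tau\hat Q$, the Boltzmann distribution of the \emph{estimate}, whereas \Cref{lem:stronger-sac} is phrased relative to $\boltzmannQ^{\piold}_\tau$, the Boltzmann distribution of the true soft action values; a bare reduction toward $\mathcal{B}_\tau\hat Q$ does not by itself yield the soft-value comparison $\Ex_{\pinew}[Q^{\piold}_\tau(s,A) - \tau\log\pinew(A\mid s)] \ge V^{\piold}_\tau(s)$ that drives the telescoping in \Cref{lem:stronger-sac}. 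Bridging the two is precisely the purpose of the two error hypotheses of \Cref{prop:approx-fklredbset} --- the nonnegativity of $\RKL{\piold}{\mathcal{B}_\tau\hat Q}(s) - \bar\epsilon(s) + \mathcal{B}_\tau\hat Q(s,\cdot)^\top\log\piold(\cdot\mid s)$ and $\bar\epsilon(s) \le \hat \Delta\RKL{\piold}{\pinew}(s)$ --- which together absorb $\epsilon(s,\cdot)$ so that the per-state reduction still forces that comparison at each state. I would verify this error accounting carefully, since it is the one genuinely new element; all remaining steps are a literal copy of the proof of \Cref{prop:fklredall}.
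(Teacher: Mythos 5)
Your proposal is correct and follows the paper's route: the paper's own proof is the single remark that the argument of \Cref{prop:fklredall} carries over verbatim, i.e., the per-state hypotheses give (approximate) RKL reduction toward $\mathcal{B}_\tau \hat{Q}$ in every state, which then yields the pointwise $Q$-comparison and the $\eta_\tau$-improvement. The bridging step you flag --- that the error hypotheses convert reduction toward $\mathcal{B}_\tau \hat{Q}$ into the per-state inequality $\Ex_{\pinew}[Q^{\piold}_\tau(s,A)-\tau\log\pinew(A\mid s)]\ge V^{\piold}_\tau(s)$ needed for the telescoping of \Cref{lem:stronger-sac} --- is precisely the bookkeeping the paper leaves implicit, so your write-up is if anything more careful than the original.
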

Finally, we note an analogous result to \Cref{prop:fklredavg}. 
\begin{restatable}[Approximate Average FKL Reduction]{proposition}{approxfklredavg}\label{prop:approx-fklredavg}
Let $\hat Q$ be an action-value estimate of $Q^\piold_\tau$, let $\epsilon := \hat Q - Q$ be the approximation error, and let $\bar \epsilon := \Ex_{d^\pinew}[\Ex_{\pinew}[\epsilon(S, A)] - \Ex_{\piold}[\epsilon(S, A)]]$. 

For two policies $\bpiold, \bpinew \in \R^{|\cA|}$ in the bandit setting, if 
\begin{align*}
	&\hat \Ex_{d^\pinew}[\Delta \FKL{\bpiold}{\bpinew}] \\
	&\quad\quad\geq \Ex_{d^\pinew} [\FKL{\bpiold}{\mathcal{B}_\tau \hat Q}] - \frac{1}{2}\left( \frac{\tau}{\norm{\mathbf{\hat Q}}_\infty} \left(  \Ex_{d^\pinew}[\RKL{\bpiold}{\mathcal{B}_\tau \hat Q} + \mathcal{B}_\tau \hat Q^\top \log(\bpiold)] \right) \right)^2,
\end{align*}
\begin{align*}
    \Ex_{d^\pinew}\left[\RKL{\bpiold}{\mathcal{B}_\tau \hat Q}+ \mathcal{B}_\tau \hat Q^\top \log(\bpiold)\right] \geq \bar \epsilon,
\end{align*}
and $\bar \epsilon \leq \Ex_{d^\pinew}[\hat \Delta \RKL{\piold}{\pinew}(s)]$, then $\eta_\tau(\bpinew) \geq \eta_\tau(\bpiold)$.
\end{restatable}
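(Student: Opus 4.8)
The plan is to mirror the proof of the exact-value analogue, \Cref{prop:fklredavg}, replacing the exact RKL machinery by its approximate-value counterpart. The crucial observation is that \Cref{lemma:intermediate_ineq} is already stated for an \emph{arbitrary} action-value vector, so it applies verbatim to the estimate $\hat Q$: at every state where the non-negative FKL-reduction hypothesis guarantees $\FKL{\bpinew}{\mathcal{B}_\tau \hat Q} \le \FKL{\bpiold}{\mathcal{B}_\tau \hat Q}$, it yields
\[
\RKL{\bpinew}{\mathcal{B}_\tau \hat Q}(S) \le \frac{\norm{\mathbf{\hat Q}}_\infty}{\tau}\sqrt{2\,\FKL{\bpinew}{\mathcal{B}_\tau \hat Q}(S)} - \mathcal{B}_\tau \hat Q(S,\cdot)^\top \log(\bpiold(\cdot\mid S)).
\]
So the only genuinely new element relative to \Cref{prop:fklredavg} is that the target improvement criterion is supplied by \Cref{cor:approximate-rkl-reduction} rather than by the exact reduction result.

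First I would invoke \Cref{cor:approximate-rkl-reduction}: $\eta_\tau(\pinew)\ge\eta_\tau(\piold)$ holds precisely when $\frac{\tau}{1-\gamma}\Ex_{d^\pinew}[\hat\Delta\RKL{\piold}{\pinew}(S)]\ge\bar\epsilon$, where $\hat\Delta\RKL{\piold}{\pinew}=\RKL{\piold}{\mathcal{B}_\tau\hat Q}-\RKL{\pinew}{\mathcal{B}_\tau\hat Q}$. It therefore suffices to drive down the \emph{estimated} average RKL of $\pinew$, and then to check that the third hypothesis $\bar\epsilon\le\Ex_{d^\pinew}[\hat\Delta\RKL{\piold}{\pinew}(S)]$ (tracking the constant $\tfrac{\tau}{1-\gamma}$ carefully, as in \Cref{lemma:approximate-soft-performance-difference}) converts this estimated reduction into genuine improvement. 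Next I would take expectations of the displayed per-state bound over $d^\pinew$ to upper-bound $\Ex_{d^\pinew}[\RKL{\bpinew}{\mathcal{B}_\tau\hat Q}(S)]$, and impose that this upper bound not exceed $\Ex_{d^\pinew}[\RKL{\bpiold}{\mathcal{B}_\tau\hat Q}(S)]$, which forces $\Ex_{d^\pinew}[\hat\Delta\RKL{\piold}{\pinew}(S)]\ge0$. Isolating the square-root FKL term, using the positivity hypothesis $\Ex_{d^\pinew}[\RKL{\bpiold}{\mathcal{B}_\tau\hat Q}+\mathcal{B}_\tau\hat Q^\top\log(\bpiold)]\ge\bar\epsilon$ so that squaring preserves the inequality, and finally applying Jensen's inequality in the form $(\Ex[\sqrt{X}])^2\le\Ex[X]$ to discharge the expectation sitting inside the square root, I would land exactly on the sufficient-reduction hypothesis on $\Ex_{d^\pinew}[\Delta\FKL{\bpiold}{\bpinew}]$ stated in the proposition. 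Combining the resulting estimated-RKL reduction with the third hypothesis and \Cref{cor:approximate-rkl-reduction} then closes the argument.

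The main obstacle, relative to the exact case, is the bookkeeping of the approximation error $\bar\epsilon$: average reduction of the estimated RKL is no longer by itself sufficient for improvement, so $\bar\epsilon$ must be threaded consistently through two places at once — the positivity condition that legitimizes squaring both sides (now $\ge\bar\epsilon$ rather than $\ge0$), and the final invocation of the approximate performance-difference identity, where the $\tfrac{\tau}{1-\gamma}$ prefactor must be respected. A secondary, expected source of looseness is the Jensen step, which is applied in the direction that makes the derived reduction condition only sufficient and not necessary; this is precisely why the bound, like its exact counterpart in \Cref{prop:fklredavg}, is anticipated to be conservative.
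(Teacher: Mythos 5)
Your proposal is correct and follows essentially the same route as the paper's own proof: apply \Cref{lemma:intermediate_ineq} with the estimate $\hat Q$ per state, take expectations under $d^{\pinew}$, isolate the square-root FKL term, square using the stated positivity condition, apply Jensen's inequality in the direction $(\Ex[\sqrt{X}])^2 \le \Ex[X]$, and conclude via \Cref{cor:approximate-rkl-reduction}. The only difference is bookkeeping of $\bar\epsilon$: the paper keeps $-\bar\epsilon$ inside the squared term so that the derived FKL condition directly forces $\Ex_{d^{\pinew}}[\hat\Delta\mathrm{RKL}] \geq \bar\epsilon$, whereas you target $\Ex_{d^{\pinew}}[\hat\Delta\mathrm{RKL}] \geq 0$ and let the third hypothesis absorb $\bar\epsilon$ --- a reading that is, if anything, more faithful to the proposition as literally stated.
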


\approxfklredbset*
\begin{proof}
We will again use \Cref{lemma:intermediate_ineq}. To obtain improvement by \Cref{cor:approximate-rkl-reduction}, we need that $\RKL{\pinew}{\mathcal{B}_\tau \hat Q} \leq \RKL{\piold}{\mathcal{B}_\tau \hat Q} - \bar \epsilon$. Using \Cref{lemma:intermediate_ineq}, we will require that 
\begin{align}\label{eq:initial-approx-reduction}
    \frac{\norm{\hat Q}_\infty}{\tau} \sqrt{2 \FKL{\bpinew}{\mathcal{B}_\tau \hat Q}} - \mathcal{B}_\tau \hat Q^\top \log(\bpiold) \leq \RKL{\bpiold}{\mathcal{B}_\tau \hat Q} - \bar \epsilon.
\end{align}
With some algebraic manipulation and assuming that $\RKL{\bpiold}{\mathcal{B}_\tau \hat Q} - \bar \epsilon + \mathcal{B}_\tau \hat Q^\top \log(\bpiold) \geq 0$, the above is equivalent to
\begin{align*}
    \left(\frac{\tau}{\norm{\hat Q}_\infty} (\RKL{\bpiold}{\mathcal{B}_\tau \hat Q} - \bar \epsilon + \mathcal{B}_\tau \hat Q^\top \log(\bpiold))\right)^2 \geq   2 \FKL{\bpinew}{\mathcal{B}_\tau \hat Q}.
\end{align*}
Dividing by 2, adding $\FKL{\bpiold}{\mathcal{B}_\tau \hat Q}$, and rearranging yields that the assumption is equivalent to the following. 
\begin{gather}
	\label{eq:approx-bound1}
	\hat \Delta \FKL{\bpiold}{\bpinew} \geq \FKL{\bpiold}{\mathcal{B}_\tau \hat Q} - \frac{1}{2} \left( \frac{\tau}{\norm{\hat Q}_\infty} \left( \RKL{\bpiold}{\mathcal{B}_\tau \hat Q} + \mathcal{B}_\tau \hat Q^\top \log(\bpiold) \right) \right)^2.
\end{gather}	
\end{proof}

\approxfklredavg*
\begin{proof}
The proof closely follows that of \Cref{prop:fklredavg}. 
Applying expectations to both sides of \Cref{eq:initial-approx-reduction}, we obtain 
\begin{align}\label{eq:approx-avg-assumption}
	&\Ex_{d^\pinew}\left[\frac{\norm{\hat Q}_\infty}{\tau} \sqrt{2 \FKL{\bpinew}{\mathcal{B}_\tau \hat Q}(S)} - \mathcal{B}_\tau \hat Q(\cdot \mid S)^\top \log(\bpiold(\cdot \mid S))\right]\\
	&\quad\quad\leq \Ex_{d^\pinew}[\RKL{\bpiold}{\mathcal{B}_\tau \hat Q}(s)] - \bar \epsilon.\nonumber
\end{align}
As per the discussion in \Cref{prop:approx-fklredbset}, satisfying this assumption will result in improvement. We hence derive an assumption on the FKL that implies \Cref{eq:approx-avg-assumption}.

From rearranging to have the FKL term on one side, the following condition is equivalent to \Cref{eq:approx-avg-assumption}. 
\begin{align*}	
	&\frac{\tau}{\norm{\hat Q}_\infty} \Bigg( \Ed{\RKL{\piold}{\mathcal{B}_\tau \hat Q}(S) - \bar \epsilon +  \Ex_{\mathcal{B}_\tau \hat Q}[\log(\piold(\cdot|S))] } \Bigg) \\ 
	&\quad\quad\geq \Ed{\sqrt{2\FKL{\pinew}{\mathcal{B}_\tau \hat Q}(S)}}.
\end{align*}
We now square both sides. The following is also equivalent to \Cref{eq:approx-avg-assumption} by assumption that $\Ex_{d^\pinew}\left[\RKL{\bpiold}{\mathcal{B}_\tau \hat Q}+ \mathcal{B}_\tau \hat Q^\top \log(\bpiold)\right] \geq \bar \epsilon$.
\begin{align*}	
	&\Bigg( \frac{\tau}{\norm{\hat Q}_\infty} \Big( \Ed{\RKL{\piold}{\mathcal{B}_\tau \hat Q}(S) - \bar \epsilon + \Ex_{\BQold}[\log(\piold(\cdot|S))] } \Big) \Bigg)^2 \\ 
	&\quad\quad \geq \left(\Ed{\sqrt{2\FKL{\pinew}{\mathcal{B}_\tau \hat Q}(S)}}\right)^2.
\end{align*}
Jensen's inequality applied to the RHS shows that the following \textit{implies} \Cref{eq:approx-avg-assumption}.
\begin{align*}	
	&\Bigg( \frac{\tau}{\norm{\hat Q}_\infty} \Big( \Ed{\RKL{\piold}{\mathcal{B}_\tau \hat Q}(S) - \bar \epsilon + \Ex_{\BQold}[\log(\piold(\cdot|S))] } \Big) \Bigg)^2 \\ 
	&\quad\quad \geq\Ed{2\FKL{\pinew}{\mathcal{B}_\tau \hat Q}(S)}.
\end{align*}
Some straightforward rearrangement yields the assumption in the statement of the Proposition. 
\begin{align*}	
	&  - \frac{1}{2}\Bigg( \frac{\tau}{\norm{\hat Q}_\infty} \Big( \Ed{\RKL{\piold}{\mathcal{B}_\tau \hat Q}(S) - \bar \epsilon +   \Ex_{\mathcal{B}_\tau \hat Q}[\log(\piold(\cdot|S))] } \Big) \Bigg)^2 \\
	&\quad\quad\quad+ \Ed{\FKL{\piold}{\mathcal{B}_\tau \hat Q}(S)}\\
	&\quad\quad\leq \Ex_{d^{\pinew}}[\Delta \FKL{\piold}{\pinew}(S) ].
\end{align*}		
\end{proof}

We now turn to proving the above results.
\approxfklredbset*
\begin{proof}
We will again use \Cref{lemma:intermediate_ineq}. To obtain improvement by \Cref{cor:approximate-rkl-reduction}, we need that $\RKL{\pinew}{\mathcal{B}_\tau \hat Q} \leq \RKL{\piold}{ \mathcal{B}_\tau \hat Q} - \bar \epsilon$. Using \Cref{lemma:intermediate_ineq}, we will require that 
\begin{align}
    \frac{\norm{\hat Q}_\infty}{\tau} \sqrt{2 \FKL{\bpinew}{\mathcal{B}_\tau \hat Q}} - \mathcal{B}_\tau \hat Q^\top \log(\bpiold) \leq \RKL{\bpiold}{\mathcal{B}_\tau \hat Q} - \bar \epsilon.
\end{align}
With some algebraic manipulation and assuming that $\RKL{\bpiold}{\mathcal{B}_\tau \hat Q} - \bar \epsilon + \mathcal{B}_\tau \hat Q^\top \log(\bpiold) \geq 0$, the above is equivalent to
\begin{align*}
    \left(\frac{\tau}{\norm{\hat Q}_\infty} (\RKL{\bpiold}{\mathcal{B}_\tau \hat Q} - \bar \epsilon + \mathcal{B}_\tau \hat Q^\top \log(\bpiold))\right)^2 \geq   2 \FKL{\bpinew}{\mathcal{B}_\tau \hat Q}.
\end{align*}
Dividing by 2, adding $\FKL{\bpiold}{\mathcal{B}_\tau \hat Q}$, and rearranging yields that the assumption is equivalent to the following. 
\begin{gather}
	\hat \Delta \FKL{\bpiold}{\bpinew} \geq \FKL{\bpiold}{\mathcal{B}_\tau \hat Q} - \frac{1}{2} \left( \frac{\tau}{\norm{\hat Q}_\infty} \left( \RKL{\bpiold}{\mathcal{B}_\tau \hat Q} + \mathcal{B}_\tau \hat Q^\top \log(\bpiold) \right) \right)^2.
\end{gather}	
\end{proof}

\approxfklredavg*
\begin{proof}
The proof closely follows that of \Cref{prop:fklredavg}. 
Applying expectations to both sides of \Cref{eq:initial-approx-reduction}, we obtain 
\begin{align}\label{eq:approx-avg-assumption}
	&\Ex_{d^\pinew}\left[\frac{\norm{\hat Q}_\infty}{\tau} \sqrt{2 \FKL{\bpinew}{\mathcal{B}_\tau \hat Q}(S)} - \mathcal{B}_\tau \hat Q(\cdot \mid S)^\top \log(\bpiold(\cdot \mid S))\right]\\
	&\quad\quad\leq \Ex_{d^\pinew}[\RKL{\bpiold}{\mathcal{B}_\tau \hat Q}(S)] - \bar \epsilon.\nonumber
\end{align}
As per the discussion in \Cref{prop:approx-fklredbset}, satisfying this assumption will result in improvement. We hence derive an assumption on the FKL that implies \Cref{eq:approx-avg-assumption}.

From rearranging to have the FKL term on one side, the following condition is equivalent to \Cref{eq:approx-avg-assumption}. 
\begin{align*}	
	&\frac{\tau}{\norm{\hat Q}_\infty} \Bigg( \Ed{\RKL{\piold}{\mathcal{B}_\tau \hat Q}(S) - \bar \epsilon +  \Ex_{\mathcal{B}_\tau \hat Q}[\log(\piold(\cdot|S))] } \Bigg) \\ 
	&\quad\quad\geq \Ed{\sqrt{2\FKL{\pinew}{\mathcal{B}_\tau \hat Q}(S)}}.
\end{align*}
We now square both sides. The following is also equivalent to \Cref{eq:approx-avg-assumption} by assumption that $\Ex_{d^\pinew}\left[\RKL{\bpiold}{\mathcal{B}_\tau \hat Q}+ \mathcal{B}_\tau \hat Q^\top \log(\bpiold)\right] \geq \bar \epsilon$.
\begin{align*}	
	&\Bigg( \frac{\tau}{\norm{\hat Q}_\infty} \Big( \Ed{\RKL{\piold}{\mathcal{B}_\tau \hat Q}(S) - \bar \epsilon + \Ex_{\BQold}[\log(\piold(\cdot|S))] } \Big) \Bigg)^2 \\ 
	&\quad\quad \geq \left(\Ed{\sqrt{2\FKL{\pinew}{\mathcal{B}_\tau \hat Q}(S)}}\right)^2.
\end{align*}
Jensen's inequality applied to the RHS shows that the following \textit{implies} \Cref{eq:approx-avg-assumption}.
\begin{align*}	
	&\Bigg( \frac{\tau}{\norm{\hat Q}_\infty} \Big( \Ed{\RKL{\piold}{\mathcal{B}_\tau \hat Q}(S) - \bar \epsilon + \Ex_{\BQold}[\log(\piold(\cdot|S))] } \Big) \Bigg)^2 \\ 
	&\quad\quad \geq\Ed{2\FKL{\pinew}{\mathcal{B}_\tau \hat Q}(S)}.
\end{align*}
Some straightforward rearrangement yields the assumption in the statement of the Proposition. 
\begin{align*}	
	&  - \frac{1}{2}\Bigg( \frac{\tau}{\norm{\hat Q}_\infty} \Big( \Ed{\RKL{\piold}{\mathcal{B}_\tau \hat Q}(S) - \bar \epsilon +   \Ex_{\mathcal{B}_\tau \hat Q}[\log(\piold(\cdot|S))] } \Big) \Bigg)^2 \\
	&\quad\quad\quad+ \Ed{\FKL{\piold}{\mathcal{B}_\tau \hat Q}(S)}\\
	&\quad\quad\leq \Ex_{d^{\pinew}}[\Delta \FKL{\piold}{\pinew}(S) ].
\end{align*}		
\end{proof}
\section{Complete Results for Performance in Benchmark Environments} \label{app:main_benchmark}

In this section, we compare the KL methods on benchmark continuous and discrete-action environments, using non-linear function approximation. Here, we wish to understand (1) if our observations from the microworld experiments apply to more complicated environments, (2) if there are any new differences as a result of function approximation or increased environment complexity and (3) if any of the KL divergences is more robust to hyperparameter choices than the other.

\subsection{Implementation Details}

The agents use the API Algorithm with KL Greedification, in Algorithm \ref{alg:kl-agent}. For the discrete action environments, we use the All-Actions updates, and for the continuous action environments we use the Sampled-Actions update, for both RKL and FKL, with 128 sampled actions. When evaluating the integral of the gradient for the RKL, we tested using the log-likelihood trick as well as the reparametrization trick. Since the last outperformed the first, we report results using reparametrization. All agents use experience replay with a buffer size of $10^6$ and use batch sizes of 32. 

Hyperparameter sweeps are performed separately for each domain. We use RMSprop for both the actor and critic. In the continuous action-setting, we sweep over the actor learning rates $\{10^{-5}, 10^{-4}, 10^{-3}, 10^{-2}\}$ and critic learning rates $\{10^{-5}, 10^{-4}, 10^{-3}, 10^{-2}, 10^{-1}\}$. In the discrete-action setting we have a shared learning rate because of a shared architecture and the sweep is done over the learning rates $\{10^{-5}, 10^{-4}, 10^{-3}, 10^{-2}, 10^{-1}\}$. We sweep temperatures in $\{10^{-3}, 5  \times 10^{-3}, 10^{-2}, 5  \times 10^{-2}, 10^{-1}, 5  \times 10^{-1}, 1\}$ for the soft action-value methods and additionally include runs with the hard action-value methods. The temperature in $\boltzmannQ$ and the temperature in the soft action-value function are set to be the same value. For example, if $\tau = 0.01$, then we learn a soft action-value function with $\tau = 0.01$ and use a KL target distribution proportional to $\exp(Q(s, a) \tau^{-1})$.

On our continuous-action domains, all policy and value function networks are implemented as two-layer neural networks of size 128, with ReLU activations. On our discrete-action domains, we employ the following architectures. In the OpenAI Gym environments, the architecture is a two-layer neural network of size 128 with ReLU activations, with the policy and value functions as separate heads off of the main two-layer body.
In MinAtar, the architecture is a convolutional network into one fully-connected layer for each of the policy, action value function, and state value function. The convolutional layer has 16 3x3 convolutions with stride 1, the same as in \citet{young2019minatar}. The size of the fully-connected layer is 128, with ReLU activations used between layers. 

\subsection{Performance}

For continuous actions, we experiment on Pendulum, Reacher, Swimmer and HalfCheetah \citep{todorov2012mujoco}. For discrete actions, we experiment on  OpenAI Gym environments \citep{brockman2016openai} and MinAtar environments \citep{young2019minatar}. In this section, we plot only a summary of the performance, detailed plots showing how it varies throughout training can be found in \Cref{app:addit_perf_results}. Temperatures $[1.0, 0.5, 0.1]$ were grouped together and labeled as ``High RKL/FK'', whereas $[0.05, 0.01, 0.005, 0.001, 0.0]$\footnote{Zero was excluded for FKL with continuous actions, see \Cref{app:addit_perf_results}} were grouped together and named ``Low RKL/FKL''. For each temperature, 30 seeds were run per hyperparameter setting and the best performing 20\% settings were selected, which were then grouped together based on the temperature as ``High/Low''. \Cref{fig:sum_deep_cont,fig:sum_deep_discr} report the average of the last half of the area under the curve for each group, as well as standard errors between all runs in the group. Returns are normalized between $0$ and $1$, with $0$ corresponding to the lower limit of the returns from the curves in \Cref{app:addit_perf_results} and $1$ to the highest. There was no striking pattern regarding which temperature is best overall, the choice seems to be highly environment dependent. Furthermore, FKL and RKL seem to perform comparably overall, with no clear dominance of one over the other, although the FKL performed slightly better in the few cases they were different.

\begin{figure}[tb!]
	\centering
	\begin{tabular}{c}
	\includegraphics[width=0.25\columnwidth]{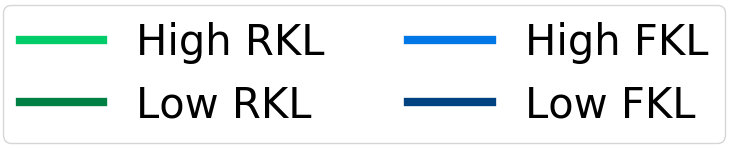}
	\end{tabular}
		\begin{tabular}{c c c c}
			\begin{subfigure}[b]{0.2\linewidth}
			\includegraphics[width=0.9\columnwidth]{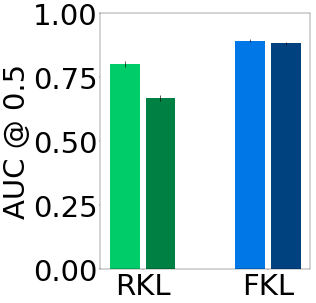} 
			\caption{Pendulum}\label{sub:sum_Pendulum}
			\end{subfigure}
			&    
			\begin{subfigure}[b]{0.2\linewidth}
			\includegraphics[width=0.9\columnwidth]{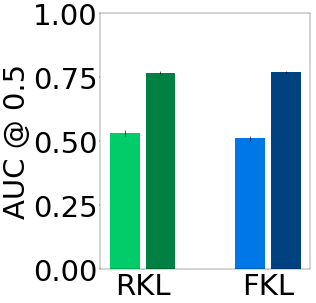} 
			\caption{Reacher}\label{sub:sum_Reacher}
			\end{subfigure}
			&
			\begin{subfigure}[b]{0.2\linewidth}
			\includegraphics[width=0.9\columnwidth]{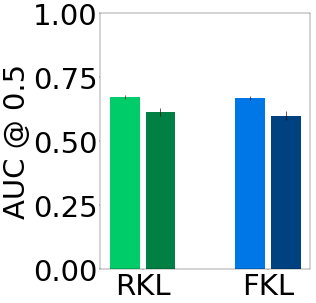} 
			\caption{Swimmer}\label{sub:sum_Swimmer}
			\end{subfigure}
			&
			\begin{subfigure}[b]{0.2\linewidth}
			\includegraphics[width=0.9\columnwidth]{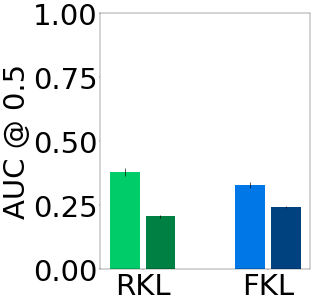} 
			\caption{HalfCheetah}\label{sub:sum_HalfCheetah}
			\end{subfigure}
		\end{tabular} 
	
	\caption{\textbf{Average return on the continuous-action environments}. The reported performance is the average return over the last half of learning, normalized between 0 and 1 and averaged over 30 runs with standard errors shown.}\label{fig:sum_deep_cont}
\end{figure}

\begin{figure}[tb!]
	\centering
	\begin{tabular}{c}
		\includegraphics[width=0.25\columnwidth]{figs/deep/summary/general/legend.png}
	\end{tabular}
\\
	\begin{tabular}{c c c c}
		\begin{subfigure}[b]{0.2\linewidth}
			\includegraphics[width=0.9\columnwidth]{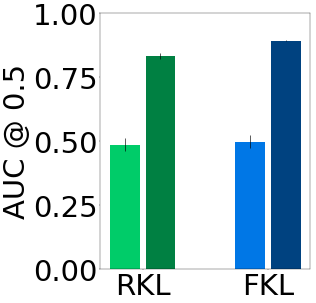} 
			\caption{Acrobot}\label{sub:sum_Acrobot}
		\end{subfigure}
		&    
		\begin{subfigure}[b]{0.2\linewidth}
			\includegraphics[width=0.9\columnwidth]{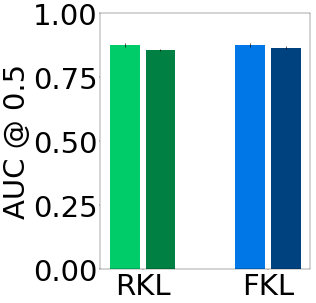} 
			\caption{CartPole}\label{sub:sum_CartPole}
		\end{subfigure}
		&
		\begin{subfigure}[b]{0.2\linewidth}
			\includegraphics[width=0.9\columnwidth]{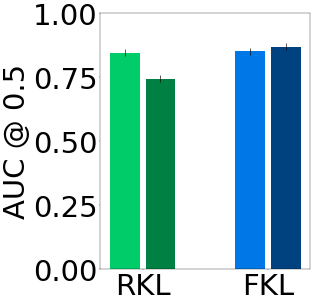} 
			\caption{LunarLander}\label{sub:sum_LunarLander}
		\end{subfigure}
		&
	\begin{subfigure}[b]{0.2\linewidth}
		\includegraphics[width=0.9\columnwidth]{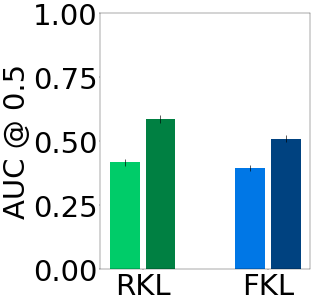} 
		\caption{Asterix}\label{sub:sum_asterix}
	\end{subfigure}		
	\end{tabular} 
\\
	\begin{tabular}{c c c c}
	\begin{subfigure}[b]{0.2\linewidth}
		\includegraphics[width=0.9\columnwidth]{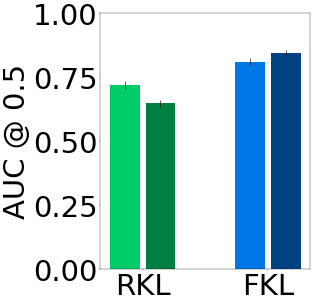} 
		\caption{Breakout}\label{sub:sum_breakout}
	\end{subfigure}
	&
	\begin{subfigure}[b]{0.2\linewidth}
		\includegraphics[width=0.9\columnwidth]{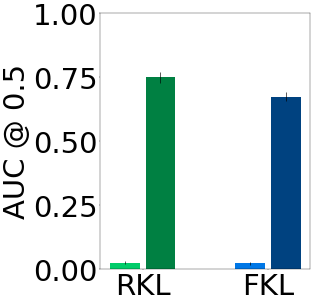} 
		\caption{Freeway}\label{sub:sum_freeway}
	\end{subfigure}
    &	
	\begin{subfigure}[b]{0.2\linewidth}
		\includegraphics[width=0.9\columnwidth]{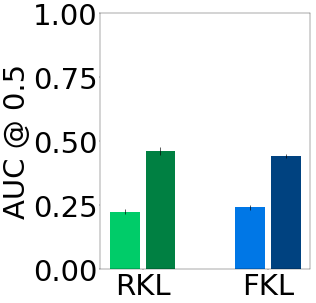} 
		\caption{Seaquest}\label{sub:sum_seaquest}
	\end{subfigure}
	&    
	\begin{subfigure}[b]{0.2\linewidth}
		\includegraphics[width=0.9\columnwidth]{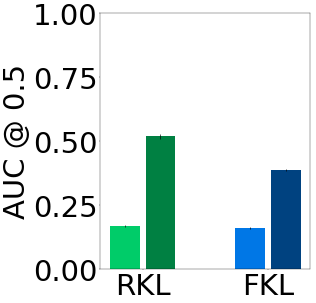} 
		\caption{Space invaders}\label{sub:sum_space_invaders}
	\end{subfigure}
\end{tabular} 
	\caption{\textbf{Average return on the discrete-action environments}. The reported performance is the average return over the last half of learning, normalized between 0 and 1 and averaged over 30 runs with standard errors shown.}\label{fig:sum_deep_discr}
\end{figure}

\begin{figure}[htb!]
  \centering
    \begin{tabular}{c c}
    \includegraphics[width=0.25\columnwidth]{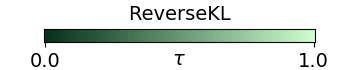} 
    &
    \includegraphics[width=0.25\columnwidth]{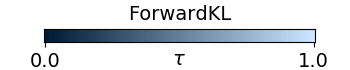}    
    \end{tabular} 

    \begin{subfigure}[b]{1.0\linewidth}
    \centering
    \begin{tabular}{c c c c}
    \includegraphics[width=0.22\columnwidth]{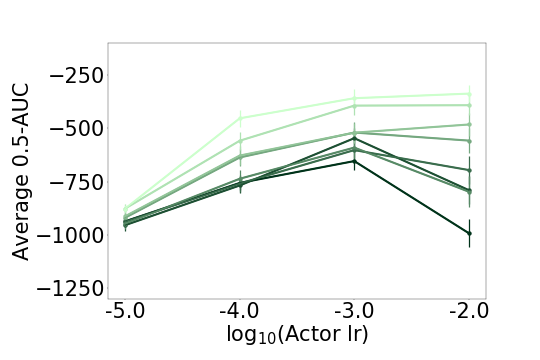} 
    &    
    \includegraphics[width=0.22\columnwidth]{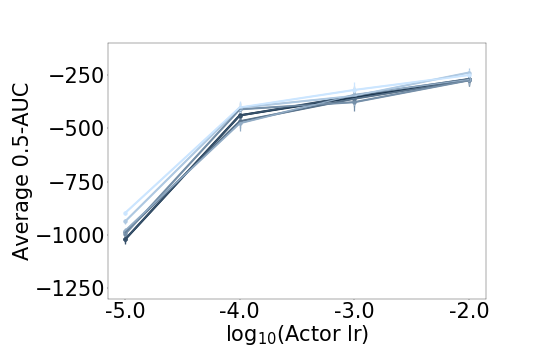} 
    &    
    \includegraphics[width=0.22\columnwidth]{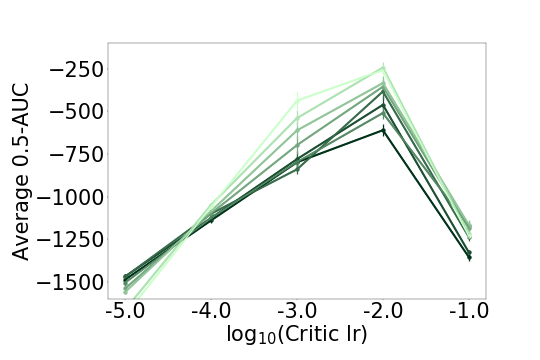} 
    &    
    \includegraphics[width=0.22\columnwidth]{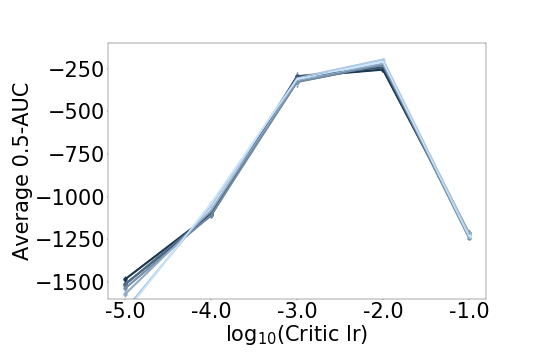} 
    \end{tabular} 
    \caption{Pendulum}\label{fig:Pendulum_sensitivity}        
  \end{subfigure}

    \begin{subfigure}[b]{1.0\linewidth}
    \centering
    \begin{tabular}{c c c c}
    \includegraphics[width=0.22\columnwidth]{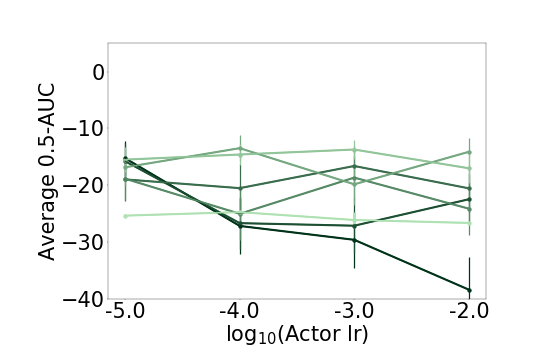} 
    &    
    \includegraphics[width=0.22\columnwidth]{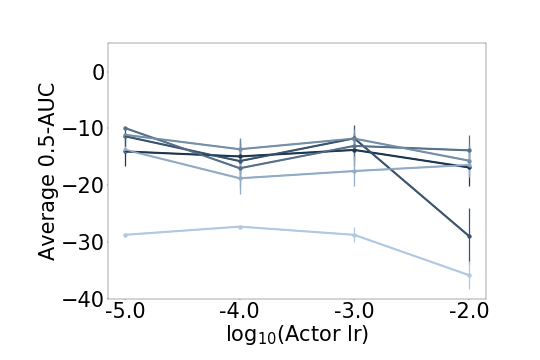} 
    &
    \includegraphics[width=0.22\columnwidth]{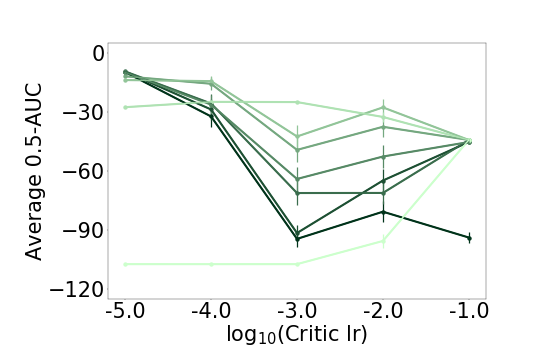} 
    &    
    \includegraphics[width=0.22\columnwidth]{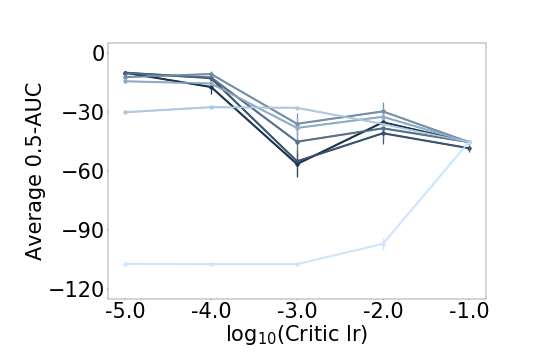}  
    \end{tabular} 
    \caption{Reacher}\label{fig:Reacher_sensitivity}
  \end{subfigure}

    \begin{subfigure}[b]{1.0\linewidth}
    \centering
    \begin{tabular}{c c c c}
    \includegraphics[width=0.22\columnwidth]{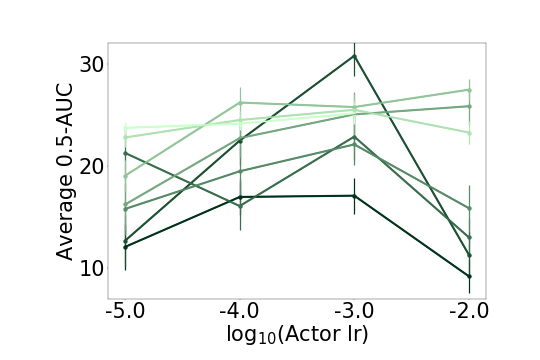} 
    &    
    \includegraphics[width=0.22\columnwidth]{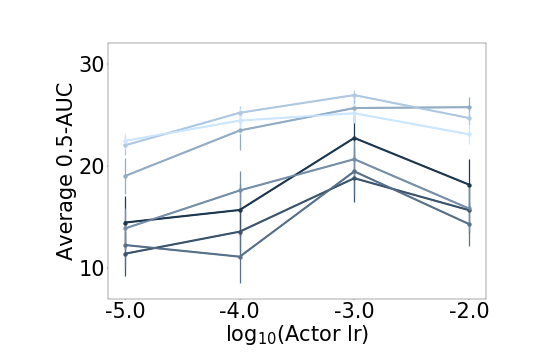} 
    &
    \includegraphics[width=0.22\columnwidth]{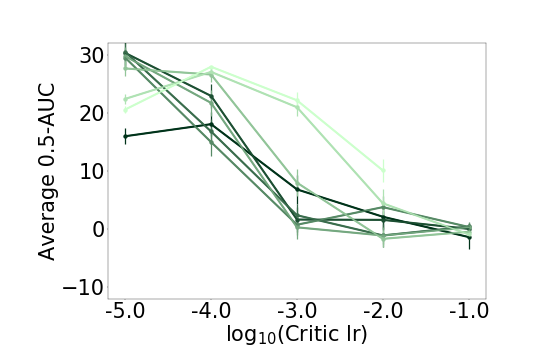} 
    &    
    \includegraphics[width=0.22\columnwidth]{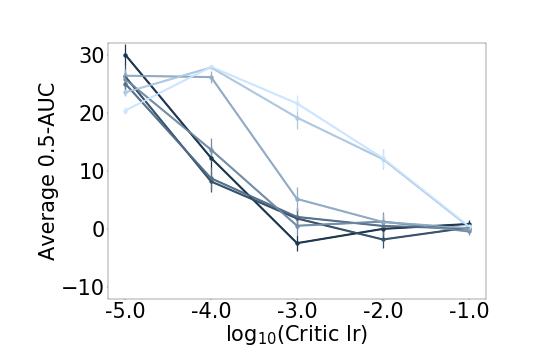}   
    \end{tabular} 
    \caption{Swimmer}\label{fig:Swimmer_sensitivity}
  \end{subfigure}    

    \begin{subfigure}[b]{1.0\linewidth}
    \centering
    \begin{tabular}{c c c c}
    \includegraphics[width=0.22\columnwidth]{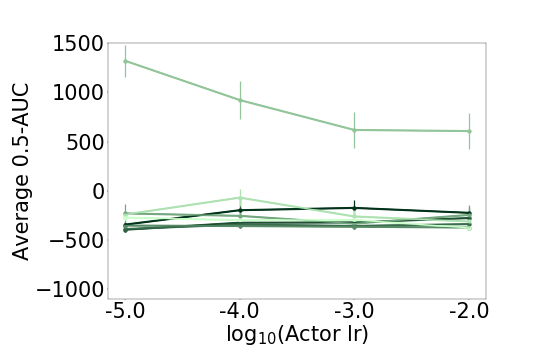} 
    &    
    \includegraphics[width=0.22\columnwidth]{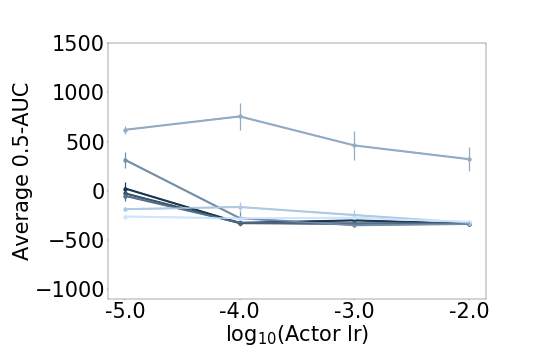} 
    &
    \includegraphics[width=0.22\columnwidth]{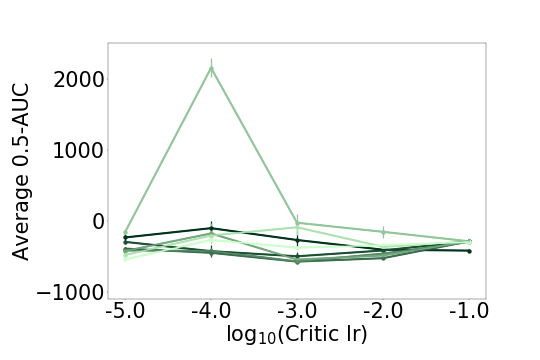} 
    &    
    \includegraphics[width=0.22\columnwidth]{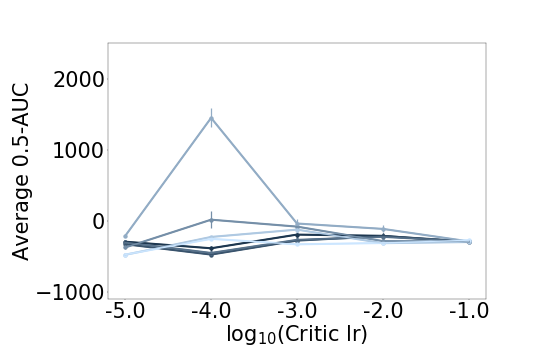}    
    \end{tabular} 
    \caption{HalfCheetah}\label{fig:HalfCheetah_sensitivity}
  \end{subfigure}

  \caption{Sensitivity plots for continuous-action environments. The x axis represent the hyperparameter values and the y axis represent the averaged area under the last half of the learning curve. Each point corresponds to the average of the top 20\% performing settings that had the corresponding learning rate and temperature}\label{fig:sensitivity_continuous}
\end{figure}

\subsection{Hyperparameter sensitivity}

We wrap up the experiments on benchmark problems by investigating the sensitivity of each divergence to hyperparameters. We focus on studying the sensitivity to the ones that seem to influence performance the most: learning rate and temperature. Particularly, we vary learning rates for each different temperature. For the continuous environments, where the actor and critic were separate networks, we have both the actor learning rate and the critic learning rate, whereas for the discrete environments we only have one learning rate.

From looking at Figures \ref{fig:sensitivity_continuous} and \ref{fig:sensitivity_discrete}, we see that both methods are sensitive to the hyperparameters, with the best parameters being highly environment dependent. For a given temperature and environment, RKL and FKL have very similar behavior: if the performance goes up for a certain learning rate for RKL, it also goes up for FKL and the same applies to decreases in performance, with very few exceptions and, even in those cases, the overall tendency is still the same for the two divergences. On Pendulum, represented in \Cref{fig:Pendulum_sensitivity}, for example, the worst learning rates perform better on FKL than in RKL, but the overall tendency is still for performance to go up as actor learning rates increase to 0.01 and as critic learning rates increase to 0.01, followed by a decrease when critic learning rate further increases to 0.1. The main takeaway is that, for a given choice of environment and temperature, learning rate sensitivity is not significantly influenced by the choice of divergence.

\begin{figure}[tb!]
  \centering
    \begin{tabular}{c c}
    \includegraphics[width=0.25\columnwidth]{figs/deep/general/colormaps_ReverseKL.png} 
    &
    \includegraphics[width=0.25\columnwidth]{figs/deep/general/colormaps_ForwardKL.png}    
    \end{tabular} 

    \begin{tabular}{c c}
    \centering
    \begin{subfigure}[b]{0.46\linewidth}
    \begin{tabular}{c c}
    \includegraphics[width=0.44\linewidth]{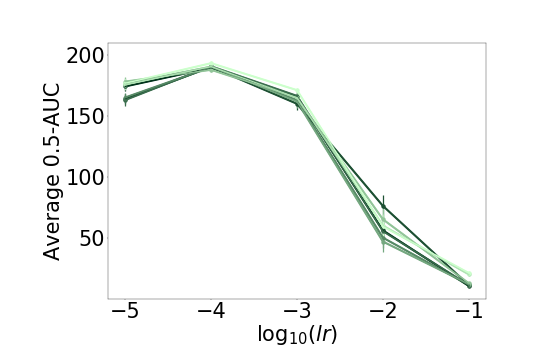}  
    &
    \includegraphics[width=0.44\linewidth]{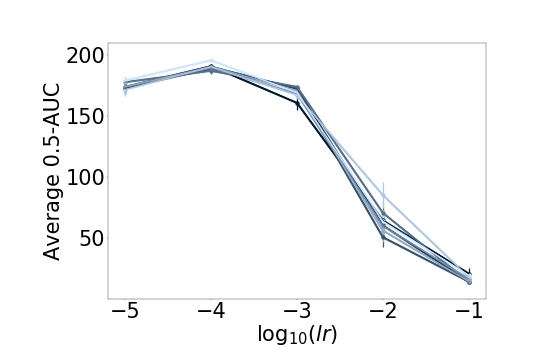} 
    \end{tabular}
    \caption{Gym:CartPole}\label{fig:sensitivity_CartPole}
    \end{subfigure}
    &    
    \begin{subfigure}[b]{0.46\linewidth}
    \begin{tabular}{c c}
    \includegraphics[width=0.44\linewidth]{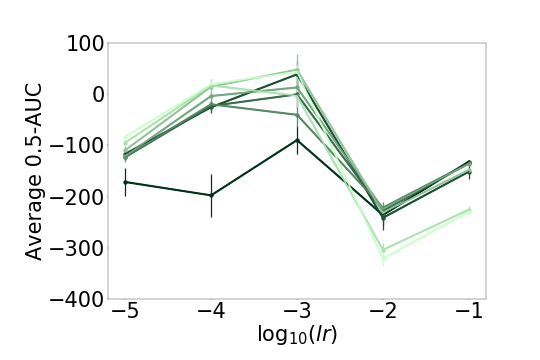}
    &
    \includegraphics[width=0.44\linewidth]{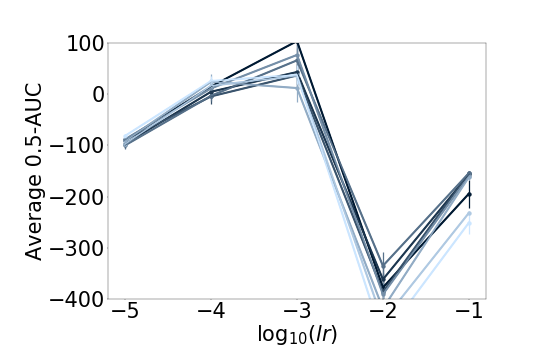} 
    \end{tabular}
    \caption{Gym:LunarLander}\label{fig:sensitivity_LunarLander}
    \end{subfigure}
    \end{tabular}

    \begin{tabular}{c c}
    \centering
    \begin{subfigure}[b]{0.46\linewidth}
    \begin{tabular}{c c}
    \includegraphics[width=0.44\linewidth]{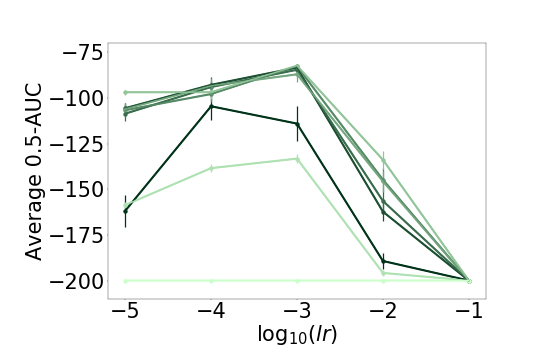}  
    &
    \includegraphics[width=0.44\linewidth]{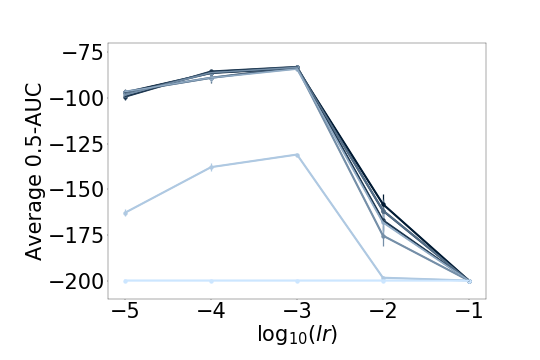} 
    \end{tabular}
    \caption{Gym:Acrobot}\label{fig:sensitivity_Acrobot}
    \end{subfigure}
    &    
    \begin{subfigure}[b]{0.46\linewidth}
    \begin{tabular}{c c}
    \includegraphics[width=0.44\linewidth]{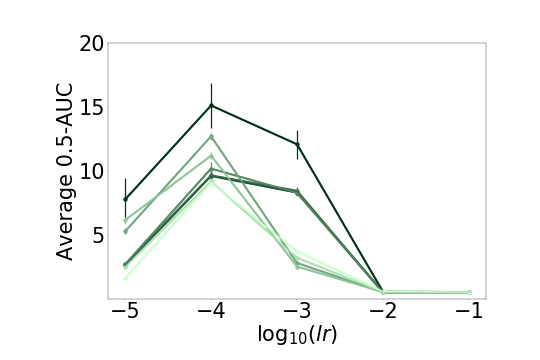}
    &
    \includegraphics[width=0.44\linewidth]{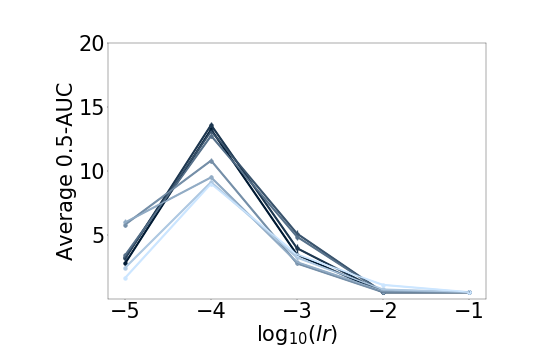} 
    \end{tabular}
    \caption{MinAtar:Asterix}\label{fig:sensitivity_asterix}
    \end{subfigure}
    \end{tabular}      

    \begin{tabular}{c c}
    \centering
    \begin{subfigure}[b]{0.46\linewidth}
    \begin{tabular}{c c}
    \includegraphics[width=0.44\linewidth]{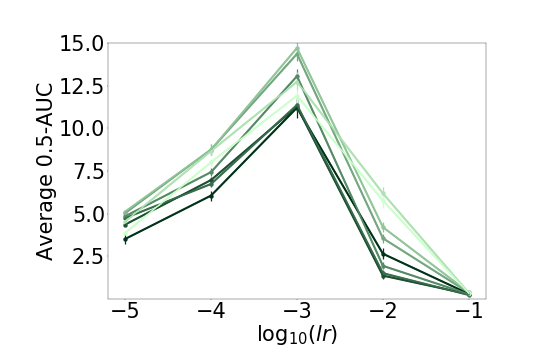}  
    &
    \includegraphics[width=0.44\linewidth]{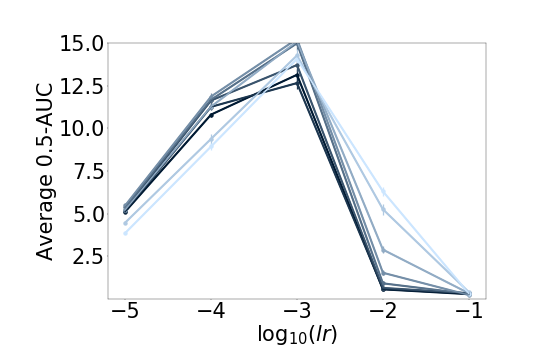} 
    \end{tabular}
    \caption{MinAtar:Breakout}\label{fig:sensitivity_breakout}
    \end{subfigure}
    &    
    \begin{subfigure}[b]{0.46\linewidth}
    \begin{tabular}{c c}
    \includegraphics[width=0.44\linewidth]{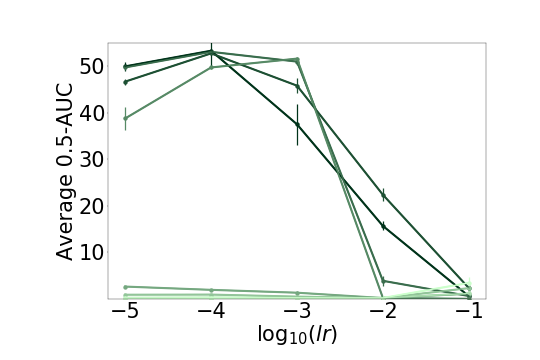}
    &
    \includegraphics[width=0.44\linewidth]{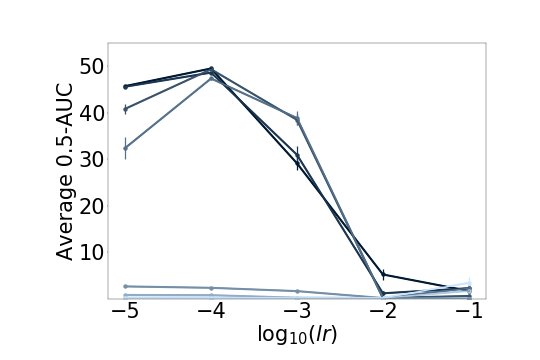} 
    \end{tabular}
    \caption{MinAtar:Freeway}\label{fig:sensitivity_freeway}
    \end{subfigure}
    \end{tabular}      

    \begin{tabular}{c c}
    \centering
    \begin{subfigure}[b]{0.46\linewidth}
    \begin{tabular}{c c}
    \includegraphics[width=0.44\linewidth]{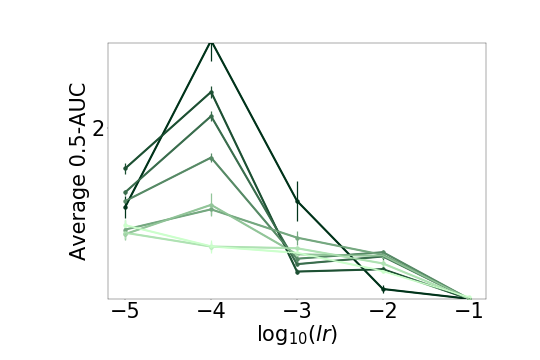}  
    &
    \includegraphics[width=0.44\linewidth]{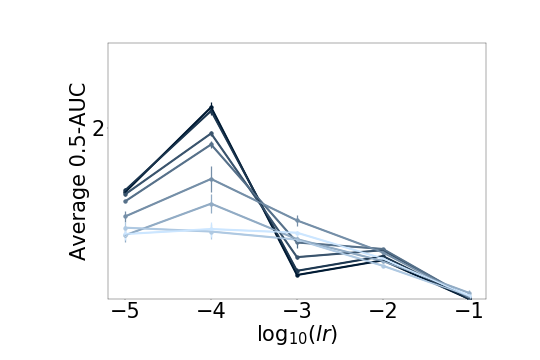} 
    \end{tabular}
    \caption{MinAtar:Seaquest}\label{fig:sensitivity_seaquest}
    \end{subfigure}
    &    
    \begin{subfigure}[b]{0.46\linewidth}
    \begin{tabular}{c c}
    \includegraphics[width=0.44\linewidth]{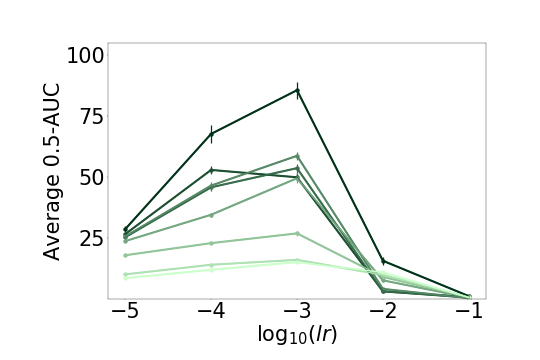}
    &
    \includegraphics[width=0.44\linewidth]{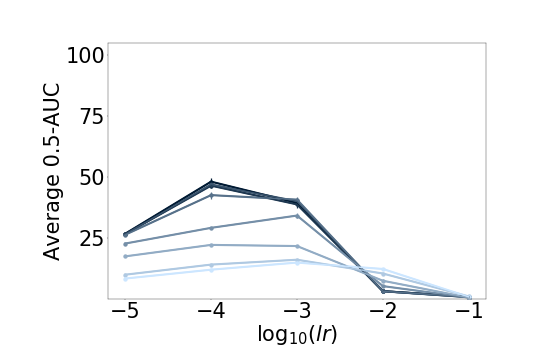} 
    \end{tabular}
    \caption{MinAtar:Space Invaders}\label{fig:sensitivity_space_invaders}
    \end{subfigure}
    \end{tabular}                  
   
  \caption{Sensitivity plots for discrete-action environments. Plot settings are identical to \Cref{fig:sensitivity_continuous}}\label{fig:sensitivity_discrete}
\end{figure}

\section{Additional Experimental Results} \label{app:addit_results}

\subsection{Exploration in a Discrete Maze}\label{app:exploration}
For these experiments, we use Gym-maze\footnote{\url{https://github.com/MattChanTK/Gym-maze}}, choosing their fixed $10\times10$ maze to plot the changes in state-visitation distribution throughout training. This way, some of the differences we may be able to inspect are: if any one of the divergences becomes deterministic quicker than the other; if any of the divergences get stuck in local optima while the other succeeds in finding the optimal policy; how spread-out the state-visitation distribution is in the early phases of training, when the agent is more likely to be exploring; if there are different priorities between regions of the maze when the agents are exploring. 

For the Gym-maze, the dynamics $\Pr(s_{t+1} \mid s_{t}, a_{t}) $ are deterministic and the actions are the four directions. The agent remains in place if it tries to move to a position where there is a wall. The reward is -0.1 divided by the total number of cells if $s_{t+1}$ is not the goal state and is 1.0 for the goal state. There is a timeout if the agent does not reach the goal after 10,000 steps. The agent is given a tabular representation: a one-hot encoding of the (x,y) position. This means that the agent has to create some sort of mental map of the maze from the positions it visited, causing the environment to be a harder exploratory problem than it might seem at first glance. We use the same agent that will be used in \Cref{sec:main_benchmark}, based on Algorithm \ref{alg:kl-agent}, which was introduced in Section \ref{sec:api-alg}.

On each iteration, one gradient descent step is performed to update the policy, for a given value function. 

\subsubsection{Exploration with True Values} \label{app:exp_true}
We want to understand differences in exploration when minimizing the KL divergences with respect to the estimated value functions. Even so, it is important to also study how the state-visitation distribution changes when using accurate value functions, which we will henceforth refer to as true values, noting that they are still only approximations of the actual values that are calculated more precisely. In this setting, the agent does not need to explore or even interact with the environment directly; the dynamics are given. These studies make clear not only how entropy affects the convergence of policies but also make it easier to disentangle which behaviors and results are due to the use of estimated values and which happen even with true values.

By hand annotating the dynamics, we can use dynamic programming to compute more accurate estimates of the value function, as opposed to the common approach of using gradient descent methods based on the Bellman equation and least squares. The gradient for the greedification step is computed over all states, as opposed to using a buffer, and the learning rate is set to $0.1$ with RMSprop. The total number of iterations is 100.

The stopping condition for dynamic programming is when the relative difference between successive Q's is less than $0.01\%$ for 10 consecutive iterations. The policy update, for either FKL or RKL, requires both Q and V. We compute V directly from Q, by summing over all four actions weighted by the current policy. The most representative timesteps are illustrated in \Cref{fig:true_q_expl}.

\begin{figure}[tb!]
  \centering
    \begin{tabular}{c c c}
    \includegraphics[width=0.22\columnwidth]{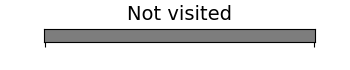}
    &
    \includegraphics[width=0.22\columnwidth]{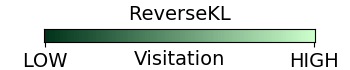} 
    &
    \includegraphics[width=0.22\columnwidth]{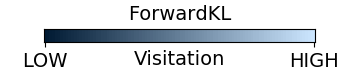} 
    \end{tabular}  

  \begin{subfigure}[b]{1.0\linewidth}
    \centering
    \begin{tabular}{c c c c}
    \begin{subfigure}{0.15\columnwidth}
    \centering
    \includegraphics[width=1.0\columnwidth]{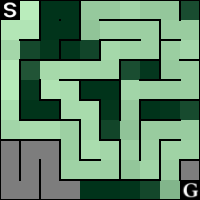} \\[3pt]
    \includegraphics[width=1.0\columnwidth]{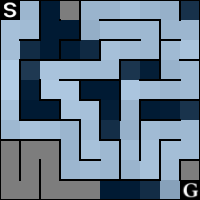}
    \caption*{$t=0$}
    \end{subfigure}
    &    
    \begin{subfigure}{0.15\columnwidth}
    \includegraphics[width=1.0\columnwidth]{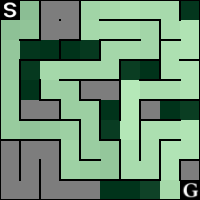} \\[3pt]
    \includegraphics[width=1.0\columnwidth]{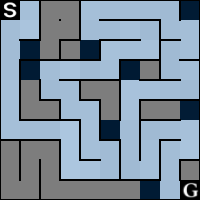}
    \caption*{$t=10$}
    \end{subfigure}
    &
    \begin{subfigure}{0.15\columnwidth}
    \includegraphics[width=1.0\columnwidth]{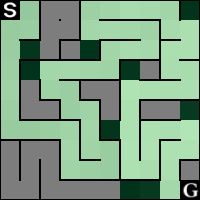} \\[3pt]
    \includegraphics[width=1.0\columnwidth]{figs/exploration/true_q/fkl_0.0_policy_iter_1.png}
    \caption*{$t=50$}
    \end{subfigure}
    &
    \begin{subfigure}{0.15\columnwidth}
    \includegraphics[width=1.0\columnwidth]{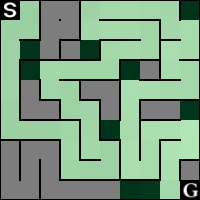} \\[3pt] 
    \includegraphics[width=1.0\columnwidth]{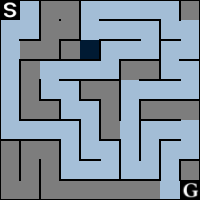}    
    \caption*{$t=99$}
    \end{subfigure}
    \end{tabular} 
 
    \caption{$\tau = 0.0$}\label{fig:expl_tau_0.0}
  \end{subfigure}
  
  \vspace{7pt}

  \begin{subfigure}[b]{1.0\linewidth}
    \centering
    \begin{tabular}{c c c c}
    \begin{subfigure}{0.15\columnwidth}
    \centering
    \includegraphics[width=1.0\columnwidth]{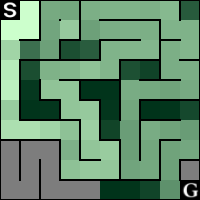} \\[3pt]
    \includegraphics[width=1.0\columnwidth]{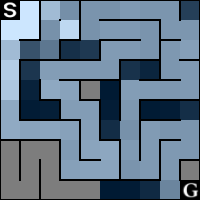}
    \caption*{$t=0$}
    \end{subfigure}
    &    
    \begin{subfigure}{0.15\columnwidth}
    \includegraphics[width=1.0\columnwidth]{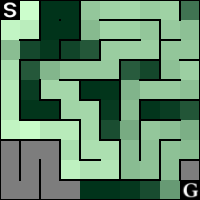} \\[3pt]
    \includegraphics[width=1.0\columnwidth]{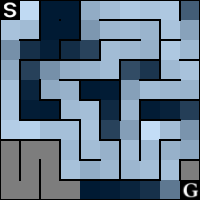}
    \caption*{$t=4$}
    \end{subfigure}
    &
    \begin{subfigure}{0.15\columnwidth}
    \includegraphics[width=1.0\columnwidth]{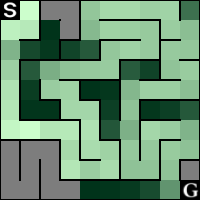} \\[3pt]
    \includegraphics[width=1.0\columnwidth]{figs/exploration/true_q/fkl_0.01_policy_iter_1.png}
    \caption*{$t=10$}
    \end{subfigure}
    &
    \begin{subfigure}{0.15\columnwidth}
    \includegraphics[width=1.0\columnwidth]{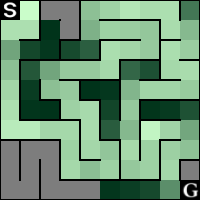} \\[3pt] 
    \includegraphics[width=1.0\columnwidth]{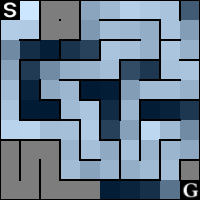}    
    \caption*{$t=99$}
    \end{subfigure}
    \end{tabular} 
 
    \caption{$\tau = 0.01$}\label{fig:expl_tau_0.01}
  \end{subfigure}
  
  \vspace{7pt}

  \begin{subfigure}[b]{1.0\linewidth}
    \centering
    \begin{tabular}{c c c c}
    \begin{subfigure}{0.15\columnwidth}
    \centering
    \includegraphics[width=1.0\columnwidth]{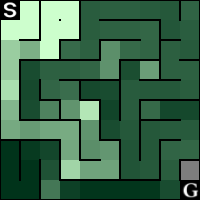} \\[3pt]
    \includegraphics[width=1.0\columnwidth]{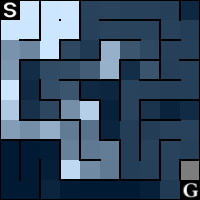}
    \caption*{$t=0$}
    \end{subfigure}
    &    
    \begin{subfigure}{0.15\columnwidth}
    \includegraphics[width=1.0\columnwidth]{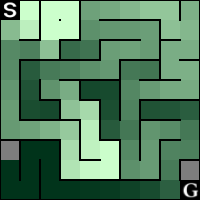} \\[3pt]
    \includegraphics[width=1.0\columnwidth]{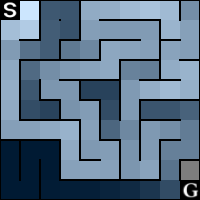}
    \caption*{$t=10$}
    \end{subfigure}
    &
    \begin{subfigure}{0.15\columnwidth}
    \includegraphics[width=1.0\columnwidth]{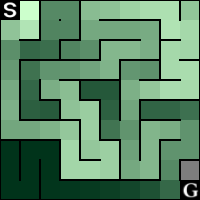} \\[3pt]
    \includegraphics[width=1.0\columnwidth]{figs/exploration/true_q/fkl_0.1_policy_iter_1.png}
    \caption*{$t=75$}
    \end{subfigure}
    &
    \begin{subfigure}{0.15\columnwidth}
    \includegraphics[width=1.0\columnwidth]{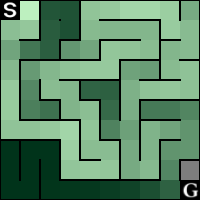} \\[3pt] 
    \includegraphics[width=1.0\columnwidth]{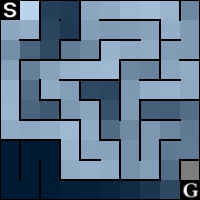}    
    \caption*{$t=99$}
    \end{subfigure}
    \end{tabular} 
 
    \caption{$\tau = 0.1$}\label{fig:expl_tau_0.1}
  \end{subfigure}    

  \caption{Evolution of state-visitation distributions throughout training for each temperature with true values.}
  \label{fig:true_q_expl}
\end{figure}

\subsubsection{Exploration with Estimated Values} \label{app:exp_estimate}


In this section, we use the more practical approach of updating value function estimates via expected semi-gradient updates. The optimizer used was RMSprop with learning rate 0.001 and the total number of iterations is 20000. We use a mini-batch of 32 states sampled from a buffer. The buffer size is 10000.

Results are illustrated in \Cref{fig:no_true_q_expl}, where we show only the subset of timesteps most representative of changes and only the plots corresponding to $\tau = 0$. 
The start state is in the top left position and the goal state is in the bottom right. To obtain these images, 30 seeds were used for each temperature-divergence combination and trained for the full number of iterations. The image at timestep $t$ corresponds to 100 trajectories generated from each of the 30 policies at that timestep. Particularly, we take the visitation counts of each of these $30\times100$ trajectories, normalize them and average them, producing an image representative of the overall exploratory behavior of that divergence-temperature combination. The figure shows that, given a certain temperature, both RKL and FKL have state-visitation distributions that evolve very similarly, as can be seen by comparing the pairwise green and blue images for each timestep. 

\begin{figure}[tb!]
	\centering
	\begin{tabular}{c c c}
		\includegraphics[width=0.22\columnwidth]{figs/exploration/general/not_visited.png}
		&
		\includegraphics[width=0.22\columnwidth]{figs/exploration/general/colormaps_ReverseKL.png} 
		&
		\includegraphics[width=0.22\columnwidth]{figs/exploration/general/colormaps_ForwardKL.png} 
	\end{tabular}  
	
		\centering
		\begin{tabular}{c c c}
			\begin{subfigure}{0.15\columnwidth}
				\centering
				\includegraphics[width=1.0\columnwidth]{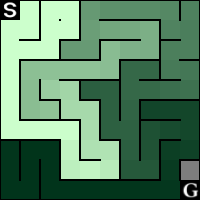} \\[3pt]
				\includegraphics[width=1.0\columnwidth]{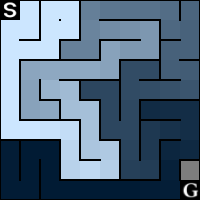}
				\caption*{t=0}
			\end{subfigure}
			&    
			\begin{subfigure}{0.15\columnwidth}
				\includegraphics[width=1.0\columnwidth]{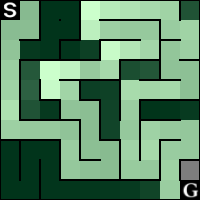} \\[3pt] 
				\includegraphics[width=1.0\columnwidth]{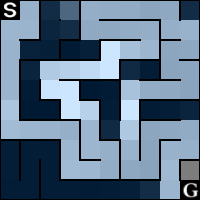}    
				\caption*{t=8399}
			\end{subfigure}    
			&
			\begin{subfigure}{0.15\columnwidth}
				\includegraphics[width=1.0\columnwidth]{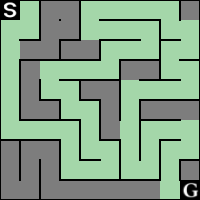} \\[3pt] 
				\includegraphics[width=1.0\columnwidth]{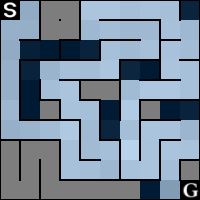}    
				\caption*{t=19999}
			\end{subfigure}
		\end{tabular} 
	
	\caption{Evolution of state-visitation distributions throughout training for $\tau = 0$}\label{fig:no_true_q_expl}
\end{figure}


\Cref{fig:app_no_true_q_expl} makes it clear that, for this setting, entropy regularization seems to do more harm than good: the agents converge to policies that tend to go to the correct trajectory, but waste time exploring when they already have all information they need. This is one example where the maximization of entropy conflicts with the true objective, ideally, one would prefer to have a deterministic policy for this environment at the end of training.

\begin{figure}[tb!]
  \centering
    \begin{tabular}{c c c}
    \includegraphics[width=0.22\columnwidth]{figs/exploration/general/not_visited.png}
    &
    \includegraphics[width=0.22\columnwidth]{figs/exploration/general/colormaps_ReverseKL.png} 
    &
    \includegraphics[width=0.22\columnwidth]{figs/exploration/general/colormaps_ForwardKL.png} 
    \end{tabular}  

  \begin{subfigure}[b]{1.0\linewidth}
    \centering
    \begin{tabular}{c c c c c}
    \begin{subfigure}{0.15\columnwidth}
    \centering
    \includegraphics[width=1.0\columnwidth]{figs/exploration/no_true_q/rkl_0.0_policy_iter_0.png} \\[3pt]
    \includegraphics[width=1.0\columnwidth]{figs/exploration/no_true_q/fkl_0.0_policy_iter_0.png}
    \caption*{t=0}
    \end{subfigure}
    &    
    \begin{subfigure}{0.15\columnwidth}
    \includegraphics[width=1.0\columnwidth]{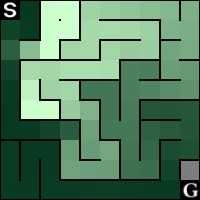} \\[3pt]
    \includegraphics[width=1.0\columnwidth]{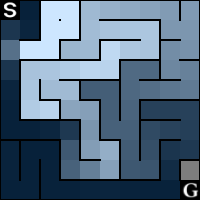}
    \caption*{t=1099}
    \end{subfigure}
    &
    \begin{subfigure}{0.15\columnwidth}
    \includegraphics[width=1.0\columnwidth]{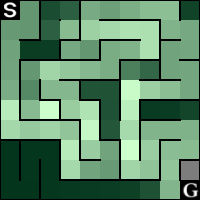} \\[3pt]
    \includegraphics[width=1.0\columnwidth]{figs/exploration/no_true_q/fkl_0.0_policy_iter_1.png}
    \caption*{t=6499}
    \end{subfigure}
    &
    \begin{subfigure}{0.15\columnwidth}
    \includegraphics[width=1.0\columnwidth]{figs/exploration/no_true_q/rkl_0.0_policy_iter_3.png} \\[3pt] 
    \includegraphics[width=1.0\columnwidth]{figs/exploration/no_true_q/fkl_0.0_policy_iter_3.png}    
    \caption*{t=8399}
    \end{subfigure}    
    &
    \begin{subfigure}{0.15\columnwidth}
    \includegraphics[width=1.0\columnwidth]{figs/exploration/no_true_q/rkl_0.0_policy_iter_4.png} \\[3pt] 
    \includegraphics[width=1.0\columnwidth]{figs/exploration/no_true_q/fkl_0.0_policy_iter_4.png}    
    \caption*{t=19999}
    \end{subfigure}
    \end{tabular} 
 
    \caption{$\tau = 0.0$}\label{fig:app_expl_ntq_tau_0.0}
  \end{subfigure}
  
   \vspace{7pt}

  \begin{subfigure}[b]{1.0\linewidth}
    \centering
    \begin{tabular}{c c c c c}
    \begin{subfigure}{0.15\columnwidth}
    \centering
    \includegraphics[width=1.0\columnwidth]{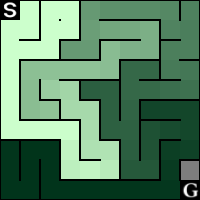} \\[3pt]
    \includegraphics[width=1.0\columnwidth]{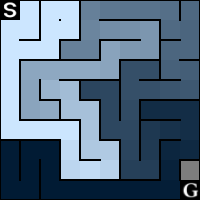}
    \caption*{t=0}
    \end{subfigure}
    &    
    \begin{subfigure}{0.15\columnwidth}
    \includegraphics[width=1.0\columnwidth]{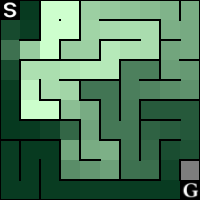} \\[3pt]
    \includegraphics[width=1.0\columnwidth]{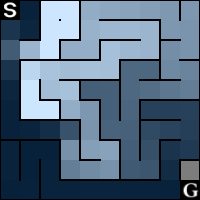}
    \caption*{t=1099}
    \end{subfigure}
    &
    \begin{subfigure}{0.15\columnwidth}
    \includegraphics[width=1.0\columnwidth]{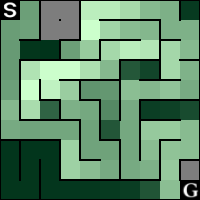} \\[3pt]
    \includegraphics[width=1.0\columnwidth]{figs/exploration/no_true_q/fkl_0.0001_policy_iter_1.png}
    \caption*{t=6499}
    \end{subfigure}
    &
    \begin{subfigure}{0.15\columnwidth}
    \includegraphics[width=1.0\columnwidth]{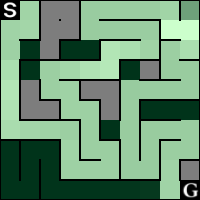} \\[3pt] 
    \includegraphics[width=1.0\columnwidth]{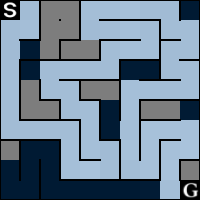}    
    \caption*{t=11699}
    \end{subfigure}    
    &
    \begin{subfigure}{0.15\columnwidth}
    \includegraphics[width=1.0\columnwidth]{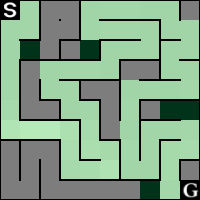} \\[3pt] 
    \includegraphics[width=1.0\columnwidth]{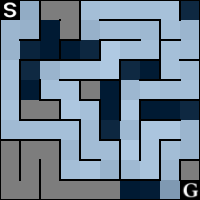}    
    \caption*{t=19999}
    \end{subfigure}
    \end{tabular} 
 
    \caption{$\tau = 0.0001$}\label{fig:app_expl_ntq_tau_0.0001}
  \end{subfigure}
  
   \vspace{7pt}

  \begin{subfigure}[b]{1.0\linewidth}
    \centering
    \begin{tabular}{c c c c c}
    \begin{subfigure}{0.15\columnwidth}
    \centering
    \includegraphics[width=1.0\columnwidth]{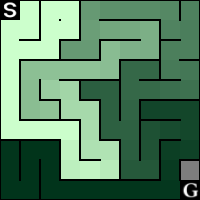} \\[3pt]
    \includegraphics[width=1.0\columnwidth]{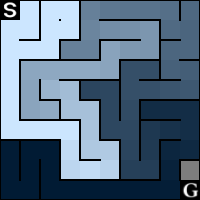}
    \caption*{t=0}
    \end{subfigure}
    &    
    \begin{subfigure}{0.15\columnwidth}
    \includegraphics[width=1.0\columnwidth]{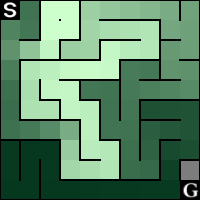} \\[3pt]
    \includegraphics[width=1.0\columnwidth]{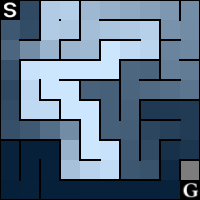}
    \caption*{t=1099}
    \end{subfigure}
    &
    \begin{subfigure}{0.15\columnwidth}
    \includegraphics[width=1.0\columnwidth]{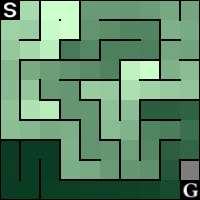} \\[3pt]
    \includegraphics[width=1.0\columnwidth]{figs/exploration/no_true_q/fkl_0.0007_policy_iter_1.png}
    \caption*{t=6499}
    \end{subfigure}
    &
    \begin{subfigure}{0.15\columnwidth}
    \includegraphics[width=1.0\columnwidth]{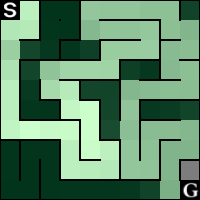} \\[3pt] 
    \includegraphics[width=1.0\columnwidth]{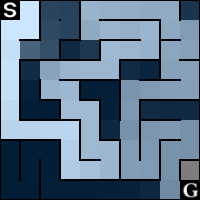}    
    \caption*{t=11999}
    \end{subfigure}    
    &
    \begin{subfigure}{0.15\columnwidth}
    \includegraphics[width=1.0\columnwidth]{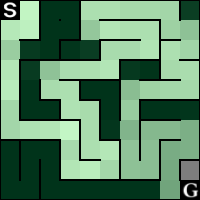} \\[3pt] 
    \includegraphics[width=1.0\columnwidth]{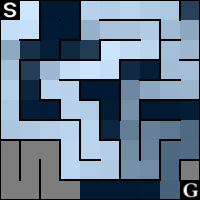}    
    \caption*{t=19999}
    \end{subfigure}
    \end{tabular} 
 
    \caption{$\tau = 0.0007$}\label{fig:app_expl_ntq_tau_0.0007}
  \end{subfigure}      

  \caption{Evolution of state-visitation distributions throughout training for each temperature with estimated values.}
  \label{fig:app_no_true_q_expl}
\end{figure}

\subsection{Exploration in a Continuous Maze} \label{app:exp_cont}

This section gives a more in-depth view of the experiments from \Cref{sec:expl_maze_cont}. Figure \ref{fig:cont_maze_detail} plots the cumulative number of times both the misleading and the correct exits are reached throughout training using 30 seeds and 2M steps, instead of 500k. For $\tau = 1000$, RKL and FKL have very similar curves; for $\tau = 100$ and $\tau=10$ the RKL visits the misleading exit more and the correct exit less. For the remaining temperatures, FKL visits both exits more than the RKL. For lower temperatures, the misleading exit is visited orders of magnitude more than the regular one. These all corroborate with the conclusion that the FKL is more exploratory in this setting.

\begin{figure}[tb!]
	\centering
	\begin{tabular}{c}
		\includegraphics[width=0.35\columnwidth]{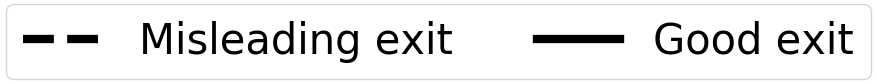}
	\end{tabular}  
	\centering
	\begin{tabular}{c c}
		\begin{subfigure}{0.43\columnwidth}
			\begin{tabular}{c c}
			\includegraphics[width=0.43\columnwidth]{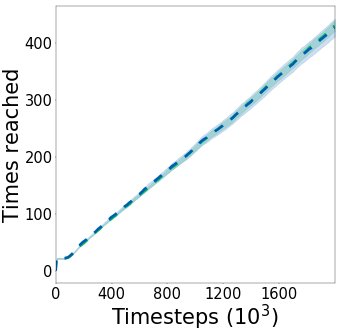} & 
			\includegraphics[width=0.43\columnwidth]{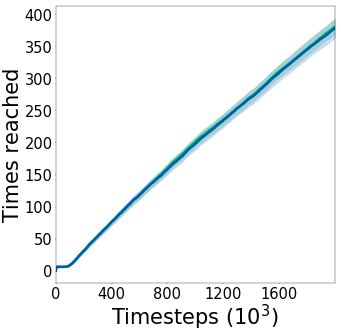}
			\end{tabular}
		\caption{$\tau=1000$}
		\end{subfigure}  
		&		
		\begin{subfigure}{0.43\columnwidth}
			\begin{tabular}{c c}
			\includegraphics[width=0.43\columnwidth]{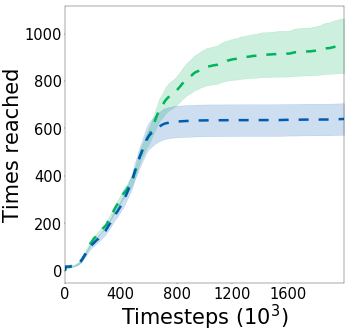} &
			\includegraphics[width=0.43\columnwidth]{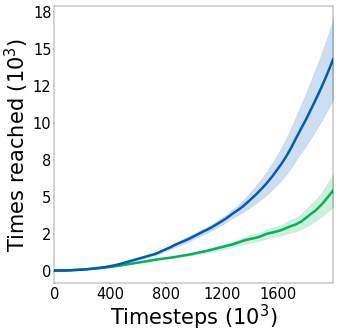}
			\end{tabular}
		\caption{$\tau=100$}		
		\end{subfigure} 
	\end{tabular}		
	\begin{tabular}{c c}
		\begin{subfigure}{0.43\columnwidth}
			\begin{tabular}{c c}
				\includegraphics[width=0.43\columnwidth]{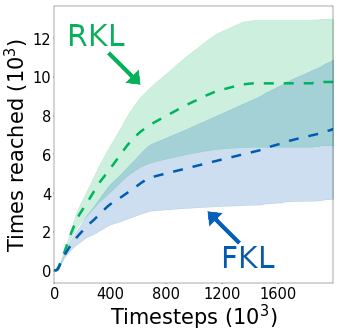} & 
				\includegraphics[width=0.43\columnwidth]{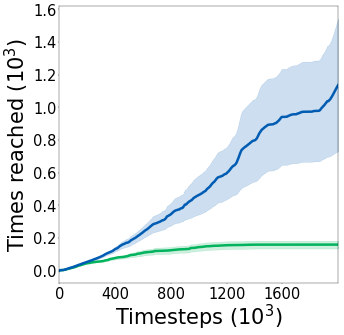}
			\end{tabular}
		\caption{$\tau=10$}		
		\end{subfigure}  
		&		
		\begin{subfigure}{0.43\columnwidth}
			\begin{tabular}{c c}
				\includegraphics[width=0.43\columnwidth]{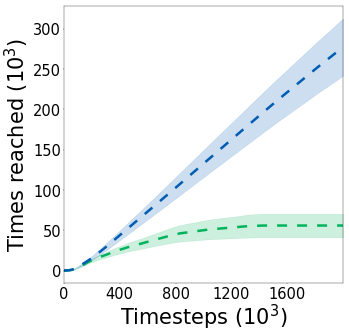} &
				\includegraphics[width=0.43\columnwidth]{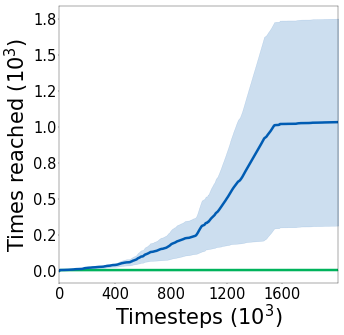}
			\end{tabular}
		\caption{$\tau=1$}		
		\end{subfigure} 
	\end{tabular}	

	\begin{subfigure}{0.43\columnwidth}
		\begin{tabular}{c c}
			\includegraphics[width=0.43\columnwidth]{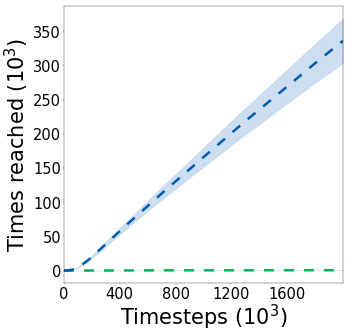} & 
			\includegraphics[width=0.43\columnwidth]{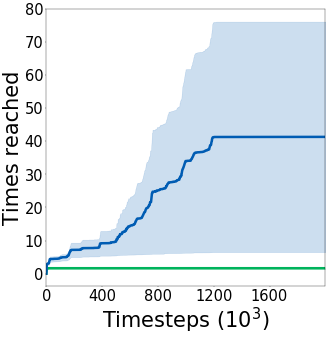}
		\end{tabular}
		\caption{$\tau=0.1$}	
	\end{subfigure}  
	\caption{Cumulative number of times the each exit is reached throughout training for multiple temperatures plotted separately. RKL corresponds to green curves and FKL to blue curves.}
	\label{fig:cont_maze_detail}
\end{figure}

\subsection{Performance} \label{app:addit_perf_results}
Implementation details are the same as in \Cref{sec:main_benchmark}. We perform 30 runs for all hyperparameter settings and plot the mean return averaged over the past 20 episodes. Shaded areas represent standard errors. 

\subsubsection{Continuous-Actions Results}

\begin{figure}[tb!]
  \centering
    \begin{tabular}{c c c}
    \includegraphics[width=0.3\columnwidth]{figs/deep/general/colormaps_ReverseKL.png} 
    &
    \includegraphics[width=0.3\columnwidth]{figs/deep/general/colormaps_ForwardKL.png}
    &
    \hspace{0.02\columnwidth}
    \includegraphics[width=0.26\columnwidth]{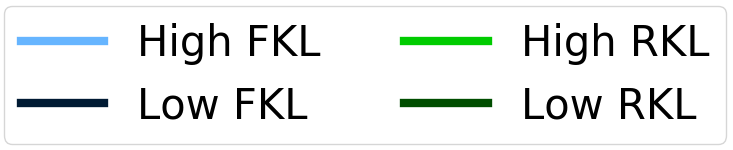}
    \hspace{0.02\columnwidth}
    \end{tabular}  
  \begin{subfigure}[b]{1.0\linewidth}
    \centering
    \begin{tabular}{c c c}
    \includegraphics[width=0.3\columnwidth]{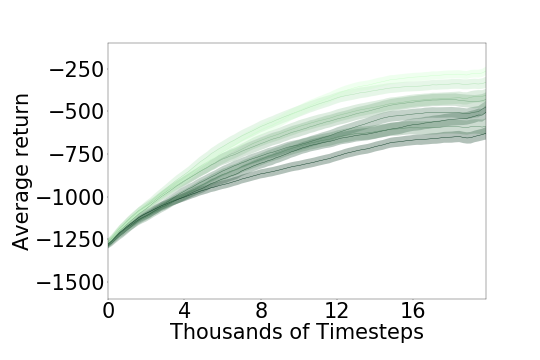} 
    &    
    \includegraphics[width=0.3\columnwidth]{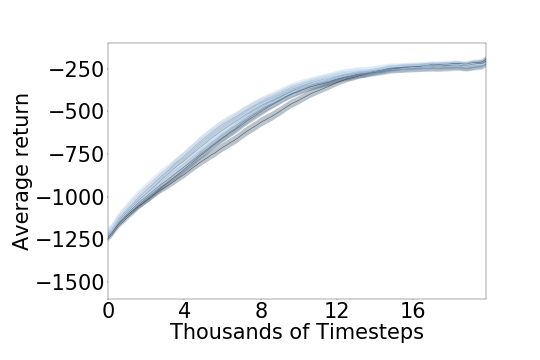}
    &
    \includegraphics[width=0.3\columnwidth]{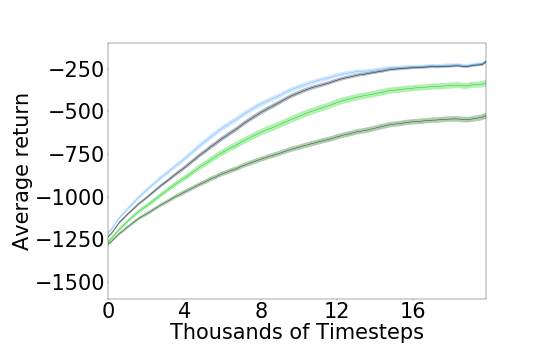}
    \end{tabular} 
    \caption{Pendulum}\label{fig:Pendulum}
  \end{subfigure}
  
   \begin{subfigure}[b]{1.0\linewidth}
    \centering
    \begin{tabular}{c c c}
    \includegraphics[width=0.3\columnwidth]{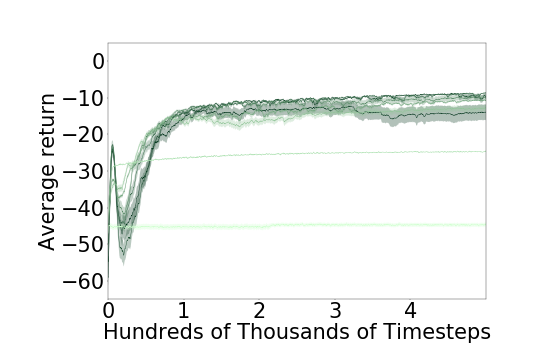} 
    &    
    \includegraphics[width=0.3\columnwidth]{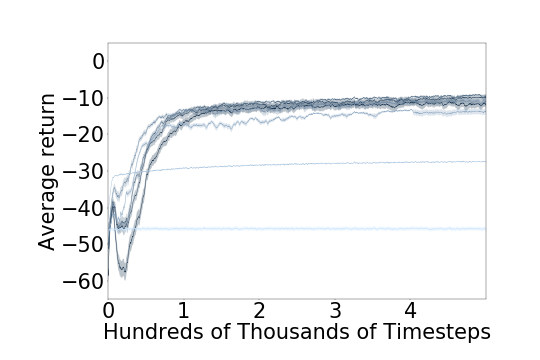}
    &
    \includegraphics[width=0.3\columnwidth]{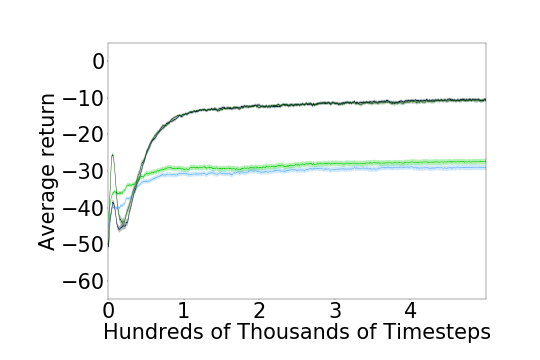}
    \end{tabular} 
    \caption{Reacher}\label{fig:Reacher}
  \end{subfigure}

    \begin{subfigure}[b]{1.0\linewidth}
    \centering
    \begin{tabular}{c c c}
    \includegraphics[width=0.3\columnwidth]{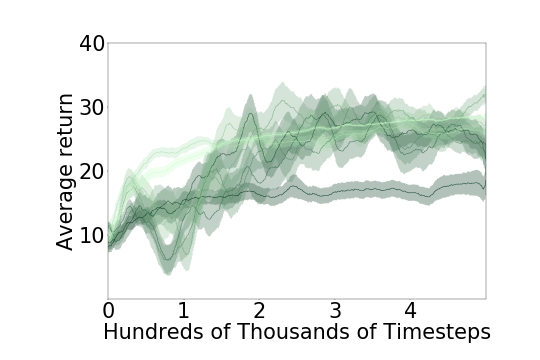} 
    &    
    \includegraphics[width=0.3\columnwidth]{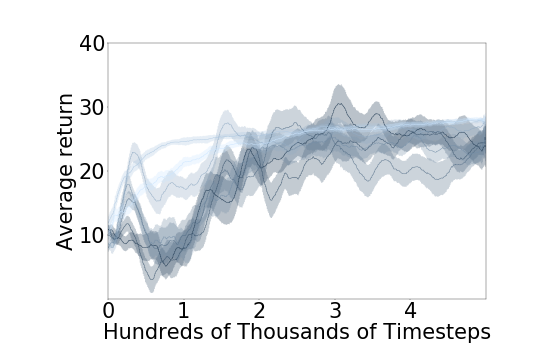}
    &
    \includegraphics[width=0.3\columnwidth]{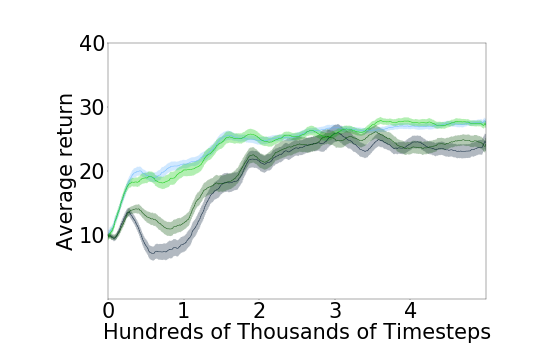}
    \end{tabular} 
    \caption{Swimmer}\label{fig:Swimmer}
  \end{subfigure}

    \begin{subfigure}[b]{1.0\linewidth}
    \centering
    \begin{tabular}{c c c}
    \includegraphics[width=0.3\columnwidth]{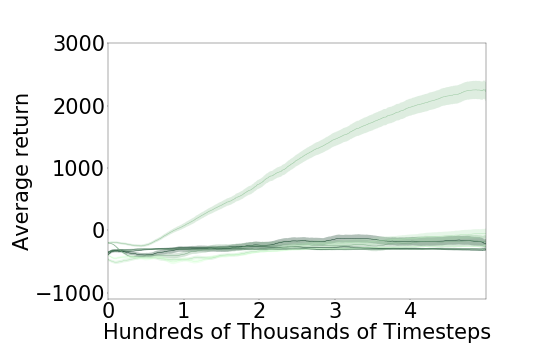} 
    &    
    \includegraphics[width=0.3\columnwidth]{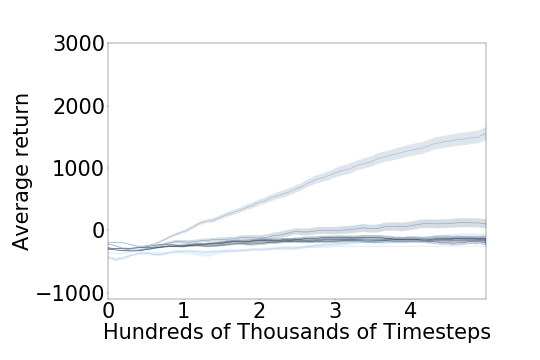}
    &
    \includegraphics[width=0.3\columnwidth]{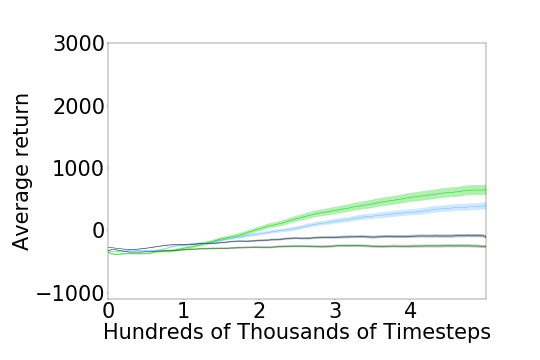}
    \end{tabular} 
    \caption{HalfCheetah}\label{fig:HalfCheetah}
  \end{subfigure} 

  \caption{Continuous-action environments. Shown are the averaged top 20\% performing hyperparameter settings for each algorithm, which are selected by largest area under the last half of the learning curve. In the left we have RKL, in the middle we have FKL and in the right we grouped the high and low temperatures for both FKL and RKL. Temperatures for the curves in the left and middle plots correspond to the respective colormaps and in the rightmost plot the curves correspond to the legend}\label{fig:deep_cont}
\end{figure}

We compare agents on Pendulum \citep{brockman2016openai}, Reacher, Swimmer and HalfCheetah \citep{todorov2012mujoco}, with results shown in \Cref{fig:deep_cont}. We exclude Hard FKL in our comparison since it requires access to $\max_a Q(s,a)$, which is difficult to obtain with continuous actions. The leftmost plot shows all temperatures from RKL, the middle plot shows all temperatures for FKL and the rightmost plot averages all high temperatures and all low temperatures for each divergence. Temperatures $[1.0, 0.5, 0.1]$ were considered high and $[0.05, 0.01, 0.005, 0.001, 0.0]$ were considered low.

Except for Pendulum, which is the simplest environment and where FKL seems to perform more consistently across temperatures, the overall behavior of both divergences is very similar, with performance being much more dependent on the choice of temperature than on the choice of divergence. On Reacher, Swimmer and Pendulum, FKL and RKL with high temperatures have the worst performance, but on HalfCheetah this pattern is reversed, meaning the benefits of entropy regularization are environment dependent.

It is difficult to comment on the importance of the policy parameterization for these experiments relative to our microworld experiments. Any influence from the Gaussian policy parameterization is conflated with function approximation. Moreover, as we will see below, no stark pattern seems to divide continuous and discrete action settings, as one did in our microworld experiments. 

\subsubsection{Discrete-Actions Results}

\begin{figure}[!htb]
  \centering
    \begin{tabular}{c c c}
    \includegraphics[width=0.3\columnwidth]{figs/deep/general/colormaps_ReverseKL.png} 
    &
    \includegraphics[width=0.3\columnwidth]{figs/deep/general/colormaps_ForwardKL.png}
    &
    \hspace{0.02\columnwidth}
    \includegraphics[width=0.26\columnwidth]{figs/deep/general/legend.png}
    \hspace{0.02\columnwidth}
    \end{tabular}  
  \begin{subfigure}[b]{1.0\linewidth}
    \centering
    \begin{tabular}{c c c}
    \includegraphics[width=0.3\columnwidth]{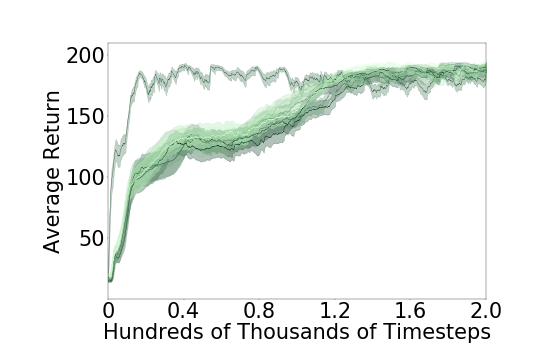} 
    &    
    \includegraphics[width=0.3\columnwidth]{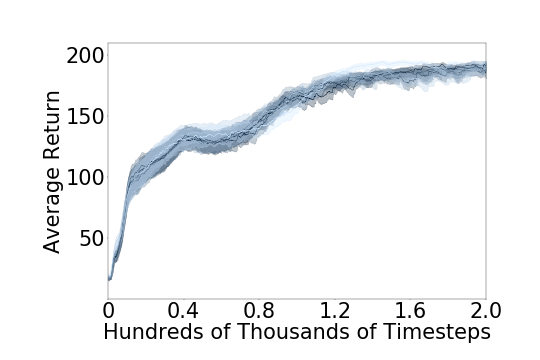}
    &
    \includegraphics[width=0.3\columnwidth]{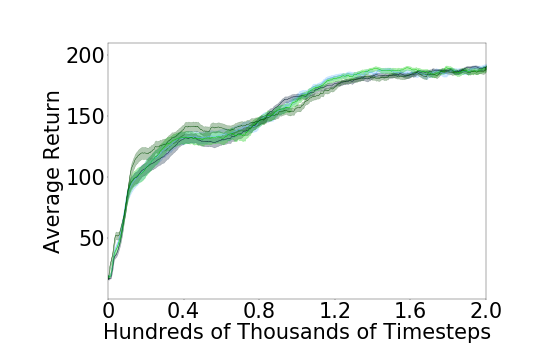}
    \end{tabular} 
    \caption{CartPole}\label{fig:CartPole}
  \end{subfigure}
  
   \begin{subfigure}[b]{1.0\linewidth}
    \centering
    \begin{tabular}{c c c}
    \includegraphics[width=0.3\columnwidth]{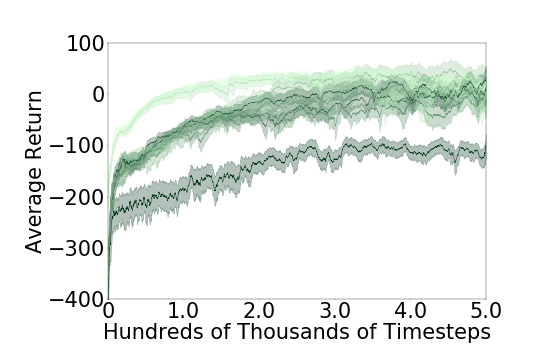} 
    &    
    \includegraphics[width=0.3\columnwidth]{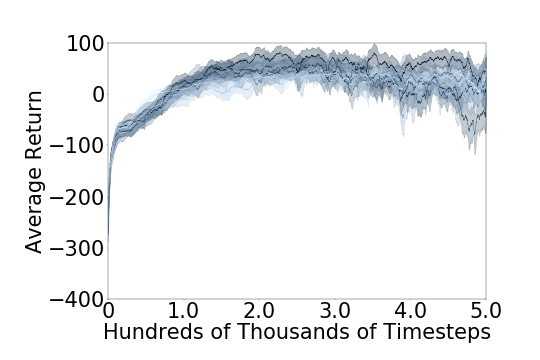}
    &
    \includegraphics[width=0.3\columnwidth]{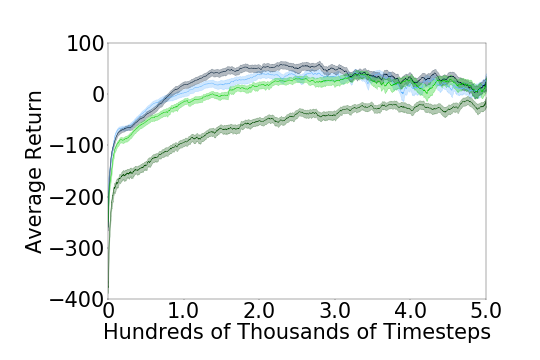}
    \end{tabular} 
    \caption{LunarLander}\label{fig:LunarLander}
  \end{subfigure}

    \begin{subfigure}[b]{1.0\linewidth}
    \centering
    \begin{tabular}{c c c}
    \includegraphics[width=0.3\columnwidth]{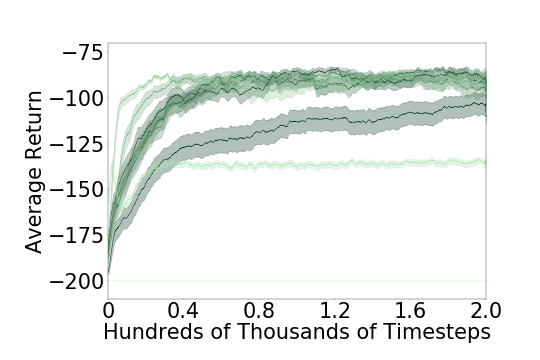} 
    &    
    \includegraphics[width=0.3\columnwidth]{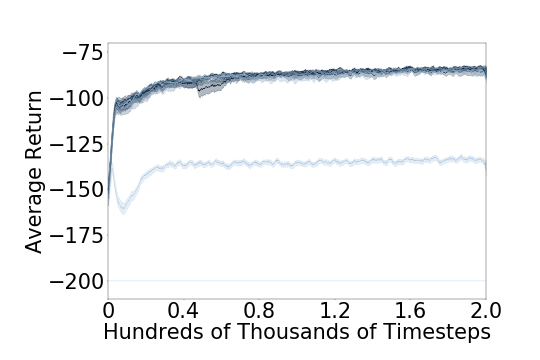}
    &
    \includegraphics[width=0.3\columnwidth]{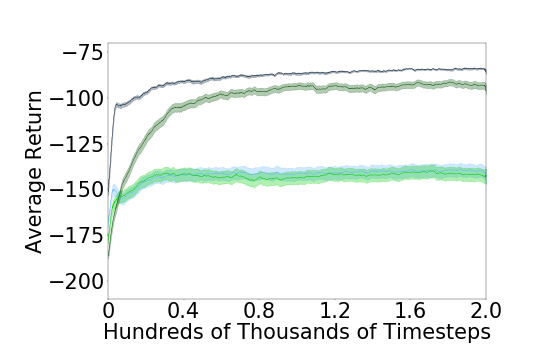}
    \end{tabular} 
    \caption{Acrobot}\label{fig:Acrobot}
  \end{subfigure}

\caption{OpenAI Gym discrete-action environments. Plot settings are identical to Figure \ref{fig:deep_cont}.}\label{fig:open-ai}
\end{figure}

We report results for environments from the OpenAI Gym \citep{brockman2016openai} and MinAtar \citep{young2019minatar}. 
Analogously to the continuous action setting, the OpenAI Gym results reported in \Cref{fig:open-ai} show that both FKL and RKL behave similarly for any given choice of temperature, with the left and middle plots being similar to one another. The main difference is that, for some of the non-optimal temperatures for each problem, FKL seemed to learn faster than RKL, this becomes specially clear on Acrobot. The higher temperatures performed better on CartPole and LunarLander, but worse on Acrobot, confirming that the influence of this hyperparameter is highly environment dependent.

\begin{figure}[th]
  \centering
    \begin{tabular}{c c c}
    \includegraphics[width=0.3\columnwidth]{figs/deep/general/colormaps_ReverseKL.png} 
    &
    \includegraphics[width=0.3\columnwidth]{figs/deep/general/colormaps_ForwardKL.png}
    &
    \includegraphics[width=0.26\columnwidth]{figs/deep/general/legend.png}
    \end{tabular}  
  \begin{subfigure}[b]{1.0\linewidth}
    \centering
    \begin{tabular}{c c c}
    \includegraphics[width=0.3\columnwidth]{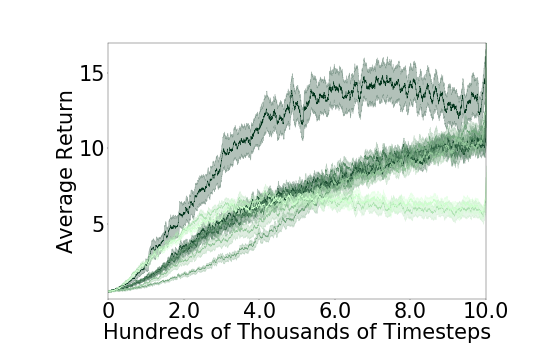} 
    &    
    \includegraphics[width=0.3\columnwidth]{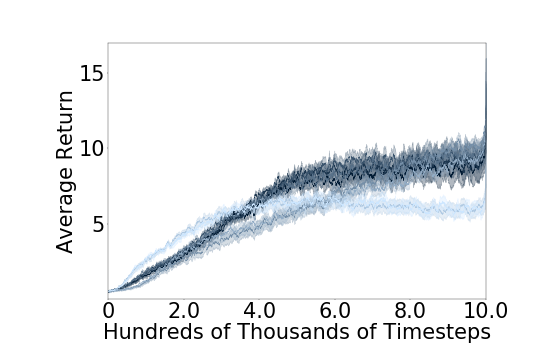}
    &
    \includegraphics[width=0.3\columnwidth]{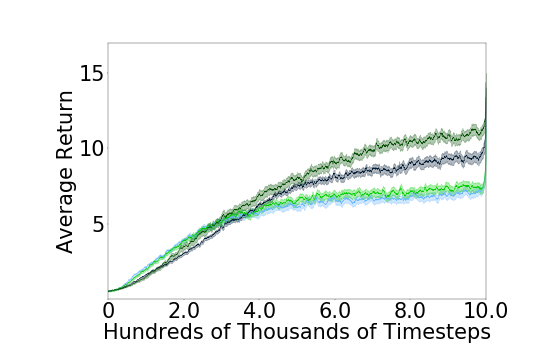}
    \end{tabular} 
    \caption{Asterix.}\label{fig:asterix}
  \end{subfigure}
  
   \begin{subfigure}[b]{1.0\linewidth}
    \centering
    \begin{tabular}{c c c}
    \includegraphics[width=0.3\columnwidth]{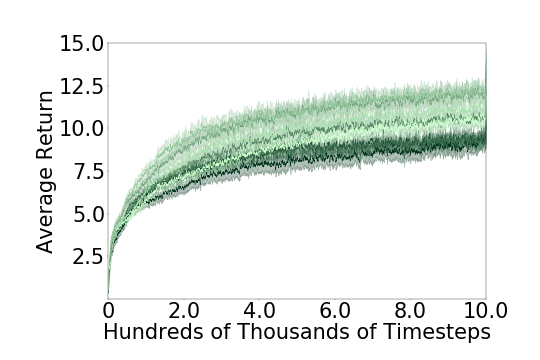} 
    &    
    \includegraphics[width=0.3\columnwidth]{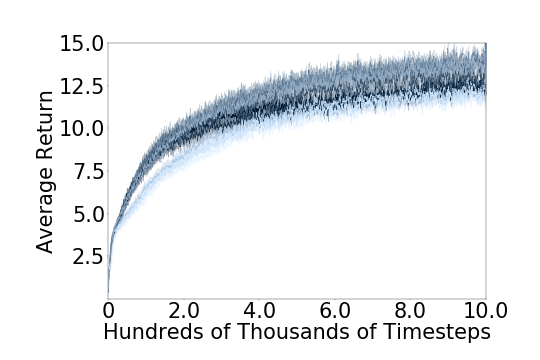}
    &
    \includegraphics[width=0.3\columnwidth]{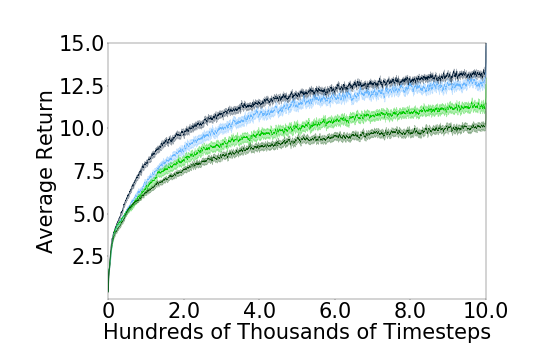}
    \end{tabular} 
    \caption{Breakout.}\label{fig:breakout}
  \end{subfigure}

    \begin{subfigure}[b]{1.0\linewidth}
    \centering
    \begin{tabular}{c c c}
    \includegraphics[width=0.3\columnwidth]{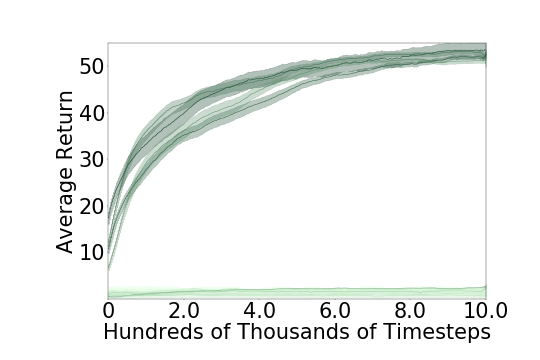} 
    &    
    \includegraphics[width=0.3\columnwidth]{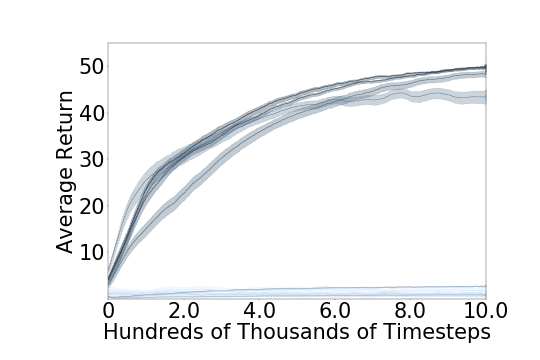}
    &
    \includegraphics[width=0.3\columnwidth]{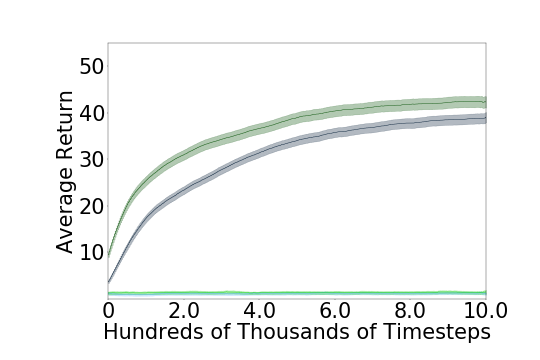}
    \end{tabular} 
    \caption{Freeway.}\label{fig:freeway}
  \end{subfigure}
    \begin{subfigure}[b]{1.0\linewidth}
    \centering
    \begin{tabular}{c c c}
    \includegraphics[width=0.3\columnwidth]{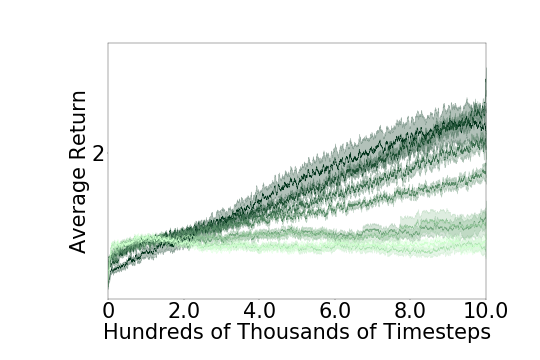} 
    &    
    \includegraphics[width=0.3\columnwidth]{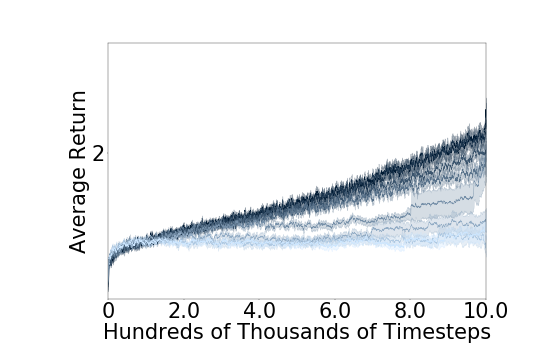}
    &
    \includegraphics[width=0.3\columnwidth]{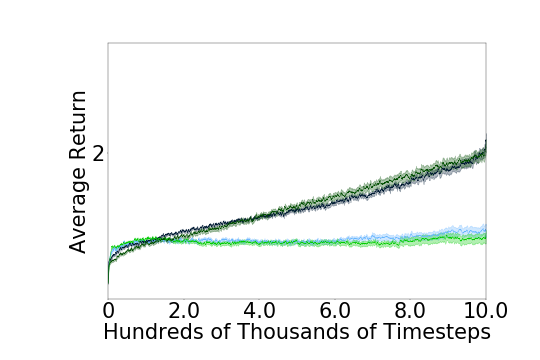}
    \end{tabular} 
    \caption{Seaquest.}\label{fig:seaquest}
  \end{subfigure}

    \begin{subfigure}[b]{1.0\linewidth}
    \centering
    \begin{tabular}{c c c}
    \includegraphics[width=0.3\columnwidth]{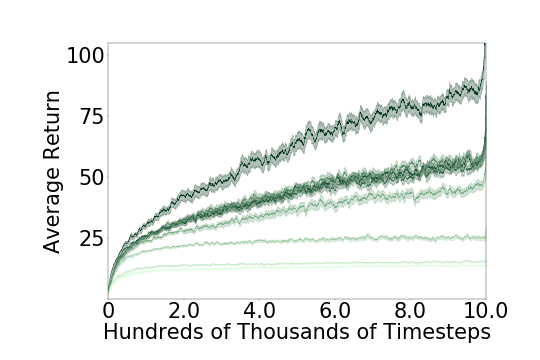} 
    &    
    \includegraphics[width=0.3\columnwidth]{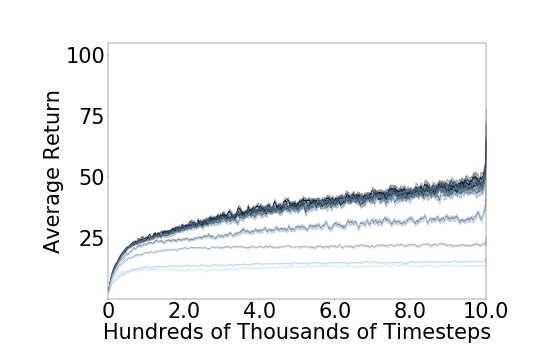}
    &
    \includegraphics[width=0.3\columnwidth]{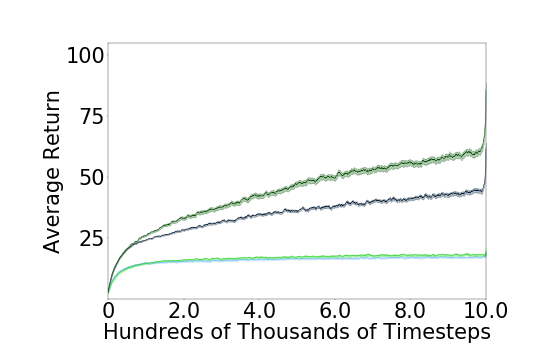}
    \end{tabular} 
    \caption{Space Invaders}\label{fig:space_invaders}
  \end{subfigure}  
  \caption{MinAtar discrete-action environments. Plot settings are identical to those in Figure \ref{fig:deep_cont}. }\label{fig:minatar}
\end{figure}

Finally, for the MinAtar results represented in \Cref{fig:minatar}, there is once more no consistent dominance of either KL over the other: the plots of both FKL and RKL are again highly similar to each other. The slight superiority of FKL for non-optimal temperatures is present only on Breakout and Seaquest. On Asterix, Freeway, Seaquest and Space Invaders highest temperatures performed the worse, but they performed the best on Breakout, showing a pattern opposite of the one seen in the continuous environments of \Cref{fig:deep_cont} and confirming that the optimal temperature is going to vary according to the environment.

\end{document}